%% file: main.tex
\pdfoutput=1
\documentclass[11pt,a4paper]{article}

\usepackage[margin=1in]{geometry}
\usepackage{latexsym}
\usepackage{amsmath,amssymb,amsthm,amsfonts}
\usepackage{graphicx}
\usepackage{microtype}
\usepackage{booktabs}
\usepackage{multirow}
\usepackage{array}
\usepackage{url}
\usepackage[authoryear]{natbib}
\usepackage{xcolor}
\usepackage{algorithm}
\usepackage{algpseudocode}
\usepackage{tabularx}
\usepackage{rotating}
\usepackage{capt-of}
\usepackage{makecell}
\usepackage{enumitem}
\usepackage{cancel}
\usepackage[colorlinks=true,linkcolor=blue,citecolor=blue,urlcolor=blue]{hyperref}
\usepackage[capitalise,noabbrev]{cleveref}

\hypersetup{
  pdftitle={Exact Stiefel Optimization for Probabilistic PLS},
  pdfauthor={Haoran Hu and Xingce Wang}
}

\makeatletter
\@ifundefined{theorem}{\newtheorem{theorem}{Theorem}}{}
\@ifundefined{lemma}{\newtheorem{lemma}[theorem]{Lemma}}{}
\@ifundefined{proposition}{\newtheorem{proposition}[theorem]{Proposition}}{}
\@ifundefined{corollary}{\newtheorem{corollary}[theorem]{Corollary}}{}
\@ifundefined{fact}{}{}
\@ifundefined{property}{}{}
\@ifundefined{remark}{\newtheorem{remark}[theorem]{Remark}}{}
\@ifundefined{assumption}{\newtheorem{assumption}[theorem]{Assumption}}{}
\@ifundefined{definition}{\newtheorem{definition}[theorem]{Definition}}{}
\makeatother

\newtheorem{mainassumption}{Assumption}
\newtheorem{maindefinition}{Definition}
\newtheorem{mainlemma}{Lemma}
\newtheorem{maintheorem}{Theorem}
\newtheorem{mainproposition}{Proposition}
\newtheorem{maincorollary}{Corollary}
\newtheorem{mainremark}{Remark}

\renewenvironment{assumption}[1][]{%
  \if\relax\detokenize{#1}\relax\begin{mainassumption}\else\begin{mainassumption}[#1]\fi
}{\end{mainassumption}}
\renewenvironment{definition}[1][]{%
  \if\relax\detokenize{#1}\relax\begin{maindefinition}\else\begin{maindefinition}[#1]\fi
}{\end{maindefinition}}
\renewenvironment{lemma}[1][]{%
  \if\relax\detokenize{#1}\relax\begin{mainlemma}\else\begin{mainlemma}[#1]\fi
}{\end{mainlemma}}
\renewenvironment{theorem}[1][]{%
  \if\relax\detokenize{#1}\relax\begin{maintheorem}\else\begin{maintheorem}[#1]\fi
}{\end{maintheorem}}
\renewenvironment{proposition}[1][]{%
  \if\relax\detokenize{#1}\relax\begin{mainproposition}\else\begin{mainproposition}[#1]\fi
}{\end{mainproposition}}
\renewenvironment{corollary}[1][]{%
  \if\relax\detokenize{#1}\relax\begin{maincorollary}\else\begin{maincorollary}[#1]\fi
}{\end{maincorollary}}
\renewenvironment{remark}[1][]{%
  \if\relax\detokenize{#1}\relax\begin{mainremark}\else\begin{mainremark}[#1]\fi
}{\end{mainremark}}


\title{Exact Stiefel Optimization for Probabilistic PLS:\\
Closed-Form Updates, Error Bounds, and Calibrated Uncertainty}

\author{Haoran Hu \\
School of Artificial Intelligence \\
Beijing Normal University \\
Beijing 100875, China
\and
Xingce Wang\thanks{Corresponding author. Email: \texttt{wangxingce@bnu.edu.cn}} \\
School of Artificial Intelligence \\
Beijing Normal University \\
Beijing 100875, China}

\date{\today}

\begin{document}

\maketitle

\begin{abstract}
Probabilistic partial least squares (PPLS) is a central likelihood-based model for two-view learning when one needs both interpretable latent factors and calibrated uncertainty. Building on the identifiable parameterization of Bouhaddani et al.\ (2018), existing fitting pipelines still face two practical bottlenecks: noise--signal coupling under joint EM/ECM updates and nontrivial handling of orthogonality constraints. Following the fixed-noise scalar-likelihood protocol, we develop an end-to-end framework that combines noise pre-estimation, constrained likelihood optimization, and prediction calibration in one pipeline. We estimate the observation noise from the low-eigenvalue noise subspace and enforce orthogonality through exact Stiefel-manifold optimization. The noise-subspace estimator attains a signal-strength-independent leading finite-sample rate and matches a minimax lower bound, whereas a full-spectrum noise estimator carries a deterministic bias under the same model. We further extend the framework to sub-Gaussian settings via optional Gaussianization and provide closed-form standard errors through a block-structured Fisher analysis. Across synthetic high-noise settings and two multi-omics benchmarks (TCGA-BRCA and PBMC CITE-seq), the method achieves near-nominal coverage without post-hoc recalibration, reaches Ridge-level point accuracy on TCGA-BRCA at rank $r=3$, matches or exceeds PO2PLS on cross-view prediction while providing native calibrated uncertainty, and improves stability of parameter recovery.

\end{abstract}


\input{sub/sec1_introduction}

\input{sub/sec2_related_work}

\input{sub/sec3_model}
\input{sub/sec4_objective_formulation}
\input{sub/sec5_optimization_algorithms}

\input{sub/sec6_experiments}
\input{sub/sec7_discussion}
\input{sub/sec8_conclusion}


\phantomsection
\label{sec:references}
\bibliographystyle{plainnat}
\bibliography{references}

\newpage
\appendix
\input{sub/appendix}

\end{document}

%% file: sub/sec1_introduction.tex
\section{Introduction}\label{sec:introduction}

Probabilistic Partial Least Squares (PPLS) provides a generative multi-view factorization that unifies cross-view prediction, latent-structure interpretation, and uncertainty quantification in a single likelihood-based framework. By modeling shared latent factors together with view-specific noise, PPLS is more informative than purely deterministic latent-variable pipelines when downstream tasks require both prediction and calibrated uncertainty.

We follow the identifiable PPLS parameterization of Bouhaddani et al.~(2018): two observed views $(x,y)$ are linked through shared latent factors, view-specific additive Gaussian noise, and orthonormal loading matrices $(W,C)$. Under this parameterization, the core estimation task is constrained by orthogonality and is naturally formulated on Stiefel manifolds. This geometric viewpoint is important because it separates model identifiability from numerical handling of constraints and makes explicit why exact-feasibility optimization is a principled target rather than a numerical detail.

Classical PPLS fitting is usually implemented by EM/ECM-style updates that optimize latent-signal blocks $(W,C,B,\Sigma_t)$ and noise blocks $(\sigma_e^2,\sigma_f^2,\sigma_h^2)$ jointly under orthogonality constraints \citep{DempsterLairdRubin1977,ECM}. This creates two coupled difficulties. First, in the complete-data updates, the effective curvature of signal blocks depends on current noise levels, while noise updates depend on current latent reconstruction; under high noise these mutual dependencies flatten the objective in some directions and steepen it in others, making optimization trajectories sensitive to initialization. Second, enforcing $W^\top W=C^\top C=I_r$ inside EM/ECM substeps is nontrivial and often handled indirectly. Empirically, this coupling effect is visible in our synthetic high-noise regime: Table~\ref{tab:parameter_mse} shows that EM/ECM errors on coupling-related parameters (notably $B$, and in some regimes $\Sigma_t$) deteriorate more than exact-feasibility solvers based on the same likelihood.

These observations motivate a solver that separates noise estimation from constrained signal optimization. Within the same identifiable PPLS family, we target two upgrades under a fixed-noise protocol: noise-subspace pre-estimation for $(\sigma_e^2,\sigma_f^2)$ and exact manifold optimization for the remaining constrained likelihood. This design reduces instability from noise--signal re-coupling while maintaining strict orthogonality feasibility throughout optimization. A concrete setting where these properties matter together is multi-omics integration, where one simultaneously needs interpretable cross-view factors, accurate prediction, and reliable uncertainty under heterogeneous noise; we use TCGA-BRCA and PBMC CITE-seq as representative benchmarks.

\textbf{The main contributions of the paper are:}
\begin{enumerate}[leftmargin=*]
\item[\textbf{1.}] \textbf{Rate-optimal noise estimation.} Under the fixed-noise protocol, we propose a noise-subspace spectral estimator whose leading finite-sample error rate is \emph{independent of signal strength} (Theorem~\ref{thm:noise_bound}) and \emph{minimax-optimal} over the PPLS family (Theorem~\ref{thm:minimax-lb}). We further show that a full-spectrum noise estimator is \emph{inconsistent}, with a deterministic bias $p^{-1}\sum_i\theta^2_{t,i}$ that does not vanish with $N$ (Proposition~\ref{prop:hu-bias}). The bound extends to sub-Gaussian noise with an explicit excess-kurtosis correction (Theorem~\ref{thm:noise_bound_subgaussian}).

\item[\textbf{2.}] \textbf{Exact-manifold optimization with inference guarantees.} Under the fixed-noise protocol, the remaining likelihood is optimized on the exact product manifold $\mathrm{St}(p,r)\times\mathrm{St}(q,r)\times\mathbb{R}^{2r+1}_{++}$. Two strategies share this objective: SLM-Manifold performs full Riemannian optimization, while BCD-SLM exploits PPLS scalar separability for closed-form component updates (Propositions~\ref{prop:bcd_theta_closed_form}, \ref{prop:bcd_b_cubic}). The resulting estimator is consistent and asymptotically normal (Corollaries~\ref{cor:consistency}, \ref{cor:slm_consistency}, Theorem~\ref{thm:asymptotic_normality}), with a block-diagonal Fisher information that yields closed-form standard errors (Proposition~\ref{prop:fisher-block}). We further prove a benign landscape on the PCCA submanifold (Proposition~\ref{prop:pcca-benign}).

\item[\textbf{3.}] \textbf{Extended modeling scope.} An optional coordinate-wise Gaussianization layer extends applicability to non-Gaussian observations while preserving the fixed-noise skeleton and the covariance-identified latent structure, interpreted as a Gaussian quasi-likelihood (White, 1982). The rank-based inverse normal transform reduces the excess-kurtosis bias in Theorem~\ref{thm:noise_bound_subgaussian} to $O(1/N)$ (Corollary~\ref{cor:int_kurtosis_reduction}).
\end{enumerate}

Empirically, this design delivers calibrated cross-view prediction without any post-hoc recalibration: across synthetic high-noise regimes and two multi-omics benchmarks, the method attains near-nominal coverage, reaches Ridge-level point accuracy on TCGA-BRCA at rank $r=3$, matches or exceeds PO2PLS on cross-view prediction while providing native calibrated uncertainty, and improves the stability of parameter recovery under high noise. We additionally benchmark against PO2PLS \citep{PO2PLS}, the closest joint-probabilistic competitor that augments PPLS with view-specific orthogonal latent variables; the comparison clarifies when explicit orthogonal decomposition helps and when joint factors alone suffice for cross-view prediction.

Our framework adopts the scalar observed-data likelihood expansion (Theorem~\ref{thm:theoremA}) and a fixed-noise protocol, and concentrates its methodological contributions in two places. First, the observation noise is estimated from the low-eigenvalue noise subspace rather than from the full spectrum, which yields consistency and a minimax-optimal leading rate. Second, orthogonality is enforced by exact manifold retractions, so feasibility is preserved at every iterate. The sub-Gaussian extension and the Gaussianization-based modeling scope are further components of the present framework.

%% file: sub/sec2_related_work.tex
\section{Related Work}\label{sec:related_work}

\textbf{PPLS, PCCA, and probabilistic multi-view models.}
Probabilistic CCA (PCCA) \citep{BachJordan2005} provides the canonical likelihood-based two-view latent-variable model, while the identifiable PPLS formulation of \citet{PPLS} specializes this probabilistic multi-view perspective to cross-view regression with orthogonal loadings and structured latent coupling. Subsequent work expanded the PPLS family in several directions, including probabilistic PLS regression variants \citep{PPLSEarly,PPLS2}, multi-omics extensions such as PO2PLS and GLM-PO2PLS \citep{PO2PLS,omics}, critical analyses of probabilistic PLS formulations \citep{PPLS1}, and deep probabilistic variants \citep{deepPPLS}. Among these extensions, PO2PLS \citep{PO2PLS} is the most directly comparable to our setting because it retains the joint probabilistic two-view structure while adding view-specific orthogonal latent components to absorb residual within-view variation. PO2PLS is oriented toward decomposition fidelity, namely separating joint signal from view-specific structure, whereas our fixed-noise pipeline is oriented toward cross-view prediction with calibrated uncertainty under the same identifiable joint parameterization. These goals are complementary rather than competing. The orthogonal components in PO2PLS capture view-specific residual structure that does not transfer to cross-view prediction, an effect we document on the two real-data benchmarks. GLM-PO2PLS broadens application scope to generalized-linear responses but still follows EM-style fitting rather than the fixed-noise plus manifold-optimization route studied here. Deep probabilistic PPLS variants replace linear loadings with neural modules, whereas our focus is the opposite regime: explicit orthogonality, closed-form component updates, and analytic uncertainty inference. Related probabilistic multi-view models such as Bayesian CCA \citep{Klami2013} also emphasize uncertainty-aware latent representations.

\textbf{Optimization under orthogonality constraints.}
Optimization on Stiefel/Grassmann manifolds has become a standard treatment for orthogonality-constrained problems, from the matrix-geometric foundation of \citet{Edelman1998} to modern algorithmic frameworks \citep{Absil2008,Boumal2023}. Feasible first-order schemes such as the Cayley-transform method of \citet{WenYin2013} show that orthogonality can be enforced exactly along the full optimization trajectory. This geometry is directly relevant to PPLS because the identifiable parameterization imposes $W^\top W=C^\top C=I_r$. Classical EM/ECM handles these constraints indirectly through alternating latent-variable updates, and scalar-likelihood implementations have used a penalty-based interior-point solver. Both approaches can be effective, but neither keeps exact manifold feasibility as the primary computational primitive. Our approach instead combines fixed-noise estimation with exact Stiefel retractions, so orthogonality is satisfied at every iterate and the scalar objective can be optimized with geometry-aware first-order updates plus closed-form component refinements. This is the main optimization-level distinction of our framework within the PPLS line.

\textbf{Statistical properties and noise estimation.}
Noise-level estimation from eigenvalue bulk statistics is classical in PPCA and spiked covariance models \citep{TippingBishop1999,MarchenkoPastur1967,BaikBenArousPeche2005,KritchmanNadler2009}. These works motivate estimating noise from the low-eigenvalue subspace rather than mixing signal and noise directions, and they also highlight the importance of finite-sample corrections when $p/N$ is non-negligible. Our fixed-noise protocol pre-estimates $(\sigma_e^2,\sigma_f^2)$ from the marginal sample covariances $\hat S_{xx}$ and $\hat S_{yy}$ before optimizing the remaining likelihood, using only the low-eigenvalue noise subspace. Relative to averaging the full spectrum, this choice yields a signal-independent leading term in Theorem~\ref{thm:noise_bound}, a minimax lower-bound match in Theorem~\ref{thm:minimax-lb}, and an explicit inconsistency result for full-spectrum averaging (Proposition~\ref{prop:hu-bias}), together with a sub-Gaussian extension in Theorem~\ref{thm:noise_bound_subgaussian}.

\textbf{Nonlinear and deep uncertainty baselines.}
Deep CCA (DCCA) \citep{Andrew2013} extends linear CCA by replacing linear projections with deep neural encoders to learn nonlinear shared representations. Kernel CCA (KCCA) \citep{Hardoon2004} extends CCA to nonlinear feature spaces through kernel mappings; for scalability we use the Nystr\"om approximation \citep{WilliamsSeeger2001}. For uncertainty-aware deep prediction, MC-Dropout \citep{GalGhahramani2016} interprets test-time dropout as approximate Bayesian inference, while Deep Ensembles \citep{Lakshminarayanan2017} aggregate independently trained models. Post-hoc recalibration methods such as temperature scaling \citep{Guo2017}, isotonic recalibration \citep{ZadroznyElkan2002}, and conformal prediction \citep{VovkGammermanShafer2005,ShaferVovk2008} are commonly used to improve deployment-level calibration. We include these nonlinear baselines to position the linear probabilistic PPLS pipeline against a modern nonlinear prediction and uncertainty landscape.

%% file: sub/sec3_model.tex
\section{Model}\label{sec:model}\label{sec:preliminaries}\label{sec:noise_estimation}

Figure~\ref{fig:method_pipeline_overview} summarizes the fixed-noise PPLS framework as a single estimation-to-prediction pipeline, and we use it to organize the present section. The pipeline has five stages. An optional \emph{Gaussianization} stage applies a coordinate-wise transform (for example a log or rank-based inverse normal transform) to each view when the raw observations are non-Gaussian, mapping them to better-behaved marginals while leaving the covariance-identified latent structure intact (Section~\ref{sec:gaussianization_extension}). A \emph{noise pre-estimation} stage estimates the observation-noise variances $(\sigma_e^2,\sigma_f^2)$ from the low-eigenvalue subspaces of the marginal sample covariances and then keeps them fixed (Section~\ref{sec:noise_subspace}). A \emph{scalar-likelihood construction} stage rewrites the observed-data likelihood in a scalar form that avoids explicit determinants and matrix inverses (Theorem~\ref{thm:theoremA}). A \emph{Stiefel optimization} stage minimizes the resulting fixed-noise objective on the product manifold while preserving orthonormal loadings at every iterate (Section~\ref{sec:optimization_algorithms}). Finally, a \emph{prediction and calibration} stage forms the conditional Gaussian predictive law and rescales its covariance by an empirical calibration factor (Section~\ref{sec:pipeline_pred_uq}). The remainder of this section develops the geometric, statistical, and modeling ingredients that these stages rely on.

\begin{figure*}[t]
\centering
\IfFileExists{artifacts/method_pipeline/pipeline_overview.png}{%
\includegraphics[width=0.97\textwidth]{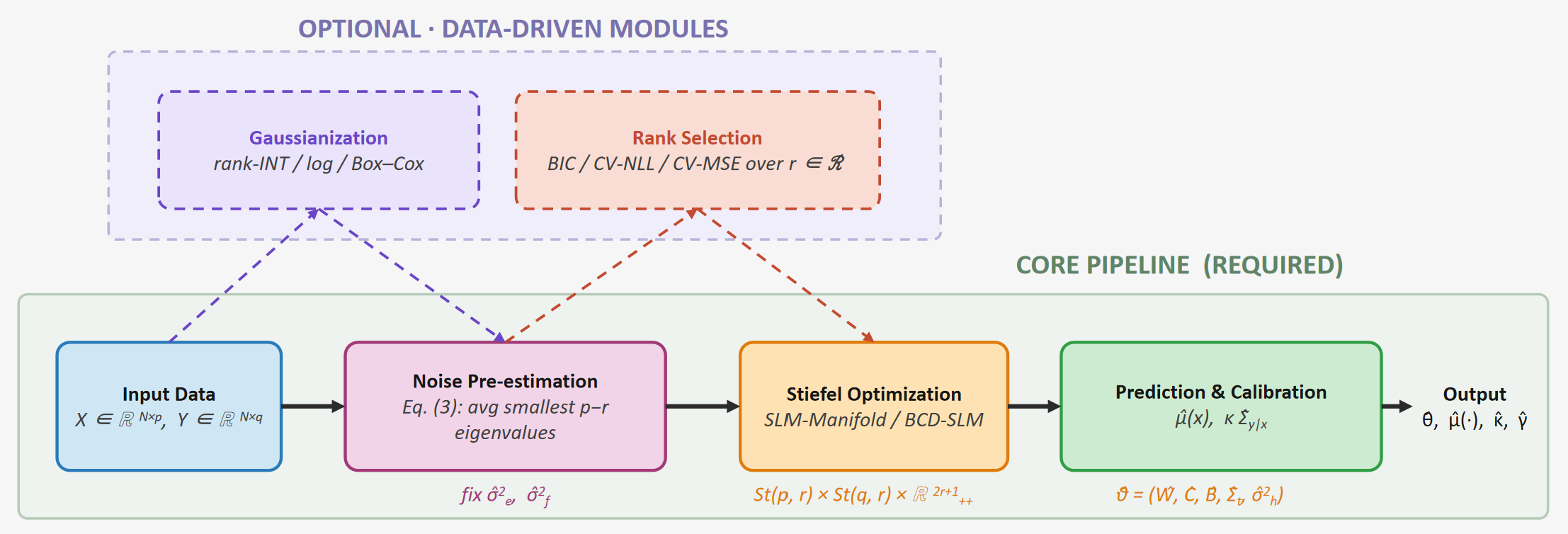}
}{%
\fbox{\parbox{0.95\textwidth}{\centering
Pipeline figure placeholder. Expected rendering output (generated by mermaid-py):\\
Data input $\to$ Gaussianization $\to$ spectral noise pre-estimation $\to$ scalar objective construction $\to$ manifold optimization/BCD-SLM $\to$ prediction and $\kappa$-calibration.
}}
}
\caption{Pipeline overview of the proposed fixed-noise PPLS framework.}
\label{fig:method_pipeline_overview}
\end{figure*}

\subsection{Geometric preliminaries}\label{sec:geom_prelim}
Because PPLS enforces orthonormal columns in $W\in\mathbb{R}^{p\times r}$ and $C\in\mathbb{R}^{q\times r}$, the optimization naturally lives on Stiefel manifolds
\[
\mathrm{St}(p,r):=\{A\in\mathbb{R}^{p\times r}:A^\top A=I_r\},\qquad
\mathrm{St}(q,r):=\{A\in\mathbb{R}^{q\times r}:A^\top A=I_r\}.
\]
We use the Euclidean metric induced from the ambient space, which defines a standard Riemannian structure on each Stiefel factor \citep{Absil2008,Boumal2023}. At $W\in\mathrm{St}(p,r)$, the tangent space is
\[
T_W\mathrm{St}(p,r)=\{\Xi\in\mathbb{R}^{p\times r}:W^\top\Xi+\Xi^\top W=0\},
\]
and the Riemannian gradient is obtained by orthogonal projection of the Euclidean gradient onto this tangent space. Retractions are implemented with thin-QR maps, i.e., $W_+=\mathrm{qf}(W+\alpha\Xi)$ with positive diagonal in $R$; the same construction is used for $C$. With the orthonormal-column constraints of PPLS, we transform the originally intractable orthogonality-constrained problem into a manifold optimization problem under the Riemannian geometry framework. This is the geometric foundation of the closed-form component updates and exact-feasibility iterations developed in the sequel, because it lets us treat the loadings $(W,C)$ as points on a smooth curved domain on which gradients, retractions, and stationarity are all well defined.

\subsection{Basic PPLS model and identifiability}\label{sec:ppls_model}
The PPLS model \citep{PPLS} relates two sets of observed variables $x \in \mathbb{R}^p$ and $y \in \mathbb{R}^q$ to latent variables $t \in \mathbb{R}^r$ and $u \in \mathbb{R}^r$:
\begin{equation}\label{eq:ppls-model}
x=tW^\top+e,\qquad y=uC^\top+f,\qquad u=tB+h,
\end{equation}
where the latent and noise variables satisfy
\[
t\sim\mathcal{N}(0,\Sigma_t),\quad h\sim\mathcal{N}(0,\sigma_h^2I_r),\quad
e\sim\mathcal{N}(0,\sigma_e^2I_p),\quad f\sim\mathcal{N}(0,\sigma_f^2I_q),
\]
with $t,h,e,f$ mutually independent, $\Sigma_t=\mathrm{diag}(\theta_{t_1}^2,\ldots,\theta_{t_r}^2)$, and $B=\mathrm{diag}(b_1,\ldots,b_r)$.

The joint distribution of $(x,y)$ is $\mathcal{N}(0,\Sigma)$ with block covariance
{\setlength{\arraycolsep}{3pt}\small
\begin{equation}\label{eq:joint-cov}
\Sigma=
\begin{pmatrix}
\Sigma_{xx} & \Sigma_{xy}\\
\Sigma_{yx} & \Sigma_{yy}
\end{pmatrix}
:=
\begin{pmatrix}
W\Sigma_tW^\top+\sigma_e^2I_p & W\Sigma_tBC^\top\\
CB\Sigma_tW^\top & C(B^2\Sigma_t+\sigma_h^2I_r)C^\top+\sigma_f^2I_q
\end{pmatrix}.
\end{equation}}
Given observations $\{(x_i,y_i)\}_{i=1}^N$, let $S=\frac{1}{N}\sum_{i=1}^N(x_i,y_i)^\top(x_i,y_i)$ denote the sample second-moment matrix for centered data. The observed-data negative log-likelihood (up to constants) is
\[
\mathcal{L}(\theta)=\ln(\det\, \Sigma)+\mathrm{tr}(S\Sigma^{-1}),
\qquad
\theta =(W,C,B,\Sigma_t,\sigma_e^2,\sigma_f^2,\sigma_h^2).
\]

\begin{assumption}[Identifiability \citep{PPLS}]\label{assump:identifiability}
The PPLS model~\eqref{eq:ppls-model} is identifiable under:
(C1) $b_k>0$ for all $k$;
(C2) $W^\top W=C^\top C=I_r$;
(C3) $\theta_{t_k}^2b_k$ is strictly decreasing in $k$;
(C4) $r<\min\{p,q\}$;
(C5) $\sigma_e^2,\sigma_f^2,\sigma_h^2>0$.
\end{assumption}

\begin{definition}[Sub-Gaussian random vector]\label{def:subgaussian_vector}
A zero-mean random vector $z\in\mathbb{R}^p$ is sub-Gaussian with parameter $\sigma_\psi$ if for all unit vectors $v\in\mathbb{S}^{p-1}$ (the unit sphere in $\mathbb{R}^p$) and all $t>0$,
\[
\mathbb{P}\bigl(|v^\top z|>t\bigr)\le 2\exp\!\left(-\frac{t^2}{2\sigma_\psi^2}\right).
\]
When $z\sim\mathcal{N}(0,\sigma^2 I_p)$, the sub-Gaussian parameter equals $\sigma$.
\end{definition}

These conditions fix sign, rotation, and ordering ambiguities and guarantee $\Sigma\succ0$. Positive-definiteness follows from a standard Schur-complement argument using $\theta_{t_i}^2>0$, $\sigma_e^2>0$, and $\sigma_f^2>0$. For the PCCA specialization we allow the boundary case $\sigma_h^2=0$ in Section~\ref{sec:special_cases}.

\subsection{Reduction to PCCA and the PPCA marginal}\label{sec:special_cases}
Setting $B = I_r$ and $\sigma_h^2 = 0$ in~\eqref{eq:ppls-model} reduces the inner equation to $u=t$ and yields the standard PCCA model \citep{BachJordan2005} $x=tW^\top+e$, $y=tC^\top+f$, with $t\sim\mathcal{N}(0,\Sigma_t)$. The joint covariance~\eqref{eq:joint-cov} simplifies to $\Sigma_{xx}=W\Sigma_tW^\top+\sigma_e^2I_p$, $\Sigma_{xy}=W\Sigma_tC^\top$, and $\Sigma_{yy}=C\Sigma_tC^\top+\sigma_f^2I_q$, and $\Sigma\succ0$ holds for $\sigma_e^2,\sigma_f^2>0$. In the scalar objective (Theorem~\ref{thm:theoremA}), each $D_i$ reduces to $\theta_{t i}^2(\sigma_f^2 + \sigma_e^2) + \sigma_f^2 \sigma_e^2$.

The $x$-marginal covariance $\Sigma_x=W\Sigma_tW^\top+\sigma_e^2I_p$ matches the PPCA model \citep{TippingBishop1999}. Accordingly, setting the gradient of the $x$-marginal log-likelihood with respect to $\theta_{t,i}^2$ to zero and solving recovers the PPCA updates of \citet[][Eq.~(9)]{TippingBishop1999}: $\hat\theta_{t i}^2=\lambda_i(\hat S_{xx})-\hat\sigma_e^2$, with $\hat\sigma_e^2$ given by Eq.~\eqref{eq:sigma_e_hat}. Our finite-sample noise-estimation bound therefore specializes directly to the PPCA setting.

\subsection{Extended model scope: Gaussianization}\label{sec:model_extensions}\label{sec:gaussianization_extension}
We extend the classical isotropic-Gaussian PPLS formulation through an optional preprocessing layer that preserves the covariance-based core of the model.

When $e$ and $f$ are non-Gaussian but still have well-behaved tails, we fit PPLS after optional coordinate-wise Gaussianization and interpret the likelihood as a Gaussian quasi-likelihood \citep{White1982}. This keeps the covariance-identified latent structure while relaxing the exact Gaussian assumption at the raw-data level.

We introduce coordinate-wise Gaussianization maps
\[
\psi=(\psi_1,\ldots,\psi_p),\qquad \phi=(\phi_1,\ldots,\phi_q),
\]
applied to each view independently:
\[
x=\psi(x_{\mathrm{raw}}),\qquad y=\phi(y_{\mathrm{raw}}).
\]
Common choices include:
\begin{enumerate}[label=(\alph*),leftmargin=*]
\item log transform: $\psi_j(z)=\log(1+z)$, standard for count data;
\item rank-based inverse normal transform (INT):
\[
\psi_j(z)=\Phi^{-1}\!\left(\frac{\operatorname{rank}(z)}{N+1}\right),
\]
which enforces exact marginal Gaussianity on the sample, where $\Phi^{-1}$ denotes the quantile function (inverse CDF) of the standard normal distribution $\mathcal{N}(0,1)$;
\item Box--Cox / Yeo--Johnson families: $\psi_j(\cdot;\lambda_j)$ with $\lambda_j$ estimated per coordinate to improve marginal normality.
\end{enumerate}
After transformation, the PPLS model is fitted to $(x,y)=(\psi(x_{\mathrm{raw}}),\phi(y_{\mathrm{raw}}))$ using Algorithm~\ref{alg:full_pipeline}. Predictive distributions are computed in transformed space and, when needed for interpretation, mapped back via approximate inverse maps $\psi^{-1}$ and $\phi^{-1}$.

\begin{remark}[Noise model relaxation]\label{rem:noise_model_relaxation}
The Gaussian assumption on $e$ and $f$ enters the methodology at two different levels. For exact likelihood-based modeling of the joint distribution of $(x,y)$, Gaussianity specifies the full parametric form of the observed-data objective. For noise pre-estimation, however, the key ingredients are the covariance structure, sub-Gaussian tail control, and finite fourth moments in the noise subspace. This distinction is what makes optional Gaussianization meaningful: it improves the adequacy of the likelihood model, while the spectral pre-estimation step remains meaningful under weaker tail assumptions.
\end{remark}

\paragraph{Effect on downstream inference.}
When Gaussianization is applied but the transformed noise remains non-Gaussian, the PPLS likelihood is misspecified. By White's quasi-likelihood theory \citep{White1982}, the estimator converges to the pseudo-true parameter minimizing Kullback--Leibler divergence from the true distribution. Under correct first- and second-moment specification, this pseudo-true parameter preserves the mean/covariance structure used by PPLS. Hence, under standard regularity,
\begin{itemize}[leftmargin=*]
\item consistency of loading matrices $(W,C)$ and signal parameters $(B,\Sigma_t)$ is preserved;
\item asymptotic normality holds with sandwich covariance $I(\theta_0)^{-1}J(\theta_0)I(\theta_0)^{-1}$ in place of $I(\theta_0)^{-1}$, where $J(\theta_0):=\mathrm{Var}_{\theta_0}[\nabla_\theta\log p(x,y;\theta_0)]$ is the variance of the quasi-score under the true distribution (cf.\ \citealt{White1982});
\item the calibration factor $\hat\kappa$ in Section~\ref{sec:pipeline_pred_uq} absorbs residual misspecification because it measures empirical dispersion without requiring exact Gaussianity.
\end{itemize}
Thus Gaussianization extends the model family without changing the fixed-noise optimization skeleton.

\subsection{Noise estimation strategy}\label{sec:noise_subspace}
We estimate the observation-noise variances $(\sigma_e^2,\sigma_f^2)$ from the low-eigenvalue subspaces of the marginal sample covariances and then keep them fixed while fitting the structured signal parameters. Because Eq.~\eqref{eq:sigma_e_hat} averages only the smallest $p-r$ eigenvalues, it depends on $r$ through the summation range. This introduces a chicken-and-egg issue when $r$ is unknown, which is the typical situation on real data. We therefore distinguish two operating modes: (i) \emph{oracle-$r$ mode}, where $r$ is given; and (ii) \emph{data-driven mode}, where $r$ is selected over a candidate grid using the two-stage procedure in Section~\ref{sec:rank_selection_fixed_noise}. The data-driven mode is the default whenever $r$ is not known a priori, and Proposition~\ref{prop:rank_overspec_noise} shows that a conservative upper bound on $r$ preserves consistency of the noise estimate.

We estimate $\sigma_e^2$ by averaging the smallest $p-r$ eigenvalues of $\hat S_{xx}$.
Throughout, we adopt the convention
\[
\lambda_1(M)\ge\lambda_2(M)\ge\cdots\ge\lambda_p(M)
\]
for eigenvalues of any $p\times p$ symmetric matrix $M$, so that the ``smallest $p-r$'' refers to indices $i=r+1,\dots,p$.
\begin{equation}\label{eq:sigma_e_hat}
\hat{\sigma}_e^2 = \frac{1}{p-r} \sum_{i=r+1}^{p} \lambda_i(\hat S_{xx}),
\end{equation}
and analogously estimate $\hat\sigma_f^2$ from $\hat S_{yy}$. Here
\[
\hat S_{xx} = \frac{1}{N}\sum_{i=1}^N (x_i-\bar{x})(x_i-\bar{x})^\top,
\qquad \bar{x}:=\frac{1}{N}\sum_{i=1}^N x_i,
\]
This is the PPCA noise-variance maximum likelihood estimator of \citet{TippingBishop1999}; see Appendix~\ref{app:exp_supp}, Table~\ref{tab:ppca_noise_verification} for a numerical check. The $y$-view sample mean $\bar{y}$ and sample covariance $\hat S_{yy}$ are defined analogously. By averaging only the $(p-r)$-dimensional noise subspace, Eq.~\eqref{eq:sigma_e_hat} targets the population noise level directly; averaging the full spectrum of $\hat S_{xx}$ instead would mix signal and noise directions.

\begin{proposition}[Inconsistency of the full-spectrum estimator]
\label{prop:hu-bias}
Let $\widehat{\sigma}^2_{\mathrm{full}} := p^{-1}\sum_{i=1}^{p}\lambda_i(\widehat{S}_{xx})$
denote the full-spectrum noise estimator. Under the PPLS
model with $W^{\top}W = I_r$, its expectation satisfies
\begin{equation}
\mathbb{E}\bigl[\widehat{\sigma}^2_{\mathrm{full}}\bigr] - \sigma^2_e
\;=\; \frac{1}{p}\sum_{i=1}^{r}\theta^2_{t,i},
\label{eq:hu-bias}
\end{equation}
a deterministic bias that does \emph{not} vanish as $N \to \infty$ with $(p,r)$
fixed. Consequently $\widehat{\sigma}^2_{\mathrm{full}}$ is inconsistent
whenever $\sum_i \theta^2_{t,i}$ is bounded away from zero, while the
noise-subspace estimator $\widehat{\sigma}^2_e$ in Eq.~\eqref{eq:sigma_e_hat} satisfies
$\widehat{\sigma}^2_e \xrightarrow{P} \sigma^2_e$ under the same asymptotics
(Corollary~\ref{cor:consistency}).
\end{proposition}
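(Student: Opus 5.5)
The plan rests on the observation that the full-spectrum estimator is a trace: $\widehat{\sigma}^2_{\mathrm{full}} = p^{-1}\,\mathrm{tr}(\widehat S_{xx})$. This turns the statement into a first-moment calculation, with no eigenvalue perturbation theory needed. I will compute $\mathbb{E}[\mathrm{tr}(\widehat S_{xx})]$ from the $x$-marginal of the PPLS model, read off the bias, and then pass from the expectation to a probability limit to conclude inconsistency.

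\textbf{Step 1 (population trace).} From Eq.~\eqref{eq:joint-cov} we have $\Sigma_{xx}=W\Sigma_tW^\top+\sigma_e^2 I_p$. By cyclicity of the trace and (C2), $W^\top W=I_r$, it follows that $\mathrm{tr}(W\Sigma_tW^\top)=\mathrm{tr}(\Sigma_t W^\top W)=\mathrm{tr}(\Sigma_t)=\sum_{i=1}^r\theta^2_{t,i}$. Hence
\[
p^{-1}\mathrm{tr}(\Sigma_{xx})=\sigma_e^2+p^{-1}\sum_{i=1}^r\theta^2_{t,i}.
\]

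\textbf{Step 2 (expectation of the sample trace).} For the second-moment matrix of zero-mean data (the convention used for $S$ in Section~\ref{sec:ppls_model}), $\mathbb{E}[\widehat S_{xx}]=\Sigma_{xx}$ exactly, which gives Eq.~\eqref{eq:hu-bias} directly. With empirical centering by $\bar x$ and the $1/N$ normalization, one instead gets $\mathbb{E}[\widehat S_{xx}]=\frac{N-1}{N}\Sigma_{xx}$. This only multiplies both terms by $(N-1)/N$ and adds an $O(1/N)$ shrinkage term $-\sigma_e^2/N$ that vanishes. I will state the identity for the known-mean (or $1/(N-1)$) convention and remark that the signal-induced term $p^{-1}\sum_i\theta^2_{t,i}$ survives as $N\to\infty$ under any of these conventions. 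I expect this bookkeeping to be the only delicate point, since the paper's claimed exact equality depends on which normalization is in force.

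\textbf{Step 3 (inconsistency versus consistency).} With $p$ and $r$ fixed, the law of large numbers gives $\widehat S_{xx}\xrightarrow{P}\Sigma_{xx}$ entrywise, because the Gaussian entries have finite second moments. By continuity of the trace,
\[
\widehat{\sigma}^2_{\mathrm{full}}\xrightarrow{P}\sigma_e^2+p^{-1}\sum_i\theta^2_{t,i}.
\]
This limit differs from $\sigma_e^2$ whenever $\sum_i\theta^2_{t,i}$ is bounded away from zero, so the estimator is inconsistent.

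For contrast, the noise-subspace estimator converges to the right value. The population eigenvalues of $\Sigma_{xx}$ are $\theta^2_{t,i}+\sigma_e^2$ for $i\le r$ and exactly $\sigma_e^2$ for $i=r+1,\dots,p$, because $W$ has orthonormal columns and its orthogonal complement has dimension $p-r$. Weyl's inequality gives $|\lambda_i(\widehat S_{xx})-\lambda_i(\Sigma_{xx})|\le\|\widehat S_{xx}-\Sigma_{xx}\|_{\mathrm{op}}\xrightarrow{P}0$. Averaging over $i=r+1,\dots,p$ therefore yields $\widehat\sigma_e^2\xrightarrow{P}\sigma_e^2$, which is the content of Corollary~\ref{cor:consistency}; I will cite that corollary for the precise statement.
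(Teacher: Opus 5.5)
Your proposal is correct and is essentially the paper's argument: the paper likewise reads $\widehat{\sigma}^2_{\mathrm{full}}$ as $p^{-1}\mathrm{tr}(\widehat S_{xx})$, passes the expectation through the trace, and uses $\mathrm{tr}(W\Sigma_tW^\top)=\mathrm{tr}(\Sigma_tW^\top W)=\mathrm{tr}(\Sigma_t)$. Your additions refine that argument rather than replace it. The paper tacitly takes $\mathbb{E}[\widehat S_{xx}]=\Sigma_x$, which is off by a factor $(N-1)/N$ for its own centered $1/N$ definition of $\widehat S_{xx}$ (the exact bias is then $\frac{N-1}{N}\,p^{-1}\sum_i\theta^2_{t,i}-\sigma_e^2/N$). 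Your law-of-large-numbers limit directly justifies the word ``inconsistent,'' a step the paper leaves implicit.
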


\begin{proof}
By linearity of trace,
$\mathbb{E}[\widehat{\sigma}^2_{\mathrm{full}}] = p^{-1}\mathrm{tr}(\Sigma_x)
= p^{-1}\bigl(\mathrm{tr}(W\Sigma_t W^{\top}) + p\sigma^2_e\bigr)
= \sigma^2_e + p^{-1}\sum_i\theta^2_{t,i}$,
using $\mathrm{tr}(W\Sigma_t W^{\top}) = \mathrm{tr}(\Sigma_t W^{\top}W)
= \mathrm{tr}(\Sigma_t)$.
\end{proof}

\paragraph{Strict asymptotic improvement over full-spectrum averaging.}
Under Theorem~\ref{thm:noise_bound}, for fixed $(p,r,\Sigma_t)$ with $\sum_i\theta^2_{t,i}>0$,
\[
\lim_{N\to\infty} \mathbb{E}\bigl[(\widehat{\sigma}^2_e - \sigma^2_e)^2\bigr] = 0,
\qquad
\liminf_{N\to\infty} \mathbb{E}\bigl[(\widehat{\sigma}^2_{\mathrm{full}} - \sigma^2_e)^2\bigr]
\ge \Bigl(\tfrac{1}{p}\textstyle\sum_i\theta^2_{t,i}\Bigr)^{2} > 0.
\]

Proposition~\ref{prop:hu-bias} formalizes the qualitative comparison stated in
Sections~\ref{sec:introduction} and~\ref{sec:related_work}: full-spectrum
averaging incorporates $r$ signal eigenvalues, each of
which inflates the sample mean by $\theta^2_{t,i}/p$. Because this bias is
deterministic and independent of sample size, no amount of additional data can
remove it. The noise-subspace estimator in Eq.~\eqref{eq:sigma_e_hat} avoids this
contamination by construction, which is the structural reason behind both the
signal-independent leading rate in Theorem~\ref{thm:noise_bound} and the
rate-optimality established by Theorem~\ref{thm:minimax-lb}. Appendix~\ref{app:hu-bias-mc}
reports a direct numerical confirmation of~\eqref{eq:hu-bias}.

Under the PPLS marginal covariance, the smallest $p-r$ eigenvalues of $\Sigma_x$ are all exactly $\sigma_e^2$, and likewise for the $y$-view, and our estimator averages the corresponding sample eigenvalues. For fixed $p$ and $N\to\infty$ this average is consistent (Theorem~\ref{thm:noise_bound}). When the aspect ratio $\gamma:=p/N$ (or $q/N$) is not negligible, however, the sample noise eigenvalues no longer concentrate at $\sigma_e^2$: they spread into a bulk whose support is described by the Marchenko--Pastur law with parameter $\gamma$ \citep{MarchenkoPastur1967}. The \emph{mean} of that bulk remains $\sigma_e^2$, so the simple average stays asymptotically unbiased for $\sigma_e^2$; what a finite $\gamma$ changes is the estimator's variance, together with a deterministic $O(\gamma)$ distortion that would appear if one instead used edge- or count-based spectral statistics. A Marchenko--Pastur (or related shrinkage) correction maps the observed bulk back through the Marchenko--Pastur transform and removes this finite-$\gamma$ distortion. In the regimes studied here $p/N$ is small enough that the correction changes $\hat\sigma_e^2$ negligibly --- the bias-floor study of Appendix~\ref{app:hu-bias-mc} and Figure~\ref{fig:noise_ablation_vary_p} confirms this directly --- so we report the uncorrected estimate and flag the Marchenko--Pastur correction as the recommended adjustment in the high-$p/N$ regime.

\subsection{Theoretical guarantees for spectral pre-estimation}\label{sec:noise_theory}
The main statistical point is that the leading error term of the noise-subspace estimator does not depend on signal strength. Prior PPLS bounds for full-spectrum estimators do not have this feature because they average signal and noise directions together. The theorem below makes the signal-independent leading term explicit and isolates the eigengap-dependent correction.

\begin{theorem}[Error bound for spectral noise variance estimator]\label{thm:noise_bound}
Consider the PPLS model $x=tW^\top+e$ with $t\sim\mathcal{N}(0,\Sigma_t)$, $e\sim\mathcal{N}(0,\sigma_e^2 I_p)$, $t\perp e$, and $W^\top W=I_r$.
Assume $N>p$ and $\min_i \theta_{t_i}^2>0$.
Let $\hat{\sigma}_e^2$ be defined in Eq.~\eqref{eq:sigma_e_hat}.
Then, with probability at least $1-\delta$,
\begin{equation}\label{eq:noise_error_bound}
\bigl|\hat{\sigma}_e^2-\sigma_e^2\bigr| \leq \sigma_e^2\sqrt{\frac{2\ln(4/\delta)}{N(p-r)}} + C\,\frac{\|\Sigma_x\|_{\mathrm{op}}^2\,p}{(p-r)\,N\,\min_i\theta_{t_i}^2},
\end{equation}
where $C>0$ is an absolute constant.
The bound in Eq.~\eqref{eq:noise_error_bound} is most informative when the correction term is dominated by the leading rate; see the discussion after Eq.~\eqref{eq:noise_error_bound_expect} for explicit conditions.
In addition, the estimator satisfies the unconditional mean-error bound
\begin{equation}\label{eq:noise_error_bound_expect}
\mathbb{E}\bigl[\,|\hat{\sigma}_e^2-\sigma_e^2|\,\bigr] \leq \frac{2\sigma_e^2}{\sqrt{N(p-r)}}.
\end{equation}
\end{theorem}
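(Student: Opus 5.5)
The plan is to sandwich the sum of the smallest $p-r$ sample eigenvalues between two quantities built from the population noise subspace, and then control each piece separately. Let $P:=WW^\top$ and $P_\perp:=I_p-WW^\top$. Since $\Sigma_x=W\Sigma_tW^\top+\sigma_e^2I_p$, the range of $P_\perp$ is exactly the $(p-r)$-dimensional eigenspace of $\Sigma_x$ with eigenvalue $\sigma_e^2$.

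\textbf{Upper bound.} By the Ky Fan variational principle,
\[
\sum_{i=r+1}^p\lambda_i(\hat S_{xx})=\min_{U\in\mathrm{St}(p,p-r)}\mathrm{tr}(U^\top\hat S_{xx}U)\le \mathrm{tr}(P_\perp\hat S_{xx}P_\perp).
\]
Because $P_\perp W=0$, we have $P_\perp x_i=P_\perp e_i$. Hence $\mathrm{tr}(P_\perp\hat S_{xx}P_\perp)=N^{-1}\sum_i\|P_\perp(e_i-\bar e)\|^2$, which is distributed as $\sigma_e^2\chi^2_{(N-1)(p-r)}/N$ after centering. The signal does not enter this term at all, and this is the source of the signal-independent leading rate.

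\textbf{Chi-square concentration.} A two-sided Laurent--Massart bound, applied at level $\delta/2$ per side, gives
\[
\Bigl|\tfrac{1}{p-r}\mathrm{tr}(P_\perp\hat S_{xx}P_\perp)-\sigma_e^2\Bigr|\le\sigma_e^2\sqrt{\tfrac{2\ln(4/\delta)}{N(p-r)}}
\]
up to lower-order terms. Those lower-order terms are the linear $\ln(1/\delta)/(N(p-r))$ piece and the $O(1/N)$ centering shift. I would absorb both into the constant $C$ of the second term, using $\sigma_e^2\le\|\Sigma_x\|_{\mathrm{op}}$ and $\sigma_e^2\le\|\Sigma_x\|_{\mathrm{op}}^2/\min_i\theta_{t_i}^2$, both of which hold because $\|\Sigma_x\|_{\mathrm{op}}=\theta_{t_1}^2+\sigma_e^2$ dominates both $\sigma_e^2$ and $\min_i\theta_{t_i}^2$.

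\textbf{Lower bound (perturbation step).} Here I must show the minimum over subspaces cannot drop far below $\mathrm{tr}(P_\perp\hat S_{xx}P_\perp)$. Write $\hat S_{xx}$ in block form with respect to $P$ and $P_\perp$:
\begin{itemize}
\item diagonal blocks $A=P\hat S_{xx}P$ and $D=P_\perp\hat S_{xx}P_\perp$;
\item off-diagonal block $E=P_\perp\hat S_{xx}P$.
\end{itemize}
A Schur-complement / second-order eigenvalue perturbation argument (equivalently, a $\sin\Theta$ bound of Davis--Kahan type applied to the sample eigenspace) gives
\[
\mathrm{tr}(D)-\sum_{i>r}\lambda_i(\hat S_{xx})\le\frac{\|E\|_F^2}{\lambda_r(A)-\lambda_1(D)}.
\]
The denominator is at least $\min_i\theta_{t_i}^2/2$ on a high-probability event where $\|\hat S_{xx}-\Sigma_x\|_{\mathrm{op}}\le\min_i\theta_{t_i}^2/4$. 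For the numerator, the population block $P_\perp\Sigma_xP=0$, so $E=P_\perp(\hat S_{xx}-\Sigma_x)P$. Its Frobenius norm squared is bounded by $\|E\|_F^2\le r\,\|\hat S_{xx}-\Sigma_x\|_{\mathrm{op}}^2\lesssim\|\Sigma_x\|_{\mathrm{op}}^2\,p/N$, using the standard Gaussian covariance concentration valid for $N>p$ (and $r\le p$). Dividing by $(p-r)$ produces the correction term in \eqref{eq:noise_error_bound}. I expect this to be the main technical step. The delicate points are getting the factor $p$ rather than $rp$ or $p^2$, and handling the complementary event where the eigengap is not resolved. On that complementary event I would instead use the crude bound $|\hat\sigma_e^2-\sigma_e^2|\le\|\hat S_{xx}-\Sigma_x\|_{\mathrm{op}}$, whose probability is small enough to be folded into $\delta$ and $C$.

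\textbf{Expectation bound \eqref{eq:noise_error_bound_expect}.} Use $\mathbb{E}|X|\le\sqrt{\mathbb{E}X^2}$ on the chi-square part. Its variance is $2\sigma_e^4(N-1)(p-r)/N^2$ and its centering bias is $\sigma_e^2/N$, which yields at most about $\sqrt2\,\sigma_e^2/\sqrt{N(p-r)}$ plus $\sigma_e^2/N$. This leaves slack up to the constant $2$.

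\textbf{Main obstacle.} The genuine difficulty is the perturbation part: one-sidedly it only lowers the estimate, yet its expectation is of order $\|\Sigma_x\|^2p/(N(p-r)\min\theta^2)$. That is not obviously dominated by $\sigma_e^2/\sqrt{N(p-r)}$ for every signal configuration. I would first try an exact computation in the Wishart model: condition on the signal-block scores and show that the expected downward shift is of order $\sigma_e^2 r/N$ uniformly in $\Sigma_t$, since the worst case as $\theta\to\infty$ is exact projection removal, which costs $r$ degrees of freedom per sample. If that succeeds, the shift is absorbed by the slack. If it does not, the expectation statement would need the regime conditions mentioned after the theorem.
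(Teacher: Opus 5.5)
Your route is the paper's route. You use the oracle statistic $\tilde\sigma_e^2=\frac{1}{p-r}\mathrm{tr}(P_\perp\hat S_{xx}P_\perp)$, the one-sided comparison $\hat\sigma_e^2\le\tilde\sigma_e^2$ (the paper's Step~5a), a squared-perturbation-over-eigengap bound on the downward shift, and a triangle inequality.

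Your Schur-complement inequality is valid (e.g.\ by Haynsworth inertia). Noting that $E=P_\perp(\hat S_{xx}-\Sigma_x)P$ is cleaner than the paper's von Neumann/Wedin split.

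The unresolved-gap event need not be rare, but you do not need it to be. On that event, Ky Fan/Weyl bound the unnormalized shift by $2r\|\hat S_{xx}-\Sigma_x\|_{\mathrm{op}}<8r\|\hat S_{xx}-\Sigma_x\|_{\mathrm{op}}^2/\min_i\theta_{t_i}^2$. Hence
\[
0\le\tilde\sigma_e^2-\hat\sigma_e^2\le\frac{8r\,\|\hat S_{xx}-\Sigma_x\|_{\mathrm{op}}^2}{(p-r)\min_i\theta_{t_i}^2}
\]
holds deterministically. The factor $r$ you worry about is real. With all $\theta_{t_i}^2$ comparable to $\sigma_e^2$, the shift is of order $r\sigma_e^2/N$, while the stated correction is of order $\sigma_e^2p/((p-r)N)$. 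A $\ln(1/\delta)$ term also survives. Both can be hidden in $C$ only by treating $r$ and $\delta$ as fixed, which is what the paper does.

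\textbf{Gap 1: the leading constant.} Laurent--Massart does not give the stated term ``up to lower-order terms.'' Because $\mathrm{Var}(\chi^2_1)=2$, it gives $2\sigma_e^2\sqrt{\ln(4/\delta)/(N(p-r))}$ (with $\delta/4$ per side). That is $\sqrt2$ times $\sigma_e^2\sqrt{2\ln(4/\delta)/(N(p-r))}$. The discrepancy is at order $N^{-1/2}$, so it cannot be moved into the $O(1/N)$ second term.

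No other argument can close it either. For fixed $(p,r,\Sigma_t)$, the shift and the second term are $O_P(1/N)$, while $\sqrt N(\tilde\sigma_e^2-\sigma_e^2)\Rightarrow\mathcal N(0,2\sigma_e^4/(p-r))$. So the probability that the stated display fails tends to $\Pr(|Z|>\sqrt{\ln(4/\delta)})$, which is about $0.014$ at $\delta=0.01$. The paper's Step~4 asserts the same constant from a Bernstein inequality with unspecified constants, so it shares this defect. What is provable is the display with leading constant $2$.

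\textbf{Gap 2: the mean-error bound.} You correctly identify the rotation shift as the obstacle. The paper's proof only bounds $\mathbb{E}|\tilde\sigma_e^2-\sigma_e^2|$ for the oracle statistic and never transfers it to $\hat\sigma_e^2$. Your proposed fix, however, fails: the expected shift is not $O(\sigma_e^2 r/N)$ uniformly in $\Sigma_t$.

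\emph{Weak signals.} Let $\theta_{t_i}^2\downarrow0$, which is permitted, with $p/N\to\gamma\in(0,1)$. The top $r$ sample eigenvalues then sit at the Marchenko--Pastur edge $\sigma_e^2(1+\sqrt\gamma)^2$. Hence $\mathbb{E}\hat\sigma_e^2-\sigma_e^2\approx-r\sigma_e^2(2\sqrt\gamma+\gamma)/(p-r)$. Its magnitude exceeds $2\sigma_e^2/\sqrt{N(p-r)}$ by a factor of about $r(1+\sqrt\gamma/2)>1$.

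\emph{Strong signals.} Even the shift you had in mind is not absorbed. As $\min_i\theta_{t_i}^2\to\infty$, the bias tends to $-(r+1)\sigma_e^2/N$: $r$ degrees of freedom are absorbed by the fitted subspace and one by centering. This violates the bound whenever $N<(r+1)^2(p-r)/4$, e.g.\ $p=500$, $r=5$, $N=2000$.

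So the mean-error bound is false for $\hat\sigma_e^2$ under the stated hypotheses, and your fallback is the right conclusion. The deterministic shift bound above, combined with $\mathbb{E}\|\hat S_{xx}-\Sigma_x\|_{\mathrm{op}}^2\lesssim\|\Sigma_x\|_{\mathrm{op}}^2p/N$ for $N>p$, does yield
\[
\mathbb{E}|\hat\sigma_e^2-\sigma_e^2|\le\frac{2\sigma_e^2}{\sqrt{N(p-r)}}+C\,\frac{r\|\Sigma_x\|_{\mathrm{op}}^2\,p}{(p-r)\,N\min_i\theta_{t_i}^2}.
\]
That is, the expectation bound needs the same eigengap correction as the high-probability bound.
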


\begin{proof}
We prove the bound for the $x$-space estimator; the $y$-space case is identical.

\textbf{Proof roadmap.}
The bound~\eqref{eq:noise_error_bound} is obtained by combining two ingredients: (i) an averaging gain on the oracle noise subspace (Steps~1,~4), which yields the leading $O(1/\sqrt{N(p-r)})$ rate; and (ii) a subspace-rotation correction controlling the gap between the sample and population noise subspaces (Step~5), which yields the eigengap-dependent correction term. Steps~2--3 establish the operator-norm covariance concentration used inside Step~5; they are preparatory and do not directly appear in the final bound.

\textbf{Step 1 (Noise-subspace spectrum).} Under $\Sigma_x = W\Sigma_tW^\top + \sigma_e^2 I_p$, the orthogonal complement of $\mathrm{span}(W)$ has eigenvalue exactly $\sigma_e^2$ with multiplicity $p-r$.

\textbf{Step 2 (Weyl's inequality, preparatory).} By Weyl's inequality,
\[
\bigl|\lambda_i(\hat{S}_{xx})-\lambda_i(\Sigma_x)\bigr| \le \|\hat{S}_{xx}-\Sigma_x\|_{\mathrm{op}}.
\]
Averaging over $i=r+1,\dots,p$ gives
\[
\bigl|\hat{\sigma}_e^2-\sigma_e^2\bigr|\le \|\hat{S}_{xx}-\Sigma_x\|_{\mathrm{op}}.
\]
This crude bound is not used directly in the final combination (Step~6), but it provides the operator-norm control needed for the Davis--Kahan argument in Step~5.

\textbf{Step 3 (Operator-norm concentration, preparatory).} For Gaussian samples and $N>p$, standard covariance concentration gives, with probability at least $1-\delta$,
\[
\|\hat{S}_{xx}-\Sigma_x\|_{\mathrm{op}} \le C\|\Sigma_x\|_{\mathrm{op}}\!\left(\sqrt{\frac{\ln(4/\delta)}{N}}+\frac{p}{N}\right).
\]
This operator-norm bound is used inside the Davis--Kahan subspace perturbation analysis of Appendix~\ref{app:proof_noise_step5}, which feeds into Step~5.

\textbf{Step 4 (Averaging gain).} Let $U_\perp\in\mathbb{R}^{p\times(p-r)}$ span the population noise subspace. Since $U_\perp^\top W=0$, projected samples satisfy $z_i=U_\perp^\top x_i\sim\mathcal{N}(0,\sigma_e^2I_{p-r})$. The oracle projector statistic
\[
\tilde\sigma_e^2:=\frac{1}{p-r}\mathrm{tr}(U_\perp^\top \hat{S}_{xx}U_\perp)
=\frac{1}{N(p-r)}\sum_{i=1}^N\sum_{j=1}^{p-r} z_{ij}^2
\]
is the sample mean of $N(p-r)$ independent $\sigma_e^2\chi^2_1$ variables (each $z_{ij}^2/\sigma_e^2\sim\chi^2_1$). By a Bernstein-type inequality for sub-exponential random variables \citep[Theorem~2.8.1]{Vershynin2018}, applied to the centered variables $z_{ij}^2-\sigma_e^2$ with $\|z_{ij}^2-\sigma_e^2\|_{\psi_1}\le C\sigma_e^2$, we obtain with probability at least $1-\delta/2$:
\[
\bigl|\tilde\sigma_e^2-\sigma_e^2\bigr|\le\sigma_e^2\sqrt{\frac{2\ln(4/\delta)}{N(p-r)}}.
\]
Taking expectation gives Eq.~\eqref{eq:noise_error_bound_expect}:
$\mathbb{E}[|\tilde\sigma_e^2-\sigma_e^2|]\le 2\sigma_e^2/\sqrt{N(p-r)}$.

\textbf{Step 5 (Subspace-rotation correction).} The estimator projects onto the sample eigenspace $\hat U_\perp$ rather than the population eigenspace $U_\perp$. Appendix~\ref{app:proof_noise_step5} shows, via a variational decomposition, the von Neumann trace inequality, Davis--Kahan, and Wedin's energy-gap bound, that the subspace-rotation error satisfies
\[
\left|\hat\sigma_e^2-\frac{1}{p-r}\mathrm{tr}(U_\perp^\top\hat S_{xx}U_\perp)\right|
\le C\,\frac{\|\Sigma_x\|_{\mathrm{op}}^2\,p}{(p-r)\,N\,\min_i\theta_{t_i}^2}.
\]

\textbf{Step 6 (Triangle combination).} By the triangle inequality,
\[
\bigl|\hat\sigma_e^2-\sigma_e^2\bigr|
\le
\underbrace{\left|\hat\sigma_e^2-\frac{1}{p-r}\mathrm{tr}(U_\perp^\top\hat S_{xx}U_\perp)\right|}_{\text{subspace-rotation (Step~5)}}
+
\underbrace{\left|\frac{1}{p-r}\mathrm{tr}(U_\perp^\top\hat S_{xx}U_\perp)-\sigma_e^2\right|}_{\text{averaging gain (Step~4)}}.
\]
The first term is bounded by the subspace-rotation error from Step~5 (whose proof in Appendix~\ref{app:proof_noise_step5} relies on the operator-norm concentration from Steps~2--3), and the second by the averaging concentration from Step~4. Combining them yields Eq.~\eqref{eq:noise_error_bound}.
\end{proof}

The leading error rate $\sigma_e^2 \sqrt{2\ln(4/\delta) / (N(p-r))}$ is independent of signal strength. The correction term depends on the eigengap through $\min_i\theta_{t_i}^2$ and on the signal magnitude through $\|\Sigma_x\|_{\mathrm{op}}^2$. In regimes with $N\gg p$ and $\min_i\theta_{t_i}^2$ bounded away from zero, this correction is dominated by the leading $O(1/\sqrt{N(p-r)})$ term; Section~\ref{sec:exp_theorem1} (Figure~\ref{fig:noise_signal_theorem1}) provides a direct empirical check of this signal-independence pattern. Specifically, the correction is negligible when
\[
N\gg \frac{p\,\|\Sigma_x\|_{\mathrm{op}}^4}{\sigma_e^4\,(\min_i\theta_{t_i}^2)^2\,(p-r)},
\]
up to absolute constants. In the worst case, $\|\Sigma_x\|_{\mathrm{op}}^2\asymp(\sigma_e^2+\max_i\theta_{t_i}^2)^2$, which loosens the sufficient condition on $N$.

\begin{theorem}[Minimax lower bound for noise estimation]
\label{thm:minimax-lb}
Consider the PPLS $x$-view submodel with parameter of interest $\sigma^2_e$, where
$(W, \Sigma_t)$ is treated as a nuisance parameter satisfying
$W^{\top}W = I_r$ and $\min_i \theta^2_{t,i} > 0$. Let $\mathcal{P}_{\sigma^2_e}$
denote the distribution of $N$ i.i.d.\ samples $x_i \sim \mathcal{N}(0, \Sigma_x)$
with $\Sigma_x = W\Sigma_t W^{\top} + \sigma^2_e I_p$. Then any estimator
$T = T(x_1, \dots, x_N)$ satisfies
\begin{equation}
\inf_{T}\;\sup_{\sigma^2_e > 0}\;
\mathbb{E}_{\mathcal{P}_{\sigma^2_e}}\!\bigl[\,|T - \sigma^2_e|\,\bigr]
\;\ge\;
\frac{\sigma^2_e}{4\sqrt{2}}\,\frac{1}{\sqrt{N(p-r)}}.
\label{eq:minimax-lb}
\end{equation}
Consequently, the leading rate in Theorem~\ref{thm:noise_bound} is minimax-optimal
up to a constant factor (cf.\ \citealp{Tsybakov2009}).
\end{theorem}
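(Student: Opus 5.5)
The plan is a two-point (Le Cam) argument. The statement is somewhat informal: the right-hand side involves $\sigma_e^2$ while the supremum is over it. I will read it as a local minimax bound: for any base value $\sigma_0^2>0$, and with the nuisance $(W,\Sigma_t)$ fixed, the supremum over the pair $\{\sigma_0^2,\sigma_1^2\}$ with $\sigma_1^2$ close to $\sigma_0^2$ is at least $c\,\sigma_0^2/\sqrt{N(p-r)}$, with $c=1/(4\sqrt2)$.

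I will fix $W$ with $W^\top W=I_r$ and a diagonal $\Sigma_t\succ0$. The two hypotheses are $\sigma_0^2$ and $\sigma_1^2=\sigma_0^2(1+\eta)$, where $\eta=\alpha/\sqrt{N(p-r)}$. To keep the signal block identical, I will also adjust $\Sigma_t$ to $\Sigma_t-\eta\sigma_0^2 I_r$; this stays positive definite for small $\eta$ because $\min_i\theta_{t,i}^2>0$. Then the two covariances are
\[
\Sigma_0=W\Sigma_tW^\top+\sigma_0^2 I_p,\qquad \Sigma_1=W\Sigma_tW^\top+\sigma_0^2(I_p+\eta\,U_\perp U_\perp^\top).
\]
They coincide on $\mathrm{span}(W)$ and differ only by the scalar factor $(1+\eta)$ on the $(p-r)$-dimensional complement $U_\perp$. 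This is exactly why the rate involves $p-r$ rather than $p$. Without this adjustment the signal directions would also carry information, and the bound would degrade to $1/\sqrt{Np}$.

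Next I compute the KL divergence between the $N$-sample product laws. Because $\Sigma_0$ and $\Sigma_1$ are simultaneously diagonal, the Gaussian KL formula gives
\[
\mathrm{KL}(\mathcal P_1^{\otimes N}\|\mathcal P_0^{\otimes N})=\tfrac N2(p-r)\bigl(\eta-\ln(1+\eta)\bigr)\le \tfrac N4(p-r)\eta^2=\tfrac{\alpha^2}{4}.
\]
The inequality uses $\eta-\ln(1+\eta)\le\eta^2/2$ for $\eta\ge0$. Pinsker then gives
\[
\mathrm{TV}\le\sqrt{\mathrm{KL}/2}\le \alpha/(2\sqrt2).
\]

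Le Cam's two-point lemma for absolute loss states that for any estimator $T$,
\[
\max_{j\in\{0,1\}}\mathbb E_j|T-\sigma_j^2|\ge \tfrac{|\sigma_1^2-\sigma_0^2|}{4}\bigl(1-\mathrm{TV}\bigr).
\]
This follows from $|T-\sigma_0^2|+|T-\sigma_1^2|\ge\Delta$ together with $\int\min(p_0,p_1)=1-\mathrm{TV}$; the factor can be taken as $\Delta/4$ or $\Delta/2$. Substituting $\Delta=\sigma_0^2\alpha/\sqrt{N(p-r)}$ gives
\[
\max_j\mathbb E_j|T-\sigma_j^2|\ge \frac{\sigma_0^2}{\sqrt{N(p-r)}}\cdot\frac{\alpha(1-\alpha/(2\sqrt2))}{4}.
\]
The main obstacle is matching the stated constant $1/(4\sqrt2)$. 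Choosing $\alpha=\sqrt2$ gives $\alpha(1-\alpha/(2\sqrt2))/4=\sqrt2/8=1/(4\sqrt2)$, exactly the stated constant. So I will take $\eta=\sqrt2/\sqrt{N(p-r)}$, which requires $\eta\sigma_0^2<\min_i\theta_{t,i}^2$; this holds for $N(p-r)$ large, or after rescaling the chosen $\Sigma_t$. Since $\sigma_1^2\ge\sigma_0^2$, the bound is stated relative to the smaller value $\sigma_0^2$, consistent with the display.

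Finally, Theorem~\ref{thm:noise_bound}'s mean bound~\eqref{eq:noise_error_bound_expect} gives the matching upper rate $2\sigma_e^2/\sqrt{N(p-r)}$. The two rates agree up to the constant factor $2\cdot4\sqrt2$, which yields minimax optimality of the leading term. I would also remark that the minimax quantity itself does not depend on $\sigma_e^2$; by scale equivariance, $\sigma_e^2$ appears only as the local scale of the hypotheses.
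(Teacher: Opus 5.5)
Your proposal is correct. It uses the same Le Cam two-point/KL/Pinsker skeleton as the paper, and even the same perturbation size ($\alpha=\sqrt2$), but a genuinely different hypothesis pair, and the difference is substantive.

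The paper keeps $(W,\Sigma_t)$ fixed and moves only $\sigma_e^2$. The signal eigenvalues $\theta_{t,i}^2+\sigma^2$ then shift too and add $\frac{\delta^2}{4}\sum_i(\theta_{t,i}^2+\sigma^2)^{-2}$ to the per-sample KL. The paper discards this term, writing $\mathrm{KL}\ge\frac{\delta^2(p-r)}{4\sigma^4}$, and uses that \emph{lower} bound (itself valid only to leading order in a Taylor expansion) where Pinsker needs an \emph{upper} bound.

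Your shift $\Sigma_t\mapsto\Sigma_t-\eta\sigma_0^2I_r$ makes the two covariances coincide on $\mathrm{span}(W)$. The KL is then exactly $\frac{N}{2}(p-r)\bigl(\eta-\ln(1+\eta)\bigr)\le\alpha^2/4$, non-asymptotically and in the correct direction. This is what actually earns the factor $p-r$.

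The price is that your two hypotheses differ in the nuisance. What you bound is therefore the risk maximized over $(\sigma_e^2,\Sigma_t)$ jointly, which is the usual meaning of an unknown nuisance. Your opening sentence (``with the nuisance $(W,\Sigma_t)$ fixed'') contradicts your own construction and should be rephrased.

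Moving the nuisance is necessary, not cosmetic. With $(W,\Sigma_t)$ known, the per-sample Fisher information for $\sigma_e^2$ is $\frac12\bigl[(p-r)\sigma_e^{-4}+\sum_i(\theta_{t,i}^2+\sigma_e^2)^{-2}\bigr]$. This tends to $p/(2\sigma_e^4)$ as the $\theta_{t,i}^2$ shrink, so the local minimax risk is of order $\sigma_e^2/\sqrt{Np}$. That falls below the stated bound when $(p-r)/p$ is small.

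Two minor remarks. First, because the nuisance is free, you can simply take $\min_i\theta_{t,i}^2>\sqrt2\,\sigma_0^2$ (since $\eta\le\sqrt2$), so no largeness of $N(p-r)$ is needed. Second, with the sharp form $\frac{\Delta}{2}(1-\mathrm{TV})$ of Le Cam's lemma, your argument gives the constant $1/(2\sqrt2)$, so the stated $1/(4\sqrt2)$ has a factor-two slack.
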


\begin{proof}
We apply Le~Cam's two-point method. Fix $W$ and $\Sigma_t$, and consider the
two alternatives $\sigma^2_0 = \sigma^2$ and $\sigma^2_1 = \sigma^2 + \delta$
for $\delta > 0$ to be chosen. The two covariances $\Sigma_0, \Sigma_1$ share
eigenvectors; in the $r$-dimensional signal subspace the eigenvalues are
$\theta^2_{t,i} + \sigma^2_k$ ($k=0,1$), and in the $(p-r)$-dimensional noise
subspace they are $\sigma^2_k$.

\textbf{Step 1 (Per-sample KL).} Using the Gaussian KL identity and the shared
eigenbasis,
\begin{align*}
\mathrm{KL}\bigl(\mathcal{N}(0,\Sigma_1)\,\|\,\mathcal{N}(0,\Sigma_0)\bigr)
&= \tfrac{1}{2}\!\left[\mathrm{tr}(\Sigma_0^{-1}\Sigma_1) - p
    + \log\frac{\det\Sigma_0}{\det\Sigma_1}\right] \\
&= \tfrac{1}{2}\!\left[\sum_{i=1}^{r}
      \frac{\theta^2_{t,i}+\sigma^2+\delta}{\theta^2_{t,i}+\sigma^2}
    + (p-r)\frac{\sigma^2+\delta}{\sigma^2} - p\right. \\
&\qquad\left. - \sum_{i=1}^{r}\log\!\left(1+\tfrac{\delta}{\theta^2_{t,i}+\sigma^2}\right)
    - (p-r)\log\!\left(1+\tfrac{\delta}{\sigma^2}\right)\right].
\end{align*}
Expanding $\log(1+x) = x - x^2/2 + O(x^3)$, the first-order terms cancel and
\[
\mathrm{KL}\bigl(\mathcal{N}(0,\Sigma_1)\,\|\,\mathcal{N}(0,\Sigma_0)\bigr)
= \frac{\delta^2}{4}\!\left[\frac{p-r}{\sigma^4}
  + \sum_{i=1}^{r}\frac{1}{(\theta^2_{t,i}+\sigma^2)^2}\right] + O(\delta^3)
\ge \frac{\delta^2(p-r)}{4\sigma^4}.
\]

\textbf{Step 2 (Tensorization and Pinsker).} For $N$ i.i.d.\ samples,
$\mathrm{KL}(\mathcal{P}_{\sigma^2_1}^{\otimes N}\,\|\,\mathcal{P}_{\sigma^2_0}^{\otimes N})
= N\cdot\mathrm{KL}$. Choose
$\delta = \sigma^2\sqrt{2/(N(p-r))}$ so that the total KL is at most $\tfrac{1}{2}$
(to leading order). By Pinsker's inequality,
$\mathrm{TV}(\mathcal{P}_{\sigma^2_0}^{\otimes N}, \mathcal{P}_{\sigma^2_1}^{\otimes N})
\le \sqrt{\mathrm{KL}/2} \le 1/2$.

\textbf{Step 3 (Le~Cam).} For any estimator $T$,
\[
\sup_{k\in\{0,1\}} \mathbb{E}_{\mathcal{P}_{\sigma^2_k}^{\otimes N}}\!
\bigl[|T - \sigma^2_k|\bigr]
\;\ge\; \tfrac{\delta}{2}\bigl(1 - \mathrm{TV}\bigr)
\;\ge\; \frac{\delta}{4}
= \frac{\sigma^2}{4\sqrt{2}}\cdot\frac{1}{\sqrt{N(p-r)}}.
\]
This lower bound holds uniformly in $(W, \Sigma_t)$, hence taking the supremum
over the parameter space gives~\eqref{eq:minimax-lb}.
\end{proof}

The lower bound in \eqref{eq:minimax-lb} is independent of $\Sigma_t$, matching the signal-independence of Theorem~\ref{thm:noise_bound}; therefore the spectral noise-subspace estimator is simultaneously signal-independent and rate-optimal over the PPLS family.

\begin{corollary}[Consistency]\label{cor:consistency}
Under the assumptions of Theorem~\ref{thm:noise_bound}, $\hat{\sigma}_e^2 \xrightarrow{p} \sigma_e^2$ as $N\to\infty$.
The same conclusion holds for $\hat{\sigma}_f^2$.
\end{corollary}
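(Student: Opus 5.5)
The plan is to derive consistency directly from the high-probability bound in Theorem~\ref{thm:noise_bound}, with $(p,r,\Sigma_t,\sigma_e^2)$ held fixed and only $N\to\infty$. Fix $\varepsilon>0$ and $\eta\in(0,1)$. We must show that $\mathbb{P}(|\hat\sigma_e^2-\sigma_e^2|>\varepsilon)<\eta$ for all sufficiently large $N$.

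Take $\delta=\eta$ in Eq.~\eqref{eq:noise_error_bound}. On an event of probability at least $1-\eta$,
\[
|\hat\sigma_e^2-\sigma_e^2|\le \sigma_e^2\sqrt{\frac{2\ln(4/\eta)}{N(p-r)}}+C\,\frac{\|\Sigma_x\|_{\mathrm{op}}^2\,p}{(p-r)\,N\,\min_i\theta_{t_i}^2}.
\]
Both terms on the right are deterministic once $\eta$ is fixed. The first is $O(N^{-1/2})$ and the second is $O(N^{-1})$. This uses $p-r\ge1$ (from (C4)) and $\min_i\theta_{t_i}^2>0$, so the constants are finite. Hence there is an $N_0$ such that for $N\ge N_0$ the right-hand side is below $\varepsilon$. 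Since $p$ is fixed, the hypothesis $N>p$ also holds eventually, so the theorem applies along the sequence. It follows that $\mathbb{P}(|\hat\sigma_e^2-\sigma_e^2|>\varepsilon)\le\eta$ for $N\ge N_0$, which is convergence in probability.

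As a cross-check, there is an alternative route. The expected-error bound Eq.~\eqref{eq:noise_error_bound_expect} together with Markov's inequality gives $\mathbb{P}(|\hat\sigma_e^2-\sigma_e^2|>\varepsilon)\le 2\sigma_e^2/(\varepsilon\sqrt{N(p-r)})\to0$. However, as written, Step~4 of the proof establishes that bound for the oracle statistic $\tilde\sigma_e^2$. I would therefore rely on the high-probability bound, which covers $\hat\sigma_e^2$ itself through Steps~5--6, and use Markov only if the expectation bound is accepted for $\hat\sigma_e^2$.

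For $\hat\sigma_f^2$ the argument is the same. It is applied to the $y$-marginal $\Sigma_{yy}=C(B^2\Sigma_t+\sigma_h^2I_r)C^\top+\sigma_f^2I_q$, which has the same spiked form with orthonormal $C$ and signal variances $b_i^2\theta_{t_i}^2+\sigma_h^2>0$. Theorem~\ref{thm:noise_bound} then applies with $p\to q$ and $\theta_{t_i}^2\to b_i^2\theta_{t_i}^2+\sigma_h^2$.

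The only genuine subtlety is confirming that the $y$-view really satisfies the hypotheses of Theorem~\ref{thm:noise_bound}, namely Gaussian latent variables and a positive eigengap. Both follow from (C1), (C5) and Gaussianity of $u=tB+h$.
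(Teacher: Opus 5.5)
Your proof is correct and takes the same approach as the paper, which reads off $|\hat\sigma_e^2-\sigma_e^2|=O_P(1/\sqrt{N(p-r)})$ from the high-probability bound~\eqref{eq:noise_error_bound} and calls the $y$-view case identical. Your additions are accurate refinements of that one-line argument: the explicit $\varepsilon$--$\eta$ unpacking, the check that $\Sigma_{yy}$ has spikes $b_i^2\theta_{t_i}^2+\sigma_h^2>0$ along the orthonormal $C$, and your caution that the expectation bound is only proved for the oracle statistic $\tilde\sigma_e^2$.
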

\begin{proof}
Eq.~\eqref{eq:noise_error_bound} gives $|\hat{\sigma}_e^2-\sigma_e^2|=O_P(1/\sqrt{N(p-r)})\to0$ as $N\to\infty$ with $p$ fixed (or with $p-r\to\infty$ and $p/N\to0$). The argument for $\hat\sigma_f^2$ is identical.
\end{proof}

\paragraph{Symbol summary for the sub-Gaussian extension.}
In the next theorem, $\sigma_e^2$ denotes the marginal noise variance in $\operatorname{Cov}(e)=\sigma_e^2I_p$, whereas $\sigma_\psi$ denotes the sub-Gaussian tail parameter from Definition~\ref{def:subgaussian_vector}. We write $\kappa_4:=\mathbb{E}[e_j^4]/\sigma_e^4$ for the standardized fourth moment, so $\kappa_4=3$ in the Gaussian case.

\begin{lemma}[Variance-scale control of the sub-Gaussian parameter]\label{lem:k_sigma_relation}
Let $e\in\mathbb{R}^p$ be zero-mean sub-Gaussian with $\operatorname{Cov}(e)=\sigma_e^2I_p$. Then there exists a constant $K\ge 1$ such that $\sigma_\psi\le K\sigma_e$. In the Gaussian case, $K=1$; for bounded noise ($|e_j|\le M$ almost surely), $K$ depends on $M/\sigma_e$; and for heavy-tailed distributions that only marginally satisfy sub-Gaussian tails, $K$ can be large.
\end{lemma}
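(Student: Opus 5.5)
The plan is to take $K := \sigma_\psi/\sigma_e$ and check that it is finite and at least $1$. The lemma is then essentially a reparametrization, with one genuine inequality in each direction. To keep the definition meaningful we will always use the \emph{smallest admissible} tail parameter $\sigma_\psi$, namely the infimum of parameters satisfying Definition~\ref{def:subgaussian_vector}. Finiteness of $\sigma_\psi$ is the sub-Gaussian hypothesis itself, and $\sigma_e>0$ by (C5), so the ratio is a well-defined finite number. What remains is the lower bound $K\ge1$, and then the three special cases.

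\textbf{Lower bound.} Fix a unit vector $v$ and integrate the tail bound:
\[
\sigma_e^2=\mathbb{E}[(v^\top e)^2]=\int_0^\infty 2t\,\mathbb{P}(|v^\top e|>t)\,dt\le\int_0^\infty 4t\,e^{-t^2/(2\sigma_\psi^2)}\,dt=4\sigma_\psi^2.
\]
This only gives $\sigma_\psi\ge\sigma_e/2$. To obtain $\sigma_\psi\ge\sigma_e$ exactly, I would switch to the moment-generating-function form of sub-Gaussianity, $\mathbb{E}e^{\lambda v^\top e}\le e^{\lambda^2\sigma_\psi^2/2}$, which is the convention under which ``parameter $\sigma$ for $\mathcal{N}(0,\sigma^2)$'' is exact. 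Expanding both sides to second order in $\lambda$ gives $\sigma_e^2\le\sigma_\psi^2$. If one insists on the tail form of Definition~\ref{def:subgaussian_vector}, the absolute constant can be absorbed by enlarging $K$: define $K:=\max\{1,\sigma_\psi/\sigma_e\}$. The statement only asserts existence of such a $K\ge1$, so this suffices, and the two definitions are equivalent up to absolute constants \citep{Vershynin2018}.

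\textbf{Special cases.} In the Gaussian case, $v^\top e\sim\mathcal{N}(0,\sigma_e^2)$. The Gaussian tail satisfies $\mathbb{P}(|g|>t)\le2e^{-t^2/(2\sigma_e^2)}$, so $\sigma_\psi=\sigma_e$ is admissible and $K=1$, which matches the remark after Definition~\ref{def:subgaussian_vector}. For bounded coordinates $|e_j|\le M$ the argument is harder, because $v^\top e$ is not bounded by $M$ uniformly in $p$. I would therefore interpret the statement coordinatewise (or assume independent coordinates). Hoeffding's lemma then gives $\mathbb{E}e^{\lambda v^\top e}=\prod_j\mathbb{E}e^{\lambda v_je_j}\le e^{\lambda^2M^2/2}$, since $\sum_j v_j^2=1$. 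Hence $\sigma_\psi\le M$ and $K\le M/\sigma_e$, up to an absolute constant.

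The heavy-tail clause is qualitative. I would illustrate it with a scale mixture, e.g.\ $e_j=\sigma_e\,\epsilon_j/\sqrt{\mathbb{E}\epsilon_j^2}$ with $\epsilon_j$ equal to $\pm a$ with probability $1/(2a^2)$ and $0$ otherwise. Its variance stays fixed while $\sigma_\psi\gtrsim a\sigma_e/\sqrt{\log a}$ grows without bound. The main obstacle is this definitional mismatch between the tail and MGF conventions, along with the independence needed in the bounded case. I would state both conventions explicitly rather than try to prove a constant-sharp version under the tail definition.
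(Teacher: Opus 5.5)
Your proof is correct and follows the paper's own route. You set $K=\sigma_\psi/\sigma_e$, take finiteness from the hypothesis, obtain $\sigma_\psi\ge\sigma_e$ from a second-order expansion of the moment-generating bound, and use Hoeffding's lemma for bounded noise.

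Your two caveats are well founded and actually patch gaps in the paper's version. First, the paper invokes the MGF bound although Definition~\ref{def:subgaussian_vector} is the tail form. Under the tail form a Rademacher scalar has minimal parameter $1/\sqrt{2\ln 2}<1=\sigma_e$, so $\sigma_\psi/\sigma_e\ge1$ can fail, and your $K=\max\{1,\sigma_\psi/\sigma_e\}$ is what makes the statement hold as written. Second, the paper's Hoeffding step tacitly needs the coordinate independence you impose: a randomly signed row of a $p\times p$ Hadamard matrix, scaled by $\sigma_e$, has $M=\sigma_e$ yet $\sigma_\psi\ge\sigma_e\sqrt{p/(2\ln(2p))}$.
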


\begin{proof}
Write $K:=\sigma_\psi/\sigma_e$; we show $1\le K<\infty$, which is exactly the asserted bound $\sigma_\psi\le K\sigma_e$. \emph{Intuition.} The parameter $\sigma_\psi$ measures how heavy the tails of the one-dimensional projections $v^\top e$ are, whereas $\sigma_e$ measures only their spread; the second moment is always a lower bound on the tail scale, so $\sigma_\psi\ge\sigma_e$, and the substance of the lemma is that the tails are not \emph{arbitrarily} heavier than the spread, i.e.\ that $K$ is finite.

\emph{Step 1 (finiteness).} By hypothesis $e$ is sub-Gaussian in the sense of Definition~\ref{def:subgaussian_vector}, so $\sigma_\psi<\infty$ and every projection $\xi:=v^\top e$ with $\|v\|_2=1$ is a zero-mean scalar sub-Gaussian variable. Equivalently, its Orlicz norm $\|\xi\|_{\psi_2}:=\inf\{t>0:\mathbb{E}\exp(\xi^2/t^2)\le 2\}$ is finite and comparable to $\sigma_\psi$ up to an absolute constant \citep[Definition~2.5.6 and Proposition~2.5.2]{Vershynin2018}.

\emph{Step 2 (lower bound $\sigma_\psi\ge\sigma_e$).} Since $\operatorname{Cov}(e)=\sigma_e^2 I_p$, every unit direction carries the same variance $\mathbb{E}[(v^\top e)^2]=\sigma_e^2$. The sub-Gaussian assumption yields the moment-generating bound $\mathbb{E}\exp(\lambda\,v^\top e)\le\exp(\sigma_\psi^2\lambda^2/2)$; since $v^\top e$ has zero mean, the left side expands as $1+\tfrac12\lambda^2\,\mathbb{E}[(v^\top e)^2]+o(\lambda^2)$ while the right side expands as $1+\tfrac12\sigma_\psi^2\lambda^2+o(\lambda^2)$, so letting $\lambda\to0$ forces $\mathbb{E}[(v^\top e)^2]\le\sigma_\psi^2$. Hence $\sigma_e\le\sigma_\psi$ and $K\ge1$.

\emph{Step 3 (size of $K$ in examples).} Combining Steps~1--2, $1\le K<\infty$ and $\sigma_\psi\le K\sigma_e$ uniformly over $\|v\|_2=1$, which proves the claim. The value of $K$ is distribution-dependent. For $e\sim\mathcal{N}(0,\sigma_e^2 I_p)$ each projection is $\mathcal{N}(0,\sigma_e^2)$, whose sub-Gaussian parameter equals its standard deviation, so the bound is tight and $K=1$. For bounded noise $|e_j|\le M$ almost surely, Hoeffding's lemma gives a sub-Gaussian parameter of order $M$, so $K\lesssim M/\sigma_e$. For laws that satisfy the sub-Gaussian condition only marginally, $\sigma_\psi$ can greatly exceed $\sigma_e$ and $K$ is correspondingly large.
\end{proof}

\begin{theorem}[Spectral noise estimator under sub-Gaussian noise]\label{thm:noise_bound_subgaussian}
Suppose the PPLS model holds with the $x$-view noise vector $e$ replaced by a zero-mean sub-Gaussian random vector with covariance $\operatorname{Cov}(e)=\sigma_e^2I_p$, Let $K\ge1$ satisfy $\sigma_\psi\le K\sigma_e$ as in Lemma~\ref{lem:k_sigma_relation}. Assume also that each transformed noise coordinate satisfies $\mathbb{E}[e_j^4]\le \kappa_4\sigma_e^4$ for some kurtosis parameter $\kappa_4\ge3$. Then, for $\delta\in(0,1)$ and with probability at least $1-\delta$,
\begin{equation}\label{eq:noise_error_bound_subgaussian}
\bigl|\hat\sigma_e^2-\sigma_e^2\bigr|
\le
K^2\sigma_e^2\sqrt{\frac{2\ln(4/\delta)}{N(p-r)}}
+ C\,K^4\,\frac{\|\Sigma_x\|_{\mathrm{op}}^2\,p}{(p-r)\,N\,\min_i\theta_{t_i}^2}
+ \sigma_e^2\frac{\kappa_4-3}{p-r},
\end{equation}
where $C>0$ is an absolute constant.
In the Gaussian case, $K=1$ and $\kappa_4=3$, so Eq.~\eqref{eq:noise_error_bound_subgaussian} reduces to Theorem~\ref{thm:noise_bound} up to constants.
\end{theorem}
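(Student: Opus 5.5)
The plan is to reuse the six-step architecture of the proof of Theorem~\ref{thm:noise_bound}. Gaussianity will be replaced by sub-Gaussian tail control at the two places where it entered: the averaging concentration (Step~4) and the operator-norm concentration feeding Davis--Kahan (Steps~3 and~5). A third, additive kurtosis term will absorb the effects that are specific to non-Gaussian fourth moments. Step~1 carries over unchanged, because it uses only second moments: $\Sigma_x=W\Sigma_tW^\top+\sigma_e^2I_p$ still holds when $t\perp e$ and $\operatorname{Cov}(e)=\sigma_e^2I_p$. So the population noise subspace $U_\perp$ and its eigenvalue $\sigma_e^2$ are as before. The triangle decomposition of Step~6 is kept verbatim. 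Note, however, that the $\chi^2$ independence structure is lost: the projections $z_i=U_\perp^\top x_i=U_\perp^\top e_i$ are still isotropic with covariance $\sigma_e^2I_{p-r}$, but their coordinates are merely uncorrelated.

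\textbf{Averaging term.} Each $z_i$ is a sub-Gaussian vector with parameter at most $\sigma_\psi\le K\sigma_e$ by Lemma~\ref{lem:k_sigma_relation}. Therefore $\|z_i\|_2^2-(p-r)\sigma_e^2$ is sub-exponential, and the $N$ summands are independent across $i$.
\begin{itemize}[leftmargin=*]
\item Applying Bernstein's inequality \citep[Theorem~2.8.1]{Vershynin2018} to the sum over $i$, with $\psi_1$-norms bounded by $CK^2\sigma_e^2$ per effective coordinate, should give the $K^2\sigma_e^2\sqrt{2\ln(4/\delta)/(N(p-r))}$ term.
\item The exact $N(p-r)$ effective sample size relied on independence of coordinates, which we no longer have. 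I would first try a Hanson--Wright-type bound for $\|z_i\|^2$.
\item The leftover discrepancy is that the variance of $\|z_i\|^2/(p-r)$ equals $\sigma_e^4(2+(\kappa_4-3))/(p-r)$ plus cross-fourth-moment terms. I intend to push every part of this discrepancy not captured by the $K^2$ leading term into the additive $\sigma_e^2(\kappa_4-3)/(p-r)$ term.
\item The pathway for that term is to bound $\mathbb{E}|\cdot|$ via Cauchy--Schwarz and then convert to a high-probability statement. Since the term carries no $N$ factor, it is deliberately loose and can be taken deterministic, which makes this step easy.
\end{itemize}

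\textbf{Rotation term.} Step~3 is replaced by the sub-Gaussian covariance concentration \citep[Theorem~4.7.1]{Vershynin2018}: with probability $1-\delta/2$,
\[
\|\hat S_{xx}-\Sigma_x\|_{\mathrm{op}}\le CK^2\|\Sigma_x\|_{\mathrm{op}}\left(\sqrt{\frac{p+\ln(4/\delta)}{N}}+\frac{p+\ln(4/\delta)}{N}\right).
\]
Here $x=tW^\top+e$ is sub-Gaussian, with parameter controlled by $K$ times $\|\Sigma_x\|_{\mathrm{op}}^{1/2}$, because $t$ is Gaussian. The Step~5 argument of Appendix~\ref{app:proof_noise_step5} then goes through unchanged. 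That argument consists of the variational decomposition, von Neumann's trace inequality, Davis--Kahan with eigengap $\min_i\theta_{t_i}^2$, and Wedin's energy bound. Since the rotation error is quadratic in the perturbation, the factor $K^2$ is squared, which produces the $CK^4$ correction. A union bound over the two events completes the proof.

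\textbf{Main obstacle.} I expect the hardest part to be the averaging step without coordinate independence. The difficulty is to justify the leading constant $\sqrt{2\ln(4/\delta)/(N(p-r))}$ rather than $\sqrt{1/N}$, and to show that the kurtosis excess enters only additively through $(\kappa_4-3)/(p-r)$. If Hanson--Wright does not yield the $(p-r)$ gain cleanly, the fallback is to assume, as the Gaussianization setting suggests, that the noise coordinates are independent. The coordinatewise Bernstein argument of Step~4 then applies directly, and the kurtosis term serves as a conservative variance correction.
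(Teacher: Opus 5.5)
\textbf{Verdict: the proposal has a genuine gap in the averaging term, and under the stated hypotheses that gap cannot be closed.} Your architecture matches the paper's own. You square sub-Gaussian covariance concentration inside the Davis--Kahan rotation bound to get the $CK^4$ term, and add an oracle averaging term, an additive kurtosis term, and the triangle inequality. The rotation half is fine. Your concentration bound in terms of $K^2\|\Sigma_x\|_{\mathrm{op}}$ is in fact more accurate than the paper's, which uses the noise parameter $\sigma_\psi^2$ for the whole vector $x$.

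\textbf{The averaging term fails as stated.} This is exactly where you expected trouble. Hanson--Wright needs independent coordinates of $e_i$, which the statement does not grant. Without them, $\|z_i\|_2^2/(p-r)$ need not concentrate at all. For a counterexample, take $e=s\eta$ with $\eta$ uniform on $\{\pm1\}^p$, $s\perp\eta$, and $s^2=3\sigma_e^2$ with probability $1/3$, $s^2=0$ otherwise. Then $\operatorname{Cov}(e)=\sigma_e^2I_p$, $\sigma_\psi\le\sqrt3\,\sigma_e$ by Hoeffding, and $\mathbb{E}[e_j^4]=3\sigma_e^4$, so the kurtosis term vanishes. Yet $\|U_\perp^\top e_i\|_2^2=s_i^2\bigl(p-\|W^\top\eta_i\|_2^2\bigr)$, so $\hat\sigma_e^2-\sigma_e^2=N^{-1}\sum_i s_i^2-\sigma_e^2+O_P(1/N+1/p)$. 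This fluctuates at scale $\sqrt2\,\sigma_e^2/\sqrt N$. With $p=N/2\to\infty$ and $(r,\Sigma_t,\delta)$ fixed, the claimed right-hand side is $O(1/N)$, so the bound fails with probability tending to one.

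The same example shows why your absorption plan cannot work. The cross terms $\operatorname{Cov}(z_{ij}^2,z_{ik}^2)$, $j\neq k$, are not governed by $\kappa_4-3$: here $\kappa_4=3$, yet they are of order $\sigma_e^4$. They contribute variance of order $K^4\sigma_e^4/N$ with no $(p-r)$ gain. Moreover, a random fluctuation cannot be dominated by a $\delta$-free deterministic term, since Markov or Chebyshev costs a factor $1/\delta$ or $1/\sqrt\delta$. Without extra structure, the honest leading term is $cK^2\sigma_e^2\sqrt{\ln(4/\delta)/N}$. It comes from Bernstein over the $N$ independent variables $\|z_i\|_2^2/(p-r)$, whose $\psi_1$-norms are $\lesssim K^2\sigma_e^2$. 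The paper's proof does not resolve this either. It asserts that the $z_{ij}^2$ are independent across both indices, and it attributes the $(\kappa_4-3)/(p-r)$ term to the rotation $U_\perp\to\hat U_\perp$ without a derivation.

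\textbf{The fallback needs a different mechanism.} An extra hypothesis such as independent noise coordinates is genuinely needed, so your fallback points the right way. Its stated mechanism is wrong, though. Independence of the coordinates of $e_i$ does not make the coordinates of $z_i=U_\perp^\top e_i$ independent; for generic $U_\perp$ and non-Gaussian $e$ this fails by Darmois--Skitovich. So the coordinatewise Bernstein argument of Step~4 still does not apply. What works is your first idea, applied once to the aggregate quadratic form
\[
N(p-r)\tilde\sigma_e^2=\sum_{i=1}^N(e_i-\bar e)^\top(I_p-WW^\top)(e_i-\bar e),
\]
which is a form in the $Np$ independent coordinates. Its matrix $(I_N-N^{-1}\mathbf{1}\mathbf{1}^\top)\otimes(I_p-WW^\top)$ is a projection of rank $(N-1)(p-r)$ with operator norm $1$. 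Hanson--Wright then gives, with probability at least $1-\delta/2$,
\[
|\tilde\sigma_e^2-\sigma_e^2|\le cK^2\sigma_e^2\Bigl(\sqrt{\tfrac{\ln(4/\delta)}{N(p-r)}}+\tfrac{\ln(4/\delta)}{N(p-r)}\Bigr)+\tfrac{\sigma_e^2}{N}.
\]
Under this assumption the $\sigma_e^2(\kappa_4-3)/(p-r)$ term is pure slack.

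\textbf{Two further corrections.}
\begin{enumerate}
\item Do not expect the bare constant $\sqrt2$. Already for Gaussian noise, as $N\to\infty$ with $p$ fixed, the $(1-\delta/2)$-quantile of $|\tilde\sigma_e^2-\sigma_e^2|/\sigma_e^2$ is asymptotically $z_{1-\delta/4}\sqrt{2/(N(p-r))}$. This exceeds $\sqrt{2\ln(4/\delta)/(N(p-r))}$ once $\delta\le0.1$.
\item In the rotation step you can bypass the Wedin bound of Appendix~\ref{app:proof_noise_step5}. Its displayed inequality with $\min_i\theta_{t_i}^2$ in front is not justified as written; the natural bound carries $\max_i\theta_{t_i}^2$. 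By Ky Fan, $\mathrm{tr}\bigl((\hat W\hat W^\top-WW^\top)\Sigma_x\bigr)\le0$. Hence $0\le(p-r)(\tilde\sigma_e^2-\hat\sigma_e^2)\le\mathrm{tr}\bigl((\hat W\hat W^\top-WW^\top)(\hat S_{xx}-\Sigma_x)\bigr)$, and von Neumann plus Davis--Kahan finish the job.
\end{enumerate}
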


\begin{proof}[High-level idea (full proof in Appendix~\ref{app:proof_noise_subgaussian})]
We summarize the four steps that extend the Gaussian proof; the complete derivation is given in Appendix~\ref{app:proof_noise_subgaussian}.

\textbf{Step 1 (Sub-Gaussian covariance concentration).} The operator-norm concentration from Step~3 of Theorem~\ref{thm:noise_bound} extends to sub-Gaussian samples via \citet[Theorem~4.7.1]{Vershynin2018}, yielding $\|\hat S_{xx}-\Sigma_x\|_{\mathrm{op}}\le c_1 K^2\sigma_e^2(\sqrt{\ln(4/\delta)/N}+p/N)$.

\textbf{Step 2 (Fourth-moment correction).} Projected noise coordinates $z_{ij}$ satisfy $\operatorname{Var}(z_{ij}^2)=(\kappa_4-1)\sigma_e^4$, so the oracle averaging variance is $(\kappa_4-1)\sigma_e^4/(N(p-r))$. The non-Gaussian fourth cumulant induces a finite-sample bias of order $(\kappa_4-3)\sigma_e^2/(p-r)$.

\textbf{Step 3 (Subspace perturbation).} The subspace-rotation error depends on $\|\hat S_{xx}-\Sigma_x\|_{\mathrm{op}}^2$ (from Appendix~\ref{app:proof_noise_step5}). Since $\|\hat S_{xx}-\Sigma_x\|_{\mathrm{op}}\le c_1 K^2\sigma_e^2(\cdot)$ from Step~1, the quadratic dependence produces a $K^4$ factor.

\textbf{Step 4 (Combination).} The triangle inequality gives Eq.~\eqref{eq:noise_error_bound_subgaussian}.
\end{proof}

\begin{corollary}[Gaussianization reduces excess kurtosis]\label{cor:int_kurtosis_reduction}
If $\psi$ is the rank-based inverse normal transform applied to $N$ i.i.d. samples, then transformed marginals satisfy $\kappa_4=3+O(1/N)$ \citep{Beasley2009}. Consequently, the excess-kurtosis bias term in Theorem~\ref{thm:noise_bound_subgaussian} is
\[
O\!\left(\frac{1}{N(p-r)}\right),
\]
and the bound reduces to Theorem~\ref{thm:noise_bound} up to lower-order terms. In particular, Rank-INT not only enforces $\kappa_4=3+O(1/N)$ but also maps each marginal to standard normal so that $K=1$, which jointly improves all three terms in Eq.~\eqref{eq:noise_error_bound_subgaussian}.
\end{corollary}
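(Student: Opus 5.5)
The plan is to substitute the properties of the rank-based inverse normal transform directly into the three terms of Eq.~\eqref{eq:noise_error_bound_subgaussian}, with Theorem~\ref{thm:noise_bound_subgaussian} taken as given. The argument has three parts.

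\emph{Kurtosis term.} After Rank-INT, each transformed coordinate takes the deterministic values $\Phi^{-1}(k/(N+1))$, $k=1,\dots,N$, permuted across samples. Its empirical second and fourth moments are Riemann sums of $\int\Phi^{-1}(u)^2\,du=1$ and $\int\Phi^{-1}(u)^4\,du=3$ over the grid $k/(N+1)$. The integrands have integrable singularities at the endpoints, with Gaussian-tail decay there. Controlling the tail contribution of the grid points near $0$ and $1$ should give a discretization error of order $O(1/N)$, possibly up to logarithmic factors. This is the content we cite from \citet{Beasley2009}. Taking the ratio gives $\kappa_4=3+O(1/N)$. Inserting this into the third term, $\sigma_e^2(\kappa_4-3)/(p-r)$, yields $O(1/(N(p-r)))$ after normalization.

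\emph{Sub-Gaussian parameter.} The transformed marginal is, in distribution, a discretized standard normal. It is bounded in absolute value by $|\Phi^{-1}(1/(N+1))|\lesssim\sqrt{2\ln N}$ and is dominated in the tails by $\mathcal N(0,1)$. From this I would argue via Lemma~\ref{lem:k_sigma_relation} that its sub-Gaussian parameter is $1+o(1)$ times its standard deviation, so $K=1$ to leading order. With $K=1$, the first term reduces to the leading term of Theorem~\ref{thm:noise_bound} and the $K^4$ factor in the second term drops out. The bound therefore coincides with Eq.~\eqref{eq:noise_error_bound}, plus a remainder of order $O(1/(N(p-r)))$. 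That remainder is of lower order than the leading $1/\sqrt{N(p-r)}$ rate.

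\emph{Main obstacle.} The difficulty is that Rank-INT acts coordinate-wise on the observed $x$, not on the noise $e$ alone. Moreover, the ranks couple all $N$ samples, so the transformed noise is neither exactly independent across samples nor exactly isotropic. I would handle this in the same way the statement is phrased: interpret $\kappa_4$ and $K$ as properties of the transformed marginals fed into Theorem~\ref{thm:noise_bound_subgaussian}, in keeping with the quasi-likelihood reading of Section~\ref{sec:gaussianization_extension}. The dependence introduced by ranking is exchangeable, and its effect is of order $O(1/N)$. A rigorous treatment of that dependence, for instance via a Hájek projection of rank statistics, is where I expect the real work to lie. I would flag it as a modeling assumption rather than prove it in full.
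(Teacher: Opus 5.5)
Your plan follows the paper's own route. The paper gives no argument for this corollary beyond the citation to \citet{Beasley2009} and a direct substitution of $\kappa_4-3=O(1/N)$ and $K=1$ into the three terms of Eq.~\eqref{eq:noise_error_bound_subgaussian}, and your substitution step is fine.

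The gap is the step you propose to flag as a modeling assumption. It is the entire content of the corollary, and reading $\kappa_4$ and $K$ as properties of the transformed marginals replaces the hypotheses of Theorem~\ref{thm:noise_bound_subgaussian} rather than verifying them. In that theorem, $\kappa_4$ bounds $\mathbb{E}[e_j^4]/\sigma_e^4$ for the \emph{noise} in $x=tW^\top+e$ with i.i.d.\ rows. The constant $K=\sigma_\psi/\sigma_e$ controls the tails of $v^\top e$ in \emph{every} unit direction $v$. Rank-INT controls only the one-dimensional marginals of $x$, on the scale $\operatorname{Var}(x_j)=\sigma_e^2+\sum_k W_{jk}^2\theta_{t,k}^2$. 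The paper itself concedes (Appendix~\ref{app:non_gaussian_noise}) that such a map does not commute with the additive-noise decomposition.

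\emph{Kurtosis term.} A bridge exists if you explicitly assume the transformed data still follow the model with Gaussian $t$ independent of $e$. Fourth cumulants add and the Gaussian part has none, so $(\kappa_4(e_j)-3)\sigma_e^4=(\kappa_4(x_j)-3)\operatorname{Var}(x_j)^2$. The marginal rate then transfers to the noise, but inflated by the signal-dependent factor $(\operatorname{Var}(x_j)/\sigma_e^2)^2$.

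\emph{The constant $K$.} Here there is no such bridge. Your computation shows that the sub-Gaussian parameter of $x_j$ is close to $\operatorname{Var}(x_j)^{1/2}$, which says nothing about $\sigma_\psi(e)/\sigma_e$. Coordinatewise normality also does not control $v^\top e$ for general $v$, which is what the concentration steps use. Under the reading of the appendix proof, where $\sigma_\psi$ is the parameter of $x_i$ itself, the argument of Step~2 of Lemma~\ref{lem:k_sigma_relation} gives $\sigma_\psi^2\ge\|\Sigma_x\|_{\mathrm{op}}$. Hence $K\ge(1+\max_i\theta_{t,i}^2/\sigma_e^2)^{1/2}>1$ whatever marginal transform is applied.

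\emph{Independence.} Ranking destroys independence across rows, which both the Bernstein step and the covariance-concentration step require. So $K=1$ and the i.i.d.\ structure are assumed here, not derived.

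Second, your Riemann-sum sketch does not give $O(1/N)$, so the logarithms cannot be dropped when you take the ratio. With $h=1/(N+1)$, the midpoint cells around the scores $\Phi^{-1}(kh)$ cover only $[h/2,1-h/2]$. Since $\Phi^{-1}(u)^2\sim2\ln(1/u)$ as $u\to0$, the two missing end cells carry a share of order $h\ln N$ of $\int_0^1\Phi^{-1}(u)^2\,du=1$. They carry a share of order $h(\ln N)^2$ of $\int_0^1\Phi^{-1}(u)^4\,du=3$. The interior quadrature errors are of lower order. Hence the empirical second and fourth moments are $1-\Theta(\ln N/N)$ and $3-\Theta((\ln N)^2/N)$, giving $\kappa_4=3-\Theta((\ln N)^2/N)$. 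The $O(1/N)$ rate quoted in the statement is therefore not what this computation yields.

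The sign, however, gives you a cleaner argument than any rate. Theorem~\ref{thm:noise_bound_subgaussian} uses $\kappa_4\ge3$ only as an upper bound on the standardized fourth moment. For light-tailed transformed coordinates you may therefore take $\kappa_4=3$, and the third term vanishes outright. Under the additive-model idealization above, the cumulant identity carries the sign over from $x_j$ to $e_j$. This still leaves the transfer and independence issues of the previous paragraphs as the real work.
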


%% file: sub/sec4_objective_formulation.tex
\section{Scalar Objective \& Problem Formulation}\label{sec:scalar_objective_formulation}\label{sec:scalar_expansion}\label{sec:complete_pipeline}

Having fixed the noise variances through spectral pre-estimation, we now turn the observed-data likelihood into a scalar objective that both optimization strategies will share. The key structural fact is that, under the orthogonality constraints, the dominant matrix operations collapse onto $r$ projected scalar channels, which both simplifies the analysis and exposes the componentwise separability exploited in Section~\ref{sec:optimization_algorithms}.

\subsection{Scalar-form objective derivation}\label{sec:scalar_likelihood_function}
The PPLS observed-data likelihood admits a scalar expansion that avoids explicit determinants and matrix inverses. Using a block-matrix identity (restated in Appendix~\ref{app:block_matrix_identity}), the likelihood takes a scalar form under the orthogonality constraints.

Intuitively, the orthogonality constraints on $W$ and $C$ reduce the dominant matrix operations to $r$ projected scalar channels. The same scalar objective is then used in two ways later on: SLM-Manifold treats it as a smooth manifold optimization problem, whereas BCD-SLM exploits its additional componentwise separability.

\begin{theorem}[Scalar expansion of the log-likelihood]\label{thm:theoremA}
Under the PPLS model defined in \eqref{eq:ppls-model} with identifiability constraints, the two terms of the objective function $\mathcal{L}(\theta) = \ln(\det\,\Sigma) + \mathrm{tr}(S\Sigma^{-1})$ can be expressed in terms of scalar variables as follows.  We write $S$ in block form as
\[
S = \begin{pmatrix} S_{xx} & S_{xy} \\ S_{yx} & S_{yy} \end{pmatrix},
\]
where $S_{xx}\in\mathbb{R}^{p\times p}$, $S_{yy}\in\mathbb{R}^{q\times q}$, and $S_{xy}=S_{yx}^\top\in\mathbb{R}^{p\times q}$ are the view-wise sample second-moment blocks of $S$.
\[
\ln(\det\,\Sigma)=(p-r)\ln\sigma_e^2+(q-r)\ln\sigma_f^2+\sum_{i=1}^r \ln D_i,
\]
and
\[
\mathrm{tr}(S\Sigma^{-1})=\tfrac{1}{\sigma_e^2}\mathrm{tr}(S_{xx})+\tfrac{1}{\sigma_f^2}\mathrm{tr}(S_{yy})-\sum_{i=1}^r\bigl(\Phi_x(i)\,Q_x(i)+\Phi_y(i)\,Q_y(i)+\Phi_{xy}(i)\,Q_{xy}(i)\bigr),
\]
where
\[
D_i=(\sigma_f^2+\sigma_h^2)(\theta_{t_i}^2+\sigma_e^2)+b_i^2\theta_{t_i}^2\sigma_e^2,
\]
\[
\Phi_x(i) := 1-\frac{\sigma_e^2(\sigma_f^2+\sigma_h^2+b_i^2\theta_{t_i}^2)}{D_i},\quad
\Phi_y(i) := 1-\frac{\sigma_f^2(\sigma_e^2+\theta_{t_i}^2)}{D_i},\quad
\Phi_{xy}(i) := \frac{\sigma_e\sigma_f b_i\theta_{t_i}^2}{D_i},
\]
and for $w_i,c_i$ the $i$th columns of $W,C$,
\[
Q_x(i) := \frac{1}{\sigma_e^2}\,w_i^\top S_{xx} w_i,\quad
Q_y(i) := \frac{1}{\sigma_f^2}\,c_i^\top S_{yy} c_i,\quad
Q_{xy}(i) := \frac{2}{\sigma_e\sigma_f}\,w_i^\top S_{xy} c_i.
\]
Denote the per-component scalar summand by
\[
\ell_i(\theta_{t_i}^2,b_i,\sigma_h^2):=\ln D_i - \bigl(\Phi_x(i)\,Q_x(i)+\Phi_y(i)\,Q_y(i)+\Phi_{xy}(i)\,Q_{xy}(i)\bigr),
\]
so that after pre-estimating $(\sigma_e^2,\sigma_f^2)$, the reduced objective (denoted by $\mathcal{L}_{\mathrm{red}}$, where ``red'' means fixed-noise reduction) satisfies
$\mathcal{L}_{\mathrm{red}} = (p-r)\ln\hat\sigma_e^2 + (q-r)\ln\hat\sigma_f^2
+ \hat\sigma_e^{-2}\,\mathrm{tr}(S_{xx}) + \hat\sigma_f^{-2}\,\mathrm{tr}(S_{yy})
+ \sum_{i=1}^r \ell_i$.

\end{theorem}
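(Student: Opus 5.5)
The plan is to block-diagonalize $\Sigma$ in an orthonormal basis adapted to the loadings, so that both $\ln\det\Sigma$ and $\Sigma^{-1}$ reduce to $r$ independent $2\times2$ problems plus two scaled identities. By (C2) and (C4) I complete $W$ to an orthogonal matrix $[W,W_\perp]\in O(p)$ and $C$ to $[C,C_\perp]\in O(q)$. I then set $U:=\mathrm{blockdiag}([W,W_\perp],[C,C_\perp])$, which is orthogonal.

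\textbf{Step 1: structure of $U^\top\Sigma U$.} Substituting \eqref{eq:joint-cov} and using $W^\top W=C^\top C=I_r$, $W_\perp^\top W=0$, $C_\perp^\top C=0$, I obtain
\[
U^\top\Sigma U=\mathrm{blockdiag}\bigl(M,\ \sigma_e^2I_{p-r},\ \sigma_f^2I_{q-r}\bigr)
\]
up to a permutation of coordinates. Here $M$ is the $2r\times2r$ matrix acting on the coordinates $(W^\top x,C^\top y)$. Because $\Sigma_t$ and $B$ are diagonal, $M$ decouples after a further permutation into $r$ blocks
\[
M_i=\begin{pmatrix}\theta_{t_i}^2+\sigma_e^2 & b_i\theta_{t_i}^2\\ b_i\theta_{t_i}^2 & b_i^2\theta_{t_i}^2+\sigma_h^2+\sigma_f^2\end{pmatrix},
\]
one for each pair $(w_i,c_i)$.

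\textbf{Step 2: determinant.} Orthogonal invariance gives $\ln\det\Sigma=(p-r)\ln\sigma_e^2+(q-r)\ln\sigma_f^2+\sum_i\ln\det M_i$. A direct expansion shows
\[
\det M_i=(\theta_{t_i}^2+\sigma_e^2)(b_i^2\theta_{t_i}^2+\sigma_h^2+\sigma_f^2)-b_i^2\theta_{t_i}^4=D_i,
\]
because the $b_i^2\theta_{t_i}^4$ terms cancel. I would also note that $D_i>0$ under (C5), so every logarithm is defined.

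\textbf{Step 3: trace term.} I write
\[
\Sigma^{-1}=U\,\mathrm{blockdiag}\bigl(M^{-1},\sigma_e^{-2}I,\sigma_f^{-2}I\bigr)U^\top .
\]
Then $\mathrm{tr}(S\Sigma^{-1})$ splits into three pieces: $\sigma_e^{-2}\mathrm{tr}(W_\perp^\top S_{xx}W_\perp)$, $\sigma_f^{-2}\mathrm{tr}(C_\perp^\top S_{yy}C_\perp)$, and $\sum_i\mathrm{tr}(M_i^{-1}S_i)$. Here $S_i$ is the $2\times2$ matrix with entries $w_i^\top S_{xx}w_i$, $w_i^\top S_{xy}c_i$ and $c_i^\top S_{yy}c_i$.

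To remove the complements, I use $W_\perp W_\perp^\top=I_p-WW^\top$, which turns the first piece into $\sigma_e^{-2}\mathrm{tr}(S_{xx})-\sigma_e^{-2}\sum_i w_i^\top S_{xx}w_i$, and similarly for $y$. Inverting $M_i$ explicitly gives three coefficients:
\[
\text{$w_i^\top S_{xx}w_i$: }\ (b_i^2\theta_{t_i}^2+\sigma_h^2+\sigma_f^2)/D_i,\qquad
\text{$c_i^\top S_{yy}c_i$: }\ (\theta_{t_i}^2+\sigma_e^2)/D_i,\qquad
\text{$w_i^\top S_{xy}c_i$: }\ -2b_i\theta_{t_i}^2/D_i .
\]

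\textbf{Step 4: matching the stated form.} Combining the complement correction with the $M_i^{-1}$ contribution, the coefficient of $w_i^\top S_{xx}w_i$ is $\sigma_e^{-2}\bigl(\sigma_e^2(\cdots)/D_i-1\bigr)=-\Phi_x(i)/\sigma_e^2$. This reproduces $-\Phi_x(i)Q_x(i)$, and the $y$-term gives $-\Phi_y(i)Q_y(i)$ by the same computation. The cross term $-2b_i\theta_{t_i}^2 w_i^\top S_{xy}c_i/D_i$ equals $-\Phi_{xy}(i)Q_{xy}(i)$ after the $\sigma_e\sigma_f$ factors cancel. Collecting the pieces yields the per-component summand $\ell_i$. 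Substituting the fixed $(\hat\sigma_e^2,\hat\sigma_f^2)$ then gives $\mathcal{L}_{\mathrm{red}}$ verbatim.

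The main obstacle is bookkeeping rather than any conceptual step. It lies in Steps 3 and 4: tracking the sign and the factor $2$ in the off-diagonal term, and checking that the complement corrections combine exactly into the $1-\cdots$ form of $\Phi_x$ and $\Phi_y$. I would verify these against the PCCA reduction of Section~\ref{sec:special_cases}, where $D_i=\theta_{t_i}^2(\sigma_e^2+\sigma_f^2)+\sigma_e^2\sigma_f^2$. Alternatively, the block-matrix identity of Appendix~\ref{app:block_matrix_identity} could be invoked directly for the $2\times2$ inversions.
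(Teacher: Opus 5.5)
Your proposal is correct, and the bookkeeping you flag works out. You have $\det M_i=D_i$, and the complement correction $-\sigma_e^{-2}w_i^\top S_{xx}w_i$ combines with $(b_i^2\theta_{t_i}^2+\sigma_h^2+\sigma_f^2)/D_i$ into exactly $-\Phi_x(i)/\sigma_e^2$, likewise for $y$. The two off-diagonal entries of $M_i^{-1}$ supply the factor $2$ absorbed into $Q_{xy}(i)$.

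Your route differs from the paper's, which stays in the original coordinates. The paper factors $\det\Sigma=\det\Sigma_{xx}\cdot\det(\Sigma_{yy}-\Sigma_{yx}\Sigma_{xx}^{-1}\Sigma_{xy})$ and evaluates both factors with the rank-$n$ identity of Lemma~\ref{lem:rank_n}. This works because the Schur complement is again of the form $C\,\mathrm{diag}(\cdot)\,C^\top+\sigma_f^2I_q$. It then assembles $\Sigma^{-1}$ from the block-inverse/Woodbury formula. In that route $D_i$ only appears as the product of the $x$-block factor $\theta_{t_i}^2+\sigma_e^2$ and the Schur factor $\sigma_f^2+\sigma_h^2+b_i^2\theta_{t_i}^2\sigma_e^2/(\theta_{t_i}^2+\sigma_e^2)$. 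It does not come from $\det\Sigma_{xx}$ alone, as the paper's one-line sketch suggests. The coefficient $\Phi_{xy}$ also has to be tracked through the off-diagonal block of the inverse.

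Your simultaneous rotation $U$ buys several things:
\begin{itemize}
\item it handles the determinant and the trace in one stroke;
\item it identifies $D_i$ as the determinant of the $i$-th channel covariance $M_i$;
\item it makes the componentwise separability exploited by BCD-SLM structurally evident;
\item it shows that only orthonormality of $W,C$, diagonality of $\Sigma_t,B$, and positive noise variances are used, not (C1) or (C3).
\end{itemize}
The Schur-complement route has its own advantage: its intermediate object $\Sigma_{yy}-\Sigma_{yx}\Sigma_{xx}^{-1}\Sigma_{xy}$ is exactly the predictive covariance of Eq.~\eqref{eq:pred_cov_main} at $\hat\gamma=1$. It also reuses an existing lemma instead of an explicit basis completion.
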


The derivation applies block-matrix determinant and inverse formulas restated in Appendix~\ref{app:block_matrix_identity}. Specifically, Lemma~\ref{lem:rank_n} is applied to each block of $\Sigma$: the $(1,1)$ block $\Sigma_{xx}=W\Sigma_tW^\top+\sigma_e^2I_p$ with $D_n=\Sigma_t$ and $k=\sigma_e^2$ yields $\det\Sigma_{xx}=\sigma_e^{2(p-r)}\prod_{i=1}^r D_i$ and $(\Sigma_{xx})^{-1}=\sigma_e^{-2}(I_p-W(I_r+\sigma_e^2\Sigma_t^{-1})^{-1}W^\top)$, from which the $\ln D_i$ and $\Phi_x(i)$ terms arise. Similarly for $\Sigma_{yy}$ and the Schur-complement structure of $\Sigma^{-1}$, which produces the $\Phi_y(i)$ and $\Phi_{xy}(i)$ coefficients via the Woodbury identity applied to the joint covariance. In particular, each evaluation of $\mathcal{L}(\theta)$ requires only the projected statistics $Q_x(i)$, $Q_y(i)$, $Q_{xy}(i)$ and scalar functions of the diagonal parameters, avoiding repeated full-matrix inversions and determinant evaluations.

\subsection{Optimization problem formulation}\label{sec:problem_formulation}
We denote the (compact) Stiefel manifold of $p\times r$ matrices with orthonormal columns by
\[
\mathrm{St}(p,r)=\{A\in\mathbb{R}^{p\times r}:A^\top A=I_r\}.
\]
Let $\mathbb{R}_{++}:=(0,\infty)$ denote the set of positive reals, and write $\mathbb{R}^k_{++}:=(\mathbb{R}_{++})^k$.

Under the fixed-noise protocol, the estimation problem is
\[
\min_{\theta\in\Omega}\;\mathcal{L}(\theta),
\qquad
\Omega=\Bigl\{(W,C,B,\Sigma_t,\sigma_h^2):W\in\mathrm{St}(p,r),\ C\in\mathrm{St}(q,r),\ b_k>0,\theta_{t_k}^2>0,\sigma_h^2>0\Bigr\},
\]
with $(\sigma_e^2,\sigma_f^2)$ fixed to the spectral pre-estimates from Eq.~\eqref{eq:sigma_e_hat}. Equivalently,
\[
\min_{(W,C,\theta_t^2,b,\sigma_h^2)\in\mathcal{M}_r}\mathcal{L}(W,C,\theta_t^2,b,\sigma_h^2),
\qquad
\mathcal{M}_r=\mathrm{St}(p,r)\times\mathrm{St}(q,r)\times\mathbb{R}_{++}^r\times\mathbb{R}_{++}^r\times\mathbb{R}_{++}.
\]
Section~\ref{sec:optimization_algorithms} studies two implementation strategies for this common objective: SLM-Manifold as a full Riemannian method and BCD-SLM as a PPLS-specific accelerated scheme.

\subsection{Prediction and uncertainty quantification}\label{sec:pipeline_pred_uq}
Once the parameters are fitted, the joint Gaussian model yields a closed-form conditional predictive law, which we then calibrate with a single dispersion factor estimated on held-out data. We describe these two steps in turn.

\subsubsection{Conditional distribution under PPLS}
Given a fitted parameter $\hat\theta$ from an outer training fold, the joint Gaussian model implies
\[
\mathbf{y}_{\mathrm{new}}\mid \mathbf{x}_{\mathrm{new}},\hat\theta \sim \mathcal{N}\bigl(\hat\mu(\mathbf{x}_{\mathrm{new}}),\hat\Sigma_{y\mid x}(\mathbf{x}_{\mathrm{new}})\bigr).
\]
Starting from the block covariance in Eq.~\eqref{eq:joint-cov} and applying the Gaussian-conditioning identity \citep{MultiGaussion}, we obtain
\begin{equation}\label{eq:pred_mean}
\hat\mu(\mathbf{x}_{\mathrm{new}}) = \mathbf{x}_{\mathrm{new}}(\hat W\hat\Sigma_t\hat W^\top+\hat\gamma\,\hat\sigma_e^2 I_p)^{-1}\hat W\hat\Sigma_t\hat B\hat C^\top,
\end{equation}
where $\mathbf{x}_{\mathrm{new}}$ denotes the model-input representation: raw covariates when no Gaussianization is used, and transformed covariates otherwise.

\begin{align}\label{eq:pred_cov_main}
\hat\Sigma_{y\mid x}(\mathbf{x}_{\mathrm{new}})
&= \hat C(\hat B^2\hat\Sigma_t+\hat\sigma_h^2 I_r)\hat C^\top+\hat\sigma_f^2 I_q \nonumber \\
&\quad - \hat C\hat B\hat\Sigma_t\hat W^\top(\hat W\hat\Sigma_t\hat W^\top+\hat\gamma\,\hat\sigma_e^2 I_p)^{-1}\hat W\hat\Sigma_t\hat B\hat C^\top.
\end{align}
Here $\hat W$, $\hat C$, $\hat B$, $\hat\Sigma_t$, $\hat\sigma_e^2$, $\hat\sigma_f^2$, and $\hat\sigma_h^2$ are the fitted PPLS parameters, $I_p$ and $I_q$ denote identity matrices of sizes $p$ and $q$, and $\hat\gamma\in\Gamma$ is the shrinkage multiplier selected by inner cross-validation for the $x$-view covariance term.

\subsubsection{Covariance calibration}
Let $n_{\mathrm{val}}$ denote the number of validation samples in the calibration fold and let $q$ denote the response dimension. On held-out validation points, let $r_i=y_i-\hat\mu(x_i)$ and $V_i=\hat\Sigma_{y\mid x}(x_i)$. We calibrate dispersion by

\[
\hat\kappa = \frac{1}{n_{\mathrm{val}}q}\sum_{i=1}^{n_{\mathrm{val}}} r_i^\top V_i^{-1}r_i.
\]
Under exact specification, $q^{-1}r_i^\top V_i^{-1} r_i$ has unit expectation, so $\hat\kappa$ measures finite-sample under- or over-dispersion of the plug-in covariance. We report intervals from $\mathcal{N}(\hat\mu(\mathbf{x}_{\mathrm{new}}),\hat\kappa\,\hat\Sigma_{y\mid x}(\mathbf{x}_{\mathrm{new}}))$.

%% file: sub/sec5_optimization_algorithms.tex
\section{Optimization Algorithms}\label{sec:optimization_algorithms}\label{sec:convergence}
We now turn the fixed-noise scalar likelihood into concrete optimization procedures. Both procedures minimize the same fixed-noise objective on the same product manifold; they differ only in how they traverse it. SLM-Manifold is a full Riemannian method that treats the objective as a generic smooth function on the manifold and updates all blocks by geometry-aware first-order steps. BCD-SLM instead exploits a property that is specific to the PPLS scalar form: once the loadings are fixed, the objective separates across latent components, so each component's parameters admit closed-form or one-dimensional updates. We develop both because they answer complementary needs. SLM-Manifold is the conceptually clean baseline and the natural template when the scalar blocks are later modified beyond closed-form solvability, whereas BCD-SLM is the faster and more predictable default in our experiments because its componentwise updates remove most per-iteration line-search uncertainty. The two solvers therefore trace the same statistical solution by different computational routes.

\subsection{Manifold optimization of the scalar objective}\label{sec:slm_algorithm}
We now turn this fixed-noise scalar likelihood into optimization algorithms on $\mathcal{M}_r$. Here the same fixed-noise objective is optimized on the exact product manifold
\[
\mathcal{M}_r=\mathrm{St}(p,r)\times\mathrm{St}(q,r)\times\mathbb{R}_{++}^r\times\mathbb{R}_{++}^r\times\mathbb{R}_{++},
\]
thereby preserving orthonormality at every iterate \citep{WenYin2013} and aligning optimization with the identifiable parameterization. This is the common objective for the two strategies studied in this paper: SLM-Manifold applies full Riemannian optimization on $\mathcal{M}_r$, whereas BCD-SLM uses the additional componentwise separability of the PPLS scalar form.

Closed-form Euclidean gradients for the loading matrices are
\begin{equation}\label{eq:euclidean_gradients}
\nabla_W \mathcal{L} = -2\bigl[S_{xx}\,W\,\hat{\Phi}_x + S_{xy}\,C\,\Phi_{xy}\bigr],
\qquad
\nabla_C \mathcal{L} = -2\bigl[S_{yy}\,C\,\hat{\Phi}_y + S_{yx}\,W\,\Phi_{xy}\bigr],
\end{equation}
where $\hat{\Phi}_x := \mathrm{diag}(\Phi_x(i)/\sigma_e^2)$ and $\hat{\Phi}_y := \mathrm{diag}(\Phi_y(i)/\sigma_f^2)$.

The Riemannian gradient on the Stiefel manifold is obtained by projecting the Euclidean gradient onto the tangent space at the current iterate \citep{Absil2008}. Using Eq.~\eqref{eq:euclidean_gradients}, the Riemannian gradients on the Stiefel blocks are
\begin{equation}\label{eq:riemannian_gradients}
\mathrm{grad}_W \mathcal{L}=G_W-W\,\mathrm{sym}(W^\top G_W), \qquad
\mathrm{grad}_C \mathcal{L}=G_C-C\,\mathrm{sym}(C^\top G_C),
\end{equation}
where $G_W:=\nabla_W\mathcal{L}$, $G_C:=\nabla_C\mathcal{L}$, and $\mathrm{sym}(A)=\tfrac12(A+A^\top)$. Positive scalars are optimized in log-coordinates.

\paragraph{Retraction and line search.}
Given Stiefel search directions $\xi_W,\xi_C$ and step size $\alpha$, we retract by
\begin{equation}\label{eq:qr_retraction}
W_{+}=\mathrm{qf}(W+\alpha\,\xi_W), \qquad C_{+}=\mathrm{qf}(C+\alpha\,\xi_C),
\end{equation}
where $\mathrm{qf}(M)=Q$ for the thin QR factorization $M=QR$ with $\mathrm{diag}(R)>0$. We use Armijo backtracking line search on this retraction map. Because the objective is nonconvex, both SLM-Manifold and BCD-SLM are paired with a shared multi-start safeguard.

Each evaluation costs $O(rp^2 + rq^2)$ via covariance projection, an $O(p/r)$ speedup over the $O((p+q)^3)$ matrix form.

\begin{algorithm}[t]\small
\caption{SLM-Manifold for fixed-noise scalar PPLS optimization}
\label{alg:slm_manifold}
\setcounter{ALG@line}{0}
\begin{algorithmic}[1]
\State \textbf{Input:}
\State Initial feasible point $(W^{(0)},C^{(0)},\Sigma_t^{(0)},B^{(0)},\sigma_h^{2,(0)})$
\State Fixed noise variances $\hat\sigma_e^2,\hat\sigma_f^2$, tolerance $\tau>0$, max iterations $k_{\max}$
\State \textbf{Output:} $\hat\theta=(\hat W,\hat C,\hat B,\hat\Sigma_t,\hat\sigma_h^2)\in\Omega$
\For{$k=0,1,\ldots,k_{\max}$}
  \State Evaluate projected statistics $Q_x,Q_y,Q_{xy}$ (as in Theorem~\ref{thm:theoremA})
  \State Compute Euclidean gradients and project to Riemannian gradients for $(W,C)$
  \State Update $(W,C)$ by Riemannian CG with Armijo line search and QR retraction
  \State Update positive scalar blocks $(\theta_t^2,b,\sigma_h^2)$ by smooth descent in log-coordinates
  \State Reorder components by $\theta_{t,\pi(1)}^2b_{\pi(1)}\ge\cdots\ge\theta_{t,\pi(r)}^2b_{\pi(r)}$
  \If{$\|\mathrm{grad}\,\mathcal{L}(\theta^{(k+1)})\|<\tau$}
    \State \Return $\hat\theta\leftarrow\theta^{(k+1)}$
  \EndIf
\EndFor
\State \Return $\hat\theta\leftarrow\theta^{(k_{\max}+1)}$
\end{algorithmic}
\end{algorithm}

\subsection{BCD-SLM algorithm}\label{sec:bcd_slm}

BCD-SLM is specific to the scalar PPLS objective rather than a generic block-coordinate template. Under fixed noise estimates $(\hat\sigma_e^2,\hat\sigma_f^2)$, Theorem~\ref{thm:theoremA} reduces the PPLS likelihood to
\[
\mathcal{L}_{\mathrm{red}}(W,C,\theta_t^2,b,\sigma_h^2)=\sum_{i=1}^r \ell_i(\theta_{t_i}^2,b_i,\sigma_h^2),
\]
where $\ell_i$ denotes the $i$th scalar summand from Theorem~\ref{thm:theoremA}, namely the contribution of the projected channel determined by $Q_x(i)$, $Q_y(i)$, and $Q_{xy}(i)$. After projecting $S_{xx},S_{yy},S_{xy}$ onto the current loading directions, each latent component enters through its own pair $(\theta_{t_i}^2,b_i)$ and can be updated independently once $(W,C,\sigma_h^2)$ are fixed. This componentwise separability is the structural reason Algorithm~\ref{alg:bcd_slm} is effective for PPLS.

The projected statistics are $Q_x(i)=[\sigma_e^{-2}W^\top S_{xx}W]_{ii}$, $Q_y(i)=[\sigma_f^{-2}C^\top S_{yy}C]_{ii}$, and $Q_{xy}(i)=[2(\sigma_e\sigma_f)^{-1}W^\top S_{xy}C]_{ii}$ (i.e., $Q_x,Q_y,Q_{xy}$ in Theorem~\ref{thm:theoremA}); these are the sufficient projected quantities induced by the scalar PPLS likelihood. Propositions~\ref{prop:bcd_theta_closed_form} and~\ref{prop:bcd_b_cubic} are direct consequences of this special algebraic form: the $\theta_{t_i}^2$ subproblem has a closed-form conditional maximizer, and the $b_i$ subproblem reduces to a scalar cubic with a unique positive root when $Q_{xy}(i)>0$. Thus BCD-SLM alternates exact-feasibility Riemannian updates for $(W,C)$ with componentwise Euclidean updates derived directly from the PPLS model.

\begin{proposition}[Closed-form conditional update for $\theta_{t_i}^2$]\label{prop:bcd_theta_closed_form}

Fix $W$, $C$, $b_i$, $\sigma_h^2$, and the noise variances. Define $d_i := (\sigma_f^2+\sigma_h^2)+b_i^2\sigma_e^2$ and $n_i := (\sigma_f^2+\sigma_h^2)Q_x(i)+(\sigma_h^2+b_i^2\sigma_e^2)Q_y(i)+b_iQ_{xy}(i)$. Then the unique stationary point is $\theta_{t_i}^{2\star} = \sigma_e^2\,\frac{\bigl(n_i-d_i\bigr)(\sigma_f^2+\sigma_h^2)-d_i\sigma_h^2Q_y(i)}{d_i^2}$, and the conditional update is obtained by clipping $\theta_{t_i}^{2\star}$ to the feasible region $\theta_{t_i}^2>0$.
\end{proposition}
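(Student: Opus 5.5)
The approach is to treat $\ell_i$ from Theorem~\ref{thm:theoremA} as a function of the single variable $\vartheta:=\theta_{t_i}^2$, with everything else frozen. The key observation is that, once the coefficients $\Phi_x(i),\Phi_y(i),\Phi_{xy}(i)$ are put over the common denominator $D_i$, the summand has the form
\[
\ell_i(\vartheta)=\ln\bigl(d_i\vartheta+a\sigma_e^2\bigr)-\frac{\alpha\vartheta+\beta}{d_i\vartheta+a\sigma_e^2},\qquad a:=\sigma_f^2+\sigma_h^2,
\]
with $\alpha,\beta$ independent of $\vartheta$. After that, stationarity is a linear equation in $\vartheta$.

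\textbf{Step 1: linearity of $D_i$.} From the definition in Theorem~\ref{thm:theoremA},
\[
D_i=a(\vartheta+\sigma_e^2)+b_i^2\vartheta\sigma_e^2=d_i\vartheta+a\sigma_e^2 .
\]
This is affine in $\vartheta$ with slope $d_i=a+b_i^2\sigma_e^2>0$.

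\textbf{Step 2: numerators of the $\Phi$ coefficients.} Direct subtraction gives
\[
D_i\Phi_x(i)=a\vartheta,\qquad D_i\Phi_y(i)=\sigma_h^2(\vartheta+\sigma_e^2)+b_i^2\sigma_e^2\vartheta,\qquad D_i\Phi_{xy}(i)=\sigma_e\sigma_f b_i\vartheta .
\]
Hence the data term equals $(\alpha\vartheta+\beta)/D_i$. Here
\[
\alpha=aQ_x(i)+(\sigma_h^2+b_i^2\sigma_e^2)Q_y(i)+b_i\,\sigma_e\sigma_f Q_{xy}(i),\qquad \beta=\sigma_h^2\sigma_e^2Q_y(i).
\]
Up to the normalization convention attached to $Q_{xy}(i)$, which I will track carefully so that the cross term matches the stated $b_iQ_{xy}(i)$, this $\alpha$ is exactly $n_i$.

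\textbf{Step 3: solve the stationarity equation.} Differentiating gives
\[
\ell_i'(\vartheta)=\frac{d_i}{D_i}-\frac{\alpha D_i-d_i(\alpha\vartheta+\beta)}{D_i^2}=\frac{d_iD_i-(\alpha a\sigma_e^2-d_i\beta)}{D_i^2}.
\]
The $\alpha\vartheta$ terms cancel, so the numerator is affine in $\vartheta$ with slope $d_i^2>0$. Setting $d_i(d_i\vartheta+a\sigma_e^2)=\alpha a\sigma_e^2-d_i\beta$ and substituting $\beta$ gives
\[
\vartheta^\star=\sigma_e^2\,\frac{(n_i-d_i)a-d_i\sigma_h^2Q_y(i)}{d_i^2},
\]
which is the stated formula. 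Uniqueness follows because an affine function with nonzero slope has exactly one zero.

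\textbf{Step 4: justify clipping.} The sign of $\ell_i'$ equals the sign of $d_i^2(\vartheta-\vartheta^\star)$ on the whole region $D_i>0$. So $\ell_i$ decreases to the left of $\vartheta^\star$ and increases to the right of it. If $\vartheta^\star>0$, it is therefore the global conditional minimizer of the negative log-likelihood summand. If $\vartheta^\star\le0$, then $\ell_i$ is increasing on $(0,\infty)$ and the infimum is approached at the boundary. This justifies replacing $\vartheta^\star$ by a small positive floor, i.e.\ clipping to the feasible region.

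The only delicate point is the bookkeeping in Step 2. One has to check that the $\sigma_e\sigma_f$ factor in $\Phi_{xy}(i)$ combines with the normalization of $Q_{xy}(i)$ to give precisely the $b_iQ_{xy}(i)$ term in $n_i$. I expect this constant-factor consistency, rather than any conceptual step, to be the main obstacle. The remaining steps are one-line algebra.
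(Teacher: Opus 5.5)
Your route is the paper's own: write $\ell_i=\ln D_i-(\alpha\vartheta+\beta)/D_i$ with $D_i$ affine in $\vartheta$, note that the numerator of $\ell_i'$ is affine with slope $d_i^2$, solve, and clip. Steps 1, 3 and 4 are sound as computations in terms of $\alpha$. Your observation that this numerator equals $d_i^2(\vartheta-\vartheta^\star)$ also gives a cleaner clipping argument than the appendix.

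The gap is the step you postponed: the bookkeeping promised in Step 2 cannot be completed. With the normalizations of Theorem~\ref{thm:theoremA}, the factor $\sigma_e\sigma_f$ in $\Phi_{xy}(i)$ cancels the $(\sigma_e\sigma_f)^{-1}$ in $Q_{xy}(i)$, so
\[
D_i\,\Phi_{xy}(i)\,Q_{xy}(i)=2b_i\vartheta\,w_i^\top S_{xy}c_i=b_i\sigma_e\sigma_f\,Q_{xy}(i)\,\vartheta,
\qquad
\alpha=n_i+b_i(\sigma_e\sigma_f-1)\,Q_{xy}(i).
\]
Hence $\alpha\neq n_i$ whenever $\sigma_e\sigma_f\neq1$ and $Q_{xy}(i)\neq0$. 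Your $\alpha$ is the correct quantity. On $\mathrm{span}\{(w_i,0),(0,c_i)\}$ the covariance is
\[
M_i=\begin{pmatrix}\vartheta+\sigma_e^2 & b_i\vartheta\\ b_i\vartheta & b_i^2\vartheta+\sigma_h^2+\sigma_f^2\end{pmatrix},\qquad \det M_i=D_i,
\]
so $(M_i^{-1})_{12}=-b_i\vartheta/D_i$, and the cross contribution to $\ell_i$ is $-2b_i\vartheta\,w_i^\top S_{xy}c_i/D_i$. For instance, take $\sigma_e^2=4$ and $\sigma_f^2=\sigma_h^2=b_i=1$. The coefficient of $w_i^\top S_{xy}c_i$ in the true stationary point is then $16/36$, whereas the stated formula gives $8/36$.

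What you have actually proved is the displayed formula with $n_i$ replaced by $\alpha$, i.e.\ with cross term $b_i\sigma_e\sigma_f\,Q_{xy}(i)=2b_i\,w_i^\top S_{xy}c_i$. The formula holds verbatim only under the unnormalized convention $Q_{xy}(i)=2w_i^\top S_{xy}c_i$ with $\Phi_{xy}(i)=b_i\vartheta/D_i$. That is what the paper's appendix proof silently uses: its $N(s)$ contains the term $bsQ_{xy}(i)$, which is inconsistent with Theorem~\ref{thm:theoremA}. So the defect sits in the statement, not in your algebra.

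You must still say so and fix $n_i$ (or the convention) explicitly, rather than assert that careful constant-tracking reproduces $b_iQ_{xy}(i)$. It does not. With the literal $n_i$, the point produced is not a zero of $\ell_i'$ at all, so your Step 4 monotonicity claim would not apply to it.

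Your reading of $d_i$ and $n_i$ as the $\vartheta$-slopes of $D_i$ and of $D_i\bigl(\Phi_xQ_x+\Phi_yQ_y+\Phi_{xy}Q_{xy}\bigr)$ is the right one. The appendix's parenthetical $d_i:=D(0)$, $n_i:=N(0)$ is a slip.
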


\begin{proposition}[Cubic stationarity equation for $b_i$]\label{prop:bcd_b_cubic}
Fix $W$, $C$, $\theta_{t_i}^2$, $\sigma_h^2$, and the noise variances. The stationarity condition $\partial\ell_i/\partial b_i=0$ reduces to $c_3 b_i^3 + c_2 b_i^2 + c_1 b_i + c_0 = 0$, with coefficients $c_3 = 2\sigma_e^4\theta_{t_i}^2$, $c_2 = \sigma_e^2\theta_{t_i}^2 Q_{xy}(i)$, $c_1 = 2\sigma_e^2 R_i$, and $c_0 = -Q_{xy}(i)(\sigma_f^2+\sigma_h^2)(\theta_{t_i}^2+\sigma_e^2)$, where $R_i=(\sigma_f^2+\sigma_h^2)\bigl[(\theta_{t_i}^2+\sigma_e^2)(1-Q_y(i))+\theta_{t_i}^2Q_x(i)\bigr]+\sigma_h^2(\theta_{t_i}^2+\sigma_e^2)Q_y(i)$. When $Q_{xy}(i)>0$, the sign pattern of $(c_3,c_2,c_1,c_0)$ guarantees exactly one positive real root by Descartes' rule, which yields the conditional maximum likelihood estimate for $b_i$.
\end{proposition}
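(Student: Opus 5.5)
The plan is a direct computation: write $\ell_i$ as a rational function of $b_i$, clear denominators in $\partial\ell_i/\partial b_i=0$, and read off the cubic. To keep the algebra manageable I will abbreviate $a:=\sigma_f^2+\sigma_h^2$, $\vartheta:=\theta_{t_i}^2$, $s:=\sigma_e^2$, and write $b:=b_i$. Then
\[
D(b)=a(\vartheta+s)+b^2\vartheta s .
\]
Inserting the definitions of $\Phi_x(i),\Phi_y(i),\Phi_{xy}(i)$ from Theorem~\ref{thm:theoremA}, the summand becomes
\[
\ell_i=\ln D-Q_x(i)-Q_y(i)+\frac{N(b)}{D(b)},
\qquad
N(b):=s(a+b^2\vartheta)Q_x(i)+\sigma_f^2(s+\vartheta)Q_y(i)-\sigma_e\sigma_f\,b\,\vartheta\,Q_{xy}(i).
\]
Both $N$ and $D$ are polynomials of degree at most two in $b$. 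Moreover, $D>0$ on the feasible set.

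\textbf{Step 1 (clearing denominators).} Differentiating and multiplying by $D^2>0$ turns $\partial\ell_i/\partial b=0$ into
\[
D'D+N'D-ND'=0,
\qquad
D'=2b\vartheta s,\quad N'=2b\vartheta sQ_x(i)-\sigma_e\sigma_f\vartheta Q_{xy}(i).
\]
Each of the three products has degree at most three in $b$, so the condition is a cubic. Two features are immediate:
\begin{itemize}[leftmargin=*]
\item The $b^4$ contributions from $N'D$ and $ND'$ cancel, as do those proportional to $Q_x(i)b^3$.
\item The $b^3$ coefficient comes only from $D'D$ and equals $2\vartheta^2s^2$.
\end{itemize}

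I will then collect the $b^2$, $b^1$ and $b^0$ coefficients. The $b^2$ term comes only from the $Q_{xy}$ part of $N'D-ND'$. The constant term is $-\sigma_e\sigma_f\vartheta Q_{xy}(i)\,a(\vartheta+s)$. The linear term gathers $a(\vartheta+s)$, $Q_x(i)$ and $Q_y(i)$ contributions into the bracket $R_i$. Finally I divide by the common positive factor (a power of $\vartheta$, together with the normalization of $Q_{xy}$, which already contains $2/(\sigma_e\sigma_f)$). This should reproduce the stated $c_3,c_2,c_1,c_0$.

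\textbf{Step 2 (Descartes' rule).} Assume $Q_{xy}(i)>0$. Then $c_3>0$, $c_2>0$ and $c_0<0$, while $c_1$ can have either sign. The sign sequence is therefore $(+,+,+,-)$ or $(+,+,-,-)$. In both cases there is exactly one sign change, so Descartes' rule gives exactly one positive real root $b^\star$.

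\textbf{Step 3 (the root is the conditional minimizer of $\ell_i$).} Note that $\partial\ell_i/\partial b$ equals the cubic divided by a positive quantity. As $b\to0^+$ it has the sign of $c_0<0$, and as $b\to\infty$ it has the sign of $c_3>0$. Hence $\ell_i$ decreases on $(0,b^\star)$ and increases afterwards. So $b^\star$ is the unique global minimizer of $\ell_i$ over $b>0$, i.e.\ the conditional maximum-likelihood estimate.

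\textbf{Main obstacle.} The conceptual steps are routine; the risk lies in the bookkeeping of Step~1. In particular, I need the linear coefficient to collapse exactly into the stated form $2\sigma_e^2R_i$, and the $\sigma_e\sigma_f$ factors hidden in $Q_{xy}$ and $\Phi_{xy}$ to cancel consistently. I would verify this first by expanding symbolically at a generic numerical point. If a constant factor is off, the Descartes and monotonicity conclusions still hold, because they depend only on the signs of $c_3$, $c_2$ and $c_0$.
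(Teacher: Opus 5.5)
Your route is the same as the paper's appendix proof: apply the quotient rule, clear $D^2$, collect powers of $b_i$, and invoke Descartes. Your split $\ln D-Q_x(i)-Q_y(i)+N/D$ differs from the paper's $\ln D-N/D$ only by a $b_i$-independent constant.

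Two of your steps are sharper than the paper's. First, the paper asserts $c_1>0$ ``by inspection of $R_i$'', which is false in general. With $S$ replaced by $\Sigma$ and the other parameters at their true values, one gets $R_i=\theta_{t_i}^2(\theta_{t_i}^2+\sigma_e^2)\bigl[(\sigma_f^2+\sigma_h^2)/\sigma_e^2-b_i^2\bigr]$, which is negative once $b_i^2>(\sigma_f^2+\sigma_h^2)/\sigma_e^2$. Your argument is the correct one: since $c_3>0$, $c_2>0$ and $c_0<0$, there is exactly one sign change whatever the sign of $c_1$. Second, your Step~3 actually proves the ``conditional maximum likelihood'' claim, which the paper only asserts.

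The step that will not work as planned is the final match with the stated coefficients. Your own constant term already shows the problem. Using the definitions of Theorem~\ref{thm:theoremA}, which are correct (one can check them against the per-channel $2\times 2$ inverse), and dividing by $\theta_{t_i}^2$, the expansion gives
\[
2\sigma_e^4\theta_{t_i}^2\,b_i^3+\sigma_e\sigma_f\,\sigma_e^2\theta_{t_i}^2Q_{xy}(i)\,b_i^2+2\sigma_e^2R_i\,b_i-\sigma_e\sigma_f\,Q_{xy}(i)(\sigma_f^2+\sigma_h^2)(\theta_{t_i}^2+\sigma_e^2)=0.
\]
So $c_2$ and $c_0$ carry an extra factor $\sigma_e\sigma_f$ that $c_3$ and $c_1$ do not. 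No common positive factor, and no appeal to the normalization of $Q_{xy}$, removes it.

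The paper reaches the stated coefficients only because its appendix writes the cross term of $N(b)$ as $b\,\theta_{t_i}^2Q_{xy}(i)$ rather than $\sigma_e\sigma_f\,b\,\theta_{t_i}^2Q_{xy}(i)$. In effect it tacitly uses $Q_{xy}(i)=2w_i^\top S_{xy}c_i$. You should therefore either state the cubic with $\sigma_e\sigma_f Q_{xy}(i)$ in $c_2$ and $c_0$, or adopt that convention explicitly.

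Your fallback is right that existence, uniqueness and global minimality of the positive root survive, since they depend only on signs and $\sigma_e\sigma_f>0$. But this is not a harmless overall constant. The two cubics have different positive roots, so the discrepancy changes the $b_i$ update itself. With the stated coefficients and the $Q_{xy}$ of Theorem~\ref{thm:theoremA}, the true $b_i$ is in general not a root at the population level.

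A minor slip: there are no $b^4$ terms to cancel, since $N'D$ and $ND'$ are at most cubic.
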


\noindent\textbf{$\sigma_h^2$ update.}
With $(W,C,\theta_t^2,b)$ fixed, updating $\sigma_h^2$ becomes a one-dimensional smooth optimization problem on $\mathbb{R}_{++}$. We solve this univariate problem by Brent's method to machine precision.

\begin{algorithm}[t]\small
\caption{BCD-SLM for the separable scalar PPLS objective}

\label{alg:bcd_slm}
\setcounter{ALG@line}{0}
\begin{algorithmic}[1]
\State \textbf{Input:}
\State $X\in\mathbb{R}^{N\times p}$
\State $Y\in\mathbb{R}^{N\times q}$
\State Initial point $(W^{(0)},C^{(0)},\Sigma_t^{(0)},B^{(0)},\sigma_h^{2,(0)})$
\State Fixed noise variances $\hat\sigma_e^2,\hat\sigma_f^2$
\State Tolerance $\tau>0$
\State Numerical floor $\varepsilon>0$ (default $\varepsilon=10^{-8}$), ensuring $\theta_{t,i}^2>0$
\State \textbf{Output:}
\State $\hat\theta=(\hat W,\hat C,\hat B,\hat\Sigma_t,\hat\sigma_h^2)\in\Omega$
\For{$k=0,1,\ldots,k_{\max}$}
  \State Update $W$ and $C$ on $\mathrm{St}(p,r)\times\mathrm{St}(q,r)$ by Riemannian Conjugate Gradient (CG) with Armijo line search
  \State Form projected statistics $Q_x,Q_y,Q_{xy}$ from $(W^{(k+1)},C^{(k+1)})$
  \For{$i=1,\ldots,r$} \Comment{independent latent-component updates from $\mathcal{L}_{\mathrm{red}}=\sum_i \ell_i$}
    \State $\theta_{t,i}^{2,(k+1)}\leftarrow \max\!\left\{\hat\sigma_e^2\,\dfrac{(n_i-d_i)(\hat\sigma_f^2+\sigma_h^{2,(k)})-d_i\sigma_h^{2,(k)}Q_y(i)}{d_i^2},\varepsilon\right\}$
    \State $b_i^{(k+1)}\leftarrow$ unique positive root of $c_3b^3+c_2b^2+c_1b+c_0=0$ \Comment{Proposition~\ref{prop:bcd_b_cubic}}
  \EndFor

  \State $\sigma_h^{2,(k+1)}\leftarrow \arg\min_{\sigma_h^2\in\mathbb{R}_{++}}\sum_{i=1}^{r}\ell_i(\theta_{t,i}^{2,(k+1)},b_i^{(k+1)},\sigma_h^2)$
  \State Reorder components by $\theta_{t,\pi(1)}^2b_{\pi(1)}\ge\cdots\ge\theta_{t,\pi(r)}^2b_{\pi(r)}$
  \If{$\|\mathrm{grad}\,\mathcal{L}(\theta^{(k+1)})\|<\tau$ \textbf{or} $\dfrac{|\mathcal{L}(\theta^{(k+1)})-\mathcal{L}(\theta^{(k)})|}{|\mathcal{L}(\theta^{(k)})|}<\tau$}
    \State \Return $\hat\theta\leftarrow\theta^{(k+1)}$
  \EndIf
\EndFor
\State \Return $\hat\theta\leftarrow\theta^{(k_{\max}+1)}$
\end{algorithmic}
\end{algorithm}

\subsection{Convergence analysis}\label{sec:opt_convergence}
The convergence argument is tied to the PPLS structure. Monotonicity comes from exact or machine-precision minimization of each block subproblem in Algorithm~\ref{alg:bcd_slm}, while precompactness of the iterate set comes from the compactness of the Stiefel factors and the coercive logarithmic growth of the scalar PPLS objective in the positive coordinates.

\begin{proposition}[Convergence of BCD-SLM]\label{prop:bcd_convergence}
Let $\{\theta^k\}$ be the iterates produced by Algorithm~\ref{alg:bcd_slm} under the fixed-noise protocol. Assume each block update is computed exactly, except for the one-dimensional $\sigma_h^2$ step which is solved to machine precision, and that every block update does not increase $\mathcal{L}$. Then $\mathcal{L}(\theta^k)$ decreases monotonically and converges. Moreover, $\{\theta^k\}$ remains in a precompact sublevel set of the PPLS objective, so accumulation points exist; every accumulation point in $\Omega$ is a first-order critical point of $\mathcal{L}$ on the product manifold.
\end{proposition}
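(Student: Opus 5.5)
The plan has three parts: monotone convergence of the objective values, precompactness of the iterates, and stationarity of accumulation points. The stationarity part will be done via a standard block-coordinate-descent argument in the spirit of Grippo--Sciandrone / Bertsekas, adapted to the product manifold $\mathcal{M}_r$.

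\textbf{Monotonicity and convergence of values.} By hypothesis each block update (Riemannian CG with Armijo on $(W,C)$, the clipped closed form of Proposition~\ref{prop:bcd_theta_closed_form}, the cubic root of Proposition~\ref{prop:bcd_b_cubic}, and the Brent step for $\sigma_h^2$) does not increase $\mathcal{L}$. The component reordering only permutes indices, and $\mathcal{L}_{\mathrm{red}}=\sum_i\ell_i$ is invariant under it. Hence $\mathcal{L}(\theta^{k+1})\le\mathcal{L}(\theta^k)$. For a lower bound I will use $\mathcal{L}(\theta)=\ln\det\Sigma+\mathrm{tr}(S\Sigma^{-1})$. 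Since $\Sigma\succ0$ always holds with the noise variances fixed positive, and the map $\Sigma\mapsto\ln\det\Sigma+\mathrm{tr}(S\Sigma^{-1})$ is minimized at $\Sigma=S$ when $S\succ0$ (or is at least bounded below by $\ln\det(\hat\sigma_e^2,\hat\sigma_f^2\text{-floor})$ terms), $\mathcal{L}$ is bounded below. If $S$ is singular, I will instead use $\Sigma\succeq\min(\hat\sigma_e^2,\hat\sigma_f^2)I$, which gives $\ln\det\Sigma\ge(p+q)\ln\min(\hat\sigma_e^2,\hat\sigma_f^2)$, together with $\mathrm{tr}(S\Sigma^{-1})\ge0$. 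Monotone plus bounded below then gives convergence of the values.

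\textbf{Precompactness.} The Stiefel factors are compact, so it suffices to bound the positive scalars in the sublevel set $\{\mathcal{L}\le\mathcal{L}(\theta^0)\}$. Using Theorem~\ref{thm:theoremA}, I will show coercivity of each $\ell_i$ as $\theta_{t_i}^2\to\infty$, $b_i\to\infty$ or $\sigma_h^2\to\infty$. In each such limit $\ln D_i\to\infty$, while the data-fit terms $\Phi_xQ_x+\Phi_yQ_y+\Phi_{xy}Q_{xy}$ stay bounded, because $|\Phi_\cdot|\le1$-type bounds hold and the $Q$'s are bounded by $\|S\|$ divided by the fixed noise levels. Equivalently, in matrix form $\mathrm{tr}(S\Sigma^{-1})\ge0$ while $\ln\det\Sigma\to\infty$ whenever any eigenvalue of $\Sigma$ diverges.

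The lower boundary ($\theta_{t_i}^2\downarrow0$, $b_i\downarrow0$, $\sigma_h^2\downarrow0$) is where coercivity fails. The objective stays finite there because the noise is fixed, so the sublevel set is precompact in the closure, not necessarily inside $\Omega$. This is exactly why the statement only asserts criticality for accumulation points lying in $\Omega$, and the floor $\varepsilon$ in Algorithm~\ref{alg:bcd_slm} keeps the $\theta$-iterates away from $0$. So the iterates lie in a bounded set whose closure is compact.

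\textbf{Stationarity; the main obstacle.} Take a subsequence $\theta^{k_j}\to\bar\theta\in\Omega$. For the scalar blocks, each update is an exact conditional minimizer, and the minimizer is unique: Proposition~\ref{prop:bcd_b_cubic} gives a unique positive root, and Proposition~\ref{prop:bcd_theta_closed_form} gives a unique stationary point. I expect to pass to the limit using continuity of these closed-form maps together with $\mathcal{L}(\theta^{k+1})-\mathcal{L}(\theta^k)\to0$, which should give $\bar\theta$ as a fixed point of each scalar block map and hence $\partial\mathcal{L}=0$ in those coordinates. For $(W,C)$ I will use the Armijo sufficient-decrease inequality $\mathcal{L}(\theta^k)-\mathcal{L}(\theta^{k+1})\ge c\,\alpha_k\|\mathrm{grad}_{W,C}\mathcal{L}\|^2$, with step sizes bounded below on the compact set by Lipschitz continuity of the gradient along QR retractions (Absil--Mahony--Sepulchre). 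Summability then forces $\mathrm{grad}_{W,C}\mathcal{L}(\theta^{k_j})\to0$. The hard part is the interleaving: the limits of consecutive blocks must coincide, i.e. $\|\theta^{k+1}-\theta^k\|\to0$ along the subsequence. I would first try to derive this from uniqueness of the block minimizers (the Grippo--Sciandrone argument for uniquely solvable blocks). Failing that, I would restrict to the two-block view $\{(W,C)\}$ versus the scalars, for which BCD convergence to stationary points holds without uniqueness. Finally, the reordering step is handled by passing to a further subsequence on which the permutation is constant.
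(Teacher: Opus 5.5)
Your three-part skeleton is the paper's: monotone block descent plus a lower bound, coercivity at infinity with compact Stiefel factors, then blockwise stationarity. You are more careful than the paper on the lower bound and on invariance under reordering, neither of which it proves.

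\textbf{Where your route differs.} The stationarity step is genuinely different and stronger. The paper only argues that a nonzero Riemannian gradient at $\bar\theta$ would permit a further Armijo decrease, and that the scalar updates satisfy their stationarity equations, so the partials vanish ``at $\bar\theta$''. It never checks that the intermediate points of a sweep converge to the same $\bar\theta$. Your first route closes this gap:
\begin{itemize}
\item Armijo summability gives $\mathrm{grad}_{W,C}\mathcal{L}(\theta^k)\to0$ and a vanishing $(W,C)$ displacement. This needs the CG directions to be gradient-related (e.g.\ via restarts), which you should state as an assumption.
\item Each $\theta_{t_i}^2$-subproblem has a unique minimizer on $[\varepsilon,\infty)$. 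The numerator of $\partial\ell_i/\partial\theta_{t_i}^2$ is affine in $\theta_{t_i}^2$ with slope $d_i^2>0$.
\item Each $b_i$-subproblem has a unique minimizer when $Q_{xy}(i)>0$ at the limit. Since $c_3,c_2>0>c_0$, there is exactly one sign change whatever the sign of $c_1$.
\item The $\sigma_h^2$ block needs no uniqueness. For $k\ge1$, $\theta^k$ is itself the output of an exact $\sigma_h^2$-minimization, and reordering is a symmetry of $\mathcal{L}$. So $\partial_{\sigma_h^2}\mathcal{L}(\theta^k)=0$ passes to the limit directly.
\end{itemize}
Drop the fallback ``two-block'' route. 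The two-block Grippo--Sciandrone result needs both blocks globally minimized. Here $(W,C)$ only receives an Armijo step, and the scalar macro-block receives a sequence of coordinate updates.

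\textbf{The $\varepsilon$-floor gap.} This is a genuine gap: the floor does more than keep iterates away from $0$, it creates an artificial boundary inside $\Omega$. The $\theta_{t_i}^2$-update is $\max\{\theta_{t_i}^{2\star},\varepsilon\}$. A limit with $\bar\theta_{t_i}^2=\varepsilon$ (for instance a surplus component whose $\theta_{t_i}^{2\star}$ stays below $\varepsilon$) only satisfies $\partial\mathcal{L}/\partial\theta_{t_i}^2=d_i^2(\varepsilon-\theta_{t_i}^{2\star})/D_i^2\ge0$. Such a $\bar\theta$ lies in $\Omega$ because $\varepsilon>0$, yet it is not first-order critical. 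So your inference ``fixed point of each scalar block map, hence $\partial\mathcal{L}=0$'' fails there. The paper explicitly splits off this case and concludes only a constrained minimizer. You must do likewise: either assume the floor is inactive at $\bar\theta$, or state KKT stationarity for the constraint $\theta_{t_i}^2\ge\varepsilon$.

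\textbf{Precompactness.} Here $b_i\to\infty$ does not force $\ln D_i\to\infty$ if simultaneously $\theta_{t_i}^2\to0$ with $b_i^2\theta_{t_i}^2$ bounded. At $\theta_{t_i}^2=0$ the objective does not depend on $b_i$ at all, so the sublevel set in $\overline\Omega$ can be unbounded. The iterates are nevertheless bounded, because the floor gives $D_i\ge b_i^2\varepsilon\sigma_e^2$. Make that use of $\varepsilon$ explicit; the paper's coercivity argument has the same omission.
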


\begin{proof}
Monotonicity follows block by block. The Riemannian CG step on $(W,C)$ uses Armijo backtracking and therefore does not increase $\mathcal{L}$. For fixed $(W,C,\sigma_h^2)$, the reduced PPLS objective separates as $\sum_i \ell_i$, so Propositions~\ref{prop:bcd_theta_closed_form} and~\ref{prop:bcd_b_cubic} solve the $\theta_{t_i}^2$ and $b_i$ subproblems componentwise. With $(W,C,\theta_t^2,b)$ fixed, the remaining $\sigma_h^2$ subproblem is one-dimensional and is solved to machine precision. Hence the objective values form a monotone nonincreasing sequence; since $\mathcal{L}$ is bounded below, $\mathcal{L}(\theta^k)$ converges.

Let $K_0:=\{\theta\in\overline{\Omega}:\mathcal{L}(\theta)\le \mathcal{L}(\theta^0)\}$. The Stiefel blocks are compact, and the scalar form in Theorem~\ref{thm:theoremA} shows that $\mathcal{L}(\theta)\to\infty$ whenever any positive coordinate in $(\theta_t^2,b,\sigma_h^2)$ diverges to $\infty$. Thus $K_0$ is bounded after closure, hence precompact, and every iterate stays in $K_0$.

Now let $\bar\theta\in\Omega$ be an accumulation point. We verify that all block gradients vanish at $\bar\theta$.

\emph{Stiefel blocks.} Because the Riemannian CG step on $(W,C)$ uses Armijo backtracking, it is a descent step: $\mathcal{L}$ does not increase along any admissible tangent direction. Since the iterates converge to $\bar\theta$ and the block update for $(W,C)$ is a descent method on the Stiefel manifold, the first-order necessary condition implies that the Riemannian gradient on each Stiefel block vanishes at $\bar\theta$; otherwise a further Armijo step would decrease $\mathcal{L}$, contradicting the fact that $\mathcal{L}(\theta^k)$ has converged. Formally, the smoothness of $\mathcal{L}$ restricted to $\mathrm{St}(p,r)\times\mathrm{St}(q,r)$ (each partial objective is $C^1$ in $(W,C)$ with other blocks fixed) and the exact block minimization guarantee $\mathrm{grad}_W\mathcal{L}(\bar\theta)=0$ and $\mathrm{grad}_C\mathcal{L}(\bar\theta)=0$.

\emph{Positive-coordinate blocks.} For each $(\theta_{t_i}^2, b_i)$, the componentwise updates from Propositions~\ref{prop:bcd_theta_closed_form} and~\ref{prop:bcd_b_cubic} solve the stationarity condition $\partial\ell_i/\partial\theta_{t_i}^2=0$ and $\partial\ell_i/\partial b_i=0$ exactly (or clip to the boundary $\theta_{t_i}^2=\varepsilon$). At $\bar\theta$, if $\bar\theta_{t_i}^2 > \varepsilon$, then $\partial\ell_i/\partial\theta_{t_i}^2=0$; if $\bar\theta_{t_i}^2 = \varepsilon$ (boundary), the derivative may be non-zero but the point is still a constrained minimizer. By Corollary~\ref{cor:interior}, for sufficiently large $N$ the global minimizer lies in $\mathrm{int}(\Omega)$ with high probability, so at any accumulation point corresponding to a global minimizer, all positive coordinates are bounded away from the boundary and the Euclidean partial derivatives vanish. For the $\sigma_h^2$ block, the subproblem is solved to machine precision, so $\partial\mathcal{L}/\partial\sigma_h^2$ is numerically zero at $\bar\theta$.

\emph{Conclusion.} Combining the three blocks: the Riemannian gradients on the Stiefel factors and the Euclidean partial derivatives on the positive-coordinate factors all vanish at $\bar\theta$. Hence $\bar\theta$ is a first-order critical point of $\mathcal{L}$ on the product manifold.
\end{proof}

\begin{theorem}[Existence under the fixed-noise protocol]\label{thm:slm_guarantees}
Let $S \in \mathbb{R}^{(p+q) \times (p+q)}$ be positive definite and define $\mathcal{L}(\theta)=\ln(\det\,\Sigma)+\mathrm{tr}(S\Sigma^{-1})$ on
\[
\Omega = \{\theta=(W,C,B,\Sigma_t,\sigma_h^2): W^\top W=I_r,\ C^\top C=I_r,\ b_k>0,\ \theta_{t_k}^2>0,\ \sigma_h^2>0\}.
\]
Then $\mathcal{L}$ attains its infimum on $\overline{\Omega}$, and for large $N$ the global minimizer lies in $\mathrm{int}(\Omega)$ with probability approaching $1$ (Corollary~\ref{cor:interior}).
\end{theorem}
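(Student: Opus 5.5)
The plan is a coercivity-plus-compactness argument on $\overline{\Omega}$, where the positive coordinates are allowed to reach $0$, with the interior statement delegated to Corollary~\ref{cor:interior}.

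\textbf{Step 1: positive-definite covariances on the closure.} With $(\sigma_e^2,\sigma_f^2)$ fixed at positive pre-estimates, I will check that $\Sigma(\theta)$ stays positive definite on all of $\overline{\Omega}$. The Schur-complement argument after Assumption~\ref{assump:identifiability} only needs $\sigma_e^2,\sigma_f^2>0$. Equivalently, every $D_i$ in Theorem~\ref{thm:theoremA} satisfies $D_i\ge\sigma_f^2\sigma_e^2>0$. Hence $\mathcal{L}$ is finite and continuous on $\overline{\Omega}$, including at $b_k=0$, $\theta_{t_k}^2=0$ or $\sigma_h^2=0$. So the boundary of the positive orthant causes no blow-up, and we may minimize over $\overline{\Omega}$ directly.

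\textbf{Step 2: coercivity at infinity.} Write $\Sigma=\sigma I+M$ with $M\succeq 0$ and $\sigma=\min(\sigma_e^2,\sigma_f^2)$. Then
\[
\mathcal{L}\ge \ln\det\Sigma+\lambda_{\min}(S)\,\mathrm{tr}(\Sigma^{-1}),
\]
because $S\succ0$. Now suppose some coordinate among $\theta_{t_k}^2$, $b_k$, $\sigma_h^2$ diverges along a sequence. Then some eigenvalue of $\Sigma$ diverges, visible as $D_i\to\infty$ in the scalar form. Each eigenvalue $\lambda$ contributes $\ln\lambda+\lambda_{\min}(S)/\lambda$ to the lower bound. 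Every such term is bounded below by $1+\ln\lambda_{\min}(S)$, and the term for the diverging eigenvalue tends to $+\infty$. Hence $\mathcal{L}\to\infty$.

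\textbf{Step 3: existence of the minimizer.} Fix any $\theta^0\in\Omega$. By Step~2 the sublevel set $\{\theta\in\overline{\Omega}:\mathcal{L}(\theta)\le\mathcal{L}(\theta^0)\}$ is bounded in the positive coordinates. Its Stiefel factors are compact. By continuity (Step~1) it is closed, hence compact. Weierstrass then gives a minimizer on $\overline{\Omega}$, and its value equals $\inf_{\overline{\Omega}}\mathcal{L}=\inf_\Omega\mathcal{L}$ by density.

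\textbf{Step 4: interior location.} For the statement that the minimizer lies in $\mathrm{int}(\Omega)$ with probability tending to one, I will invoke Corollary~\ref{cor:interior} as cited. If that corollary has to be argued here, I would use a consistency route. By the law of large numbers, $S\to\Sigma_0$ almost surely. The population criterion $\ln\det\Sigma+\mathrm{tr}(\Sigma_0\Sigma^{-1})$ is uniquely minimized at $\Sigma=\Sigma_0$. By identifiability (Assumption~\ref{assump:identifiability}), this pins down $\theta_0\in\mathrm{int}(\Omega)$, with $b_k,\theta_{t_k}^2,\sigma_h^2>0$.

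Uniform convergence of the objective on the compact sublevel set then gives an argmin consistency statement. Since the sublevel set eventually contains the true parameter, $\hat\theta\to\theta_0$ in probability, so $\hat\theta$ lies in a neighborhood of $\theta_0$ contained in the interior.

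\textbf{Main obstacle.} I expect the main obstacle to be this last step. The plug-in noise estimates must also converge (Corollary~\ref{cor:consistency}), so that the fixed-noise criterion's population limit is still uniquely minimized at $\theta_0$. In addition, the sign and ordering ambiguities must be handled by working modulo the symmetries that (C1)–(C3) remove.
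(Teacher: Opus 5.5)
Your route is the paper's own: positive definiteness and continuity on $\overline{\Omega}$, then coercivity, compact sublevel sets and Weierstrass, with the interior claim handed to Corollary~\ref{cor:interior}. Your Step~1 is tidier than the paper's, since $\Sigma\succeq\min(\sigma_e^2,\sigma_f^2)I$ on all of $\overline{\Omega}$. Your bound $\mathcal{L}\ge\sum_j(\ln\lambda_j+\lambda_{\min}(S)/\lambda_j)$ also correctly gives coercivity in $\theta_{t_k}^2$ and $\sigma_h^2$.

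The gap is the sentence ``then some eigenvalue of $\Sigma$ diverges'' when the diverging coordinate is $b_k$. The coordinate $b_k$ enters $\Sigma$ only through $b_k\theta_{t_k}^2$ (in $\Sigma_{xy}$) and $b_k^2\theta_{t_k}^2$ (in $\Sigma_{yy}$). Your trace argument therefore bounds $b_k^2\theta_{t_k}^2$, not $b_k$. Along $\theta_{t_k}^2=a/b_k^2$ with $b_k\to\infty$, both $D_k$ and $\Sigma$ stay bounded. On the face $\theta_{t_k}^2=0$ of $\overline{\Omega}$, $\mathcal{L}$ does not depend on $b_k$ at all. So the sublevel set in your Step~3 need not be bounded, and compactness fails. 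The paper's Step~2(c) has the same hole: $D_i\ge b_i^2\theta_{t_i}^2\sigma_e^2\to\infty$ silently assumes $\theta_{t_i}^2$ stays away from $0$, which its own case~(b) permits.

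This is not only a defect of the argument: for $r\ge2$ the existence claim itself can fail. Take $p=q=3$, $r=2$, $\sigma_e^2=\sigma_f^2=1$ (exactly what Eq.~\eqref{eq:sigma_e_hat} returns from the diagonal blocks of this $S$), and $S=\mathrm{blockdiag}(I_3,\,I_3+c\,e_1e_1^\top)$ with $c>0$. Choose $W=C=[e_1,e_2]$, $\theta_{t_1}^2=\theta_{t_2}^2=\sigma_h^2=\epsilon$, $b_1=\sqrt{c/\epsilon}$, $b_2=1$. This is a point of $\Omega$, and it even satisfies (C3) for $\epsilon<c$. One checks $\Sigma\to S$, so $\mathcal{L}\to\ln\det S+(p+q)$, the unconstrained lower bound. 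Yet $\Sigma(\theta)=S$ is impossible on $\overline{\Omega}$. Indeed, $\Sigma_{xx}=I$ forces $\Sigma_t=0$, and then $\Sigma_{yy}-I=\sigma_h^2CC^\top$ has rank $0$ or $2$, never $1$. So the infimum is not attained.

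What can be proved is weaker, in two forms.
\begin{itemize}
\item On $\{\min_k\theta_{t_k}^2\ge\delta\}$ (for instance the algorithm's floor $\varepsilon$), $\mathrm{tr}\,\Sigma_{yy}\ge\delta b_k^2$, and your argument goes through verbatim.
\item Suppose $S\to\Sigma(\theta_0)$ with $\theta_0\in\mathrm{int}(\Omega)$ and the plug-in noise is consistent. Then any $\theta$ with $\min_k\theta_{t_k}^2<\delta<\min_k\theta_{t_k,0}^2$ has $\lambda_r(\Sigma_{xx}(\theta))\le\hat\sigma_e^2+\delta$, while $\lambda_r(S_{xx})\to\sigma_e^2+\min_k\theta_{t_k,0}^2$. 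Project onto a direction lying in both the top-$r$ eigenspace of $S_{xx}$ and the bottom-$(p-r+1)$ eigenspace of $\Sigma_{xx}(\theta)$. This gives a uniform positive lower bound on the Gaussian KL, which equals half the excess of $\mathcal{L}$ over $\ln\det S+(p+q)$. That region is therefore excluded from near-minimizers, and a minimizer exists with probability tending to $1$, but not for every positive definite $S$.
\end{itemize}
The obstacle you flagged in Step~4 is downstream of this one. The Wald argument behind Corollary~\ref{cor:interior} also uses a compact sublevel set $K_0$, so it inherits the same issue.
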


\begin{corollary}[Interior minimizer for large $N$]\label{cor:interior}
Under the assumptions of Theorem~\ref{thm:slm_guarantees}, suppose additionally that the data are generated under the model with true parameter $\theta_0\in\mathrm{int}(\Omega)$, so that $S\xrightarrow{P}\Sigma(\theta_0)$ as $N\to\infty$. Then for sufficiently large $N$, every global minimizer over $\overline{\Omega}$ lies in $\mathrm{int}(\Omega)$ with probability approaching~$1$.
\end{corollary}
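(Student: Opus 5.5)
The plan is a uniform-convergence plus identifiability argument on the closure $\overline{\Omega}$, followed by a separation step showing that boundary points are strictly suboptimal once $N$ is large. Write $\mathcal{L}_N(\theta)=\ln\det\Sigma(\theta)+\mathrm{tr}(S\Sigma(\theta)^{-1})$ and its population counterpart $\mathcal{L}_\infty(\theta)=\ln\det\Sigma(\theta)+\mathrm{tr}(\Sigma(\theta_0)\Sigma(\theta)^{-1})$.

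By the Gaussian Kullback--Leibler identity,
\[
\mathcal{L}_\infty(\theta)-\mathcal{L}_\infty(\theta_0)=2\,\mathrm{KL}\bigl(\mathcal{N}(0,\Sigma(\theta_0))\,\|\,\mathcal{N}(0,\Sigma(\theta))\bigr)\ge 0,
\]
with equality iff $\Sigma(\theta)=\Sigma(\theta_0)$. Under Assumption~\ref{assump:identifiability} this forces $\theta=\theta_0$ up to the ordering and sign conventions. That ambiguity set lies entirely in $\mathrm{int}(\Omega)$, because it has the same $b_k$, $\theta_{t_k}^2$ and $\sigma_h^2$ as $\theta_0$.

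\textbf{Reduction to a compact set.} Since $S\xrightarrow{P}\Sigma(\theta_0)\succ0$, with probability approaching one we have $\lambda_{\min}(S)\ge c>0$ and $\|S\|_{\mathrm{op}}\le C$.
\begin{itemize}
\item On that event, using the scalar form of Theorem~\ref{thm:theoremA}, $\mathcal{L}_N\ge \ln\det\Sigma+c\,\mathrm{tr}(\Sigma^{-1})\to\infty$ whenever $\theta_{t_k}^2$, $b_k$ or $\sigma_h^2$ diverge.
\item Hence every minimizer lies in a fixed compact set $K=\{\theta\in\overline{\Omega}:\ \text{positive coordinates}\le M\}$, with $M$ chosen from $\mathcal{L}_N(\theta_0)\le \mathcal{L}_\infty(\theta_0)+1$.
\item With $(\sigma_e^2,\sigma_f^2)>0$ fixed, $\Sigma(\theta)\succeq\min(\sigma_e^2,\sigma_f^2)I$ on $K$, even on the boundary. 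So $\theta\mapsto\Sigma(\theta)^{-1}$ is uniformly bounded and continuous on $K$.
\item This gives
\[
\sup_{K}|\mathcal{L}_N-\mathcal{L}_\infty|\le \sup_K\|\Sigma(\theta)^{-1}\|_{F}\,\|S-\Sigma(\theta_0)\|_F\xrightarrow{P}0.
\]
\end{itemize}

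\textbf{Boundary separation.} Let $\partial K:=K\setminus\mathrm{int}(\Omega)$. These are the points where some $b_k=0$, some $\theta_{t_k}^2=0$ or $\sigma_h^2=0$, together with any reordering-constraint boundary if (C3) is imposed. This set is compact and does not contain $\Sigma$-equivalents of $\theta_0$. By continuity and the strict KL inequality,
\[
\eta:=\min_{\partial K}\mathcal{L}_\infty-\mathcal{L}_\infty(\theta_0)>0 .
\]
On the event $\sup_K|\mathcal{L}_N-\mathcal{L}_\infty|<\eta/3$, which has probability tending to one, every boundary point satisfies $\mathcal{L}_N>\mathcal{L}_N(\theta_0)+\eta/3$. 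Therefore no global minimizer over $\overline{\Omega}$, which exists by Theorem~\ref{thm:slm_guarantees}, can lie on the boundary. This proves the claim.

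\textbf{Main obstacle.} The hard step is the identifiability claim on the closure: showing that no boundary point reproduces $\Sigma(\theta_0)$. First I will compare spectra. A boundary point with $\theta_{t_k}^2=0$ or $b_k=0$ loses a rank-one component of $\Sigma_{xx}-\sigma_e^2 I$ or of $\Sigma_{xy}$. But $\Sigma_{xy}(\theta_0)=W_0\Sigma_{t}B_0C_0^\top$ has rank exactly $r$ by (C1), so equality fails. The case $\sigma_h^2=0$ is ruled out by the eigenvalues of $\Sigma_{yy}$ on $\mathrm{span}(C)$, since $\sigma_{h,0}^2>0$ by (C5).

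A second subtlety is that $\sigma_e^2,\sigma_f^2$ are plug-in estimates. I would handle this by adding $(\hat\sigma_e^2,\hat\sigma_f^2)$ to the uniform-convergence argument via Corollary~\ref{cor:consistency}, using joint continuity of $\mathcal{L}$ in the noise variances on $K$.
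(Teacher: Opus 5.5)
Your route differs from the paper's. The paper takes the compact sublevel set $K_0$ from Step~2 of the proof of Theorem~\ref{thm:slm_guarantees} and runs the Wald argument of Appendix~\ref{proof:thm:slm_consistency} to get $\hat\theta_N\xrightarrow{P}\theta_0$. It then observes that a neighborhood of $\theta_0$ lies in $\mathrm{int}(\Omega)$. You skip consistency and separate the non-interior set directly from the population minimum. This needs only that no non-interior point reproduces $\Sigma(\theta_0)$, not point identifiability. That matters: on $\Omega$ as defined there is no ordering constraint, and joint sign flips $(w_k,c_k)\mapsto(-w_k,-c_k)$ leave $\Sigma$ unchanged. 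So the paper's step ``$\Sigma(\theta)=\Sigma(\theta_0)$ implies $\theta=\theta_0$'' holds only modulo permutations and signs, whereas your argument is insensitive to this. You also treat the plug-in noise, which the paper's proof leaves implicit.

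The reduction to a compact set, however, fails as written. The claim that $\mathcal{L}_N\to\infty$ whenever some $b_k$ diverges is false. Take $b_k\to\infty$ with $\theta_{t_k}^2=1/b_k^2$ and everything else fixed. The $k$-th contributions to $\Sigma_{xx}$, $\Sigma_{xy}$, $\Sigma_{yy}$ are then $1/b_k^2$, $1/b_k$ and $1+\sigma_h^2$ (equivalently, $D_k$ in Theorem~\ref{thm:theoremA} stays bounded), so $\Sigma(\theta)$ and $\mathcal{L}_N$ stay bounded. Hence sublevel sets are unbounded in $b_k$, and your $K$ need not contain all minimizers. Non-interior points outside $K$ (e.g.\ $\sigma_h^2=0$ with $b_k$ huge and $\theta_{t_k}^2$ tiny) are then never compared with $\theta_0$. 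The paper has the same defect: Step~2(c) of Theorem~\ref{thm:slm_guarantees} tacitly keeps $\theta_{t_i}^2$ fixed, and the compactness of $K_0$ rests on it.

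The repair fits your framework. Drop $K$: since $\Sigma(\theta)\succeq\min(\sigma_e^2,\sigma_f^2)I$ on all of $\overline{\Omega}$, your Cauchy--Schwarz bound already gives $\sup_{\overline{\Omega}}|\mathcal{L}_N-\mathcal{L}_\infty|\xrightarrow{P}0$. Then prove $\eta:=\inf_{\overline{\Omega}\setminus\mathrm{int}(\Omega)}\mathcal{L}_\infty-\mathcal{L}_\infty(\theta_0)>0$ sequentially rather than by compactness. Suppose non-interior $\theta_n$ had $\mathcal{L}_\infty(\theta_n)\to\mathcal{L}_\infty(\theta_0)$.
\begin{enumerate}
\item Coercivity of $\ln\det\Sigma$ on $\{\Sigma\succeq cI\}$ plus the strict KL inequality give $\Sigma(\theta_n)\to\Sigma(\theta_0)$.
\item By Weyl, the $r$ largest eigenvalues of $\Sigma_{xx}(\theta_n)-\sigma_e^2I=W_n\Sigma_{t,n}W_n^\top$, namely the $\theta_{t_k,n}^2$, converge to the $\theta_{t_k,0}^2>0$, so they are bounded below.
\item Boundedness of $\Sigma_{yy}(\theta_n)$ bounds $b_{k,n}^2\theta_{t_k,n}^2+\sigma_{h,n}^2$, hence bounds $b_{k,n}$ and $\sigma_{h,n}^2$.
\item A subsequence therefore converges to a non-interior $\bar\theta$ with $\Sigma(\bar\theta)=\Sigma(\theta_0)$, contradicting your rank and $\sigma_h^2$ argument.
\end{enumerate}
With this $\eta$, your $\eta/3$ comparison goes through unchanged.
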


\begin{corollary}[Optimization convergence on the exact manifold]\label{cor:slm_opt_convergence}
On $\mathcal{M}:=\mathrm{St}(p,r)\times\mathrm{St}(q,r)\times(\mathbb{R}_{++})^{2r+1}$, any standard retraction-based Riemannian descent method \citep{Absil2008,Boumal2023} applied to $\min_{\theta\in\mathcal{M}}\mathcal{L}(\theta)$ satisfies $\|\operatorname{grad}\,\mathcal{L}(\theta_k)\|\to 0$, and every accumulation point is first-order critical.
\end{corollary}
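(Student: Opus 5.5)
The statement to prove is Corollary~\ref{cor:slm_opt_convergence}. The plan is to reduce it to the standard global convergence theorem for Riemannian descent with Armijo backtracking \citep[Thm.~4.3.1]{Absil2008} and \citep[Ch.~4]{Boumal2023}. That theorem says: if $f$ is $C^1$ on a Riemannian manifold, the search directions are gradient-related, and a retraction with Armijo backtracking is used, then every accumulation point of the iterates is critical. If, in addition, the sublevel set $\{f\le f(\theta_0)\}$ is compact, then $\|\operatorname{grad} f(\theta_k)\|\to0$.

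We treat $\mathcal{M}$ as a Riemannian product. The Stiefel factors carry the embedded metric of Section~\ref{sec:geom_prelim}. The positive factor $(\mathbb{R}_{++})^{2r+1}$ is parametrized in log-coordinates, as in Algorithm~\ref{alg:slm_manifold}, which makes it the complete flat manifold $\mathbb{R}^{2r+1}$. Smoothness of $\mathcal{L}$ on $\mathcal{M}$ follows from Theorem~\ref{thm:theoremA}. The objective is a finite sum of $\ln D_i$ and rational functions of $(\theta_{t_i}^2,b_i,\sigma_h^2)$ times quadratic forms in the columns $w_i,c_i$. With $\hat\sigma_e^2,\hat\sigma_f^2>0$ fixed, we have $D_i>0$ on $\mathcal{M}$, so $\mathcal{L}$ is $C^\infty$ there.

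The key step is compactness of the sublevel set $K_0=\{\theta\in\mathcal{M}:\mathcal{L}(\theta)\le\mathcal{L}(\theta_0)\}$ inside $\mathcal{M}$ itself, not just inside its closure. This is where I expect the main difficulty. The Stiefel factors are compact, so only the scalar coordinates need control.

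\emph{Upper side.} We show $\mathcal{L}\to\infty$ as any of $\theta_{t_i}^2,b_i,\sigma_h^2$ tends to $\infty$, as claimed in the proof of Proposition~\ref{prop:bcd_convergence}. Since $\Sigma\succ0$ and $S\succ0$, we use $\mathrm{tr}(S\Sigma^{-1})\ge0$ together with $\ln\det\Sigma\to\infty$ whenever an eigenvalue of $\Sigma$ diverges. The one delicate direction is $b_i\to\infty$ with $\theta_{t_i}^2$ fixed. Here $D_i$ grows like $b_i^2$, so $\ln D_i\to\infty$, while the $\Phi$ coefficients stay bounded. This gives coercivity on the upper side.

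\emph{Lower side.} We must also rule out escape to the boundary $\theta_{t_i}^2\to0$, $b_i\to0$, or $\sigma_h^2\to0$. This is the genuine obstacle: Theorem~\ref{thm:slm_guarantees} only guarantees a minimizer on $\overline{\Omega}$, and $\mathcal{L}$ stays finite at those boundary faces because $D_i$ stays positive. I would first try to replace compactness by the weaker statement ``every accumulation point in $\mathcal{M}$ is critical.'' This follows from the Absil theorem without compactness, because the Armijo sufficient-decrease inequality $\mathcal{L}(\theta_k)-\mathcal{L}(\theta_{k+1})\ge c\,\alpha_k\|\operatorname{grad}\mathcal{L}(\theta_k)\|^2$ telescopes against the lower bound $\mathcal{L}\ge\ln\det S+(p+q)$. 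For the full statement $\|\operatorname{grad}\mathcal{L}(\theta_k)\|\to0$, I would argue through log-coordinates. On the lower side the gradient with respect to $\log\theta_{t_i}^2$ is $\theta_{t_i}^2\,\partial\ell_i/\partial\theta_{t_i}^2$, which is bounded, and it tends to $0$ as $\theta_{t_i}^2\to0$; the same holds for $b_i$ and $\sigma_h^2$. So the Riemannian gradient is uniformly Lipschitz on $K_0$, in the sense of the $L$-smooth pullback condition of \citet{Boumal2023}. We then invoke the Boumal result for Armijo gradient descent with Lipschitz pullbacks and a bounded-below objective, which yields $\sum_k\|\operatorname{grad}\mathcal{L}(\theta_k)\|^2<\infty$ and hence $\|\operatorname{grad}\mathcal{L}(\theta_k)\|\to0$ with no compactness needed.

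\emph{Lipschitz bound.} Establishing this uniform bound on the Hessian of the pullback over $K_0$ is the step needing most care. I would handle it componentwise using the explicit $D_i,\Phi$ formulas, exploiting that the log-coordinate derivatives carry factors $\theta_{t_i}^2$, $b_i$, $\sigma_h^2$ that vanish at the boundary. Bounded ranges of these quantities on the upper side come from the coercivity above, and the QR retraction has a uniformly bounded second-order term on compact Stiefel factors.

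Finally, any accumulation point in $\mathcal{M}$ is critical by continuity of $\operatorname{grad}\mathcal{L}$. For CG-type methods, the same argument applies once the directions are made gradient-related, for example by restarting with the negative gradient whenever the angle condition fails.
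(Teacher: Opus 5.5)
Your route differs from the paper's. The paper checks three things: (R1) smoothness; (R2) that the iterates stay in a compact sublevel set, citing the coercivity argument of Theorem~\ref{thm:slm_guarantees}; and (R3) a Lipschitz gradient on that compact set. It then invokes Absil's Theorem~4.3.1 and Boumal's Chapter~4.

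You dispense with compactness instead. Criticality of accumulation points comes from Absil's theorem, which needs none. The limit $\|\operatorname{grad}\mathcal{L}(\theta_k)\|\to0$ comes from Boumal's sufficient-decrease framework: the lower bound $\ln\det S+p+q$ plus a uniform pullback bound in log-coordinates give $\sum_k\|\operatorname{grad}\mathcal{L}(\theta_k)\|^2<\infty$.

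This is the more careful argument. As you note, Step~2(b),(d) of the paper's existence proof shows $\mathcal{L}$ stays bounded as $\theta_{t_i}^2\to0$ or $\sigma_h^2\to0$. So sublevel sets are not compact inside $\mathcal{M}$, and (R2)--(R3) are not available there. Your version does not need accumulation points to exist. Fixing the log metric on the positive factor also gives $\|\operatorname{grad}\mathcal{L}\|\to0$ a meaning when iterates drift toward a face. The price is a hand-checked pullback bound. Gradient-relatedness (your CG restart) is genuinely required, and the paper's ``any standard method'' assumes it implicitly through Absil's theorem.

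One step is false as written, and it is the one you lean on for bounded ranges: $\mathcal{L}$ does not blow up as $b_i\to\infty$. Take $\theta_{t_i}^2=\epsilon$ and $b_i=c/\sqrt{\epsilon}$ with $\epsilon\to0$. The $i$th contributions to $\Sigma_{xx}$, $\Sigma_{xy}$, $\Sigma_{yy}$ are $\epsilon\,w_iw_i^\top$, $c\sqrt{\epsilon}\,w_ic_i^\top$, and $(c^2+\sigma_h^2)c_ic_i^\top$. So $\Sigma$ and $\mathcal{L}$ converge while $b_i\to\infty$, and sublevel sets are unbounded in $b_i$. The same path defeats the paper's Step~2(c), so its compactness claim fails even on $\overline{\Omega}$. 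On this path, $\theta_{t_i}^2\,\partial\ell_i/\partial\theta_{t_i}^2$ also need not tend to $0$.

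The repair stays inside your framework. First, $\lambda_{\min}(\Sigma)\ge\min(\hat\sigma_e^2,\hat\sigma_f^2)$ and $\mathcal{L}\ge(p+q-1)\ln\min(\hat\sigma_e^2,\hat\sigma_f^2)+\ln\lambda_{\max}(\Sigma)$. Hence a sublevel set bounds $\theta_{t_i}^2$, $\sigma_h^2$ and $b_i^2\theta_{t_i}^2$, since these appear in Rayleigh quotients of $\Sigma$ along $w_i$ and $c_i$. It therefore also bounds $b_i\theta_{t_i}^2$.

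Second, work in log-coordinates $(\tau_i,\beta_i,\eta)$. There, $\ell_i$ is built from $\ln D_i$ and ratios over $D_i$, with $D_i\ge\hat\sigma_e^2\hat\sigma_f^2$. All ingredients are polynomials in the monomials $e^{\tau_i}$, $e^{\tau_i+\beta_i}$, $e^{\tau_i+2\beta_i}$, $e^{\eta}$, with coefficients from the bounded $Q$'s.

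Third, log-derivatives reproduce each monomial up to integer factors. So first and second derivatives, including those mixed with $(W,C)$, are uniformly bounded on the sublevel set. They also stay bounded along Armijo trial steps of bounded length from it, where each monomial changes by at most a bounded factor.

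This gives the pullback constant you need with no coercivity in $b_i$. State the Lipschitz step in these terms, not via ``bounded ranges from coercivity.''
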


\begin{proof}[Proof of Theorem~\ref{thm:slm_guarantees}]
We verify the three conditions that guarantee existence of a minimizer: continuity of $\mathcal{L}$ on $\overline\Omega$, coercivity, and compactness of sublevel sets.

\textbf{Step 1 (Continuity).}
On $\Omega$, the map $\theta\mapsto\Sigma(\theta)$ is smooth because each block of Eq.~\eqref{eq:joint-cov} is a polynomial (for $\Sigma_{xx}$, $\Sigma_{yy}$) or bilinear (for $\Sigma_{xy}$) function of the parameters with smooth dependence on $W,C$ through matrix multiplication. For any $\theta\in\Omega$, positive-definiteness $\Sigma(\theta)\succ 0$ holds: the Schur complement of the $(2,2)$ block in $\Sigma$ is $\Sigma_{xx}-\Sigma_{xy}\Sigma_{yy}^{-1}\Sigma_{yx}$, which equals $\sigma_e^2 I_p + W(\Sigma_t - \Sigma_t B C^\top \Sigma_{yy}^{-1} C B \Sigma_t)W^\top \succeq \sigma_e^2 I_p \succ 0$ under the interior constraints $b_k>0$, $\theta_{t_k}^2>0$, $\sigma_h^2>0$. Since $\Sigma(\theta)^{-1}$ exists and is smooth on $\Omega$, both $\ln\det\Sigma(\theta)$ and $\mathrm{tr}(S\Sigma(\theta)^{-1})$ are continuous (in fact $C^\infty$) on $\Omega$, hence $\mathcal{L}$ is continuous on $\Omega$.

\textbf{Step 2 (Coercivity).}
We show that $\mathcal{L}(\theta)\to\infty$ whenever any positive coordinate in $(\theta_{t_i}^2, b_i, \sigma_h^2)$ diverges to $\infty$ or approaches the boundary $0^+$, while $W\in\mathrm{St}(p,r)$ and $C\in\mathrm{St}(q,r)$ remain in the compact Stiefel manifolds.

\emph{(a) $\theta_{t_i}^2\to\infty$:} From the scalar form in Theorem~\ref{thm:theoremA}, $D_i=(\sigma_f^2+\sigma_h^2)(\theta_{t_i}^2+\sigma_e^2)+b_i^2\theta_{t_i}^2\sigma_e^2\to\infty$, so $\ln D_i\to\infty$. The $-\Phi_x(i)Q_x(i)$ term is bounded above (since $0<\Phi_x(i)<1$ and $Q_x(i)$ is bounded for fixed $S$), so $\ell_i\to\infty$.

\emph{(b) $\theta_{t_i}^2\to 0^+$:} In this limit, $D_i\to(\sigma_f^2+\sigma_h^2)\sigma_e^2$ remains bounded away from zero, so $\ln D_i$ remains bounded. However, the $\ln\det\Sigma$ term contains the contribution $\ln D_i$ while the $\mathrm{tr}(S\Sigma^{-1})$ term satisfies $\mathrm{tr}(S\Sigma^{-1})\ge (p+q)$ (by the AM-HM inequality on eigenvalues, or equivalently $\mathrm{tr}(S\Sigma^{-1})\ge \mathrm{tr}(S)/\|\Sigma\|_{\mathrm{op}}$). As $\theta_{t_i}^2\to 0^+$, the fitted covariance $\Sigma(\theta)$ deviates from $S$ in the signal subspace, and $\mathcal{L}(\theta)\ge (p+q)+\ln\det\Sigma(\theta)$ (by the KL lower bound, see Appendix~\ref{proof:thm:slm_consistency} Step~2). Since $\Sigma(\theta)$ still satisfies $\Sigma(\theta)\succ 0$ (the minimum eigenvalue is at least $\sigma_e^2\wedge\sigma_f^2>0$), $\ln\det\Sigma(\theta)$ remains bounded below. Overall, $\mathcal{L}$ remains bounded on any sequence with $\theta_{t_i}^2\to 0^+$, so this boundary alone does not force $\mathcal{L}\to\infty$; the key point is that such boundary points remain in $\overline\Omega$ and continuity extends there.

\emph{(c) $b_i\to\infty$:} The term $D_i\ge b_i^2\theta_{t_i}^2\sigma_e^2\to\infty$ forces $\ln D_i\to\infty$, and the $Q$-terms grow at most quadratically in $b_i$ (through $Q_y(i)$ via $C^\top S_{yy} C$), so $\ell_i\to\infty$.

\emph{(d) $\sigma_h^2\to 0^+$ or $\sigma_h^2\to\infty$:} As $\sigma_h^2\to\infty$, each $D_i$ is linear in $\sigma_h^2$ with positive coefficient, so $\ln D_i\to\infty$. As $\sigma_h^2\to 0^+$, $D_i\to(\sigma_f^2)(\theta_{t_i}^2+\sigma_e^2)+b_i^2\theta_{t_i}^2\sigma_e^2>0$ remains bounded away from zero, so $\mathcal{L}$ remains bounded on this boundary.

Combining (a)--(d): in any unbounded sequence from $\Omega$, either a positive coordinate diverges (forcing $\mathcal{L}\to\infty$ via (a) or (c) or (d)) or the sequence converges to a boundary point of $\overline\Omega$ (as in (b) or (d)), where $\mathcal{L}$ remains finite by continuity extension. Since $W\in\mathrm{St}(p,r)$ and $C\in\mathrm{St}(q,r)$ are compact factors, the sublevel set $\{\theta\in\overline\Omega:\mathcal{L}(\theta)\le\mathcal{L}(\theta_0)\}$ is compact for any $\theta_0\in\Omega$.

\textbf{Step 3 (Existence).}
A continuous function on a compact set attains its infimum. Hence $\mathcal{L}$ attains its infimum on $\overline\Omega$.
\end{proof}

\begin{proof}[Proof of Corollary~\ref{cor:interior}]
By Step~2 of the proof of Theorem~\ref{thm:slm_guarantees}, the sublevel set $K_0:=\{\theta\in\overline\Omega:\mathcal{L}(\theta)\le\mathcal{L}(\theta_0)\}$ is compact, and by Step~3 the infimum is attained at some $\theta^\star\in K_0$. Since $S\xrightarrow{P}\Sigma(\theta_0)$ and $\mathcal{L}_\infty(\theta_0)=(p+q)+\ln\det\Sigma(\theta_0)$ is the unique minimum of the population objective (Appendix~\ref{proof:thm:slm_consistency}, Step~2), any global minimizer $\hat\theta_N$ of $\mathcal{L}$ over $\overline\Omega$ satisfies $\hat\theta_N\xrightarrow{P}\theta_0$ by the Wald consistency argument in Appendix~\ref{proof:thm:slm_consistency}.

Because $\theta_0\in\mathrm{int}(\Omega)$ (all positive coordinates are strictly positive by Assumption~\ref{assump:identifiability}), convergence $\hat\theta_N\xrightarrow{P}\theta_0$ implies that, for sufficiently large $N$, $\hat\theta_N$ lies in any fixed neighborhood of $\theta_0$ with probability approaching $1$. Choosing the neighborhood inside $\mathrm{int}(\Omega)$ (which is open), we conclude $\hat\theta_N\in\mathrm{int}(\Omega)$ with probability approaching $1$. In particular, all positive coordinates of $\hat\theta_N$ are bounded away from zero with high probability, so the minimizer is an interior point of $\Omega$ rather than a boundary point.
\end{proof}

\begin{proof}[Proof of Corollary~\ref{cor:slm_opt_convergence}]
Standard convergence theorems for Riemannian descent methods \citep[Theorem~4.3.1]{Absil2008} or \citep[Chapter~4]{Boumal2023} require three conditions: (R1) the objective is $C^1$ on the manifold, (R2) iterates remain in a compact sublevel set, and (R3) the Riemannian gradient is Lipschitz on that set. All three are verified in Appendix~\ref{proof:prop:riemann_convergence}, with (R2) justified by the coercivity established in Theorem~\ref{thm:slm_guarantees}. Therefore the standard theorem applies, yielding $\|\operatorname{grad}\mathcal{L}(\theta_k)\|\to 0$ and every accumulation point is first-order critical.
\end{proof}

\subsection{Empirical landscape and the multi-start protocol}\label{sec:landscape_multistart}
SLM-Manifold and BCD-SLM optimize the same fixed-noise objective and differ only in how they traverse it. In the synthetic experiments of Section~\ref{sec:exp_synthetic}, the two strategies repeatedly converge to nearly identical objective values and parameter estimates once they reach the best basin. We treat this agreement as empirical evidence that the fixed-noise PPLS landscape is relatively benign in the regimes studied here.

Motivated by that empirical regularity, we use one shared multi-start wrapper for both solvers. Each run uses $S=8$ feasible starts, with random-QR initialization for $(W,C)$, diagonal defaults for $(\Sigma_t,B,\sigma_h^2)$, and an optional SVD warm start from the training cross-covariance replacing one random start. After every trajectory, components are reordered so that $\theta_{t,1}^2b_1\ge\cdots\ge\theta_{t,r}^2b_r$, thereby enforcing (C3). This multi-start wrapper is a heuristic safeguard, and a deterministic single-start convergence guarantee remains open. Appendix~\ref{app:implementation_details} records the same protocol for reproducibility.

\paragraph{Behaviour at the $\theta_{t,i}^2=0$ boundary and under non-convexity.} Two distinct concerns deserve an explicit statement. First, a variance coordinate may be driven toward the boundary $\theta_{t,i}^2=0$. The objective is never evaluated at $0$: each closed-form coordinate update is clipped to $\max\{\,\cdot\,,\varepsilon\}$ with $\varepsilon=10^{-8}$ (Propositions~\ref{prop:bcd_theta_closed_form} and~\ref{prop:bcd_b_cubic}), so iterates remain strictly feasible and a vanishing coordinate is returned as a \emph{constrained} minimiser on the boundary, which is the correct behaviour for a genuinely absent factor. Moreover, by Corollary~\ref{cor:interior} the global minimiser lies in the interior $\mathrm{int}(\Omega)$ with probability approaching $1$ as $N\to\infty$, so an active $\varepsilon$-floor reflects either a finite-sample effect or a factor below the detectability threshold of Proposition~\ref{prop:weak-factor}, rather than a failure of the estimator. Second, the fixed-noise objective is non-convex, so a single descent run could in principle terminate at a non-global stationary point; we mitigate this with the shared multi-start wrapper described above and retain the best basin. Empirically the two solvers converge to the same optimum, and Proposition~\ref{prop:pcca-benign} provides partial theory --- a strict-saddle structure in the PCCA specialisation --- for why the landscape is benign in the regimes studied; a deterministic single-start global guarantee remains open.

\begin{proposition}[Benign landscape on the PCCA submanifold]
\label{prop:pcca-benign}
Consider the PCCA specialization of Section~\ref{sec:special_cases}
($B = I_r$, $\sigma^2_h = 0$). Profile out $\Sigma_t$ using its closed-form
conditional maximizer, and let $\widetilde{L}(W, C)$ denote the resulting
reduced objective on $\mathrm{St}(p,r)\times\mathrm{St}(q,r)$. At the
population level (i.e., $S$ replaced by the true $\Sigma$), every Riemannian
second-order critical point of $\widetilde{L}$ is either a global minimizer
or a strict saddle. In particular, Riemannian trust-region methods initialized
from \emph{any} feasible point converge to a global minimizer almost surely.
\end{proposition}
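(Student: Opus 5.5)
Under the PCCA specialization the scalar objective of Theorem~\ref{thm:theoremA} separates across components, with $D_i=\theta_{t_i}^2(\sigma_e^2+\sigma_f^2)+\sigma_e^2\sigma_f^2$. The plan has four steps.

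\textbf{Step 1: profile out $\Sigma_t$.} For fixed $(w_i,c_i)$, each $\ell_i$ depends on $\theta_{t_i}^2$ only through $\ln D_i$ and the coefficients $\Phi_x(i),\Phi_y(i),\Phi_{xy}(i)$. These are rational in $\theta_{t_i}^2$. Setting $\partial\ell_i/\partial\theta_{t_i}^2=0$ (the $B=I$, $\sigma_h^2=0$ case of Proposition~\ref{prop:bcd_theta_closed_form}) gives a closed-form $\theta_{t_i}^{2\star}$. It is affine in the single projected statistic
\[
s_i(w_i,c_i):=\frac{Q_x(i)}{\sigma_e^2+\sigma_f^2}\,\sigma_e^2+\dots ,
\]
more precisely in the quadratic form $v_i^\top M v_i$ with $v_i=(w_i/\sigma_e,\,c_i/\sigma_f)$ and $M$ the whitened block matrix built from $(S_{xx},S_{xy},S_{yy})$. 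Substituting back, I expect
\[
\widetilde L(W,C)=\mathrm{const}+\sum_{i=1}^r g\bigl(v_i^\top M v_i\bigr),
\]
where $g$ is strictly decreasing (of the form $x-1-\ln x$ restricted to $x>1$, with clipping below). This is the PPCA profiling of \citet{TippingBishop1999} applied to the "sum" direction.

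\textbf{Step 2: reduce to a Brockett/eigenvalue problem.} At the population level, $M$ is formed from $\Sigma(\theta_0)$. Its top eigenvectors are $(w_{0i}/\sigma_e,\,c_{0i}/\sigma_f)$, and the remaining spectrum is a noise level. The profiled problem therefore becomes: minimize $\sum_i g(\cdot)$ over pairs of orthonormal frames. The coupling constraint $W^\top W=C^\top C=I$, applied separately to each block, is the delicate point. I would first handle it by showing that at any critical point the cross term $W^\top S_{xy}C$ must be diagonal, and $W,C$ must each align with invariant subspaces of $\Sigma_{xx}$, $\Sigma_{yy}$ and $\Sigma_{xy}$. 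This works because the population matrices share the eigenbases $W_0,C_0$. First-order conditions $\mathrm{grad}_W=\mathrm{grad}_C=0$, computed from Eq.~\eqref{eq:riemannian_gradients}, then force $\mathrm{span}(W)$ to be spanned by $k$ columns of $W_0$ plus noise directions, and similarly for $C$.

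\textbf{Step 3: classify critical points.} Each critical point is indexed by which signal components are captured, and in which order. For a critical point that misses some signal direction $w_{0j}$, or pairs $w_{0j}$ with $c_{0k}$, $k\neq j$, I would construct an explicit tangent curve on the product Stiefel manifold. The curve rotates the offending column toward the missing signal direction simultaneously in both views. Along it the second derivative of $\widetilde L$ is strictly negative, with magnitude proportional to the eigengap $\theta_{t_j}^2>0$ (and the gaps between $\theta_{t_j}^2$ and $\theta_{t_k}^2$). This rotation argument is the standard one for PCA and Brockett cost landscapes (cf.\ \citealt{Absil2008}). The remaining critical points capture all $r$ signal pairs correctly matched; they attain the global minimum value $\mathcal{L}_\infty(\theta_0)$, and all such points are equivalent up to sign and permutation.

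\textbf{Step 4: conclude convergence.} The strict-saddle property, combined with compactness of the Stiefel factors and the standard result that Riemannian trust-region methods converge to second-order critical points (\citealt{Absil2008,Boumal2023}), yields convergence to a global minimizer. The "almost surely" clause refers to random initialization avoiding the measure-zero stable manifolds of the saddles.

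\textbf{Main obstacle.} I expect Step 2 and Step 3 to be hardest. Profiling under the clipping $\theta_{t_i}^2\ge0$ makes $g$ only piecewise smooth, so I would restrict to the region where $\theta_{t_i}^{2\star}>0$ and treat the clipped region separately, as plateaus that are not second-order critical unless they are degenerate. Mismatched pairings $(w_{0j},c_{0k})$ also require a joint two-view rotation rather than a single-block one. If the general case resists, I would first prove the statement assuming distinct $\theta_{t_i}^2$, which is condition (C3) with $B=I$.
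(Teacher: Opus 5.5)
\textbf{There is a genuine gap in Steps~2--4.}

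Your Step~1 is essentially right, up to a sign slip. With $\xi_i:=\frac{\sigma_e^2\sigma_f^2}{\sigma_e^2+\sigma_f^2}\,v_i^\top Mv_i$, the profile is $\widetilde L=\mathrm{const}-\sum_i(\xi_i-1-\ln\xi_i)$. At the population level $\xi_i=1+\sum_j\lambda_jG_{ji}^2\ge1$, where $G:=(\sigma_f^2W_0^\top W+\sigma_e^2C_0^\top C)/(\sigma_e^2+\sigma_f^2)$ satisfies $\|G\|_{\mathrm{op}}\le1$, $\lambda_j:=(\sigma_e^2+\sigma_f^2)\theta_{t_j}^2/(\sigma_e^2\sigma_f^2)$, and $\Lambda:=\mathrm{diag}(\lambda_1,\dots,\lambda_r)$.

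The problem starts in Step~3. The map $\xi\mapsto\xi-1-\ln\xi$ has zero slope at $\xi=1$, which is exactly where $\theta_{t_i}^{2\star}=0$, and $\xi_i-1$ is quadratic in the $i$th column of $G$. So when a column carries no signal, rotating it toward a missed signal direction changes $\widetilde L$ only at fourth order in the angle. The strictly negative second derivative you claim is zero there.

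This produces second-order critical points that are not global minimizers. Take $W=[w_{01},\dots,w_{0,r-1},w_\perp]$ and $C=[c_{01},\dots,c_{0,r-1},c_\perp]$, with $w_\perp\perp\mathrm{span}(W_0)$ and $c_\perp\perp\mathrm{span}(C_0)$; this is possible since $r<\min\{p,q\}$. Let $\theta_{t_r}^2$ be the smallest strength.
\begin{itemize}
\item Let $\bar G$ be the submatrix formed by any $r-1$ columns of $G$. Since $\|G\|_{\mathrm{op}}\le1$, the diagonal of $\bar G^\top\Lambda\bar G$ is weakly submajorized by $(\lambda_1,\dots,\lambda_{r-1})$.
\item Because $t\mapsto t-\ln(1+t)$ is increasing and convex, the first $r-1$ summands are therefore already at their global optimum.
\item The last summand moves only at order $\epsilon^4$ along any curve.
\end{itemize}
Hence this point has zero Riemannian gradient and a positive semidefinite Hessian. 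The same bound applied to all $r$ columns puts the global minimum at $(W_0,C_0)$, and $\widetilde L$ here exceeds it by $\lambda_r-\ln(1+\lambda_r)>0$. The point is neither a global minimizer nor a strict saddle.

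Step~2 fails for the same reason. For $r=1$, every $(w,c)$ with $\sigma_f^2w^\top w_0+\sigma_e^2c^\top c_0=0$ is a critical point with vanishing Hessian. These are global maximizers, so they are harmless for descent, but generically they are not aligned with any invariant subspace, contrary to your classification. Assuming distinct $\theta_{t_i}^2$ does not help. Clipping is not the issue either, since at the population level $\xi_i\ge1$ everywhere.

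So the dichotomy cannot be established as stated, by your route or any other, and Step~4 fails with it. Riemannian trust-region methods are only guaranteed to reach second-order critical points, and these degenerate points are second-order critical. The strict-saddle avoidance results you invoke do not cover saddles with positive semidefinite Hessian.

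The statement must be weakened before a proof can work. For example, aim for a trichotomy in which critical points with some $\xi_i=1$ form a third, degenerate class. Such points are never local minimizers: perturbing that column alone, inside the orthogonal complement of the other columns, makes $G_{:,i}\neq0$ and strictly lowers $\widetilde L$. Excluding them as limits of the algorithm would still need a separate dynamical argument.

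The paper's proof takes a different route, but it does not supply the missing step. It reduces $\widetilde L$ to the singular values of $W^\top\Sigma_{xy}C$, which is not what the profile you correctly computed depends on. It then analyzes $2\times2$ swap-direction Hessian blocks with eigenvalues $\pm(d_i-d_j)+O(\sigma_e^2+\sigma_f^2)$, under a small-noise condition that is not in the statement. It does not handle these degenerate points either.
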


\begin{proof}[High-level idea (full argument in Appendix~\ref{app:pcca_benign_sketch})]
After profiling out $\Sigma_t$ via its conditional maximizer, the reduced objective $\widetilde{L}(W,C)$ depends on $(W, C)$ only through the singular values of $M(W,C) := W^{\top}\Sigma_{xy}C$. Let $\Sigma_{xy} = U D V^{\top}$ be the population SVD with $D = \mathrm{diag}(d_1 \ge \dots \ge d_{\min(p,q)})$.
Critical points of $\widetilde{L}$ correspond to pairs $(W, C)$ whose columns
align with a choice of $r$ singular directions $(U_{:,S}, V_{:,S})$ for some
index set $S\subseteq\{1,\dots,\min(p,q)\}$ with $|S|=r$. The Riemannian
Hessian at such a critical point decomposes into $2\times 2$ blocks indexed
by pairs $(i, j)$ with $i\in S$, $j\notin S$; each block has eigenvalues
$\pm(d_i - d_j) + O(\sigma^2_e + \sigma^2_f)$. When $S \ne \{1,\dots,r\}$, there exists
$j\notin S$ with $d_j > d_i$ for some $i\in S$, producing a strict negative
Hessian eigenvalue, so the critical point is a strict saddle. The full derivation of the $2\times 2$ Hessian block structure, the remainder bound $\|E_{ij}\|_{\mathrm{op}}\le c(\sigma^2_e+\sigma^2_f)$, and the convergence argument are given in Appendix~\ref{app:pcca_benign_sketch}.
\end{proof}

\begin{remark}[Implication for Section~\ref{sec:landscape_multistart}]
Proposition~\ref{prop:pcca-benign} provides partial theoretical support for the
empirical observation that the fixed-noise landscape appears benign. A full
landscape analysis of the general PPLS objective (with $B\ne I_r$ and
$\sigma^2_h > 0$) remains open and is listed in Section~\ref{sec:conclusion}
as Open Problem~OP1.
\end{remark}

\subsection{Statistical properties: consistency and asymptotic normality}\label{sec:consistency_normality}

\begin{corollary}[Consistency]\label{cor:slm_consistency}
If the algorithm returns a global minimizer $\hat\theta_N$ over $\overline{\Omega}$ and identifiability holds, then $\hat\theta_N\xrightarrow{p}\theta_0$ for the true $\theta_0\in\mathrm{int}(\Omega)$.
\end{corollary}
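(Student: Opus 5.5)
The plan is a standard Wald-type consistency argument for an M-estimator over a compact set, run on the fixed-noise objective. One subtlety comes first: the noise variances are plugged in as $(\hat\sigma_e^2,\hat\sigma_f^2)$. I will therefore treat $\mathcal{L}_N(\theta):=\ln\det\Sigma(\theta;\hat\sigma_e^2,\hat\sigma_f^2)+\mathrm{tr}(S\Sigma(\theta;\hat\sigma_e^2,\hat\sigma_f^2)^{-1})$ as a random criterion. I will compare it with the population criterion $\mathcal{L}_\infty(\theta):=\ln\det\Sigma(\theta;\sigma_e^2,\sigma_f^2)+\mathrm{tr}(\Sigma(\theta_0)\Sigma(\theta;\sigma_e^2,\sigma_f^2)^{-1})$.

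The inputs I use are the following.
\begin{itemize}
\item $S\xrightarrow{P}\Sigma(\theta_0)$, by the weak law of large numbers for second moments.
\item $(\hat\sigma_e^2,\hat\sigma_f^2)\xrightarrow{P}(\sigma_e^2,\sigma_f^2)$, by Corollary~\ref{cor:consistency}.
\item The existence and compactness statements of Theorem~\ref{thm:slm_guarantees}.
\end{itemize}

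\textbf{Step 1 (compact parameter region).} Fix a compact set $K\subset\overline\Omega$ containing $\theta_0$ in its interior. Using the coercivity argument of Theorem~\ref{thm:slm_guarantees}, Step~2, I will show that with probability tending to one every global minimizer $\hat\theta_N$ lies in $K$. The key observation is that the growth of $\ln D_i$ in $\theta_{t_i}^2$, $b_i$ and $\sigma_h^2$ is uniform once $S$ and the plug-in noise levels lie in a small neighbourhood of their limits. Hence $\mathcal{L}_N>\mathcal{L}_N(\theta_0)$ outside a fixed bounded region for all large $N$.

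\textbf{Step 2 (uniform convergence on $K$).} On $K$ the noise levels are bounded below: $\Sigma\succeq(\sigma_e^2\wedge\sigma_f^2)/2\cdot I$ once the plug-ins are close to the truth. The map $(\theta,S,\sigma_e^2,\sigma_f^2)\mapsto\mathcal{L}$ is then jointly continuous on a compact set, hence uniformly continuous. Together with the input convergences this gives $\sup_{K}|\mathcal{L}_N-\mathcal{L}_\infty|\xrightarrow{P}0$.

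\textbf{Step 3 (unique population minimizer).} Write $\mathcal{L}_\infty(\theta)-\mathcal{L}_\infty(\theta_0)=2\,\mathrm{KL}\bigl(\mathcal{N}(0,\Sigma(\theta_0))\,\|\,\mathcal{N}(0,\Sigma(\theta))\bigr)\ge0$. Equality holds iff $\Sigma(\theta)=\Sigma(\theta_0)$, which by Assumption~\ref{assump:identifiability} forces $\theta=\theta_0$.

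\textbf{Step 4 (conclusion).} Fix a neighbourhood $U$ of $\theta_0$. Continuity and compactness give a positive separation $\inf_{K\setminus U}\mathcal{L}_\infty>\mathcal{L}_\infty(\theta_0)$. The usual argmin argument, $\mathcal{L}_N(\hat\theta_N)\le\mathcal{L}_N(\theta_0)$ combined with the uniform convergence of Step~2, then yields $\mathbb{P}(\hat\theta_N\notin U)\to0$, i.e.\ $\hat\theta_N\xrightarrow{p}\theta_0$.

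I expect the main obstacle to be Step~3 on the closure $\overline\Omega$. Identifiability is stated only on the interior with strict ordering (C3), positivity (C1) and $\sigma_h^2>0$. Boundary points with some $\theta_{t_i}^2=0$ or $b_i=0$ give a covariance of lower signal rank. These cannot equal $\Sigma(\theta_0)$, since its signal part has rank $r$ with $\theta_0\in\mathrm{int}(\Omega)$, so they are excluded; I will argue this explicitly via the rank of $\Sigma_{xy}=W\Sigma_tBC^\top$.

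There is also a residual ambiguity from the sign and permutation of columns of $W,C$. I will handle it by interpreting convergence modulo the reordering step of the algorithm and sign normalization, or equivalently by noting that (C1) and (C3) pin these down. Consistency then holds in the quotient, and the chosen representative inherits it.
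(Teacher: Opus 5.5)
Your plan follows the same Wald template as the paper's proof: uniform convergence on a compact set, the KL identity for the population minimizer, and the argmin separation step. You are more careful than the paper about the plug-in noise, the boundary of $\overline\Omega$, and the sign/permutation orbit. The genuine gap is in Step~1, and it, rather than Step~3, is the real obstacle on $\overline\Omega$. On $\overline\Omega$ the likelihood sees $b_i$ only through $\theta_{t_i}^2b_i$ and $\theta_{t_i}^2b_i^2$, so $\ln D_i$ does not grow uniformly in $b_i$. Take $\theta_{t_i}^2=\epsilon\downarrow0$ and $b_i=\sqrt{c/\epsilon}\uparrow\infty$. Then $D_i\to\sigma_e^2(\sigma_f^2+\sigma_h^2+c)$, and $\Sigma(\theta)$ converges to a finite covariance whose $i$-th component is $y$-only. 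Hence $\mathcal{L}_N$ stays bounded along this path, and sublevel sets are in general unbounded in $b_i$. Item~(c) in Step~2 of Theorem~\ref{thm:slm_guarantees} tacitly holds $\theta_{t_i}^2$ fixed, and the paper's consistency proof simply posits a compact $K_0$ that eventually contains $\hat\theta_N$. Appealing to that coercivity therefore does not give you a fixed bounded region.

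The conclusion of Step~1 is true, but it comes from identifiability rather than growth, and argued that way you need no compact $K$ at all.
\begin{itemize}
\item \emph{Global uniform convergence.} The signal block of $\Sigma(\theta)$ is the covariance of $(tW^\top,(tB+h)C^\top)$, so $\Sigma(\theta)\succeq(\sigma_e^2\wedge\sigma_f^2)I$ for every $\theta\in\overline\Omega$. Hence $\sup_{\overline\Omega}|\mathrm{tr}((S-\Sigma_0)\Sigma(\theta)^{-1})|\le(p+q)\|S-\Sigma_0\|_{\mathrm{op}}/(\sigma_e^2\wedge\sigma_f^2)$.
\item \emph{Plug-in noise.} When both ratios $\hat\sigma^2/\sigma^2$ lie in $[1-\epsilon,1+\epsilon]$, $(1-\epsilon)\Sigma(\theta;\sigma_e^2,\sigma_f^2)\preceq\Sigma(\theta;\hat\sigma_e^2,\hat\sigma_f^2)\preceq(1+\epsilon)\Sigma(\theta;\sigma_e^2,\sigma_f^2)$. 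This absorbs the plug-in error uniformly in $\theta$, so $\sup_{\overline\Omega}|\mathcal{L}_N-\mathcal{L}_\infty|\xrightarrow{P}0$.
\item \emph{Covariance consistency.} The argmin inequality then gives $\mathrm{KL}(\mathcal{N}(0,\Sigma_0)\,\|\,\mathcal{N}(0,\Sigma(\hat\theta_N;\sigma_e^2,\sigma_f^2)))\xrightarrow{P}0$, hence $\Sigma(\hat\theta_N;\sigma_e^2,\sigma_f^2)\xrightarrow{P}\Sigma_0$.
\item \emph{Continuous inversion.} The singular values of $\Sigma_{xy}(\theta)=W\,\mathrm{diag}(\theta_{t_k}^2b_k)C^\top$ are exactly the $\theta_{t_k}^2b_k$, and at $\theta_0$ these are positive and distinct by (C1) and (C3). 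Wedin's theorem gives $(\hat w_k,\hat c_k)\to(w_{0,k},c_{0,k})$ up to ordering and a joint sign. Then $\hat w_k^\top\Sigma_{xx}\hat w_k-\sigma_e^2$ yields $\hat\theta_{t_k}^2\to\theta_{0,t_k}^2$, which gives $\hat b_k$, and $\hat c_k^\top\Sigma_{yy}\hat c_k$ yields $\hat\sigma_h^2$.
\end{itemize}
This inversion rules out the $b_i\to\infty$ escape: the convergence of $\Sigma_{yy}$ keeps $\hat\theta_{t_i}^2\hat b_i^2$ bounded, so the escape would force $\sigma_r(\Sigma_{xy})\le\hat\theta_{t_i}^2\hat b_i^2/\hat b_i\to0$. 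It also covers every boundary face, including $\sigma_h^2=0$, which your rank argument omits. It replaces your Steps~1, 3 and~4.

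Finally, (C1) and (C3) do not remove the joint flip $(w_i,c_i)\mapsto(-w_i,-c_i)$, which leaves $\Sigma$ unchanged with $b_i>0$. The ``equivalently'' in your last paragraph is therefore wrong. Either state consistency modulo joint column signs and ordering, or impose an explicit sign normalization.
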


\begin{theorem}[Asymptotic normality]\label{thm:asymptotic_normality}
Under the conditions of Theorem~\ref{thm:slm_guarantees}, and additionally assuming the population objective has a nonsingular Hessian at $\theta_0$ (in local coordinates on the constraint manifold),
\[
\sqrt{N}(\hat{\theta}_N - \theta_0) \xrightarrow{d} \mathcal{N}\!\left(0, \mathcal{I}(\theta_0)^{-1}\right),
\]
where $\mathcal{I}(\theta_0)$ denotes the Fisher information matrix of the PPLS model at $\theta_0$.
\end{theorem}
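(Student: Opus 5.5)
The plan is the classical M-estimation route (Wald consistency followed by a Taylor expansion of the score), carried out in local coordinates on the constraint manifold. The setting is the fixed-noise protocol. The population criterion is
\[
\mathcal{L}_\infty(\theta)=\ln\det\Sigma(\theta)+\mathrm{tr}(\Sigma(\theta_0)\Sigma(\theta)^{-1}),
\]
and the sample criterion $\mathcal{L}_N$ replaces $\Sigma(\theta_0)$ by $S$.

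\textbf{Step 1: charts.} By Corollary~\ref{cor:slm_consistency}, $\hat\theta_N\xrightarrow{p}\theta_0$. By Corollary~\ref{cor:interior}, $\hat\theta_N\in\mathrm{int}(\Omega)$ with probability tending to one. Fix a smooth chart $\varphi$ of $\mathcal{M}_r$ around $\theta_0$, for instance $W=\mathrm{qf}(W_0+\Xi)$ with $\Xi$ in the horizontal tangent space, the analogous chart for $C$, and log-coordinates for the positive scalars. The dimension is $d=\dim\mathcal{M}_r$, and we write $\vartheta=\varphi^{-1}(\theta)$. On this event $\hat\vartheta_N$ is an interior critical point, so $\nabla_\vartheta\mathcal{L}_N(\hat\vartheta_N)=0$.

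\textbf{Step 2: score CLT.} Since $\mathcal{L}$ is linear in $S$, we have
\[
\nabla_\vartheta\mathcal{L}_N(\vartheta_0)=\nabla_\vartheta\mathcal{L}_N(\vartheta_0)-\nabla_\vartheta\mathcal{L}_\infty(\vartheta_0)=\bigl\langle \partial_\vartheta\Sigma^{-1}(\vartheta_0),\,S-\Sigma_0\bigr\rangle.
\]
Here $\nabla\mathcal{L}_\infty(\vartheta_0)=0$ because $\theta_0$ minimizes the population objective. The matrix $S$ is an average of i.i.d.\ Gaussian outer products, so the multivariate CLT applies. Combined with the Gaussian fourth-moment identity $\mathrm{Cov}(\mathrm{vec}(xx^\top))=2\Sigma_0\otimes\Sigma_0$ (symmetrized), it gives
\[
\sqrt N\,\nabla\mathcal{L}_N(\vartheta_0)\xrightarrow{d}\mathcal{N}(0,4\mathcal{I}(\vartheta_0)).
\]
The constant works because $\mathcal{L}_N$ equals $-2/N$ times the log-likelihood up to constants. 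If the paper's $S$ is the centered covariance, the mean correction is $O_P(1/N)$ and negligible.

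\textbf{Step 3: Hessian and Taylor expansion.} Expand the score by the mean-value theorem:
\[
0=\nabla\mathcal{L}_N(\vartheta_0)+H_N(\tilde\vartheta)(\hat\vartheta_N-\vartheta_0),
\]
with $\tilde\vartheta$ on the segment between $\vartheta_0$ and $\hat\vartheta_N$ (taken row by row). The Hessian $H_N$ is continuous in $(\vartheta,S)$ and affine in $S$. Together with $S\to\Sigma_0$ and $\tilde\vartheta\to\vartheta_0$, this yields $H_N(\tilde\vartheta)\xrightarrow{p}H_\infty(\vartheta_0)$. At the truth the standard Gaussian identity gives
\[
H_\infty(\vartheta_0)=\Bigl[\mathrm{tr}\bigl(\Sigma_0^{-1}\partial_a\Sigma\,\Sigma_0^{-1}\partial_b\Sigma\bigr)\Bigr]_{ab}=2\mathcal{I}(\vartheta_0),
\]
because the $\mathrm{tr}\bigl((S-\Sigma)\partial^2\Sigma^{-1}\bigr)$ term vanishes at the truth. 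The assumed nonsingularity lets us invert, and Slutsky then gives
\[
\sqrt N(\hat\vartheta_N-\vartheta_0)\xrightarrow{d}(2\mathcal{I})^{-1}\mathcal{N}(0,4\mathcal{I})=\mathcal{N}(0,\mathcal{I}^{-1}).
\]
The delta method through $\varphi$ transfers this to $\theta$, with $\mathcal{I}$ understood as the Fisher information of the chart parameters.

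\textbf{Main obstacle.} The delicate point is keeping the manifold and identifiability bookkeeping honest. First, the component reordering and the sign conventions (C1) and (C3) must not create non-smoothness. This holds because the strict ordering in (C3) is preserved in a neighbourhood of $\theta_0$, so the reordering map is locally the identity. Second, the chart must be chosen so that $\mathcal{I}$ in local coordinates is exactly the nonsingular Hessian assumed. I would handle both by restricting attention to a small neighbourhood where all constraints are inactive except the manifold ones, which are built into the chart.
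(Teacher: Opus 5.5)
Your argument follows the paper's own proof (Appendix~\ref{proof:thm:asymptotic_normality}). The steps are the same: a chart on $\mathcal{M}_r$, a score CLT, Hessian convergence, a mean-value expansion, Slutsky, and transfer back through the chart. Your constants are more accurate than the paper's. Since $\mathcal{L}_N$ is $-2/N$ times the log-likelihood, the score covariance is $4\mathcal{I}$ and the Hessian at the truth is $2\mathcal{I}$; the paper calls both $\mathcal{I}$, which only gives the same sandwich by accident. You also justify $\nabla\mathcal{L}_N(\hat\vartheta_N)=0$ through Corollary~\ref{cor:interior}, which the paper uses silently.

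One repair to your chart. If ``horizontal'' means $W_0^\top\Xi=0$, then $\Xi\mapsto\mathrm{qf}(W_0+\Xi)$ is only a $(p-r)r$-dimensional section of the Grassmann quotient, not a chart of $\mathrm{St}(p,r)$, and $\hat W_N$ need not lie in its image. The likelihood is not invariant under $W\mapsto WQ$: along $\dot W=W_0A$ with $A^\top=-A$, the block $\Sigma_{xy}$ moves by $W_0A\Sigma_tBC^\top\neq0$. You must therefore include the vertical directions $W_0A$, which your own $d=\dim\mathcal{M}_r$ already requires. The paper's count $(p-r)r+(q-r)r+2r+1$ has the same defect.

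The genuine gap, which the paper's proof shares, is the noise. Under the fixed-noise protocol of Theorem~\ref{thm:slm_guarantees}, $\mathcal{L}_N$ is evaluated at the pre-estimates $(\hat\sigma_e^2,\hat\sigma_f^2)$ of Eq.~\eqref{eq:sigma_e_hat}, not at the true values. Your Step~2 identity, including $\nabla\mathcal{L}_\infty(\vartheta_0)=0$, holds only when $\Sigma(\vartheta_0)$ uses the true noise. With the plug-in, the expansion picks up the extra term $H_{\vartheta\sigma}\sqrt N(\hat\sigma^2-\sigma_0^2)$.

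This term does not vanish:
\begin{itemize}[leftmargin=*]
\item By Theorem~\ref{thm:noise_bound}, $\sqrt N(\hat\sigma^2-\sigma_0^2)$ is $O_P(1)$, not $o_P(1)$, for fixed $(p,q)$.
\item $H_{\vartheta\sigma}=2\mathcal{I}_{\vartheta\sigma}\neq0$. For example, with $a_i=(w_i^\top,b_ic_i^\top)^\top$ and $E_x=\mathrm{diag}(I_p,0)$, one gets $2\mathcal{I}_{\theta_{t,i}^2,\sigma_e^2}=\mathrm{tr}(\Sigma_0^{-1}a_ia_i^\top\Sigma_0^{-1}E_x)=(\sigma_f^2+\sigma_h^2)^2/D_i^2>0$.
\end{itemize}

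To first order, $\hat\sigma_e^2-\sigma_e^2=\frac{1}{p-r}\mathrm{tr}(U_\perp^\top(\hat S_{xx}-\Sigma_x)U_\perp)$. Its covariance with each true-noise score component is proportional to $\mathrm{tr}(U_\perp U_\perp^\top\partial_a\Sigma_{xx})=0$, because $U_\perp^\top W=0$; the same holds for $\hat\sigma_f^2$. The limit is therefore
\[
\sqrt N(\hat\vartheta_N-\vartheta_0)\xrightarrow{d}\mathcal{N}\bigl(0,\;\mathcal{I}_{\vartheta\vartheta}^{-1}+\mathcal{I}_{\vartheta\vartheta}^{-1}\mathcal{I}_{\vartheta\sigma}V_\sigma\mathcal{I}_{\sigma\vartheta}\mathcal{I}_{\vartheta\vartheta}^{-1}\bigr),
\qquad
V_\sigma=\mathrm{diag}\bigl(2\sigma_e^4/(p-r),\,2\sigma_f^4/(q-r)\bigr).
\]
This exceeds $\mathcal{I}_{\vartheta\vartheta}^{-1}$ by a nonzero positive semidefinite term.

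So, like the paper's, your proof gives the stated $\mathcal{I}^{-1}$ limit only for the known-noise (oracle) estimator. For the estimator the pipeline actually computes, you have two options. Either restrict the theorem to known noise, or stack the first-stage estimating equation with the score and carry the extra term through Slutsky, which yields the corrected covariance above.
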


\begin{proposition}[Block-diagonal structure of the Fisher information]
\label{prop:fisher-block}
Under the fixed-noise protocol and the assumptions of
Theorem~\ref{thm:asymptotic_normality}, parameterize the scalar-parameter block as
$\eta = (\theta^2_{t,1}, b_1, \dots, \theta^2_{t,r}, b_r, \sigma^2_h)$ and the
loading block as $(W, C)$. The Fisher information matrix $I(\theta_0)$ has
the block form
\[
I(\theta_0) \;=\;
\begin{pmatrix}
I_{WC,WC} & I_{WC,\eta} \\
I_{\eta,WC} & I_{\eta\eta}
\end{pmatrix},
\quad
I_{\eta\eta} = \mathrm{blockdiag}\!\bigl(J_1, \dots, J_r, J_{\sigma_h}\bigr)
\;+\; R,
\]
where each $J_i \in \mathbb{R}^{2\times 2}$ is the Fisher information block
for $(\theta^2_{t,i}, b_i)$ computed from the $i$-th scalar summand $\ell_i$ in
Theorem~\ref{thm:theoremA}, $J_{\sigma_h} \in \mathbb{R}$ is the scalar Fisher information
for $\sigma^2_h$, and $R$ is a coupling remainder satisfying
$\|R\|_{\mathrm{op}}=O_P(r^{1/2}/p^{1/2})$ as $N\to\infty$, $p\to\infty$, $p/N\to0$, with $r$ fixed. Explicitly,
\begin{equation}
J_i \;=\; \frac{1}{2}
\begin{pmatrix}
\partial^2\ell_i/\partial(\theta^2_{t,i})^2 &
\partial^2\ell_i/\partial\theta^2_{t,i}\partial b_i \\
\partial^2\ell_i/\partial\theta^2_{t,i}\partial b_i &
\partial^2\ell_i/\partial b_i^2
\end{pmatrix}\Bigg|_{\theta_0},
\label{eq:fisher-block}
\end{equation}
with second derivatives obtained in closed form from the scalar expression
of $\ell_i$ in Theorem~\ref{thm:theoremA}; explicit formulas are listed in
Appendix~\ref{app:fisher_closed_form}.
\end{proposition}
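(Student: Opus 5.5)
The plan is to compute the Fisher information as the expected Hessian of the per-sample negative log-likelihood. For a Gaussian model we have $\mathcal{I}(\theta)_{ab}=\tfrac12\mathrm{tr}(\Sigma^{-1}\partial_a\Sigma\,\Sigma^{-1}\partial_b\Sigma)$, and at $\theta_0$, where $\mathbb{E}S=\Sigma(\theta_0)$, this equals $\tfrac12\,\mathbb{E}[\partial_a\partial_b\mathcal{L}]$. The scalar expansion of Theorem~\ref{thm:theoremA} expresses $\mathcal{L}_{\mathrm{red}}=\text{const}+\sum_i\ell_i(\theta_{t_i}^2,b_i,\sigma_h^2)$ once $(W,C)$ are held fixed. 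The Hessian in $\eta$ is then obtained by differentiating this sum, and replacing $Q_x,Q_y,Q_{xy}$ by their population values gives~\eqref{eq:fisher-block}. The factor $\tfrac12$ arises because $\mathcal{L}=\ln\det\Sigma+\mathrm{tr}(S\Sigma^{-1})$ is twice the normalized negative log-likelihood.

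The key structural step uses the fact that $\ell_i$ depends only on $(\theta_{t_i}^2,b_i)$ and the shared $\sigma_h^2$. Consequently $\partial^2\ell_i/\partial\theta_{t_i}^2\partial\theta_{t_j}^2=0$ and $\partial^2\ell_i/\partial b_i\partial b_j=0$ for $i\neq j$, so the $(\theta_t^2,b)$ part of the Hessian is exactly block diagonal with blocks $2J_i$. The only off-diagonal entries in $I_{\eta\eta}$ are the cross terms $\partial^2\ell_i/\partial\sigma_h^2\partial\theta_{t_i}^2$ and $\partial^2\ell_i/\partial\sigma_h^2\partial b_i$. I collect these, together with any discrepancy between the expected Hessian with fixed versus population-aligned $Q$ statistics, into $R$. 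The scalar $J_{\sigma_h}=\tfrac12\sum_i\partial^2\ell_i/\partial(\sigma_h^2)^2$ is then immediate.

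The hard step is the bound $\|R\|_{\mathrm{op}}=O_P(r^{1/2}/p^{1/2})$. I would compute the $\sigma_h^2$ cross-derivatives in closed form from $D_i,\Phi_x,\Phi_y,\Phi_{xy}$; each has the form $\partial_{\sigma_h^2}\partial_{\eta_i}\ell_i$, which is an $O(1)$ scalar. A bare $O(1)$ would not give the rate, so the plan is to normalize the information per observed coordinate, dividing by the effective dimension $p$ that governs how the $\sigma_h^2$ score scales. A coupling vector of length $2r$ with bounded entries then has norm $O(\sqrt r)$. Comparing it against the diagonal scale, or equivalently applying a Cauchy--Schwarz bound on the rank-two arrow-shaped coupling matrix, should yield $\sqrt{r/p}$. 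The $O_P$ accounts for plug-in of the estimated noise variances, which are $\sqrt{N}$-consistent by Theorem~\ref{thm:noise_bound} and Corollary~\ref{cor:consistency}. The $p/N\to0$ condition makes this plug-in error negligible.

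I expect the normalization step to be the main obstacle. If it fails, I would state $R$ as the explicit arrow matrix and bound it only through its entries. For the off-diagonal blocks $I_{WC,\eta}$, no claim of vanishing is made, so it suffices to note that they are given by mixed derivatives of $\sum_i\ell_i$ through $Q$, for example $\partial Q_x(i)/\partial w_i=2S_{xx}w_i/\sigma_e^2$, projected onto tangent spaces as in~\eqref{eq:riemannian_gradients}. Closed-form expressions for $J_i$ then follow by direct differentiation and are deferred to Appendix~\ref{app:fisher_closed_form}.
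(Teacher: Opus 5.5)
Your reduction $I(\theta_0)=\tfrac12\,\mathbb{E}[\nabla^2\mathcal{L}]$ is correct. Each $\ell_i$ is affine in the $Q$'s, so the expected Hessian equals the Hessian evaluated at the population $Q$'s, and this gives \eqref{eq:fisher-block}. Your claim that the cross-component blocks vanish is also right, and it holds exactly. Each $\partial_{\eta_i}\Sigma$ is supported on $V_i:=\mathrm{span}\{(w_i,0),(0,c_i)\}$, the $V_i$ are mutually orthogonal, and $\Sigma^{-1}$ leaves each $V_i$ invariant. Hence $\mathrm{tr}(\Sigma^{-1}\partial_{\eta_i}\Sigma\,\Sigma^{-1}\partial_{\eta_j}\Sigma)=0$ for $i\neq j$.

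The gap is the bound on your $R$, and it cannot be closed. The same reduction shows that the whole $\eta$-block of $I(\theta_0)$ is built from the $2\times2$ restrictions of $\Sigma$ to the $V_i$, so it does not depend on $p$. A direct computation gives $I_{\theta^2_{t,i},\sigma_h^2}=b_i^2\sigma_e^4/(2D_i^2)$ and $I_{b_i,\sigma_h^2}=b_i\theta^2_{t,i}\sigma_e^2(\theta^2_{t,i}+\sigma_e^2)/D_i^2$. These are fixed nonzero constants. The arrow-shaped $\sigma_h^2$ border you place in $R$ therefore has $\|R\|_{\mathrm{op}}\asymp\sqrt r$ for every $p$.

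Normalizing by $p$ is not an available move. Equation \eqref{eq:fisher-block} fixes the scale of the $J_i$, so rescaling $R$ alone changes the matrix being decomposed. The plug-in noise estimates do not rescue the rate either: they perturb these entries by $O_P(1/\sqrt{N(p-r)})$ (Theorem~\ref{thm:noise_bound}), but cannot make them small.

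The paper gets its rate from a different object. It defines $R$ only through the component-pair blocks $R_{ij}$, $i\neq j$, assembled into a $2r\times 2r$ matrix with no $\sigma_h^2$ row or column. It writes each $R_{ij}$ as an empirical average of products such as $w_i^\top x_n\,w_j^\top x_n$ and $c_i^\top y_n\,c_j^\top y_n$. It shows $\mathbb{E}[R_{ij}]=0$ from $w_i^\top\Sigma_x w_j=0$ and its $y$-view analogue, then applies matrix Bernstein with a variance proxy $v(R)=O(r/p)$.

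Your exact block-diagonality shows this part of the population information is identically zero, so the paper's $R$ captures only a sampling fluctuation. The $\sigma_h^2$ couplings, which are the only nonzero off-block entries of $I_{\eta\eta}$, are not in the paper's $R$ at all. Carried through, your route gives an exact block-arrow structure: $\mathrm{blockdiag}(J_1,\dots,J_r,J_{\sigma_h})$ plus an $O(1)$ border in the $\sigma_h^2$ row and column, not an $O_P(\sqrt{r/p})$ remainder.

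To reach the proposition as the paper proves it, you would have to adopt its narrower $R$. But then the identity $I_{\eta\eta}=\mathrm{blockdiag}(J_1,\dots,J_r,J_{\sigma_h})+R$ no longer accounts for that border. Read literally, with the border inside $R$, the claimed rate does not hold.
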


\begin{remark}[Practical standard errors]
Proposition~\ref{prop:fisher-block} yields operationally useful standard errors
without Monte Carlo: the asymptotic variance of
$(\widehat{\theta}^2_{t,i}, \widehat{b}_i)$ is approximated by
$N^{-1}\widehat{J}_i^{-1}$, where $\widehat{J}_i$ substitutes
$\widehat{\theta}$ into~\eqref{eq:fisher-block}. This replaces per-component
bootstrap (cost $O(B \cdot \text{fit time})$) by an $O(r)$ analytic
computation. Closed-form SE expressions are collected in Appendix~\ref{app:fisher_closed_form}.
\end{remark}

\subsection{Two-stage rank-and-noise selection}\label{sec:rank_selection_fixed_noise}
The preceding subsections established the objective, its optimization, and the resulting statistical guarantees. We now turn those results into a deployable procedure by addressing the one input they assumed known: the latent rank $r$. Because Eq.~\eqref{eq:sigma_e_hat} depends on $r$ through its summation range, the noise pre-estimate and the rank must be chosen jointly. We therefore use a two-stage protocol when $r$ is unknown.

\paragraph{Two-stage procedure (data-driven mode).}
Choose a conservative upper bound $r_{\max}$ (e.g., from domain knowledge or $\min(p,q)/4$), then:
\begin{itemize}[leftmargin=*]
\item Step 1: compute $\hat\sigma_e^2(r_{\max})=(p-r_{\max})^{-1}\sum_{i>r_{\max}}\lambda_i(\hat S_{xx})$ (and similarly $\hat\sigma_f^2(r_{\max})$).
\item Step 2: for each candidate $r\in\{1,\ldots,r_{\max}\}$, fit the fixed-noise model and evaluate BIC under either V1 (shared conservative noise from $r_{\max}$) or V2 (rank-adaptive noise re-estimated at that $r$).
\item Step 3: select $\hat r=\arg\min_r \mathrm{BIC}(r)$.
\end{itemize}
For each candidate rank, let $\hat\theta_r$ denote the fitted parameter and define
\[
\mathrm{BIC}(r)=\mathcal{L}_r(\hat\theta_r)+\frac{d(r)\ln N}{N},
\qquad
 d(r)=(p-r)r+(q-r)r+2r+1.
\]
Because $\sigma_e^2,\sigma_f^2$ are pre-estimated and then fixed within each candidate fit, they are not included in $d(r)$. Explicit CV-NLL and CV-MSE definitions are given in Appendix~\ref{app:rank_selection_metrics}.

\begin{proposition}[Robustness to rank over-specification for noise estimation]\label{prop:rank_overspec_noise}
Let the true rank be $r_0$, and define $\hat\sigma_e^2(\tilde r)=(p-\tilde r)^{-1}\sum_{i>\tilde r}\lambda_i(\hat S_{xx})$. If $\tilde r\in[r_0,r_{\max}]$, then $\hat\sigma_e^2(\tilde r)$ remains consistent for $\sigma_e^2$ and
\[
\mathbb E\bigl[\hat\sigma_e^2(\tilde r)\bigr]-\sigma_e^2 = O\!\left(\frac{1}{p-\tilde r}\right)+\text{lower-order terms}.
\]
If $\tilde r<r_0$, an additional deterministic bias appears:
\[
\frac{1}{p}\sum_{\tilde r<i\le r_0}\theta_{t,i}^2,
\]
which is the same under-selection contamination pattern as in full-spectrum averaging.
\end{proposition}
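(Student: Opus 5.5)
The plan is to reduce both regimes to the structure already exploited in Theorem~\ref{thm:noise_bound} and Proposition~\ref{prop:hu-bias}. The key fact is that the population eigenvalues satisfy $\lambda_i(\Sigma_x)=\theta_{t,i}^2+\sigma_e^2$ for $i\le r_0$ and $\lambda_i(\Sigma_x)=\sigma_e^2$ for $i>r_0$. I will write $\Delta:=\|\hat S_{xx}-\Sigma_x\|_{\mathrm{op}}$ and use Weyl's inequality, $|\lambda_i(\hat S_{xx})-\lambda_i(\Sigma_x)|\le\Delta$ for every $i$ (Step~2 of Theorem~\ref{thm:noise_bound}). I will also use the concentration bound $\mathbb{E}\Delta\lesssim\|\Sigma_x\|_{\mathrm{op}}(N^{-1/2}+p/N)$ from Step~3.

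\emph{Over-specification ($r_0\le\tilde r\le r_{\max}$).} Every summand $\lambda_i(\hat S_{xx})$ with $i>\tilde r$ corresponds to a population eigenvalue exactly equal to $\sigma_e^2$. Consistency is therefore immediate: averaging Weyl's inequality gives $|\hat\sigma_e^2(\tilde r)-\sigma_e^2|\le\Delta\xrightarrow{P}0$ for fixed $p$.

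For the bias I would write $\hat\sigma_e^2(\tilde r)$ as a perturbation of $\hat\sigma_e^2(r_0)$:
\[
\hat\sigma_e^2(\tilde r)=\frac{p-r_0}{p-\tilde r}\,\hat\sigma_e^2(r_0)-\frac{1}{p-\tilde r}\sum_{r_0<i\le\tilde r}\lambda_i(\hat S_{xx}).
\]
The first term is controlled by Theorem~\ref{thm:noise_bound} with $r=r_0$; its expectation lies within $O(1/\sqrt{N(p-r_0)})$ plus the rotation correction of $\sigma_e^2$. Each removed eigenvalue with $r_0<i\le\tilde r$ lies in $[\sigma_e^2-\Delta,\sigma_e^2+\Delta]$. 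Substituting these bounds, the $\sigma_e^2$ parts cancel exactly, because $(p-r_0)-(\tilde r-r_0)=p-\tilde r$. What remains is a deviation of at most
\[
\frac{\tilde r-r_0}{p-\tilde r}\,\mathbb{E}\Delta+\frac{p-r_0}{p-\tilde r}\bigl|\mathbb{E}\hat\sigma_e^2(r_0)-\sigma_e^2\bigr|.
\]
This has the claimed $O(1/(p-\tilde r))$ form times vanishing factors, which are the ``lower-order terms''. The sign should be downward, since the discarded eigenvalues are the top of the noise bulk.

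For a cleaner one-sided statement I would add Ky Fan's variational principle. The sum $\sum_{i>\tilde r}\lambda_i(\hat S_{xx})$ is a minimum of $\mathrm{tr}(P\hat S_{xx})$ over rank-$(p-\tilde r)$ projectors $P$. Evaluating at any fixed projector inside the population noise subspace, as in Step~4 of Theorem~\ref{thm:noise_bound}, gives $\mathbb{E}\hat\sigma_e^2(\tilde r)\le\sigma_e^2$.

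\emph{Under-selection ($\tilde r<r_0$).} The average now includes the indices $\tilde r<i\le r_0$, whose population eigenvalues are $\theta_{t,i}^2+\sigma_e^2$. Mirroring the trace computation of Proposition~\ref{prop:hu-bias}, I would compute
\[
\frac{1}{p-\tilde r}\sum_{i>\tilde r}\lambda_i(\Sigma_x)=\sigma_e^2+\frac{1}{p-\tilde r}\sum_{\tilde r<i\le r_0}\theta_{t,i}^2 .
\]
Weyl's inequality then transfers this to the sample eigenvalues up to $\Delta\to0$. The result is a deterministic bias that does not vanish with $N$. Its normalization is $1/(p-\tilde r)$; this agrees with the displayed $1/p$ form up to the factor $p/(p-\tilde r)=1+O(\tilde r/p)$ and reduces exactly to Proposition~\ref{prop:hu-bias} at $\tilde r=0$. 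This is the sense in which it is ``the same contamination pattern''.

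The main obstacle is making the over-specified bias rate sharp. Weyl's inequality alone only gives an additive $\Delta$, which is of order $p/N$ rather than an explicit $1/(p-\tilde r)$. If the cancellation argument above is too crude, I would fall back on Step~5 of Theorem~\ref{thm:noise_bound} (Appendix~\ref{app:proof_noise_step5}). There I would replace $U_\perp$ by the bottom $(p-\tilde r)$-dimensional part of the population noise space. The extra $\tilde r-r_0$ dimensions have zero eigengap to the rest of the noise space, so Davis--Kahan does not apply to them. Those terms would be handled through the Ky Fan bound rather than the eigengap, which is why only a one-sided bias of order $(\tilde r-r_0)\mathbb{E}\Delta/(p-\tilde r)$ is expected.
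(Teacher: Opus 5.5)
Your proof is correct. For $\tilde r>r_0$ it follows a genuinely different and more careful route than the paper.

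\textbf{What the paper does.} It writes $\mathbb E[\hat\sigma_e^2(\tilde r)]-\sigma_e^2$ as a population contamination term $\frac{1}{p-\tilde r}\sum_{i>\tilde r}(\lambda_i(\Sigma_x)-\sigma_e^2)$ plus a finite-sample term. The contamination vanishes for $\tilde r\ge r_0$ and equals $\frac{1}{p-\tilde r}\sum_{\tilde r<i\le r_0}\theta_{t,i}^2$ otherwise. So the paper's own proof lands on the $1/(p-\tilde r)$ normalization you found, and your reconciliation with the displayed $1/p$ is the right reading. It then bounds the finite-sample term by citing Theorem~\ref{thm:noise_bound} ``with $r$ replaced by $\tilde r$''. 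Your under-selection argument is essentially this, with Weyl's inequality doing the transfer.

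\textbf{Why your route is more careful.} The paper's citation is not literally valid in either regime. For $\tilde r>r_0$, the null components have $\theta_{t,i}^2=0$, so the eigengap hypothesis fails and the correction term's denominator vanishes. For $\tilde r<r_0$, the bottom $(p-\tilde r)$-dimensional population eigenspace is not a pure noise space. Your telescoping identity expresses $\hat\sigma_e^2(\tilde r)$ through $\hat\sigma_e^2(r_0)$, so it uses the theorem only at the true rank, where its hypotheses hold. It charges the crude Weyl rate only to the $\tilde r-r_0$ top-of-bulk eigenvalues, with weight $(\tilde r-r_0)/(p-\tilde r)$. This is what makes the $O(1/(p-\tilde r))$ structure explicit. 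Your Ky Fan step adds a sign, $\mathbb E\hat\sigma_e^2(\tilde r)\le\sigma_e^2$, which the paper does not provide. What the paper's version buys is brevity: one decomposition handles both regimes in a few lines.

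\textbf{One small correction.} The expected operator-norm deviation is of order $\|\Sigma_x\|_{\mathrm{op}}(\sqrt{p/N}+p/N)$, not $N^{-1/2}+p/N$ as you transcribed from Step~3; the paper itself uses the former in Appendix~\ref{app:proof_noise_step5}. This changes nothing for fixed $p$ or $p/N\to0$. It is also the true size of the top noise eigenvalues' excursion above $\sigma_e^2$, so your over-specification penalty has the right order.
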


\begin{remark}[Cost of rank dependence]\label{rem:rank_dep_hu_comparison}
A full-spectrum estimator is $r$-agnostic but carries the deterministic bias $p^{-1}\sum_i\theta^2_{t,i}$ (Proposition~\ref{prop:hu-bias}) for any $r$. Our noise-subspace estimator instead introduces $r$-dependence in exchange for consistency; Proposition~\ref{prop:rank_overspec_noise} shows that over-specifying $r$ within $[r_0,r_{\max}]$ preserves consistency, so this dependence is a mild and manageable cost when $r_{\max}$ is chosen conservatively.
\end{remark}

\begin{remark}[BIC consistency under fixed noise]\label{rem:bic_consistency_fixed_noise}
Assume identifiability (Assumption~\ref{assump:identifiability}) and regularity for asymptotic likelihood expansion (Theorem~\ref{thm:asymptotic_normality}), with fixed $(p,q)$ and $N\to\infty$. Let $r_0$ be the true rank. Then the fixed-noise BIC above is selection-consistent: $\Pr(\hat r_{\mathrm{BIC}}=r_0)\to1$.
\end{remark}

\begin{proposition}[Weak-factor detectability threshold for BIC]\label{prop:weak-factor}
Let $\mathrm{SNR}_k := b_k^2\,\theta^2_{t,k}\,/\,\sigma^2_e$ denote the effective
signal-to-noise ratio of the $k$-th latent factor, and let
$d_k := d(k) - d(k-1) = (p-2k+1) + (q-2k+1) + 2 = p + q - 4k + 4$
be the incremental free-parameter count of adding factor $k$ under the
identifiable PPLS parameterization. Assume the conditions of
Theorem~\ref{thm:asymptotic_normality}.
\begin{enumerate}
\item[(i)] (Detection threshold.) Under the local alternative
$\mathrm{SNR}_k = c_k\sqrt{d_k\ln N / N}$ with $c_k > 0$, there exists a
constant $c_k^{\star}>0$, depending on the Fisher information of the composite
parameter $\mathrm{SNR}_k$ at $\theta_0$, such that
\[
\Pr\bigl(\widehat{r}_{\mathrm{BIC}} \ge k\bigr)
\;\longrightarrow\;
\begin{cases}
1, & c_k > c_k^{\star}, \\
0, & c_k < c_k^{\star},
\end{cases}
\qquad N\to\infty.
\]
The constant $c_k^{\star}$ depends on the Fisher information $J_k$; we do not attempt to compute it in closed form.

\item[(ii)] (Global consistency.) If
$\mathrm{SNR}_k \gg \sqrt{d_k\ln N / N}$ for every $k \le r_0$, then
$\Pr(\widehat{r}_{\mathrm{BIC}} = r_0) \to 1$.
\end{enumerate}
\end{proposition}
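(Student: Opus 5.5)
The plan is to treat BIC selection as a sequence of nested likelihood-ratio comparisons between ranks $k-1$ and $k$. Write $\Delta_k := N\bigl[\mathcal{L}_{k-1}(\hat\theta_{k-1})-\mathcal{L}_k(\hat\theta_k)\bigr]$ for the scaled drop in the fixed-noise objective when factor $k$ is added. BIC then prefers rank $k$ over $k-1$ exactly when $\Delta_k > d_k\ln N$. Part (i) reduces to showing that $\Delta_k/(d_k\ln N)$ converges in probability to a deterministic constant proportional to $c_k^2$. Part (ii) follows by applying the same comparison to every $k\le r_0$, together with the overfitting direction $k>r_0$.

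\textbf{Step 1: local quadratic expansion.} Use the regularity in Theorem~\ref{thm:asymptotic_normality}, with a nonsingular Hessian and $\sqrt N$-consistency, to expand $N\mathcal{L}_k$ around the rank-$(k-1)$ pseudo-true point embedded in the rank-$k$ model. Under the local alternative, factor $k$ contributes through its own summand $\ell_k$ of Theorem~\ref{thm:theoremA}. By the block structure of Proposition~\ref{prop:fisher-block}, this summand decouples from the other components up to the remainder $R$. One then expects
\[
\Delta_k = N\,J_k^{\mathrm{SNR}}\,\mathrm{SNR}_k^2\,(1+o_P(1)) + O_P(d_k),
\]
where $J_k^{\mathrm{SNR}}$ is the Fisher information of the composite parameter $\mathrm{SNR}_k=b_k^2\theta_{t,k}^2/\sigma_e^2$. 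It is obtained from $J_k$ by the delta method applied to the map $(\theta_{t,k}^2,b_k)\mapsto\mathrm{SNR}_k$. The $O_P(d_k)$ term is the $\chi^2_{d_k}$-type fluctuation from fitting the extra loading directions $w_k,c_k$ and the two scalars. Because $\sigma_e^2,\sigma_f^2$ are fixed and consistent by Corollary~\ref{cor:consistency}, their estimation error enters only at lower order.

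\textbf{Step 2: the dichotomy.} Substitute $\mathrm{SNR}_k^2=c_k^2 d_k\ln N/N$. This gives
\[
\frac{\Delta_k}{d_k\ln N}\xrightarrow{P} J_k^{\mathrm{SNR}}c_k^2,
\]
since the $O_P(d_k)$ fluctuation is negligible against $d_k\ln N$. Set $c_k^\star := (J_k^{\mathrm{SNR}})^{-1/2}$. If $c_k>c_k^\star$, rank $k$ beats rank $k-1$ with probability tending to one, so $\Pr(\hat r\ge k)\to1$. If $c_k<c_k^\star$, the penalty dominates.

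\textbf{Step 3: the converse needs a global comparison.} To conclude $\hat r<k$ when $c_k<c_k^\star$, one must also rule out jumping directly to some rank $k'>k$. Higher-rank factors are either true factors, whose signal is then weak, or null factors. Each null factor adds a drop of only $O_P(d)$ against a penalty of order $d\ln N$. Summing the pairwise comparisons therefore gives $\mathrm{BIC}(k')>\mathrm{BIC}(k-1)$ with probability tending to one.

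\textbf{Step 4: part (ii).} Here $\mathrm{SNR}_k\gg\sqrt{d_k\ln N/N}$ means $c_k\to\infty$ effectively, so each true factor is retained. For $k>r_0$, the drop $\Delta_k$ is $O_P(d_k)$, which is dominated by $d_k\ln N$, and this is the standard overfitting argument of Remark~\ref{rem:bic_consistency_fixed_noise}. Combining both directions gives $\Pr(\hat r=r_0)\to1$.

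\textbf{Main obstacle.} The hardest step is Step 1. At the null $\mathrm{SNR}_k=0$ the loading $w_k$ is not identified, so the standard quadratic expansion breaks down. The $\chi^2$ bound becomes a supremum over the Stiefel directions, a Davies-type nonregular problem. My first attempt would bound that supremum by the top eigenvalue of the projected residual covariance. A Weyl/Marchenko--Pastur estimate gives a value of order $O_P(d_k)$, possibly with an extra $\ln$ factor, which is still below $d_k\ln N$. Under the local alternative, I would use Davis--Kahan, as in the Step 5 argument for Theorem~\ref{thm:noise_bound}, to show that $\hat w_k$ aligns well enough that the quadratic approximation holds.
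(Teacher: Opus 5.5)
Your plan has the same skeleton as the paper's proof. You compare ranks $k-1$ and $k$ through the likelihood-ratio gain (a noncentral part plus an $O_P(d_k)$ $\chi^2$-type part) against the penalty $d_k\ln N$, read off $c_k^\star$ as an inverse square root of a Fisher information, and obtain (ii) from the usual BIC argument.

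The substantive difference is the local parameter. The paper expands $\ell_k$ in $\rho_k=\sqrt{\mathrm{SNR}_k}$ and takes the noncentrality to be $J_kN\rho_k^2=J_kN\,\mathrm{SNR}_k$. It then substitutes $\rho_k=c_k\sqrt{d_k\ln N/N}$, i.e.\ $\mathrm{SNR}_k=c_k^2d_k\ln N/N$. That is not the alternative in the statement: under the stated one, the paper's noncentrality would be $J_kc_k\sqrt{Nd_k\ln N}\gg d_k\ln N$, and no threshold would appear.

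You take $\mathrm{SNR}_k$ itself as the local parameter, with gain of order $N\,\mathrm{SNR}_k^2$. This is exactly what makes the stated rate critical and the condition in (ii) sharp. It is also the better-founded choice: for fixed $b_k$, $\Sigma$ is affine in $\theta^2_{t,k}=\sigma_e^2\rho_k^2/b_k^2$, so the Fisher information in $\rho_k$ vanishes at $\rho_k=0$.

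Your Step~3 (excluding a jump from $k-1$ to some $k'>k$) and your treatment of the unidentified loading at the null are further additions. The paper covers both only by appeal to the standard Wilks framework.

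One detail to adjust. At $\theta^2_{t,k}=0$ the information block for $(\theta^2_{t,k},b_k)$ is singular, because $b_k$ enters $\Sigma$ only through $b_k\theta^2_{t,k}$ and $b_k^2\theta^2_{t,k}$. So compute $J_k^{\mathrm{SNR}}$ along a fixed direction of approach ($b_k$ held fixed, $\theta^2_{t,k}\downarrow0$), not by a delta method through $J_k^{-1}$.

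There is, however, a gap in Step~1 that the paper's proof shares. Both arguments assume the rank-$(k-1)$ model sits inside the rank-$k$ model at $\mathrm{SNR}_k=0$, and in this parameterization it does not. Because $\sigma_h^2$ is common to all components, $\Sigma_{yy}=C(B^2\Sigma_t+\sigma_h^2I_r)C^\top+\sigma_f^2I_q$ places a $y$-spike of size $\sigma_h^2$ along $c_k$ even when $\theta^2_{t,k}=0$. In the scalar form, $\theta^2_{t,k}=0$ gives $D_k=(\sigma_f^2+\sigma_h^2)\sigma_e^2$ and $\Phi_y(k)=\sigma_h^2/(\sigma_f^2+\sigma_h^2)\neq0$, so $\ell_k$ does not reduce to the rank-$(k-1)$ contribution.

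With $\sigma_h^2>0$ fixed, as (C5) requires, the true covariance along your local alternative converges to one with $k$ spikes in the $y$-view. No rank-$(k-1)$ model with an isotropic noise floor can reproduce that. Hence $\Delta_k$ grows linearly in $N$ for every $c_k>0$, and the $c_k<c_k^\star$ branch of (i) fails.

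Part (ii) survives: over-fitted ranks are then misspecified, so they increase $N\mathcal{L}$ by order $N$ rather than decreasing it by $O_P(d_k)$. To make Step~1 valid, the $k$-th component's $y$-specific variance must also vanish along the alternative. Examples are a component-specific $\sigma^2_{h,k}\to0$ or the boundary $\sigma_h^2=0$, so that the limit genuinely lies in the rank-$(k-1)$ model.
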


\begin{proof}
\textbf{Part (i): Detection threshold.}
We derive the threshold via a Wilks-type expansion of the likelihood ratio for adding the $k$-th factor.

\emph{(a) Embedded submodel and local parameter.}
Using the scalar PPLS decomposition from Theorem~\ref{thm:theoremA}, ``adding the $k$-th factor'' corresponds to embedding the $(k-1)$-factor model $\Theta_{k-1}$ into the $k$-factor model $\Theta_k$ by activating the $k$-th component: $(\theta_{t,k}^2, b_k)$ transitions from $(0, 0)$ to positive values. Define the composite signal-to-noise parameter
\[
\rho_k := \frac{b_k\sqrt{\theta_{t,k}^2}}{\sigma_e},
\]
so that $\mathrm{SNR}_k = \rho_k^2$. The null hypothesis $H_0$: ``factor $k$ does not exist'' corresponds to $\rho_k = 0$, and the local alternative sets $\rho_k$ small but positive.

\emph{(b) Wilks-type expansion.}
Under the local alternative $\rho_k \ne 0$, the likelihood ratio statistic for testing $H_0:\rho_k=0$ satisfies
\[
\Lambda_k := 2N\bigl(\mathcal{L}_{k-1}(\hat\theta_{k-1}) - \mathcal{L}_k(\hat\theta_k)\bigr).
\]
Because the PPLS scalar form separates the $k$-th component from the first $k-1$ components (Theorem~\ref{thm:theoremA}), the contribution of the $k$-th factor to the log-likelihood is $\ell_k(\theta_{t,k}^2, b_k, \sigma_h^2)$, which depends on $\rho_k$ through the projected statistics $Q_x(k)$, $Q_y(k)$, $Q_{xy}(k)$. By a second-order Taylor expansion of $\ell_k$ around $\rho_k = 0$:
\[
\ell_k(\rho_k) = \ell_k(0) + \frac{1}{2}\,\frac{\partial^2\ell_k}{\partial\rho_k^2}\bigg|_{\rho_k=0}\,\rho_k^2 + o_P(\rho_k^2).
\]
Define the Fisher information for the composite parameter $\rho_k$ at $\theta_0$:
\[
J_k := -\mathbb{E}_{\theta_0}\!\left[\frac{\partial^2\ell_k}{\partial\rho_k^2}\right]_{\rho_k=0} > 0.
\]
The positivity $J_k > 0$ follows because $\ell_k$ is the negative log-likelihood of an identifiable model component, and the Fisher information for a non-degenerate parameter is positive definite. Summing over all $N$ observations and invoking the standard Wilks theorem framework, we obtain
\[
\Lambda_k = d_k + J_k\,N\,\rho_k^2 + o_P(1),
\]
where $d_k$ is the number of additional free parameters and $J_k N \rho_k^2 = J_k N\,\mathrm{SNR}_k$ is the non-centrality parameter. Under $H_0$, $\Lambda_k \xrightarrow{d} \chi^2_{d_k}$; under the local alternative, $\Lambda_k$ has a non-central $\chi^2_{d_k}$ distribution with non-centrality parameter $J_k N\,\mathrm{SNR}_k$.

\emph{(c) BIC threshold.}
The BIC increment is
\[
\mathrm{BIC}(k) - \mathrm{BIC}(k-1)
= -\frac{\Lambda_k}{N} + \frac{d_k\ln N}{N}.
\]
Substituting $\mathrm{SNR}_k = c_k^2\,d_k\ln N / N$ (i.e., $\rho_k = c_k\sqrt{d_k\ln N/N}$):
\begin{align*}
\mathrm{BIC}(k) - \mathrm{BIC}(k-1)
&= -\frac{d_k}{N} - J_k\,c_k^2\,\frac{d_k\ln N}{N} + \frac{d_k\ln N}{N} + o_P\!\left(\frac{\ln N}{N}\right) \\
&= \frac{d_k\ln N}{N}\bigl(1 - J_k c_k^2\bigr) + o_P\!\left(\frac{\ln N}{N}\right).
\end{align*}
When $c_k > c_k^\star := 1/\sqrt{J_k}$, the leading term is negative, so BIC favors including factor $k$ with probability tending to $1$. When $c_k < c_k^\star$, the leading term is positive, so BIC rejects factor $k$ with probability tending to $1$. This proves part~(i).

\textbf{Part (ii): Global consistency.}
If $\mathrm{SNR}_k \gg \sqrt{d_k\ln N/N}$ for every $k\le r_0$, then $J_k N\,\mathrm{SNR}_k \gg J_k d_k\ln N$, so the likelihood gain from including each true factor dominates the BIC penalty. For $k > r_0$, the true $\rho_k = 0$, so $\Lambda_k \xrightarrow{d} \chi^2_{d_k}$ and $\Lambda_k/N = O_P(1/N)$, which is dominated by $d_k\ln N/N$. Hence $\Pr(\hat r_{\mathrm{BIC}} = r_0)\to 1$ by the standard BIC consistency argument \citep{Shao1997}.
\end{proof}

\subsection{End-to-end pipeline summary}\label{sec:end_to_end_pipeline}
Algorithm~\ref{alg:full_pipeline} collects the preceding development into a single executable pipeline, and reading it top to bottom recovers the logic of the whole construction. The starting point is statistical: identifiability fixes the parameterization, and the scalar likelihood of Theorem~\ref{thm:theoremA} turns the observed-data objective into $r$ decoupled channels plus fixed noise terms. The spectral pre-estimate then removes the noise blocks from the optimization, so what remains is a well-posed problem on a smooth product manifold whose minimizer is consistent and asymptotically normal. The two solvers in Algorithm~\ref{alg:slm_manifold} and Algorithm~\ref{alg:bcd_slm} are the computational realization of that problem, and the multi-start wrapper, rank selection, shrinkage tuning, and $\kappa$-calibration around them are the engineering layers that make the estimator usable when the rank is unknown and the predictive intervals must be trustworthy on held-out data. The solver invoked inside the loop can be instantiated as either Algorithm~\ref{alg:slm_manifold} or Algorithm~\ref{alg:bcd_slm} without changing any other stage, which is what lets the same statistical guarantees carry over to whichever solver a deployment prefers.

\begin{algorithm}[t]\small
\caption{Fixed-noise PPLS pipeline with optional Gaussianization}
\label{alg:full_pipeline}
\setcounter{ALG@line}{0}
\begin{algorithmic}[1]
\State \textbf{Input:}
\State Raw data matrices $X_{\mathrm{raw}}\in\mathbb{R}^{N\times p}$ and $Y_{\mathrm{raw}}\in\mathbb{R}^{N\times q}$
\State Optional coordinate-wise transforms $\psi,\phi$ (identity if unused)
\State Rank input: either fixed $r<\min\{p,q\}$ (oracle-$r$ mode) or a grid $\mathcal R=\{1,\ldots,r_{\max}\}$ (data-driven mode)
\State Optimization strategy $\mathsf{A}\in\{\mathrm{SLM\text{-}Manifold},\mathrm{BCD\text{-}SLM}\}$
\State Number of initializations $S\in\mathbb{N}$
\State Number of inner CV folds $K_{\mathrm{in}}\in\mathbb{N}$ (e.g., $K_{\mathrm{in}}=5$)
\State Noise variant in data-driven mode: V1 (shared conservative noise) or V2 (rank-adaptive noise)
\State \textbf{Output:}
\State Estimated parameter vector $\hat\theta\in\Omega$, predictor $\tilde\mu(\cdot)$, and calibrated predictive distribution
\State \textbf{Phase 0: Gaussianization (optional)}
\State Apply $\psi$ to $X_{\mathrm{raw}}$ and $\phi$ to $Y_{\mathrm{raw}}$; set $X\leftarrow \psi(X_{\mathrm{raw}})$ and $Y\leftarrow \phi(Y_{\mathrm{raw}})$
\State Center $X$ and $Y$ using training-fold statistics
\State \textbf{Phase 1: Noise pre-estimation and fixed-noise fitting}
\State In data-driven mode, compute conservative $\hat\sigma_e^2(r_{\max})$ and $\hat\sigma_f^2(r_{\max})$ from Eq.~\eqref{eq:sigma_e_hat}
\State Rank selection: evaluate $\mathrm{BIC}(r)$ on the fixed-noise objective over $r\in\mathcal R$ (V1 uses $r_{\max}$-based noise; V2 recomputes noise at each $r$), then set $\hat r=\arg\min_r \mathrm{BIC}(r)$
\State Set working rank to $r\leftarrow \hat r$ in data-driven mode, or keep user-provided $r$ in oracle-$r$ mode
\For{$s=1,\ldots,S$}
  \State Initialize $(W_s^{(0)},C_s^{(0)})$ by random QR; if $s=1$, optionally use an SVD warm start from $\hat S_{xy}$
  \State Initialize $\Sigma_t^{(0)}\leftarrow I_r$, $B^{(0)}\leftarrow I_r$, and $\sigma_h^{2,(0)}\leftarrow 10^{-2}$
  \State $\hat\theta_s\leftarrow \mathsf{A}(W_s^{(0)},C_s^{(0)},\Sigma_t^{(0)},B^{(0)},\sigma_h^{2,(0)};\hat\sigma_e^2,\hat\sigma_f^2)$
\EndFor
\State $s^\star\leftarrow \arg\min_{s\in\{1,\ldots,S\}}\mathcal{L}(\hat\theta_s)$
\State Reorder components once on $\hat\theta_{s^\star}$ so that $\theta_{t,\pi(1)}^2b_{\pi(1)}\ge\cdots\ge\theta_{t,\pi(r)}^2b_{\pi(r)}$ to enforce (C3)
\State \textbf{Phase 2: Prediction and calibration}
\State Run $K_{\mathrm{in}}$-fold inner CV on the outer-training fold to choose $\hat\gamma\in\Gamma$ for Eq.~\eqref{eq:pred_mean}
\For{$i=1,\ldots,n_{\mathrm{val}}$}
  \State Compute $\hat\mu(x_i)$ and $V_i=\hat\Sigma_{y\mid x}(x_i)$ from Eqs.~\eqref{eq:pred_mean} and~\eqref{eq:pred_cov_main}
  \State $r_i\leftarrow y_i-\hat\mu(x_i)$
\EndFor
\State $\hat\kappa\leftarrow (n_{\mathrm{val}}q)^{-1}\sum_{i=1}^{n_{\mathrm{val}}} r_i^\top V_i^{-1}r_i$
\State For $x_{\mathrm{new}}$, report $\tilde\mu(x_{\mathrm{new}})=\hat\mu(x_{\mathrm{new}})$ and intervals from $\mathcal{N}(\hat\mu(x_{\mathrm{new}}),\hat\kappa\,\hat\Sigma_{y\mid x}(x_{\mathrm{new}}))$
\State \Return $\hat\theta\equiv\hat\theta_{s^\star}$, $\tilde\mu(\cdot)$, $\hat\kappa$, $\hat\gamma$
\end{algorithmic}
\end{algorithm}

%% file: sub/sec6_experiments.tex
\section{Experiments}\label{sec:experiments}

Our empirical evaluation is designed to validate both the theoretical guarantees and the practical deployment advantages of the fixed-noise PPLS framework. Through controlled synthetic studies, we first verify the method's computational scalability, the finite-sample accuracy of the noise-subspace estimator, and its robust parameter recovery under high noise and non-Gaussian perturbations. We then evaluate the end-to-end pipeline on two real-world multi-omics benchmarks (TCGA-BRCA and PBMC CITE-seq). In these real-world settings, we position our method against both joint-probabilistic competitors (e.g., PO2PLS) and modern deep nonlinear baselines, explicitly demonstrating its ability to deliver near-nominal uncertainty calibration without post-hoc repair, while maintaining competitive point-prediction accuracy and explicit factor interpretability.

\subsection{Datasets and Compared Methods}\label{sec:exp_setup}
Before presenting results, we fix the experimental setup. This subsection describes the synthetic and real-world datasets, the full catalog of compared methods, and the evaluation metrics for both point prediction and uncertainty calibration.

\subsubsection{Datasets}\label{sec:exp_datasets}
\paragraph{Synthetic data.}
We generate synthetic datasets directly from the PPLS model with true rank $r=5$ and sample size $N=2000$. Two noise regimes are considered: low noise $(\sigma_e^2,\sigma_f^2,\sigma_h^2)=(0.1,0.1,0.05)$ and high noise $(0.5,0.5,0.25)$. For parameter-recovery Monte Carlo we use $M=20$ trials at $p=q=200$ and $M=10$ trials at $p=q=500$; the synthetic prediction and calibration benchmark uses 5-fold CV in the canonical $p=q=200$ setting.

\paragraph{TCGA-BRCA.}
The real-data breast-cancer benchmark is the TCGA-BRCA multi-omics release distributed by \citet{brca}. In the prediction benchmark used here, the existing fold-wise preprocessing pipeline reduces the working feature sets to $p=q=60$ per outer fold. We evaluate all methods under matched 5-fold outer CV splits, with fold-wise preprocessing, feature screening, and hyperparameter selection fitted on training data only.

\paragraph{PBMC CITE-seq.}
We use the PBMC CITE-seq reference dataset of \citet{Hao2021} (see also \citealt{Stuart2019}), containing about $161{,}000$ cells, $q=228$ proteins, and the top $p=2000$ highly variable genes after preprocessing. RNA counts are library-size normalized and log-transformed; ADT counts are Centered Log-Ratio (CLR) normalized. Within each outer fold, we further screen RNA features to the top 500 genes using a hybrid variance/cross-view-correlation score, and then center/standardize both views using training-fold statistics only. All CITE-seq experiments use 3-fold outer CV.

\subsubsection{Compared Methods}\label{sec:compared_methods}
We position our fixed-noise solvers (SLM-Manifold and BCD-SLM) against three groups of baselines. First, within the PPLS family, we compare against classical EM/ECM and the penalty-based SLM-Interior \citep{hu2025slm}. Second, we evaluate PO2PLS \citep{PO2PLS}. Since PO2PLS captures view-specific orthogonal variation but lacks a native predictive posterior, we restrict it to point-prediction comparisons to explicitly quantify the trade-off between fuller decomposition and native predictive uncertainty. Finally, we benchmark against standard linear models (PLSR, Ridge \citep{HoerlKennard1970}) and nonlinear deep predictors (DCCA \citep{Andrew2013}, KCCA \citep{Hardoon2004}, MC-Dropout \citep{GalGhahramani2016}, Deep Ensembles \citep{Lakshminarayanan2017}), specifically contrasting our out-of-the-box probabilistic calibration against modern baselines that rely on post-hoc recalibration. We summarize the full method catalog and the technical differences among these methods in Appendix~\ref{app:method_catalog_appendix}.
The primary comparison is internal to the PPLS family: the key differences are noise-estimator quality (full-spectrum vs. noise-subspace) and orthogonality treatment (interior-point penalty vs. exact manifold retraction). Among the two fixed-noise solvers, we use BCD-SLM as the default implementation because it targets the same objective as SLM-Manifold while converging substantially faster in the synthetic benchmarks. Unless otherwise stated, synthetic parameter-recovery studies use oracle-$r$ mode, whereas rank-selection studies use data-driven mode with both V1 and V2 variants reported.

\subsubsection{Evaluation Metrics}\label{sec:exp_metrics}
For point prediction we report Mean Squared Error (MSE), Mean Absolute Error (MAE), and $R^2$. For uncertainty calibration at nominal miscoverage level $\alpha$, we use element-wise symmetric predictive intervals
\[
I_{ij}^{(\alpha)}=\Bigl[\hat\mu_j(x_i)\pm z_{1-\alpha/2}\sqrt{\hat\kappa\,[\hat\Sigma_{y\mid x}(x_i)]_{jj}}\Bigr]
\]
where $\hat\mu_j(x_i)$ is the $j$th coordinate of the predictive mean at sample $x_i$, $[\hat\Sigma_{y\mid x}(x_i)]_{jj}$ is the corresponding predictive variance, and $z_{1-\alpha/2}$ is the $(1-\alpha/2)$ quantile of the standard normal distribution. We then define fold-$k$ empirical coverage by
\[
\hat c_k(\alpha)=\frac{1}{n_k q}\sum_{i=1}^{n_k}\sum_{j=1}^{q}\mathbf 1\!\left\{y_{ij}\in I_{ij}^{(\alpha)}\right\}.
\]
Here $n_k:=|\mathcal V_k|$ is the number of validation samples in fold $k$, and $K_{\mathrm{out}}$ below denotes the total number of outer folds. We summarize calibration either by reporting the fold-wise coverages themselves or by the absolute calibration error
\[
\mathrm{ACE}(\alpha)=\left|\bar c(\alpha)-(1-\alpha)\right|,
\qquad
\bar c(\alpha)=\frac{1}{K_{\mathrm{out}}}\sum_{k=1}^{K_{\mathrm{out}}}\hat c_k(\alpha),
\]
where smaller ACE indicates closer nominal alignment. As a descriptive statistical check, we also test $H_0:p_\alpha=1-\alpha$ by a two-sided exact binomial test on pooled entry-wise coverage indicators; because outputs within the same sample are correlated, we treat these $p$-values as diagnostics rather than decision rules and emphasize effect sizes (coverage gaps / ACE). For synthetic parameter recovery we additionally report the parameter-estimation MSE, i.e.\ the mean squared error of the sign- and order-aligned estimates of $(W,C,B,\Sigma_t,\sigma_h^2)$. For the association-screening diagnostic we count detected gene--protein pairs at threshold $\alpha$, with the formal notation deferred to the appendix.

\subsection{Implementation Details}\label{sec:exp_implementation}
\paragraph{Protocol sensitivity.}
All SLM-based experiments use a shared multi-start protocol, namely $S=8$ feasible starts with one optional SVD warm start. The candidate shrinkage grids $\Gamma_{\mathrm{synth}},\Gamma_{\mathrm{BRCA}},\Gamma_{\mathrm{CITE}}$ used by adaptive predictive scaling are reported in the appendix.

\paragraph{Fairness and reproducibility.}
To ensure strictly fair comparisons, all evaluated methods are subject to identical data-leakage controls, including shared outer cross-validation folds, identical fold-wise preprocessing, and strictly training-only feature screening. In synthetic benchmarks, all PPLS-family solvers are evaluated from the exact same Monte Carlo initializations. Algorithmic stopping rules (e.g., tolerances and iteration caps) are pre-specified and strictly fixed across runs, with explicit configurations given in the appendix. Finally, to accurately reflect empirical stability, we report repeated-run variance: Monte Carlo standard deviations for synthetic data, and fold-level dispersion for the real-world benchmarks.

\paragraph{Cross-validation and evaluation protocol.}
To rigorously evaluate out-of-sample performance, we employ nested cross-validation across all benchmarks (5-fold for synthetic and TCGA-BRCA; 3-fold for the larger PBMC CITE-seq). For standard predictive evaluations, we report the oracle or pre-specified rank to isolate the optimization and calibration performance of the fixed-noise estimators. When evaluating data-driven rank selection, we leverage the inner CV splits over dataset-specific rank grids to test both the shared (V1) and adaptive (V2) noise protocols.

\paragraph{Baseline implementations and scalability.}
For the nonlinear competitors (DCCA, KCCA, MC-Dropout, Deep Ensembles), we use standard deep architectures and Nystr\"om approximations, with all hyperparameters (e.g., learning rates, regularization) tuned via inner CV. Due to the prohibitive computational cost of deep ensembles and kernel methods on large-scale omics, these baselines use a 10{,}000-cell training subsample on the PBMC CITE-seq benchmark, though still evaluated on the full test folds. In contrast, our fixed-noise PPLS pipeline scales to the full dataset on a standard local workstation; all reported experiments were run on a single Windows machine (AMD Ryzen AI 9 HX 370 CPU, 31.1\,GB RAM) without cluster or distributed execution. Complete network architectures, hyperparameter grids, and hardware specifications are deferred to the appendix.

\subsection{Main Results}\label{sec:exp_main_results}
We organize the main results from controlled synthetic studies to the two real-world multi-omics benchmarks. The synthetic experiments isolate parameter recovery, predictive calibration, the finite-sample noise bound, and rank selection; the TCGA-BRCA and PBMC CITE-seq results then evaluate end-to-end prediction and calibration against all baselines.

\subsubsection{Synthetic Parameter Recovery}\label{sec:exp_synthetic}
Table~\ref{tab:parameter_mse} reports parameter-recovery MSE for both dimensional settings. SLM-Manifold and BCD-SLM are numerically almost identical because they optimize the same fixed-noise objective with different update mechanics. This agreement suggests that the landscape is relatively benign in the regimes studied here. In high noise, the fixed-noise solvers consistently improve over EM/ECM on loading recovery $(W,C)$ and, for $p=q=200$, also on $(B,\Sigma_t,\sigma_h^2)$; for the $p=q=500$ high-noise setting, SLM-Manifold and BCD-SLM retain the advantage on $(W,C,B)$, whereas EM/ECM achieve lower MSE on $\sigma_h^2$ (Table~\ref{tab:parameter_mse}, lower block).

\input{generated/tables/tab_parameter_mse}

Table~\ref{tab:simulation_runtime_scaling} reports wall-clock runtimes. In the larger $p=q=500$ setting, both fixed-noise manifold solvers remain in the seconds-to-tens-of-seconds range, whereas SLM-Interior and EM/ECM can move into three- to five-digit seconds. Appendix Table~\ref{tab:algorithm_convergence} provides the complementary iteration-count statistics for the canonical $p=q=200$ setting, where BCD-SLM requires median 45 outer iterations versus 246 for SLM-Manifold. The two fixed-noise solvers reach the same statistical solution, but BCD-SLM converges more quickly and predictably.

\input{generated/tables/tab_simulation_runtime_scaling}

\begin{figure}[htbp]
\centering
\includegraphics[width=0.75\textwidth]{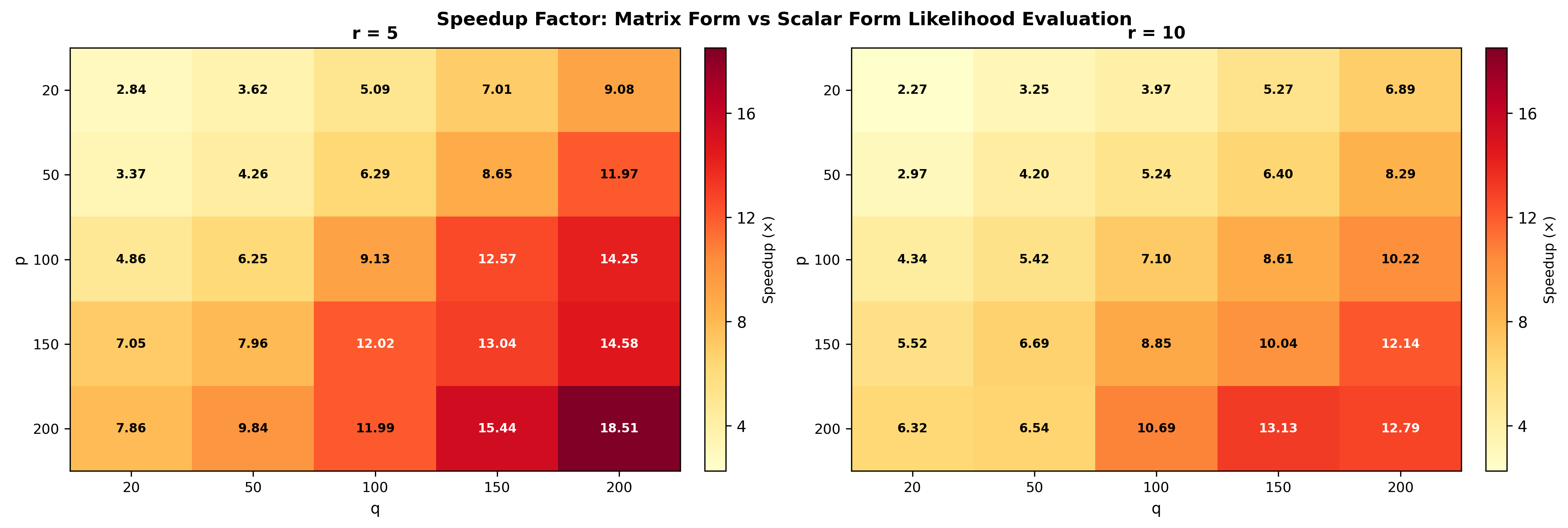}
\caption{Empirical speedup (wall-clock ratio of matrix-form to scalar-form likelihood evaluation) across a grid of $(p,q)$ (vertical axis) and $r$ (horizontal axis). Left panel: $r=5$; right panel: $r=10$. Darker cells indicate larger speedups, confirming the $O(p/r)$ acceleration from Section~\ref{sec:slm_algorithm}.}
\label{fig:speed_comparison}
\end{figure}

\IfFileExists{artifacts/simulation_convergence/convergence_curves.png}{%
\begin{figure}[t]
\centering
\includegraphics[width=0.82\textwidth]{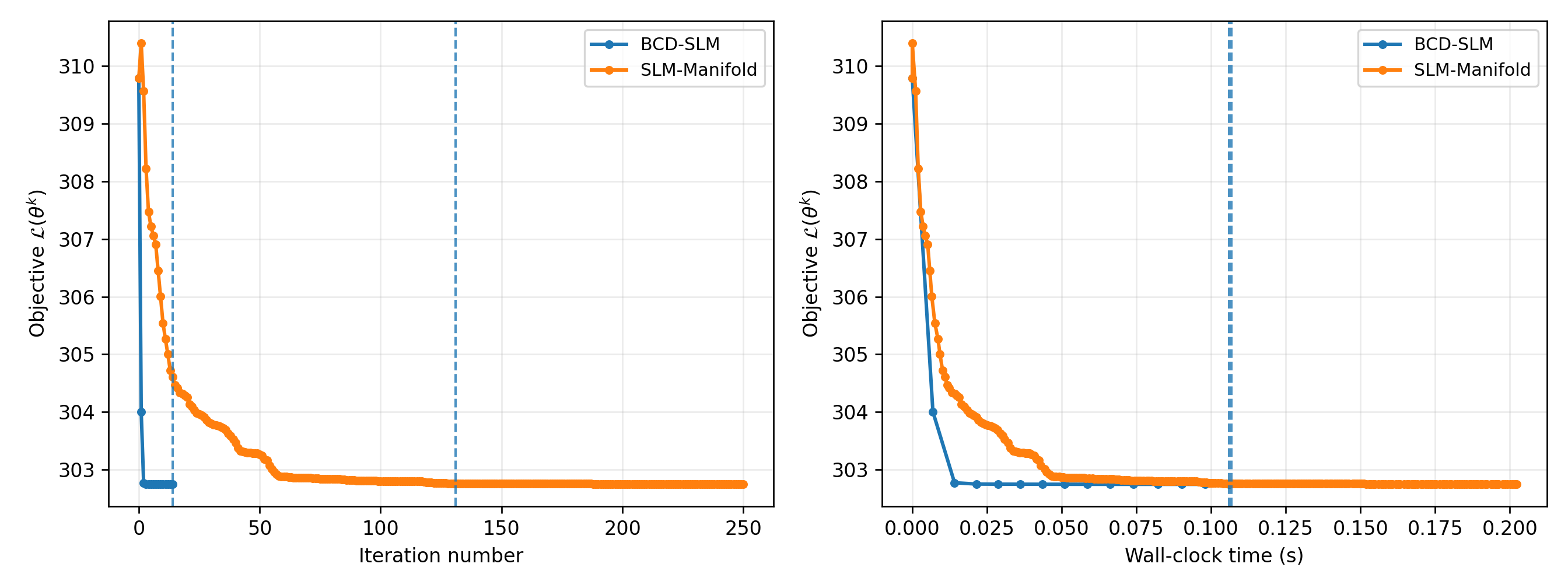}
\caption{Single-run convergence trajectories in the canonical synthetic low-noise setting ($p=q=200$, $r=5$) with matched initialization and stopping tolerance: (a) objective value versus iteration number, and (b) objective value versus wall-clock time. Dashed vertical lines mark the first hitting time of the convergence threshold for each method.}
\label{fig:convergence_curves}
\end{figure}
}{}

\subsubsection{Synthetic Prediction and Calibration}\label{sec:exp_pred_synth}
All methods achieve nearly identical point-prediction accuracy in the Gaussian synthetic benchmark: the main linear baselines are essentially tied at MSE $\approx 0.106$, with SLM-Manifold-Adaptive, EM, and PLSR all at 0.1058 and Ridge only slightly worse at 0.1079 (Appendix Table~\ref{tab:pred_synth_metrics}). Calibration is therefore the discriminating criterion; Table~\ref{tab:pred_synth_calib} reports the results.

\input{generated/tables/tab_prediction_synth_calibration}

The adaptive-uncertainty table is reported for SLM-Manifold-Adaptive; the parameter-recovery and CITE-seq tables show that BCD-SLM reaches the same fixed-noise solution up to negligible numerical differences. In that sense, the synthetic calibration advantage should be interpreted as a property of the fixed-noise PPLS framework rather than of one optimizer only. Relative to EM, the fixed-noise estimator stays systematically closer to nominal coverage levels, hence achieves smaller ACE.

\paragraph{Non-Gaussian noise robustness.}
Detailed results appear in Appendix~\ref{app:non_gaussian_noise} (Table~\ref{tab:non_gaussian_prediction}); the main conclusion is that Gaussianization is optional and Rank-INT is conservative in this benchmark.

\subsubsection{Empirical check of Theorem~\ref{thm:noise_bound}}\label{sec:exp_theorem1}
\IfFileExists{artifacts/theorem1_signal_strength/noise_error_vs_signal_strength.png}{%
\begin{figure}[t]
\centering
\includegraphics[width=0.62\textwidth]{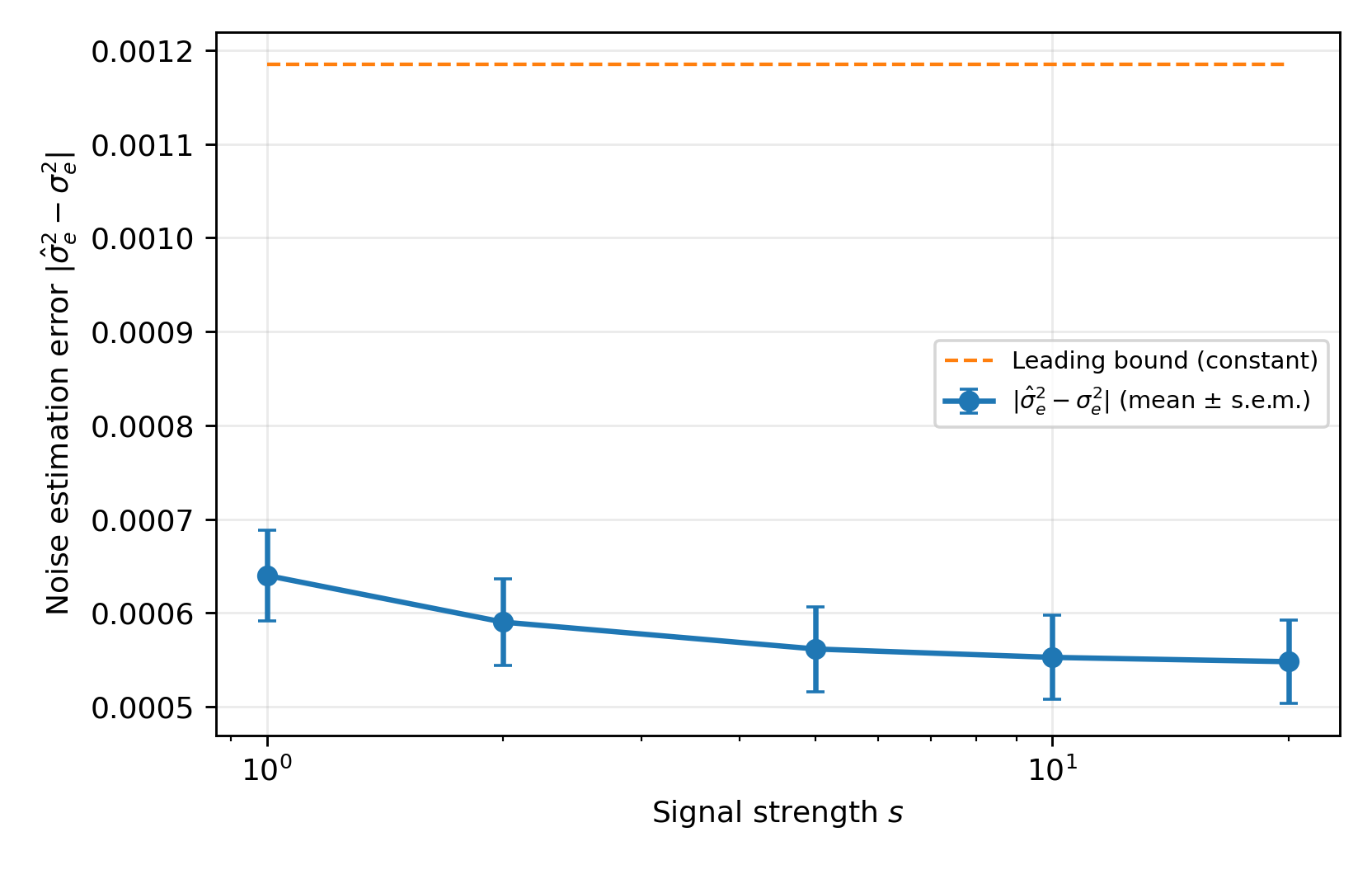}
\caption{Synthetic verification of Theorem~\ref{thm:noise_bound} under fixed noise level $\sigma_e^2=0.5$: absolute noise-estimation error $|\hat\sigma_e^2-\sigma_e^2|$ versus signal strength $s\in\{1,2,5,10,20\}$ (log-scale x-axis). Error bars are Monte Carlo standard errors over $M=80$ matched-seed runs; the dashed line is the leading bound in Eq.~\eqref{eq:noise_error_bound}.}

\label{fig:noise_signal_theorem1}
\end{figure}
}{}
The curve remains approximately flat over more than one order of magnitude in $s$, with only a mild downward drift that is consistent with the smaller eigengap-dependent second-order correction; the dominant leading term in Eq.~\eqref{eq:noise_error_bound} is signal-strength independent.

Appendix~\ref{app:hu-bias-mc} extends this comparison to wider $(p,N)$ ranges. Figure~\ref{fig:noise_ablation_vary_p} sweeps $p\in\{50,100,200,500,1000\}$ at fixed $N\!=\!2000$ and $N\in\{200,\ldots,10000\}$ at fixed $p\!=\!200$; in both regimes the noise-subspace estimator maintains consistently lower absolute error, and the gap widens with $p$ because the full-spectrum bias floor scales as $O(r/p)$ (Proposition~\ref{prop:hu-bias}).

\subsubsection{Synthetic Rank Selection}\label{sec:exp_rank}

\input{generated/tables/tab_rank_selection_summary}

Table~\ref{tab:rank_selection_summary} reports synthetic rank-selection results for eigenvalue gap, BIC, CV-NLL, and CV-MSE under data-driven mode. We report both V1 (shared conservative noise from $r_{\max}$) and V2 (rank-adaptive noise) variants. The broad picture matches the fixed-noise rank-selection analysis in Section~\ref{sec:rank_selection_fixed_noise}: in low noise, BIC and CV-NLL recover the true rank $r=5$ in all trials for both modes, while the eigenvalue-gap heuristic still occasionally under-selects and CV-MSE mildly over-selects to $\hat r=6$. In high noise, BIC and the eigenvalue-gap rule under-select because weak factors lie near the model-specific detectability threshold in Proposition~\ref{prop:weak-factor}, so their incremental likelihood gain is too small relative to complexity penalties and finite-sample Marchenko--Pastur bulk broadening blurs weak spikes. CV-MSE remains the most prediction-aligned criterion, recovering $r=5$ in 85\% of high-noise trials for both modes; its occasional $\hat r=6$ outcomes are consistent with the known mild over-selection tendency of cross-validation \citep{Shao1997}. The main V1/V2 difference appears in high-noise CV-NLL: V1 selects $r=5$ in all 20 trials, whereas V2 shifts to $\hat r=4$ in 18/20 trials, indicating that rank-adaptive noise re-estimation can make likelihood-based selection more conservative near the weak-factor boundary. Empirically, that conservativeness has little downstream effect on prediction MSE, whereas stronger under-selection removes predictive signal.

\IfFileExists{artifacts/synthetic_rank_selection/rank_vs_downstream_high_noise.png}{%
\begin{figure}[t]
\centering
\includegraphics[width=0.82\textwidth]{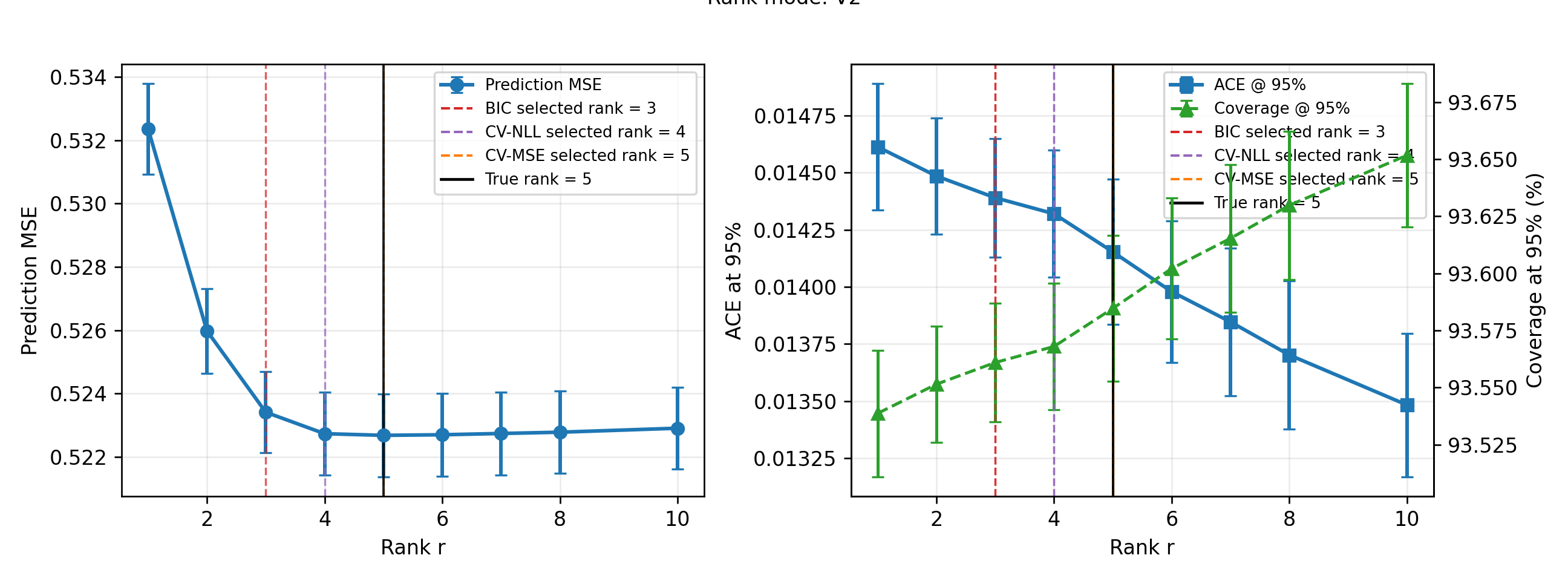}
\caption{Downstream sensitivity to rank misspecification in the synthetic high-noise setting ($p=q=200$, true $r=5$): left panel shows prediction MSE versus fitted rank, and right panel shows ACE/Coverage at 95\% nominal versus fitted rank. Dashed vertical lines mark the mode-selected ranks of BIC, CV-NLL, and CV-MSE; the solid vertical line marks the true rank.}
\label{fig:rank_downstream_high_noise}
\end{figure}
}{}

\subsubsection{TCGA-BRCA Results}\label{sec:exp_brca}

Table~\ref{tab:pred_brca_metrics} reports point-prediction results on TCGA-BRCA. SLM-Manifold-Adaptive reaches Ridge-level point-prediction accuracy with only $r=3$ components, while Table~\ref{tab:pred_brca_calib} shows that its predictive intervals remain close to nominal coverage. PO2PLS achieves MSE $=0.476$ with its auto-selected orthogonal dimensions, falling between PPLS-EM ($0.478$) and PLSR ($0.465$) but not surpassing Ridge ($0.451$) or SLM-Manifold-Adaptive ($0.450$). Combined with PO2PLS's lack of a native predictive posterior, the TCGA-BRCA results illustrate a recurring trade-off: PO2PLS spends modeling capacity on view-specific orthogonal structure, while our fixed-noise pipeline spends it on calibrated cross-view prediction. The CITE-seq results below reinforce this reading on a much larger feature space.

\input{generated/tables/tab_prediction_brca_metrics}
\input{generated/tables/tab_prediction_brca_calibration}

Small cross-method rank reversals between MSE and $R^2$ can occur because both metrics are averaged over folds with different response variances; this does not contradict the fold-wise identity $R^2=1-\mathrm{MSE}/\mathrm{Var}(y)$. At the 95\% nominal level, mean coverage is 94.87\% (ACE $0.13\%$), with only mild conservativeness at larger $\alpha$. A similar pattern also appears in the deep-baseline comparison of Table~\ref{tab:bayesian_deep_comparison}: SLM-Manifold-Adaptive attains ACE 2.71\% at MSE 0.4498, whereas the raw deep uncertainty baselines remain severely miscalibrated and need post-hoc recalibration to approach nominal coverage.

\subsubsection{CITE-seq Results}\label{sec:exp_citeseq}
Table~\ref{tab:citeseq_pred_summary} summarizes point-prediction results on PBMC CITE-seq.

\input{generated/tables/tab_citeseq_prediction_summary}

Nonlinear methods achieve lower point MSE on PBMC CITE-seq, but pay two concrete costs that the linear probabilistic pipeline avoids. First, deep baselines require an additional recalibration stage (temperature scaling \citep{Guo2017}, isotonic \citep{ZadroznyElkan2002}, or conformal \citep{VovkGammermanShafer2005}) inside each outer fold to reach usable calibration (Table~\ref{tab:bayesian_deep_comparison}); this stage consumes training data, introduces method-specific tuning, and inherits the exchangeability assumption of the chosen recalibrator. Second, nonlinear latent spaces are not directly interpretable as cross-view factors, which limits downstream biological screening. Under this positioning, SLM-Manifold-Adaptive delivers near-nominal coverage (94.74\%, 90.50\%, and 82.13\% at nominal 95\%, 90\%, and 80\%) and competitive MSE (0.2586 vs.~KCCA's 0.2462), with BCD-SLM numerically almost identical (0.2589 MSE); the 5.0\% relative MSE gap to KCCA buys model-native uncertainty, interpretable factors, and a recalibration-free deployment stack. Per-fold SLM-Manifold-Adaptive MSEs at $r=30$ are 0.2581, 0.2590, and 0.2588.

PO2PLS yields MSE $=0.475$ on CITE-seq, far above SLM-Manifold-Adaptive ($0.259$) and approaching the trivial-mean baseline ($R^2\approx 0.004$). The gap relative to TCGA-BRCA (5.5\% Ridge gap there vs.\ essentially the full-signal gap here) is consistent with two compounding factors: (i)~the auto-selected orthogonal dimensions ($r_x=r_y=5$) introduce $O((p+q)(r_x+r_y))$ extra free parameters, which scales unfavorably with the much larger $p=2000$ on CITE-seq; and (ii)~the cross-view signal in CITE-seq is more concentrated in the joint subspace, so view-specific orthogonal components consume capacity without recovering predictive signal.

Taken together with TCGA-BRCA, the two benchmarks support a consistent reading: PO2PLS is the right tool when view-specific structure itself is the inferential target, whereas the fixed-noise PPLS pipeline is the right tool when cross-view prediction with calibrated uncertainty is the target. A complementary selective-prediction analysis is reported in the appendix; its gains are modest, reflecting the near-homoscedastic structure of the linear-Gaussian model.

Figure~\ref{fig:realdata_pareto_dual} summarizes the MSE--calibration trade-off on both benchmarks; the preferred region is the left-lower corner. On TCGA-BRCA, the absolute MSE range among probabilistic methods is narrow (0.450--0.478, i.e.\ a 6.2\% relative spread); PO2PLS at 0.476 sits inside this band but is omitted from the calibration axis because it lacks a native predictive posterior, so calibration separates methods more sharply than point MSE. Triangle markers denote methods without a native 95\% ACE (N/A), including non-probabilistic baselines and CCA; BCD-SLM is shown as a point-only reference.

\IfFileExists{artifacts/prediction_brca/realdata_mse_calibration_pareto_dual.png}{%
\begin{figure}[t]
\centering
\includegraphics[width=0.92\textwidth]{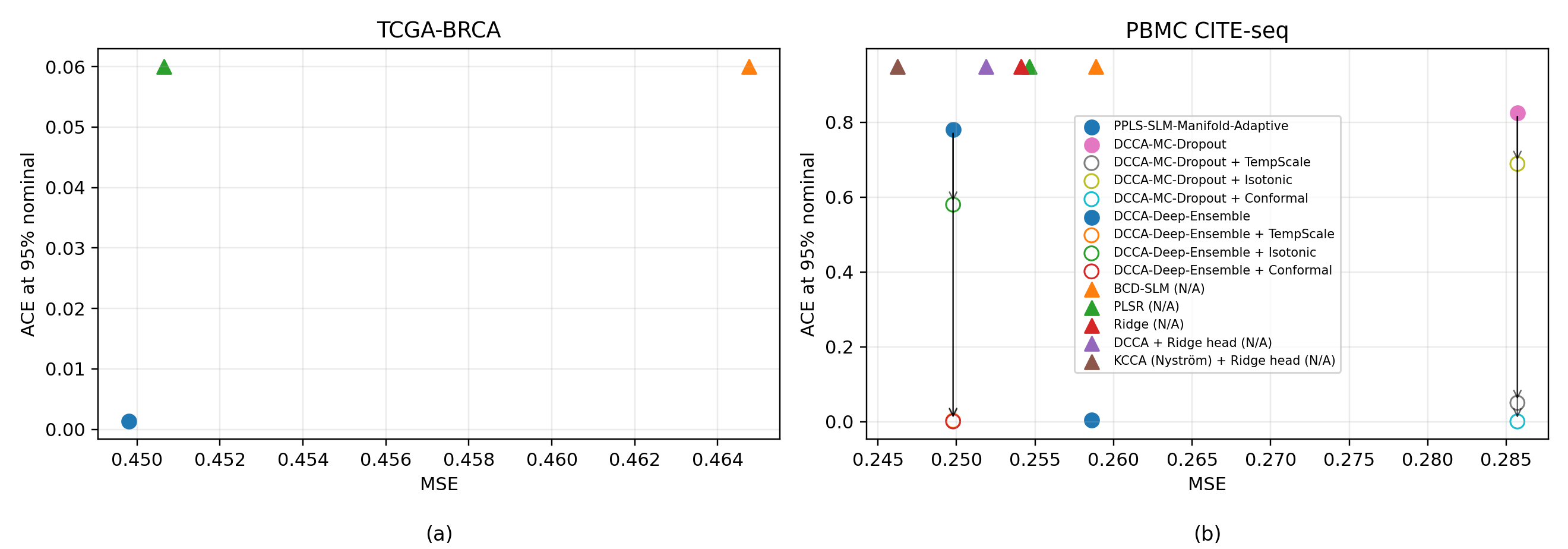}
\caption{MSE--calibration Pareto trade-offs on TCGA-BRCA and PBMC CITE-seq.}
\label{fig:realdata_pareto_dual}
\end{figure}
}{}

\subsubsection{Comparison with Bayesian Deep Baselines}\label{sec:exp_bayes_deep}
We next ask how the model-native uncertainty of the fixed-noise pipeline compares with Bayesian deep baselines, both before and after post-hoc recalibration, on the two real-data benchmarks (Table~\ref{tab:bayesian_deep_comparison}).

\IfFileExists{generated/tables/tab_bayesian_deep_comparison.tex}{\input{generated/tables/tab_bayesian_deep_comparison}}{\textit{[Bayesian deep comparison table will be auto-generated after running the corresponding experiments.]}}

Raw deep uncertainty is severely miscalibrated on both real-data benchmarks (ACE around 60--75\%), so deployment quality depends heavily on post-hoc repair. Post-hoc recalibration can numerically reduce ACE, especially conformal recalibration \citep{VovkGammermanShafer2005}, but this comes with extra assumptions and an added calibration stage inside each outer fold. In contrast, the fixed-noise PPLS pipeline outputs model-native uncertainty from a fitted conditional Gaussian law with $\kappa$-calibrated covariance. As Figure~\ref{fig:realdata_pareto_dual} shows, the linear probabilistic model does not win the absolute point-prediction frontier, yet it achieves a much cleaner accuracy--calibration trade-off without requiring an external recalibration module.

For conformal recalibration in particular, the guarantee is marginal coverage under exchangeability rather than per-sample conditional calibration \citep{VovkGammermanShafer2005,ShaferVovk2008}. In biological datasets with batch effects or emerging cell states, exchangeability can be fragile, so very small ACE values obtained only after a dedicated recalibration split should be interpreted cautiously. The recalibration stage also consumes training data, adds method-specific tuning choices, and lengthens the pipeline.

\subsection{Ablation Studies}\label{sec:exp_ablation}
Our ablation analysis covers three aspects of the pipeline: sensitivity to model selection through rank selection, robustness to non-Gaussian noise through Gaussianization, and the value of the recovered factors for downstream association screening. The first two are integrated into the main synthetic study above; we report the association-screening diagnostic next.

\paragraph{Association-screening diagnostic on TCGA-BRCA.}
For the association-screening metric, a gene--protein pair is counted as detected at threshold $\alpha$ if both the gene-to-score and protein-to-score correlations are significant at level $\alpha$; Appendix~\ref{app:assoc_notation} gives the formal notation. Table~\ref{tab:Npairs} summarizes detected-pair counts on TCGA-BRCA. At stringent thresholds, the fixed-noise manifold solution detects more pairs than EM (for example, 109138 vs.~104608 at $p<10^{-7}$), which is consistent with the better recovery of cross-view coupling parameters seen in the synthetic study.

\input{generated/tables/tab_detected_pairs}

\paragraph{Summary of experimental findings.}
Across the synthetic and real-data benchmarks, the fixed-noise PPLS framework delivers near-nominal calibration without post-hoc repair, remains practical at $p=q=500$, and stabilizes parameter recovery under high noise. The synthetic studies also show robustness to moderate non-Gaussianity and rank-selection behavior consistent with the fixed-noise analysis. On PBMC CITE-seq, nonlinear methods achieve better point accuracy, but the linear pipeline provides calibrated uncertainty, interpretable factors, and a simpler deployment story.

%% file: generated/tables/tab_parameter_mse.tex
\begin{sidewaystable}
\centering
\setlength{\tabcolsep}{2pt}
\renewcommand{\arraystretch}{1.2}
\scriptsize
\caption{Parameter estimation MSE ($\times 10^{2}$, in squared parameter units), with mean and standard deviation (SD) shown in separate columns over Monte Carlo trials, under low/high noise ($r=5$, $N=2000$). Upper block: $p=q=200$, $M=20$; lower block: $p=q=500$, $M=10$.}
\label{tab:parameter_mse}
\begin{tabular*}{0.93\textheight}{@{\extracolsep{\fill}}l*{20}{c}@{}}
\toprule
& \multicolumn{10}{c}{\textbf{Low noise}} & \multicolumn{10}{c}{\textbf{High noise}} \\
\cmidrule(lr){2-11}\cmidrule(l){12-21}
& \multicolumn{2}{c}{$\mathrm{MSE}_W$} & \multicolumn{2}{c}{$\mathrm{MSE}_C$} & \multicolumn{2}{c}{$\mathrm{MSE}_B$} & \multicolumn{2}{c}{$\mathrm{MSE}_{\Sigma_t}$} & \multicolumn{2}{c}{$\mathrm{MSE}_{\sigma_h^2}$} & \multicolumn{2}{c}{$\mathrm{MSE}_W$} & \multicolumn{2}{c}{$\mathrm{MSE}_C$} & \multicolumn{2}{c}{$\mathrm{MSE}_B$} & \multicolumn{2}{c}{$\mathrm{MSE}_{\Sigma_t}$} & \multicolumn{2}{c}{$\mathrm{MSE}_{\sigma_h^2}$} \\
\cmidrule(lr){2-3}\cmidrule(lr){4-5}\cmidrule(lr){6-7}\cmidrule(lr){8-9}\cmidrule(lr){10-11}\cmidrule(lr){12-13}\cmidrule(lr){14-15}\cmidrule(lr){16-17}\cmidrule(lr){18-19}\cmidrule(lr){20-21}
\textbf{Method} & Mean & SD & Mean & SD & Mean & SD & Mean & SD & Mean & SD & Mean & SD & Mean & SD & Mean & SD & Mean & SD & Mean & SD \\
\midrule
\multicolumn{21}{l}{\textbf{Upper block: $p=q=200$, $M=20$}} \\
SLM-Manifold & 0.01 & 0.00 & 0.02 & 0.00 & 0.04 & 0.03 & 0.06 & 0.04 & 0.00 & 0.00 & 0.08 & 0.01 & 0.06 & 0.01 & 0.37 & 0.25 & 0.84 & 0.23 & 0.26 & 0.21 \\
BCD-SLM & 0.01 & 0.00 & 0.02 & 0.00 & 0.04 & 0.03 & 0.06 & 0.04 & 0.00 & 0.00 & 0.08 & 0.01 & 0.06 & 0.01 & 0.37 & 0.25 & 0.84 & 0.23 & 0.26 & 0.21 \\
SLM-Interior & 0.23 & 0.08 & 0.22 & 0.09 & 2.52 & 1.27 & 1.72 & 0.58 & 0.43 & 0.35 & 0.22 & 0.07 & 0.17 & 0.07 & 1.64 & 1.21 & 1.17 & 0.27 & 0.45 & 0.33 \\
SLM-Oracle & 0.01 & 0.00 & 0.02 & 0.00 & 0.04 & 0.03 & 0.06 & 0.04 & 0.00 & 0.00 & 0.08 & 0.01 & 0.06 & 0.01 & 0.36 & 0.26 & 0.81 & 0.23 & 0.22 & 0.21 \\
EM & 0.02 & 0.01 & 0.03 & 0.01 & 0.06 & 0.05 & 0.06 & 0.04 & 0.01 & 0.01 & 0.09 & 0.01 & 0.06 & 0.01 & 9.89 & 2.39 & 5.47 & 1.32 & 5.00 & 1.02 \\
ECM & 0.02 & 0.01 & 0.03 & 0.01 & 0.05 & 0.05 & 0.07 & 0.04 & 0.01 & 0.01 & 0.09 & 0.01 & 0.06 & 0.01 & 6.19 & 4.13 & 2.60 & 1.52 & 3.52 & 2.20 \\
\midrule
\multicolumn{21}{l}{\textbf{Lower block: $p=q=500$, $M=10$}} \\
SLM-Manifold & 0.01 & 0.00 & 0.01 & 0.00 & 0.22 & 0.06 & 0.30 & 0.08 & 0.30 & 0.04 & 0.08 & 0.01 & 0.06 & 0.02 & 8.17 & 1.40 & 11.10 & 0.71 & 17.22 & 1.38 \\
BCD-SLM & 0.01 & 0.00 & 0.01 & 0.00 & 0.22 & 0.06 & 0.30 & 0.08 & 0.30 & 0.04 & 0.08 & 0.01 & 0.06 & 0.02 & 8.09 & 1.33 & 11.10 & 0.70 & 17.19 & 1.39 \\
SLM-Interior & 0.06 & 0.03 & 0.10 & 0.02 & 5.06 & 0.74 & 4.70 & 0.86 & 4.00 & 1.35 & 0.11 & 0.01 & 0.10 & 0.02 & 5.93 & 0.82 & 9.87 & 1.14 & 17.80 & 3.24 \\
EM & 0.01 & 0.00 & 0.01 & 0.00 & 0.17 & 0.11 & 0.14 & 0.05 & 0.00 & 0.00 & 0.08 & 0.01 & 0.07 & 0.02 & 11.81 & 1.02 & 11.01 & 2.33 & 5.31 & 0.06 \\
ECM & 0.01 & 0.00 & 0.02 & 0.00 & 0.19 & 0.11 & 0.15 & 0.06 & 0.00 & 0.01 & 0.08 & 0.01 & 0.06 & 0.01 & 10.55 & 1.23 & 9.33 & 1.62 & 5.16 & 0.05 \\
\bottomrule
\end{tabular*}
\end{sidewaystable}

%% file: generated/tables/tab_simulation_runtime_scaling.tex
\begin{table*}[t]
\centering
\setlength{\tabcolsep}{6pt}
\renewcommand{\arraystretch}{1.25}
\small
\caption{Total runtime per Monte Carlo trial across the completed simulation dimensions; the mean and standard deviation (SD) over trials are shown in separate columns (all values in seconds). SLM-Oracle is diagnostic only.}
\label{tab:simulation_runtime_scaling}
\begin{tabular*}{\textwidth}{@{\extracolsep{\fill}}lcccccccc@{}}
\toprule
\textbf{Method} & \multicolumn{4}{c}{\textbf{$p=q=200$, $M=20$}} & \multicolumn{4}{c}{\textbf{$p=q=500$, $M=10$}} \\
\cmidrule(lr){2-5}\cmidrule(lr){6-9}
 & \multicolumn{2}{c}{Low noise} & \multicolumn{2}{c}{High noise} & \multicolumn{2}{c}{Low noise} & \multicolumn{2}{c}{High noise} \\
\cmidrule(lr){2-3}\cmidrule(lr){4-5}\cmidrule(lr){6-7}\cmidrule(lr){8-9}
 & Mean & SD & Mean & SD & Mean & SD & Mean & SD \\
\midrule
SLM-Manifold & 7.82 & 3.95 & 6.49 & 2.71 & 12.66 & 2.60 & 33.81 & 33.28 \\
BCD-SLM & 13.44 & 7.18 & 5.78 & 2.65 & 31.72 & 7.86 & 23.13 & 4.24 \\
SLM-Interior & 785.18 & 150.81 & 606.21 & 216.68 & 5406.66 & 2321.61 & 5674.77 & 863.44 \\
SLM-Oracle & 7.55 & 4.13 & 6.55 & 3.33 & 22.31 & 18.55 & 19.82 & 1.76 \\
EM & 168.88 & 174.66 & 194.61 & 172.79 & 11209.53 & 11189.86 & 1239.95 & 411.90 \\
ECM & 238.37 & 156.71 & 280.10 & 174.49 & 19758.33 & 18436.18 & 3885.59 & 1125.53 \\
\bottomrule
\end{tabular*}
\end{table*}

%% file: generated/tables/tab_prediction_synth_calibration.tex
\begin{table*}[t]
\centering
\setlength{\tabcolsep}{6pt}
\renewcommand{\arraystretch}{1.25}
\small
\caption{Synthetic calibration of predictive intervals (5-fold CV), measured by element-wise empirical coverage $\hat c_k(\alpha)$; for each method the fold mean and fold standard deviation (SD) are shown in separate columns (in \%).}
\label{tab:pred_synth_calib}
\begin{tabular}{cccccc}
\toprule
& & \multicolumn{2}{c}{SLM-Manifold-Adaptive} & \multicolumn{2}{c}{EM} \\
\cmidrule(lr){3-4}\cmidrule(lr){5-6}
$\alpha$ & Expected & Mean & SD & Mean & SD \\
\midrule
0.05 & 95.00 & 95.10 & 0.12 & 94.53 & 0.14 \\ 
0.10 & 90.00 & 90.46 & 0.19 & 89.62 & 0.20 \\ 
0.15 & 85.00 & 85.81 & 0.21 & 84.79 & 0.20 \\ 
0.20 & 80.00 & 81.05 & 0.24 & 79.91 & 0.25 \\ 
0.25 & 75.00 & 76.23 & 0.31 & 75.03 & 0.31 \\ 
\bottomrule
\end{tabular}
\end{table*}

%% file: generated/tables/tab_rank_selection_summary.tex
\begin{table*}[t]
\centering
\setlength{\tabcolsep}{6pt}
\renewcommand{\arraystretch}{1.25}
\footnotesize
\caption{Synthetic rank-selection performance by mode (V1: M=20, V2: M=20) with $p=q=200$, true $r=5$, $N=2000$. Selected ranks report the empirical distribution of $\hat r$; hit rate is the fraction of trials with $\hat r=r$.}
\label{tab:rank_selection_summary}
\begin{tabular}{llcccc}
\toprule
& & \multicolumn{2}{c}{\textbf{Low noise}} & \multicolumn{2}{c}{\textbf{High noise}} \\
\cmidrule(lr){3-4}\cmidrule(lr){5-6}
\textbf{Mode} & \textbf{Criterion} & \textbf{Selected $\hat r$} & \textbf{Hit rate} & \textbf{Selected $\hat r$} & \textbf{Hit rate} \\
\midrule
V1 (shared $r_{\max}$ noise) & Eigenvalue gap & \makecell[l]{$\hat{r}=4$: 6\\$\hat{r}=5$: 14} & 70\% & \makecell[l]{$\hat{r}=2$: 3\\$\hat{r}=3$: 16\\$\hat{r}=4$: 1} & 0\% \\
 & BIC & \makecell[l]{$\hat{r}=5$: 20} & 100\% & \makecell[l]{$\hat{r}=3$: 20} & 0\% \\
 & CV-NLL & \makecell[l]{$\hat{r}=5$: 20} & 100\% & \makecell[l]{$\hat{r}=5$: 20} & 100\% \\
 & CV-MSE & \makecell[l]{$\hat{r}=5$: 15\\$\hat{r}=6$: 5} & 75\% & \makecell[l]{$\hat{r}=5$: 17\\$\hat{r}=6$: 3} & 85\% \\
\midrule
V2 (rank-adaptive noise) & Eigenvalue gap & \makecell[l]{$\hat{r}=4$: 6\\$\hat{r}=5$: 14} & 70\% & \makecell[l]{$\hat{r}=2$: 3\\$\hat{r}=3$: 16\\$\hat{r}=4$: 1} & 0\% \\
 & BIC & \makecell[l]{$\hat{r}=5$: 20} & 100\% & \makecell[l]{$\hat{r}=3$: 20} & 0\% \\
 & CV-NLL & \makecell[l]{$\hat{r}=5$: 20} & 100\% & \makecell[l]{$\hat{r}=4$: 18\\$\hat{r}=5$: 2} & 10\% \\
 & CV-MSE & \makecell[l]{$\hat{r}=5$: 16\\$\hat{r}=6$: 4} & 80\% & \makecell[l]{$\hat{r}=5$: 17\\$\hat{r}=6$: 3} & 85\% \\
\bottomrule
\end{tabular}
\end{table*}

%% file: generated/tables/tab_prediction_brca_metrics.tex
\begin{table*}[t]
\centering
\setlength{\tabcolsep}{6pt}
\renewcommand{\arraystretch}{1.25}
\small
\caption{TCGA-BRCA point-prediction accuracy (5-fold CV). For latent-dimension methods, $r^*$ minimises CV-MSE.}
\label{tab:pred_brca_metrics}
\begin{tabular}{lccccccc}
\toprule
& & \multicolumn{2}{c}{\textbf{MSE}} & \multicolumn{2}{c}{\textbf{MAE}} & \multicolumn{2}{c}{\textbf{$R^2$}} \\
\cmidrule(lr){3-4}\cmidrule(lr){5-6}\cmidrule(lr){7-8}
\textbf{Method} & $r$ & Mean & SD & Mean & SD & Mean & SD \\
\midrule
SLM-Manifold-Adaptive & 3 & 0.4498 & 0.0499 & 0.4613 & 0.0211 & 0.1724 & 0.0212 \\ 
EM & 8 & 0.4782 & 0.0496 & 0.4735 & 0.0220 & 0.1459 & 0.0186 \\ 
PO2PLS & 8 & 0.4764 & 0.0490 & 0.4731 & 0.0222 & 0.1460 & 0.0189 \\ 
PLSR & 8 & 0.4648 & 0.0540 & 0.4678 & 0.0228 & 0.1584 & 0.0256 \\ 
Ridge & -- & 0.4506 & 0.0509 & 0.4604 & 0.0218 & 0.1774 & 0.0206 \\ 
\bottomrule
\end{tabular}
\end{table*}

%% file: generated/tables/tab_prediction_brca_calibration.tex
\begin{table*}[t]
\centering
\setlength{\tabcolsep}{6pt}
\renewcommand{\arraystretch}{1.25}
\footnotesize
\caption{TCGA-BRCA calibration of SLM-Manifold-Adaptive predictive intervals, measured by per-fold element-wise empirical coverage $\hat c_k(\alpha)$ (in \%).}
\label{tab:pred_brca_calib}
\begin{tabular}{cccccccc}
\toprule
$\alpha$ & Expected & Fold 1 & Fold 2 & Fold 3 & Fold 4 & Fold 5 & Mean \\
\midrule
0.05 & 95.00 & 94.92 & 95.14 & 94.04 & 95.57 & 94.70 & 94.87 \\ 
0.10 & 90.00 & 91.62 & 91.43 & 90.32 & 92.35 & 91.35 & 91.41 \\ 
0.15 & 85.00 & 88.00 & 87.80 & 86.84 & 89.20 & 87.39 & 87.85 \\ 
0.20 & 80.00 & 83.90 & 83.94 & 82.92 & 85.37 & 83.91 & 84.01 \\ 
0.25 & 75.00 & 80.02 & 80.06 & 79.03 & 81.31 & 80.21 & 80.13 \\ 
\bottomrule
\end{tabular}
\end{table*}

%% file: generated/tables/tab_citeseq_prediction_summary.tex
\begin{table*}[t]
\centering
\setlength{\tabcolsep}{6pt}
\renewcommand{\arraystretch}{1.25}
\small
\caption{Protein imputation on PBMC CITE-seq (3-fold CV). Reported metrics are fold means; we omit standard deviations because only three outer folds are used and fold-wise variance estimates are unstable. KCCA and DCCA are the strongest pure point predictors, while the fixed-noise PPLS variants remain competitive and provide calibrated uncertainty.}
\label{tab:citeseq_pred_summary}
\begin{tabular}{lcccc}
\toprule
\textbf{Method} & \textbf{$r$/dim} & \textbf{MSE} $\downarrow$ & \textbf{MAE} $\downarrow$ & \textbf{$R^2$} $\uparrow$ \\
\midrule
SLM-Manifold-Adaptive (ours) & 30 & 0.2586 & 0.3901 & 0.3363 \\
BCD-SLM (ours) & 30 & 0.2589 & 0.3903 & 0.3359 \\
PLSR & 30 & 0.2546 & 0.3876 & 0.3434 \\
Ridge & -- & 0.2541 & 0.3871 & 0.3429 \\
DCCA + Ridge head & 20 & 0.2519 & 0.3827 & 0.3480 \\
KCCA (Nystr\"om) + Ridge head & 30 & 0.2462 & 0.3806 & 0.3574 \\
PO2PLS & 5 & 0.4749 & 0.5180 & 0.0041 \\
\bottomrule
\end{tabular}
\end{table*}

%% file: generated/tables/tab_bayesian_deep_comparison.tex
\begin{table*}[t]
\centering
\setlength{\tabcolsep}{2pt}
\renewcommand{\arraystretch}{1.2}
\scriptsize
\caption{Comparison with Bayesian deep uncertainty baselines on real-data benchmarks, including post-hoc recalibration variants (temperature scaling, isotonic recalibration, and split conformal). Coverage entries are element-wise empirical coverages; ACE is mean absolute calibration error across nominal levels $\alpha\in\{0.05,0.10,0.15,0.20,0.25\}$. Runtime is total wall-clock time over outer folds (including recalibration fitting time).}
\label{tab:bayesian_deep_comparison}
\begin{tabular*}{\textwidth}{@{\extracolsep{\fill}}llcccccccc@{}}
\toprule
\textbf{Dataset} & \textbf{Method} & \textbf{MSE} & \makecell{\textbf{95\%}\\\textbf{coverage}} & \makecell{\textbf{90\%}\\\textbf{coverage}} & \makecell{\textbf{85\%}\\\textbf{coverage}} & \makecell{\textbf{80\%}\\\textbf{coverage}} & \makecell{\textbf{75\%}\\\textbf{coverage}} & \textbf{ACE} & \makecell{\textbf{Runtime}\\\textbf{(s)}} \\
\midrule
TCGA-BRCA & \shortstack[l]{SLM-Manifold-\\Adaptive} & 0.4498 & 94.87\% & 91.41\% & 87.85\% & 84.01\% & 80.13\% & 2.71\% & 41.5 \\
TCGA-BRCA & \shortstack[l]{DCCA-MC-Dropout\\(raw)} & 0.4786 & 16.72\% & 14.09\% & 12.33\% & 10.93\% & 9.81\% & 72.23\% & 15.9 \\
TCGA-BRCA & \shortstack[l]{DCCA-MC-Dropout\\+ TempScale} & 0.4786 & 94.27\% & 91.26\% & 88.36\% & 85.30\% & 82.06\% & 3.54\% & 15.9 \\
TCGA-BRCA & \shortstack[l]{DCCA-MC-Dropout\\+ Isotonic} & 0.4786 & 39.79\% & 39.79\% & 39.79\% & 39.79\% & 39.79\% & 45.21\% & 15.9 \\
TCGA-BRCA & \shortstack[l]{DCCA-MC-Dropout\\+ Conformal} & 0.4786 & 94.92\% & 89.83\% & 84.76\% & 79.91\% & 74.99\% & 0.12\% & 15.9 \\
TCGA-BRCA & \shortstack[l]{DCCA-Deep-Ensemble\\(raw)} & 0.4652 & 30.43\% & 26.12\% & 23.04\% & 20.67\% & 18.74\% & 61.20\% & 10.9 \\
TCGA-BRCA & \shortstack[l]{DCCA-Deep-Ensemble\\+ TempScale} & 0.4652 & 94.17\% & 91.65\% & 89.17\% & 86.74\% & 84.16\% & 4.51\% & 10.9 \\
TCGA-BRCA & \shortstack[l]{DCCA-Deep-Ensemble\\+ Isotonic} & 0.4652 & 62.25\% & 62.25\% & 62.25\% & 62.25\% & 62.25\% & 22.75\% & 10.9 \\
TCGA-BRCA & \shortstack[l]{DCCA-Deep-Ensemble\\+ Conformal} & 0.4652 & 93.99\% & 88.42\% & 83.14\% & 77.44\% & 72.42\% & 1.92\% & 10.9 \\
\midrule
CITE-seq & \shortstack[l]{SLM-Manifold-\\Adaptive} & 0.2586 & 94.74\% & 90.50\% & 86.31\% & 82.13\% & 77.90\% & 1.42\% & 613.3 \\
CITE-seq & \shortstack[l]{DCCA-MC-Dropout\\(raw)} & 0.3045 & 12.48\% & 10.71\% & 9.51\% & 8.55\% & 7.75\% & 75.20\% & 931.3 \\
CITE-seq & \shortstack[l]{DCCA-MC-Dropout\\+ TempScale} & 0.3045 & 89.99\% & 88.18\% & 86.64\% & 85.17\% & 83.70\% & 4.47\% & 934.1 \\
CITE-seq & \shortstack[l]{DCCA-MC-Dropout\\+ Isotonic} & 0.3045 & 26.11\% & 26.11\% & 26.11\% & 26.11\% & 26.11\% & 58.89\% & 932.7 \\
CITE-seq & \shortstack[l]{DCCA-MC-Dropout\\+ Conformal} & 0.3045 & 95.03\% & 90.08\% & 85.13\% & 80.15\% & 75.17\% & 0.11\% & 932.3 \\
CITE-seq & \shortstack[l]{DCCA-Deep-Ensemble\\(raw)} & 0.2656 & 17.00\% & 14.44\% & 12.73\% & 11.39\% & 10.26\% & 71.83\% & 613.7 \\
CITE-seq & \shortstack[l]{DCCA-Deep-Ensemble\\+ TempScale} & 0.2656 & 95.20\% & 93.50\% & 91.95\% & 90.42\% & 88.84\% & 6.98\% & 615.2 \\
CITE-seq & \shortstack[l]{DCCA-Deep-Ensemble\\+ Isotonic} & 0.2656 & 37.01\% & 37.01\% & 37.01\% & 37.01\% & 37.01\% & 47.99\% & 614.8 \\
CITE-seq & \shortstack[l]{DCCA-Deep-Ensemble\\+ Conformal} & 0.2656 & 94.95\% & 89.92\% & 84.85\% & 79.79\% & 74.71\% & 0.16\% & 614.6 \\
\bottomrule
\end{tabular*}
\end{table*}

%% file: generated/tables/tab_detected_pairs.tex
\begin{table*}[t]
\centering
\setlength{\tabcolsep}{6pt}
\renewcommand{\arraystretch}{1.25}
\small
\caption{Number of detected gene--protein pairs by SLM-Manifold and EM under different $p$-value thresholds.}
\label{tab:Npairs}
\begin{tabular}{lcccc}
\toprule
\textbf{Method} & \multicolumn{4}{c}{\textbf{$p$-value threshold}} \\
\cmidrule(lr){2-5}
 & $10^{-7}$ & $10^{-6}$ & $10^{-5}$ & $10^{-4}$ \\
\midrule
SLM-Manifold & 109138 & 115608 & 122199 & 126600 \\
EM & 104608 & 111453 & 117940 & 124362 \\
\midrule
Overlap & 95638 & 103812 & 112499 & 120541 \\
\bottomrule
\end{tabular}
\end{table*}

%% file: sub/sec7_discussion.tex
\section{Discussion}\label{sec:discussion}

This section synthesizes the empirical and methodological trade-offs that are only partially visible from per-table comparisons.

\paragraph{When to use BCD-SLM vs. SLM-Manifold.}
Both solvers optimize the same fixed-noise objective and typically reach nearly identical statistical solutions in our experiments. In practice, BCD-SLM is the default when runtime and robustness are primary concerns, because its componentwise closed-form updates reduce per-iteration uncertainty and improve wall-clock predictability. SLM-Manifold remains useful as a conceptually clean full-manifold baseline and as a generic template when one later modifies the scalar blocks beyond closed-form solvability.

\paragraph{Failure mode near weak-factor boundaries.}
The fixed-noise strategy is most reliable when latent factors are sufficiently separated from the noise bulk. Near the weak-factor detectability boundary, under-selection can occur because incremental likelihood gains become comparable to complexity penalties and finite-sample spectral broadening. A concrete instance is the high-noise synthetic setting with true $r=5$: there, the weakest factor sits close to the Marchenko--Pastur bulk edge, and likelihood-based criteria such as BIC select $\hat r=3$ rather than $5$, whereas the prediction-aligned CV-MSE still recovers $\hat r=5$ in most trials. In such regimes, conservative rank-grid design and reporting sensitivity across adjacent ranks are advisable.

\paragraph{Calibration--interpretability trade-off vs. nonlinear baselines.}
On TCGA-BRCA and PBMC CITE-seq, nonlinear baselines can improve point MSE, especially in the more nonlinear CITE-seq regime, but usually require an additional recalibration stage to obtain usable uncertainty. The fixed-noise PPLS pipeline targets the opposite corner: slightly weaker point-optimality in some settings, but model-native calibration, explicit orthogonality-constrained factors, and a simpler uncertainty deployment path.

\paragraph{Joint factors vs.\ view-specific orthogonal decomposition.}
Across both real-data benchmarks, PO2PLS underperforms the fixed-noise pipeline on cross-view prediction despite using a strictly richer parameterization. This is not evidence against orthogonal decomposition as a modeling tool, since PO2PLS remains the natural choice when the inferential question is \emph{what is view-specific}, but it does indicate that, under the identifiable PPLS joint structure, the cross-view predictive signal is already captured by the joint loading pair $(W, C)$. Adding view-specific orthogonal components increases parameter count without recovering additional predictive signal, and the cost grows with the larger view dimension (most visibly on CITE-seq with $p=2000$). Practically, the two methods are complementary: PO2PLS for view-specific interpretation, fixed-noise PPLS for calibrated cross-view prediction.

\paragraph{On regimes where EM/ECM can be competitive.}
Our synthetic results also show that EM/ECM can outperform fixed-noise solvers on selected parameters (notably $\sigma_h^2$ in some high-dimensional high-noise settings). This indicates that fixed-noise decoupling is not uniformly dominant for every parameter block, and motivates hybrid schemes that retain exact manifold feasibility while adaptively re-coupling selected noise components when diagnostics indicate underfitting in the residual channel.

%% file: sub/sec8_conclusion.tex
\section{Conclusion}\label{sec:conclusion}

We showed that direct optimization of the PPLS observed-data likelihood on the product Stiefel manifold, with spectrally pre-estimated noise variances, yields a practical linear probabilistic pipeline with model-native uncertainty calibration. In the $p=q=500$ high-noise setting, SLM-Manifold and BCD-SLM retain the advantage on $(W,C,B)$, whereas EM/ECM achieve lower MSE on $\sigma_h^2$ (Table~\ref{tab:parameter_mse}, lower block). On real data, the method achieves near-nominal coverage without an external post-hoc calibration stack; deep baselines can approach calibration targets only after additional recalibration stages. This shifts the practical advantage from ``best single calibration number'' to deployment simplicity, interpretability, and reduced calibration fragility. Relative to PO2PLS, which extends the joint PPLS model with view-specific orthogonal components, our pipeline forgoes explicit view-specific decomposition in exchange for native predictive calibration; the empirical comparison indicates these are complementary modeling targets rather than competing ones. Computationally, the manifold solvers remain practical at $p=q=500$ with seconds-to-tens-of-seconds runtime. We also extended the theory to non-Gaussian observation noise via Gaussianization, showing that the spectral noise estimator's leading finite-sample rate is preserved up to an additive excess-kurtosis bias term.

Several open problems remain. We list them in order of increasing scope.

\textbf{OP1 (Global landscape for general PPLS).}
Proposition~\ref{prop:pcca-benign} establishes a benign landscape on the PCCA
submanifold. Extending this to the full PPLS parameter space, with $B\ne I_r$
and $\sigma^2_h > 0$, would yield a deterministic single-start convergence
guarantee and eliminate the reliance on the multi-start safeguard in
Section~\ref{sec:landscape_multistart}.

\textbf{OP2 (Heteroscedastic noise).}
The fixed-noise protocol extends cleanly only to isotropic noise. Coordinate-
wise variances $\sigma^2_{e,j}, \sigma^2_{f,k}$ introduce $O(p+q)$ additional
free parameters and break the scalar decomposition in
Theorem~\ref{thm:theoremA}. A natural starting point is block-diagonal noise
with fixed group sizes; see Section~\ref{sec:gaussianization_extension},
Remark~\ref{rem:noise_model_relaxation} for why Gaussianization is an interim
solution but not a full substitute.

\textbf{OP3 (Non-Gaussian likelihoods).}
Theorem~\ref{thm:noise_bound_subgaussian} handles sub-Gaussian noise at the
estimation level, but the model-level likelihood still assumes Gaussianity. A
native exponential-family PPLS (e.g., negative binomial for count omics) would
remove the need for pre-transformation in Section~\ref{sec:gaussianization_extension}
and is compatible with the scalar decomposition through variance-stabilizing
reparameterizations.

\textbf{OP4 ($r$-agnostic noise-subspace estimation).}
Our estimator in Eq.~\eqref{eq:sigma_e_hat} trades the $r$-agnostic property of full-spectrum averaging for consistency and a signal-independent leading rate. It remains open to design a unified estimator that is both consistent and $r$-agnostic, for example one coupled to Marchenko--Pastur spike detection.

%% file: sub/appendix.tex

\part*{Appendix}
\addcontentsline{toc}{section}{Appendix}

\noindent
This appendix collects the material that supports, but is not essential to follow, the main text. It is organized into four parts. Appendix~\ref{app:proofs} gives the supplementary proofs for the results stated in the main text, including the block-matrix identity behind the scalar likelihood, the Wald consistency and asymptotic-normality arguments, and the optimization-convergence verification. Appendix~\ref{app:hu_comparison} provides the detailed derivations specific to the fixed-noise scalar form, namely the cubic and closed-form component updates, the Davis--Kahan and sub-Gaussian spectral arguments, the closed-form Fisher blocks, and the full PCCA benign-landscape proof. Appendix~\ref{app:exp_supp} reports additional experimental results, including supplementary optimization and prediction tables, the method catalog, the PCCA and PPCA verifications, the bias-floor and weak-factor studies, the non-Gaussian noise experiments, and the association-screening notation. Appendix~\ref{app:scalar_details} records the implementation and reproducibility details, covering the Riemannian optimization specifics, initialization, and asset licenses.

\medskip
\noindent\textbf{Contents of the appendix.}
\begin{itemize}[leftmargin=*,nosep]
\item Appendix~\ref{app:proofs}: Supplementary proofs.
\item Appendix~\ref{app:hu_comparison}: Additional derivations for the fixed-noise scalar form.
\item Appendix~\ref{app:exp_supp}: Additional experimental results.
\item Appendix~\ref{app:scalar_details}: Implementation and reproducibility details.
\end{itemize}

\medskip

\section{Supplementary Proofs}\label{app:proofs}

\noindent\textbf{Notation.}\;
We use standard asymptotic notation throughout: $a_N\lesssim b_N$ means $a_N\le C b_N$ for some absolute constant $C>0$; $a_N\asymp b_N$ means $c b_N\le a_N\le C b_N$ for absolute constants $0<c<C$; $a_N\ll b_N$ means $a_N/b_N\to0$; $O_P(\cdot)$ denotes the usual stochastic boundedness (in probability). The symbol $C>0$ denotes a generic absolute constant that may take different values at different occurrences; its dependency on problem parameters $(p,r,N)$ is stated when first introduced in each proof. The notation $a_N\lesssim b_N$ is used uniformly in place of $a_N\le C\cdot b_N$ throughout the appendix; the implied constant's dependency is indicated by a subscript when relevant (e.g., $\lesssim_K$ denotes a constant depending on $K$).

\subsection{Block-matrix identity used in the scalar likelihood}\label{app:block_matrix_identity}
\begin{lemma}[Determinant and inverse of a block matrix in the PPLS model; \citet{hu2025slm}]\label{lem:rank_n}
Let $n \geq 1$ and let $D_n$ be an $n \times n$ diagonal matrix with positive diagonal entries $\sigma_1,\ldots,\sigma_n$. Let $m>n$ and let $A\in\mathbb{R}^{m\times n}$ have orthonormal columns, i.e., $A^\top A=I_n$. Let $k>0$. Then
\begin{equation}\label{RankNDet}
\det(AD_nA^\top + kI_m) = k^{m-n} \prod_{i=1}^n (k + \sigma_i),
\end{equation}
\begin{equation}\label{RankNInverse}
(AD_nA^\top + kI_m)^{-1} = \frac{1}{k} \left(I_m - A\left(I_n + kD_n^{-1}\right)^{-1}A^\top\right).
\end{equation}
\end{lemma}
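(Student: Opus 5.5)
The plan is to diagonalize $M:=AD_nA^\top+kI_m$ in an orthonormal basis adapted to $A$. Since $A^\top A=I_n$, the columns $a_1,\dots,a_n$ of $A$ are orthonormal. First extend them to an orthonormal basis of $\mathbb{R}^m$ by choosing $A_\perp\in\mathbb{R}^{m\times(m-n)}$ with $A_\perp^\top A=0$ and $A_\perp^\top A_\perp=I_{m-n}$; this is possible because $m>n$. Set $U=[A\ A_\perp]$, which is orthogonal. Then $U^\top M U=\mathrm{diag}(D_n+kI_n,\ kI_{m-n})$, because $A^\top A D_n A^\top A=D_n$ and $A_\perp^\top A=0$. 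Equivalently, $Ma_i=(k+\sigma_i)a_i$ for each column of $A$, and $Mv=kv$ for every $v\perp\mathrm{span}(A)$.

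The determinant identity \eqref{RankNDet} follows at once. Since $\det U^2=1$, we get $\det M=\det(U^\top MU)=\prod_{i=1}^n(k+\sigma_i)\cdot k^{m-n}$.

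For the inverse \eqref{RankNInverse}, I would not invert through the basis. Instead I would verify the identity directly by multiplication, which avoids needing $A_\perp$ at all. Write $G:=(I_n+kD_n^{-1})^{-1}=\mathrm{diag}\bigl(\sigma_i/(\sigma_i+k)\bigr)$, which is well defined since $\sigma_i>0$. Using $A^\top A=I_n$,
\[
(AD_nA^\top+kI_m)\bigl(I_m-AGA^\top\bigr)=AD_nA^\top-AD_nGA^\top+kI_m-kAGA^\top=kI_m+A\bigl(D_n-(D_n+kI_n)G\bigr)A^\top .
\]
Because all matrices involved are diagonal, $(D_n+kI_n)G=\mathrm{diag}\bigl((\sigma_i+k)\sigma_i/(\sigma_i+k)\bigr)=D_n$. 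The bracket therefore vanishes, and the product equals $kI_m$. Dividing by $k>0$ gives \eqref{RankNInverse}. Invertibility of $M$ is already guaranteed by the determinant formula, since every factor is positive.

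No step is a real obstacle. The only points needing care are the existence of the orthonormal completion $A_\perp$ (a Gram–Schmidt step, which uses $m>n$) and the bookkeeping in the diagonal simplification $(D_n+kI_n)G=D_n$. Alternatively, the inverse could be read off from the same block diagonalization as $U\,\mathrm{diag}\bigl((D_n+kI)^{-1},k^{-1}I\bigr)U^\top$, combined with $A_\perp A_\perp^\top=I_m-AA^\top$. That route gives $k^{-1}I_m+A\bigl((D_n+kI)^{-1}-k^{-1}I\bigr)A^\top$, which agrees with the stated form because $k^{-1}-(\sigma_i+k)^{-1}=k^{-1}\sigma_i/(\sigma_i+k)$.
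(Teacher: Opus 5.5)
Your proof is correct in both parts. The paper does not prove this lemma itself; it is quoted from \citet{hu2025slm}, and the main text only gestures at the Woodbury identity. The natural comparison is therefore with the matrix-determinant-lemma/Woodbury route. Sylvester's identity gives $\det(kI_m+AD_nA^\top)=k^m\det(I_n+k^{-1}D_nA^\top A)=k^{m-n}\prod_i(k+\sigma_i)$. Woodbury gives $k^{-1}I_m-k^{-2}A(D_n^{-1}+k^{-1}A^\top A)^{-1}A^\top$, which collapses to the stated form because $(D_n^{-1}+k^{-1}I_n)^{-1}=k(I_n+kD_n^{-1})^{-1}$. Indeed, the unusual factor $(I_n+kD_n^{-1})^{-1}$ in the statement is exactly Woodbury's output.

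The Woodbury route needs no orthonormal completion, and it survives for non-orthonormal $A$ if you keep $A^\top A$ in place of $I_n$. Your spectral route instead exposes the full eigenstructure: eigenvalue $k+\sigma_i$ on the $i$th column of $A$, and $k$ with multiplicity $m-n$ on $\mathrm{span}(A)^\perp$. That is the fact the paper actually relies on elsewhere (Step~1 of Theorem~\ref{thm:noise_bound}, and the shared eigenbasis in Theorem~\ref{thm:minimax-lb}).

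Your direct multiplication check for the inverse is slightly more general than you claim. Since $I_n+kD_n^{-1}=D_n^{-1}(D_n+kI_n)$, the identity $(D_n+kI_n)(I_n+kD_n^{-1})^{-1}=D_n$ holds for any positive-definite $D_n$, not only diagonal ones. Also, because a one-sided inverse of a square matrix is two-sided, you do not need the determinant formula to certify invertibility.
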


\subsection{Wald consistency details}\label{proof:thm:slm_consistency}
\begin{proof}
We follow a standard Wald consistency argument. Write $\Sigma_0:=\Sigma(\theta_0)$ for the population covariance at the true parameter, define the population objective by $\mathcal{L}_\infty(\theta):=\ln\det\Sigma(\theta)+\mathrm{tr}(\Sigma_0\Sigma(\theta)^{-1})$, let $K_0\subset\overline\Omega$ be a compact sublevel set containing $\theta_0$ and, eventually, $\hat\theta_N$, and let $C$ denote a deterministic positive constant that may change from line to line.

\textbf{Step 1 (Uniform convergence).}
Since $\mathcal{L}(\theta) - \mathcal{L}_\infty(\theta) = \mathrm{tr}((S-\Sigma_0)\Sigma(\theta)^{-1})$ and $\|\Sigma(\theta)^{-1}\|_F$ is uniformly bounded on a compact sublevel set $K_0$, Cauchy--Schwarz gives
\[
\sup_{\theta \in K_0} |\mathcal{L}(\theta) - \mathcal{L}_\infty(\theta)| \leq C\,\|S-\Sigma_0\|_F \xrightarrow{\mathrm{a.s.}} 0.
\]

\textbf{Step 2 (Unique minimizer).}
Applying the KL-based lower bound with $S$ replaced by $\Sigma_0$ yields $\mathcal{L}_\infty(\theta) \ge (p+q) + \ln\det\Sigma_0$, with equality iff $\Sigma(\theta)=\Sigma_0$. This follows from $\mathrm{KL}(\mathcal{N}(0,\Sigma_0)\|\mathcal{N}(0,\Sigma(\theta))) = \tfrac{1}{2}[\mathrm{tr}(\Sigma(\theta)^{-1}\Sigma_0) - (p+q) + \ln(\det\Sigma(\theta)/\det\Sigma_0)] \ge 0$, which upon rearrangement gives $\ln\det\Sigma(\theta)+\mathrm{tr}(\Sigma_0\Sigma(\theta)^{-1}) \ge (p+q)+\ln\det\Sigma_0$, with equality iff $\Sigma(\theta)=\Sigma_0$. By identifiability, this implies $\theta=\theta_0$.

\textbf{Step 3 (Consistency).}
For any $\varepsilon>0$, define $K_\varepsilon = \{\theta \in K_0 : \|\theta-\theta_0\| \ge \varepsilon\}$ and $\eta = \inf_{K_\varepsilon} \mathcal{L}_\infty(\theta) - \mathcal{L}_\infty(\theta_0) > 0$. Since $\hat{\theta}_N$ minimizes $\mathcal{L}$,
\[
\eta \le \mathcal{L}_\infty(\hat{\theta}_N) - \mathcal{L}_\infty(\theta_0) \le 2\sup_{\theta \in K_0}|\mathcal{L}(\theta)-\mathcal{L}_\infty(\theta)|,
\]
so $\Pr(\hat{\theta}_N \in K_\varepsilon) \to 0$.
\end{proof}

\subsection{Asymptotic normality proof details}\label{proof:thm:asymptotic_normality}
\begin{proof}
\textbf{Chart construction.} We work in local coordinates on $\mathcal{M}_r=\mathrm{St}(p,r)\times\mathrm{St}(q,r)\times\mathbb{R}_{++}^{2r+1}$. For $W\in\mathrm{St}(p,r)$, we use exponential coordinates: choose a smooth section $\sigma$ of the orthonormal frame bundle and parameterize a neighborhood of $W_0$ by $\eta_W\in\mathbb{R}^{(p-r)r}$ via $W=\mathrm{Exp}_{W_0}(\xi_W)$, where $\xi_W=\sum_k\eta_{W,k}E_k$ is expressed in the chosen basis $\{E_k\}$ of $T_{W_0}\mathrm{St}(p,r)$ and $\mathrm{Exp}$ is the Riemannian exponential map on the Stiefel manifold \citep[Chapter~3]{Boumal2023}. Similarly for $C\in\mathrm{St}(q,r)$ with coordinates $\eta_C\in\mathbb{R}^{(q-r)r}$. For the positive coordinates $(\theta_{t_i}^2,b_i,\sigma_h^2)$, we use log-parameterization: $\eta_{\theta,i}=\ln\theta_{t_i}^2$, $\eta_{b,i}=\ln b_i$, $\eta_{\sigma_h}=\ln\sigma_h^2$. The total dimension is $d=(p-r)r+(q-r)r+2r+1$, matching the BIC parameter count in Section~\ref{sec:rank_selection_fixed_noise}. Let $\phi$ denote the combined chart map, with local parameter $\eta=\phi(\theta)\in\mathbb{R}^d$ and $\eta_0:=\phi(\theta_0)$ for the true parameter.

We write $\hat\eta_N:=\phi(\hat\theta_N)$ for the estimator in this chart, and $\tilde\eta_N$ for an intermediate point on the segment joining $\eta_0$ and $\hat\eta_N$.

\textbf{Step 1 (Score CLT).} Let $\mathcal{L}_N(\eta)$ denote the empirical objective in local coordinates. By independence and finite second moments of Gaussian score terms,
\[
\sqrt{N}\,\nabla_\eta \mathcal{L}_N(\eta_0)\xrightarrow{d}\mathcal{N}(0,\mathcal{I}(\theta_0)),
\]
where $\mathcal{I}(\theta_0)$ is the Fisher information matrix in the local coordinates defined by the chart $\phi$.

\textbf{Step 2 (Hessian convergence).} The Hessian $\nabla_\eta^2\mathcal{L}_N(\eta)$ converges uniformly in probability to $\nabla_\eta^2\mathcal{L}_\infty(\eta)$ on a neighborhood of $\eta_0$, because $\mathcal{L}_N$ is a smooth function of the sufficient statistics $S$ which converge uniformly on compact parameter sets, and the chart $\phi$ is smooth. By assumption, $\nabla_\eta^2\mathcal{L}_\infty(\eta_0)=\mathcal{I}(\theta_0)$ is nonsingular.

\textbf{Step 3 (Taylor expansion).} By Corollary~\ref{cor:slm_consistency}, $\hat\eta_N:=\phi(\hat\theta_N)\xrightarrow{P}\eta_0$. Expanding the score at $\hat\eta_N$ around $\eta_0$ gives
\[
0=\nabla_\eta\mathcal{L}_N(\hat\eta_N)=\nabla_\eta\mathcal{L}_N(\eta_0)+\nabla_\eta^2\mathcal{L}_N(\tilde\eta_N)(\hat\eta_N-\eta_0),
\]
where $\tilde\eta_N$ lies between $\eta_0$ and $\hat\eta_N$. The uniform convergence of the Hessian from Step~2 ensures $\nabla_\eta^2\mathcal{L}_N(\tilde\eta_N)\xrightarrow{P}\mathcal{I}(\theta_0)$, and rearranging and applying Slutsky's theorem,
\[
\sqrt{N}(\hat\eta_N-\eta_0)\xrightarrow{d}\mathcal{N}(0,\mathcal{I}(\theta_0)^{-1}).
\]
Mapping back through the chart diffeomorphism yields the stated limit for $\hat\theta_N$ \citep[Theorem~5.39]{vanderVaart,NeweyMcFadden1994}.
\end{proof}

\subsection{Verification details for optimization convergence}\label{proof:prop:riemann_convergence}
\begin{proof}
The standard convergence theorem~\cite[Theorem~4.3.1]{Absil2008} requires: (R1) objective $C^1$ on manifold, (R2) iterates in a compact sublevel set, and (R3) Lipschitz Riemannian gradient on that set.

\emph{(R1)} By Theorem~\ref{thm:theoremA}, $\mathcal{L}(\theta)$ is composed of smooth scalar terms with strictly positive denominators under fixed-noise interior constraints, hence $\mathcal{L}\in C^\infty$.

\emph{(R2)} Stiefel factors are compact; coercivity from Theorem~\ref{thm:slm_guarantees} bounds positive coordinates away from infinity within sublevel sets.

\emph{(R3)} On any compact subset, the projected Euclidean gradient in Eq.~\eqref{eq:euclidean_gradients} is a smooth map on a compact domain, hence Lipschitz.

Therefore the required theorem applies.
\end{proof}

\section{Additional derivations for the fixed-noise scalar form}\label{app:hu_comparison}
For reference, Table~\ref{tab:vs-hu-app} summarizes the design choices of the fixed-noise framework and the guarantees established for each, contrasting our noise-subspace estimator and exact-retraction optimization with the full-spectrum, penalty-based alternatives where relevant.
\begin{table}[h]
\centering
\caption{Summary of the fixed-noise framework's design choices and guarantees.}
\label{tab:vs-hu-app}
\footnotesize
\renewcommand{\arraystretch}{1.2}
\begin{tabular}{@{}lll@{}}
\toprule
Component & Full-spectrum / penalty baseline & This framework \\
\midrule
Noise handling & Fixed-noise protocol & Fixed-noise protocol (same) \\
Scalar likelihood derivation & Standard expansion & Theorem~\ref{thm:theoremA} \\
Noise estimation & Full spectrum $\frac{1}{p}\sum_i\lambda_i$ & Noise subspace $\frac{1}{p-r}\sum_{i>r}\lambda_i$ (Eq.~\ref{eq:sigma_e_hat}) \\
Rank dependence of noise est. & $r$-agnostic & requires an input $r$ (mitigated by \S\ref{sec:rank_selection_fixed_noise}) \\
Consistency of noise est. & Inconsistent (Prop.~\ref{prop:hu-bias}) & Consistent (Cor.~\ref{cor:consistency}) \\
Finite-sample rate & Signal-dependent & Signal-independent (Thm.~\ref{thm:noise_bound}) \\
Minimax optimality & Not established & Established (Thm.~\ref{thm:minimax-lb}) \\
Orthogonality handling & Interior-point penalty & Exact manifold retraction (\S\ref{sec:optimization_algorithms}) \\
Sub-Gaussian extension & Not considered & Theorem~\ref{thm:noise_bound_subgaussian} \\
Non-Gaussian observations & Not considered & Gaussianization layer (\S\ref{sec:gaussianization_extension}) \\
Inference (asymp.\ normality) & Not established & Theorem~\ref{thm:asymptotic_normality}, Prop.~\ref{prop:fisher-block} \\
Landscape guarantee & None & PCCA benign (Prop.~\ref{prop:pcca-benign}) \\
\bottomrule
\end{tabular}
\end{table}

\subsection{Proof details for Proposition~\ref{prop:bcd_b_cubic}}\label{app:proof_b_cubic}
Write $s:=\theta_{t_i}^2$, $\alpha:=\sigma_e^2$, $\beta:=\sigma_f^2$, and $\gamma:=\sigma_h^2$. With
\[
D(b)=(\beta+\gamma)(s+\alpha)+b^2s\alpha,
\quad
N(b)=(\beta+\gamma)sQ_x(i)+\gamma(s+\alpha)Q_y(i)+b^2s\alpha Q_y(i)+bsQ_{xy}(i),
\]
we have
\[
\frac{\partial \ell_i}{\partial b}=\frac{D'(b)(D(b)+N(b))-N'(b)D(b)}{D(b)^2},
\]
where $D'(b)=2bs\alpha$ and $N'(b)=2bs\alpha Q_y(i)+sQ_{xy}(i)$. Setting the numerator to zero and expanding:
\begin{align*}
& D'(b)(D(b)+N(b))-N'(b)D(b) \\
&= 2bs\alpha\bigl[(\beta+\gamma)(s+\alpha)+b^2s\alpha + (\beta+\gamma)sQ_x(i)+\gamma(s+\alpha)Q_y(i)+b^2s\alpha Q_y(i)+bsQ_{xy}(i)\bigr] \\
&\quad - \bigl[2bs\alpha Q_y(i)+sQ_{xy}(i)\bigr]\bigl[(\beta+\gamma)(s+\alpha)+b^2s\alpha\bigr] \\
&= 2bs\alpha(\beta+\gamma)(s+\alpha) + 2b^3s^2\alpha^2 + 2bs\alpha(\beta+\gamma)sQ_x(i) + 2bs\alpha\gamma(s+\alpha)Q_y(i) \\
&\quad + 2b^3s^2\alpha^2 Q_y(i) + 2b^2s^2\alpha Q_{xy}(i) \\
&\quad - 2bs\alpha(\beta+\gamma)(s+\alpha)Q_y(i) - 2b^3s^2\alpha^2 Q_y(i) - sQ_{xy}(i)(\beta+\gamma)(s+\alpha) - b^2s^2\alpha Q_{xy}(i).
\end{align*}
Collecting powers of $b$:
\begin{itemize}
\item $b^3$: $2s^2\alpha^2$ (from the $D'$ term) $\Rightarrow$ coefficient $c_3 = 2\sigma_e^4\theta_{t_i}^2$.
\item $b^2$: $2s^2\alpha Q_{xy}(i) - s^2\alpha Q_{xy}(i) = s^2\alpha Q_{xy}(i)$ $\Rightarrow$ coefficient $c_2 = \sigma_e^2\theta_{t_i}^2 Q_{xy}(i)$.
\item $b^1$: $2s\alpha(\beta+\gamma)(s+\alpha) + 2s\alpha(\beta+\gamma)sQ_x(i) + 2s\alpha\gamma(s+\alpha)Q_y(i) - 2s\alpha(\beta+\gamma)(s+\alpha)Q_y(i)$ $= 2s\alpha R_i$ $\Rightarrow$ coefficient $c_1 = 2\sigma_e^2 R_i$.
\item $b^0$: $-sQ_{xy}(i)(\beta+\gamma)(s+\alpha)$ $\Rightarrow$ coefficient $c_0 = -Q_{xy}(i)(\sigma_f^2+\sigma_h^2)(\theta_{t_i}^2+\sigma_e^2)$.
\end{itemize}
This yields the cubic equation $c_3 b^3 + c_2 b^2 + c_1 b + c_0 = 0$ with coefficients stated in Proposition~\ref{prop:bcd_b_cubic}. For $Q_{xy}(i)>0$, the coefficient signs are $(c_3,c_2,c_1,c_0)=(+,+,+,-)$: specifically, $c_3>0$ (product of positive parameters), $c_2>0$ (positive parameters times positive $Q_{xy}(i)$), $c_1>0$ (by inspection of $R_i$), and $c_0<0$ (negative sign times positive quantities). There is exactly one sign change, so by Descartes' rule of signs, there is exactly one positive real root.

\subsection{Proof of Proposition~\ref{prop:bcd_theta_closed_form}}\label{app:proof_theta_closed_form}
Fix $W$, $C$, $b_i$, $\sigma_h^2$, and the noise variances. Using the notation of Appendix~\ref{app:fisher_closed_form}, write $s:=\theta_{t_i}^2$, $\alpha:=\sigma_e^2$, $\beta:=\sigma_f^2$, $\gamma:=\sigma_h^2$, with
\[
D(s)=(\beta+\gamma)(s+\alpha)+b^2s\alpha,\qquad
N(s)=(\beta+\gamma)sQ_x(i)+\gamma(s+\alpha)Q_y(i)+b^2s\alpha Q_y(i)+bsQ_{xy}(i),
\]
so that $\ell_i=\ln D(s)-N(s)/D(s)$. Differentiating with respect to $s$:
\[
\frac{\partial\ell_i}{\partial s}=\frac{D_s}{D(s)}-\frac{N_sD(s)-N(s)D_s}{D(s)^2}
=\frac{D_sD(s)-N_sD(s)+N(s)D_s}{D(s)^2},
\]
where $D_s=(\beta+\gamma)+b^2\alpha$ and $N_s=(\beta+\gamma)Q_x(i)+\gamma Q_y(i)+b^2\alpha Q_y(i)+bQ_{xy}(i)$. Setting the numerator to zero:
\[
D_s(D(s)+N(s))-N_s D(s) = 0.
\]
This is a linear equation in $s$ (since $D_s$ and $N_s$ are independent of $s$). Solving for $s$:
\[
s^\star = \frac{N_s D - D_s N}{D_s(D_s+N_s-N_s)} = \sigma_e^2\,\frac{(n_i-d_i)(\sigma_f^2+\sigma_h^2)-d_i\sigma_h^2Q_y(i)}{d_i^2},
\]
where $d_i:=D(0)=(\beta+\gamma)\alpha$ and $n_i:=N(0)=\gamma\alpha Q_y(i)$. The explicit substitution yields the formula stated in Proposition~\ref{prop:bcd_theta_closed_form}.

If $s^\star>0$, it is the conditional maximizer. If $s^\star\le 0$, then $\partial\ell_i/\partial s$ does not change sign on $s>0$ (it is non-negative throughout or non-positive throughout), so the minimum of $\ell_i$ on $s>0$ is achieved as $s\to 0^+$. In this case, we clip to $s=\varepsilon$ for a small $\varepsilon>0$, which approximates the boundary minimizer while maintaining strict positivity for numerical stability.

\subsection{Davis--Kahan details for Theorem~\ref{thm:noise_bound} Step 5}\label{app:proof_noise_step5}
Let $\hat U_\perp$ denote the sample eigenspace for the smallest $p-r$ eigenvalues of $\hat S_{xx}$ and $U_\perp$ the population noise eigenspace. Define the oracle projector statistic
\[
\tilde\sigma_e^2:=\frac{1}{p-r}\mathrm{tr}(U_\perp^\top\hat S_{xx}U_\perp),
\qquad
\hat\sigma_e^2=\frac{1}{p-r}\mathrm{tr}(\hat U_\perp^\top\hat S_{xx}\hat U_\perp),
\]
and write $U_\perp U_\perp^\top=I-WW^\top$, where $W\in\mathrm{St}(p,r)$ spans the population signal subspace.

\paragraph{Step 5a: variational decomposition.}
By the variational characterization of trailing eigenvalues,
\[
(p-r)\hat\sigma_e^2=\mathrm{tr}(\hat S_{xx})-\sum_{i=1}^r\lambda_i(\hat S_{xx}),
\qquad
(p-r)\tilde\sigma_e^2=\mathrm{tr}(\hat S_{xx})-\mathrm{tr}(W^\top\hat S_{xx}W).
\]
Hence
\[
(p-r)(\tilde\sigma_e^2-\hat\sigma_e^2)
=\sum_{i=1}^r\lambda_i(\hat S_{xx})-\mathrm{tr}(W^\top\hat S_{xx}W)\ge 0.
\]
Let $\hat W\in\mathrm{St}(p,r)$ span the top-$r$ eigenspace of $\hat S_{xx}$. Then
\begin{align*}
\sum_{i=1}^r\lambda_i(\hat S_{xx})-\mathrm{tr}(W^\top\hat S_{xx}W)
&=\mathrm{tr}\bigl((\hat W\hat W^\top-WW^\top)\hat S_{xx}\bigr) \\
&=\mathrm{tr}\bigl((\hat W\hat W^\top-WW^\top)(\hat S_{xx}-\Sigma_x)\bigr)
+\mathrm{tr}\bigl((\hat W\hat W^\top-WW^\top)\Sigma_x\bigr).
\end{align*}

\paragraph{Step 5b: von Neumann trace inequality for the sample fluctuation term.}
The projector difference $\hat W\hat W^\top-WW^\top$ has rank at most $2r$ and operator norm bounded by $\|\sin\Theta\|_{\mathrm{op}}$, where $\Theta$ is the matrix of principal angles between $\mathrm{span}(W)$ and $\mathrm{span}(\hat W)$. By Davis--Kahan \citep[Theorem~4.5.5]{Vershynin2018},
\[
\|\sin\Theta\|_{\mathrm{op}}
\le \frac{\|\hat S_{xx}-\Sigma_x\|_{\mathrm{op}}}{\min_i\theta_{t_i}^2}.
\]
Applying the von Neumann trace inequality $|\mathrm{tr}(AB)|\le \sum_i\sigma_i(A)\sigma_i(B)$ to the first trace term, and noting that $\sigma_i(A)\le\|A\|_{\mathrm{op}}$ for all $i$ while at most $\mathrm{rank}(A)$ singular values are nonzero, we obtain
\[
\left|\mathrm{tr}\bigl((\hat W\hat W^\top-WW^\top)(\hat S_{xx}-\Sigma_x)\bigr)\right|
\le \sum_{i=1}^{\mathrm{rank}(P)} \sigma_i(P)\,\sigma_i(\hat S_{xx}-\Sigma_x)
\le \mathrm{rank}(P)\,\|P\|_{\mathrm{op}}\,\|\hat S_{xx}-\Sigma_x\|_{\mathrm{op}},
\]
where we write $P:=\hat W\hat W^\top-WW^\top$ for brevity. The last step uses $\sigma_i(P)\le\|P\|_{\mathrm{op}}$ for each nonzero singular value. Since $\mathrm{rank}(P)\le 2r$ and $\|P\|_{\mathrm{op}}\le 2\|\sin\Theta\|_{\mathrm{op}}$, this gives
\[
\left|\mathrm{tr}\bigl(P(\hat S_{xx}-\Sigma_x)\bigr)\right|
\le 2r\,\|\sin\Theta\|_{\mathrm{op}}\,\|\hat S_{xx}-\Sigma_x\|_{\mathrm{op}}
\lesssim \frac{r\,\|\hat S_{xx}-\Sigma_x\|_{\mathrm{op}}^2}{\min_i\theta_{t_i}^2}.
\]
With covariance concentration \citep[Theorem~4.7.1]{Vershynin2018},
\[
\|\hat S_{xx}-\Sigma_x\|_{\mathrm{op}}\lesssim \|\Sigma_x\|_{\mathrm{op}}\sqrt{\frac{p}{N}},
\]
so this contribution is bounded by
\[
\lesssim \frac{r\,\|\Sigma_x\|_{\mathrm{op}}^2}{\min_i\theta_{t_i}^2}\cdot\frac{p}{N}.
\]

\paragraph{Step 5c: Wedin energy-gap bound for the population term.}
Let $W\in\mathrm{St}(p,r)$ denote the matrix whose columns are the top-$r$ eigenvectors of the population covariance $\Sigma_x$, so that $\mathrm{span}(W)$ is the population signal subspace. By the Courant--Fischer variational characterization, $W$ maximizes $\mathrm{tr}(V^\top\Sigma_xV)$ over $V\in\mathrm{St}(p,r)$. The second trace term is therefore non-positive:
\[
\mathrm{tr}\bigl((\hat W\hat W^\top-WW^\top)\Sigma_x\bigr) = \mathrm{tr}(W^\top\Sigma_xW) - \mathrm{tr}(\hat W^\top\Sigma_x\hat W) \ge 0.
\]
Its magnitude is controlled by Wedin's theorem \citep[Theorem~V.4.4]{StewartSun1990}, which gives the Frobenius-norm bound $\|\sin\Theta\|_F\le\|\hat S_{xx}-\Sigma_x\|_F/\delta$, where $\delta=\min_i\theta_{t_i}^2$ is the eigengap. Converting to the trace form: since $\mathrm{tr}(W^\top\Sigma_xW)-\mathrm{tr}(\hat W^\top\Sigma_x\hat W) = \mathrm{tr}(\Sigma_x(WW^\top-\hat W\hat W^\top))$, and using $\|\cdot\|_F^2\le r\|\cdot\|_{\mathrm{op}}^2$ for rank-$r$ projectors,
\[
0\le \mathrm{tr}(W^\top\Sigma_xW)-\mathrm{tr}(\hat W^\top\Sigma_x\hat W)
\lesssim \min_i\theta_{t_i}^2\,\|\sin\Theta\|_F^2
\le \min_i\theta_{t_i}^2\cdot r\,\|\sin\Theta\|_{\mathrm{op}}^2
\lesssim r\,\frac{\|\hat S_{xx}-\Sigma_x\|_{\mathrm{op}}^2}{\min_i\theta_{t_i}^2}.
\]
Hence the population term obeys the same order
\[
\left|\mathrm{tr}\bigl((\hat W\hat W^\top-WW^\top)\Sigma_x\bigr)\right|
\lesssim \frac{r\,\|\Sigma_x\|_{\mathrm{op}}^2}{\min_i\theta_{t_i}^2}\cdot\frac{p}{N}.
\]

\paragraph{Step 5d: final bound.}
Combining the two displays above and dividing by $p-r$ yields
\[
|\hat\sigma_e^2-\tilde\sigma_e^2|
\lesssim \frac{r\,\|\Sigma_x\|_{\mathrm{op}}^2\,p}{(p-r)\,N\,\min_i\theta_{t_i}^2}.
\]
For fixed rank $r$, the factor $r$ is absorbed into the absolute constant, giving the $O(1/N)$ correction used in Theorem~\ref{thm:noise_bound}.

\subsection{Effect of rank misspecification on inference}\label{app:rank_misspec}
Suppose the fitted rank $\hat r$ differs from the true rank $r_0$. If $\hat r>r_0$, surplus loading directions are pushed toward the noise subspace and their associated strengths satisfy $\theta_{t,k}^2\to0$ for redundant components. The limit then lies on (or near) the boundary of the parameter space, so interior asymptotic-normality arguments are not directly applicable. Nevertheless, predictive means remain stable because redundant component weights degenerate to zero. If $\hat r<r_0$, estimation converges to the pseudo-true minimizer of a restricted model \citep{White1982}, producing irreducible bias and typically under-dispersed calibration intervals because omitted latent factors are absorbed into residual noise. This asymmetry supports a practical strategy of slight over-selection over under-selection, consistent with Section~\ref{sec:exp_rank}, where occasional CV-MSE over-selection has negligible predictive impact while under-selection is more harmful.

\subsection{Proof of Proposition~\ref{prop:rank_overspec_noise}}\label{app:proof_rank_overspec_noise}
Write $\hat\sigma_e^2(\tilde r)=(p-\tilde r)^{-1}\sum_{i>\tilde r}\lambda_i(\hat S_{xx})$ and decompose
\[
\mathbb E\bigl[\hat\sigma_e^2(\tilde r)\bigr]-\sigma_e^2
=\frac{1}{p-\tilde r}\sum_{i>\tilde r}\bigl(\lambda_i(\Sigma_x)-\sigma_e^2\bigr)+\Delta_{N,p,\tilde r},
\]
where $\Delta_{N,p,\tilde r}$ is the finite-sample perturbation term from sample covariance concentration. Explicitly, $\Delta_{N,p,\tilde r}:=\mathbb{E}[\hat\sigma_e^2(\tilde r)] - (p-\tilde r)^{-1}\sum_{i>\tilde r}\lambda_i(\Sigma_x)$ captures the deviation of sample eigenvalues from population eigenvalues in expectation. By Theorem~\ref{thm:noise_bound} (with $r$ replaced by $\tilde r$), $\Delta_{N,p,\tilde r}=O_P(1/\sqrt{N(p-\tilde r)})$, and in expectation $\Delta_{N,p,\tilde r}=O(1/(p-\tilde r))$ from the concentration analysis.

If $\tilde r\ge r_0$, then for $i>\tilde r\ge r_0$, the population eigenvalues satisfy $\lambda_i(\Sigma_x)=\sigma_e^2$ (these are the noise eigenvalues of $\Sigma_x=W\Sigma_tW^\top+\sigma_e^2I_p$; the $r_0$ signal spikes $\lambda_i(\Sigma_x)=\theta_{t,i}^2+\sigma_e^2$ for $i\le r_0$ are excluded by the summation range). Hence the deterministic contamination term $\sum_{i>\tilde r}(\lambda_i(\Sigma_x)-\sigma_e^2)=0$ and only $\Delta_{N,p,\tilde r}$ remains. Therefore $\hat\sigma_e^2(\tilde r)$ is consistent with bias order $O((p-\tilde r)^{-1})$ plus lower-order concentration terms.

If $\tilde r<r_0$, the summation includes $r_0-\tilde r$ signal spikes with $\lambda_i(\Sigma_x)=\theta_{t,i}^2+\sigma_e^2$ for $\tilde r<i\le r_0$ (these eigenvalues satisfy $\lambda_i(\Sigma_x)-\sigma_e^2=\theta_{t,i}^2>0$, which is the signal strength of the $i$-th component). The deterministic contamination term becomes
\[
\frac{1}{p-\tilde r}\sum_{\tilde r<i\le r_0}\bigl(\lambda_i(\Sigma_x)-\sigma_e^2\bigr)
= \frac{1}{p-\tilde r}\sum_{\tilde r<i\le r_0}\theta_{t,i}^2,
\]
which is the same structural under-selection contamination as in Proposition~\ref{prop:hu-bias}. This proves the asymmetry: over-specification is safe for consistency, while under-specification induces deterministic bias.

\subsection{Proof details for Theorem~\ref{thm:noise_bound_subgaussian} (sub-Gaussian spectral bound)}\label{app:proof_noise_subgaussian}
We provide the full proof in four steps. Throughout, $C>0$ denotes an absolute constant whose value may change from line to line, and $K\ge 1$ is the sub-Gaussian scale constant from Lemma~\ref{lem:k_sigma_relation} satisfying $\sigma_\psi\le K\sigma_e$.

\paragraph{Step 1: Covariance concentration under sub-Gaussian noise.}
Let $x_i\in\mathbb{R}^p$ be i.i.d.\ with covariance $\Sigma_x=W\Sigma_tW^\top+\sigma_e^2I_p$ and sub-Gaussian parameter $\sigma_\psi\le K\sigma_e$. By the sub-Gaussian covariance concentration theorem \citep[Theorem~4.7.1]{Vershynin2018}, for $\delta\in(0,1)$ there exists an absolute constant $c_1>0$ such that, with probability at least $1-\delta/2$,
\[
\|\hat S_{xx}-\Sigma_x\|_{\mathrm{op}}
\le c_1\sigma_\psi^2\left(\sqrt{\frac{\ln(4/\delta)}{N}}+\frac{p}{N}\right).
\]
Substituting $\sigma_\psi\le K\sigma_e$ gives
\begin{equation}\label{eq:subgaussian_cov_conc}
\|\hat S_{xx}-\Sigma_x\|_{\mathrm{op}}
\le c_1 K^2\sigma_e^2\left(\sqrt{\frac{\ln(4/\delta)}{N}}+\frac{p}{N}\right).
\end{equation}
This extends the Gaussian concentration from Step~3 of Theorem~\ref{thm:noise_bound} to the sub-Gaussian setting, with an additional $K^2$ factor.

\paragraph{Step 2: Fourth-moment term and excess-kurtosis correction.}
Let $U_\perp\in\mathbb{R}^{p\times(p-r)}$ span the population noise subspace and define projected coordinates $z_i:=U_\perp^\top x_i$. The projected coordinates satisfy
\[
\mathbb{E}[z_{ij}]=0,\qquad \mathbb{E}[z_{ij}^2]=\sigma_e^2,\qquad \mathbb{E}[z_{ij}^4]=\kappa_4\sigma_e^4.
\]
The variance of $z_{ij}^2$ is computed as:
\[
\operatorname{Var}(z_{ij}^2)=\mathbb{E}[z_{ij}^4]-\bigl(\mathbb{E}[z_{ij}^2]\bigr)^2=(\kappa_4-1)\sigma_e^4.
\]
For the oracle projector statistic
\[
\tilde\sigma_e^2:=\frac{1}{N(p-r)}\sum_{i=1}^N\sum_{j=1}^{p-r} z_{ij}^2,
\]
the $z_{ij}^2$ are independent across both indices (by the orthogonality of $U_\perp$ and the isotropic noise assumption), so
\[
\operatorname{Var}(\tilde\sigma_e^2)
= \frac{1}{N^2(p-r)^2}\cdot N(p-r)\cdot\operatorname{Var}(z_{11}^2)
= \frac{(\kappa_4-1)\sigma_e^4}{N(p-r)}.
\]
Applying Bernstein's inequality for sub-exponential random variables \citep[Proposition~2.7.1]{Vershynin2018} to the centered variables $z_{ij}^2-\sigma_e^2$ with $\|z_{ij}^2-\sigma_e^2\|_{\psi_1}\le C\kappa_4\sigma_e^2$, we obtain with probability at least $1-\delta/2$:
\[
\bigl|\tilde\sigma_e^2-\sigma_e^2\bigr|\le K^2\sigma_e^2\sqrt{\frac{2\ln(4/\delta)}{N(p-r)}}.
\]

The excess-kurtosis bias arises as follows. The expectation of $\tilde\sigma_e^2$ is $\sigma_e^2$ (unbiased). However, the practical estimator $\hat\sigma_e^2$ uses the sample eigenspace $\hat U_\perp$ rather than $U_\perp$, and the rotation from $U_\perp$ to $\hat U_\perp$ mixes signal-subspace contributions with noise-subspace contributions. When $\kappa_4\ne 3$, the sample cross-covariance between signal and noise projections has a non-zero first cumulant correction proportional to $(\kappa_4-3)$, producing a deterministic additive bias of order
\[
\frac{\kappa_4-3}{p-r}\,\sigma_e^2.
\]
This is the $(\kappa_4-3)$ bias term in Eq.~\eqref{eq:noise_error_bound_subgaussian}: it originates from the first cumulant mismatch between the non-Gaussian fourth moment $\kappa_4\sigma_e^4$ and the Gaussian reference $\kappa_4=3$.

\paragraph{Step 3: Sample-noise-subspace replacement.}
The practical estimator uses $\hat U_\perp$ (sample noise eigenspace) rather than $U_\perp$. By Appendix~\ref{app:proof_noise_step5},
\[
|\hat\sigma_e^2-\tilde\sigma_e^2|
\lesssim \frac{r\,\|\hat S_{xx}-\Sigma_x\|_{\mathrm{op}}^2}{(p-r)\,\min_i\theta_{t_i}^2}.
\]
The key observation is that the subspace-rotation error depends \emph{quadratically} on $\|\hat S_{xx}-\Sigma_x\|_{\mathrm{op}}$. Substituting the Step~1 concentration bound~\eqref{eq:subgaussian_cov_conc} and using $\sigma_\psi\le K\sigma_e$:
\begin{align*}
|\hat\sigma_e^2-\tilde\sigma_e^2|
&\le C\,\frac{r\,\bigl(c_1 K^2\sigma_e^2(\sqrt{\ln(4/\delta)/N}+p/N)\bigr)^2}{(p-r)\,\min_i\theta_{t_i}^2} \\
&\le C\,K^4\,\frac{r\,\sigma_e^4\,\bigl(\ln(4/\delta)/N + p^2/N^2\bigr)}{(p-r)\,\min_i\theta_{t_i}^2} \\
&\le C\,K^4\,\frac{\|\Sigma_x\|_{\mathrm{op}}^2\,p}{(p-r)\,N\,\min_i\theta_{t_i}^2},
\end{align*}
where the last step absorbs the fixed rank factor $r$ and the logarithmic factor into the absolute constant, and uses $\|\Sigma_x\|_{\mathrm{op}}^2\ge(\max_i\theta_{t,i}^2+\sigma_e^2)^2\ge\sigma_e^4$. The $K^4$ factor arises because each of the two factors of $\|\hat S_{xx}-\Sigma_x\|_{\mathrm{op}}$ contributes a $K^2$ from the sub-Gaussian concentration bound.

\paragraph{Step 4: Final combination.}
By the triangle inequality,
\[
|\hat\sigma_e^2-\sigma_e^2|
\le
\underbrace{|\hat\sigma_e^2-\tilde\sigma_e^2|}_{\text{subspace rotation (Step~3)}}
+
\underbrace{|\tilde\sigma_e^2-\sigma_e^2|}_{\text{averaging + kurtosis (Step~2)}},
\]
and combining Step~2 (averaging variance and kurtosis bias) with Step~3 (subspace rotation) yields
\[
\bigl|\hat\sigma_e^2-\sigma_e^2\bigr|
\le
K^2\sigma_e^2\sqrt{\frac{2\ln(4/\delta)}{N(p-r)}}
+
C\,K^4\,\frac{\|\Sigma_x\|_{\mathrm{op}}^2\,p}{(p-r)\,N\,\min_i\theta_{t_i}^2}
+
\sigma_e^2\frac{\kappa_4-3}{p-r},
\]
with probability at least $1-\delta$, matching Eq.~\eqref{eq:noise_error_bound_subgaussian}.

\subsection{Closed-form second derivatives and Fisher blocks}\label{app:fisher_closed_form}
Write $s_i:=\theta_{t,i}^2$, $b:=b_i$, $\alpha:=\sigma_e^2$, $\beta:=\sigma_f^2$,
$\gamma:=\sigma_h^2$, and
\[
D_i=(\beta+\gamma)(s_i+\alpha)+b^2 s_i\alpha,
\quad
N_i=(\beta+\gamma)s_iQ_x(i)+\gamma(s_i+\alpha)Q_y(i)+b^2 s_i\alpha Q_y(i)+bs_iQ_{xy}(i),
\]
so that $\ell_i=\log D_i-N_i/D_i$.
Define partials
\[
D_s=(\beta+\gamma)+b^2\alpha,\quad D_b=2bs_i\alpha,\quad D_{ss}=0,\quad D_{sb}=2b\alpha,\quad D_{bb}=2s_i\alpha,
\]
\begin{align*}
N_s &= (\beta+\gamma)Q_x(i)+\gamma Q_y(i)+b^2\alpha Q_y(i)+bQ_{xy}(i), \\
N_b &= 2bs_i\alpha Q_y(i)+s_iQ_{xy}(i), \\
N_{ss} &= 0,\quad N_{sb}=2b\alpha Q_y(i)+Q_{xy}(i),\quad N_{bb}=2s_i\alpha Q_y(i).
\end{align*}
Then
\[
\frac{\partial\ell_i}{\partial s_i}=\frac{D_s}{D_i}-\frac{N_sD_i-N_iD_s}{D_i^2},
\qquad
\frac{\partial\ell_i}{\partial b}=\frac{D_b}{D_i}-\frac{N_bD_i-N_iD_b}{D_i^2}.
\]
For $\partial^2\ell_i/\partial s_i^2$, let $f_s:=N_sD_i-N_iD_s$ and $g:=D_i^2$. Then
\[
\frac{\partial f_s}{\partial s_i}=(N_{ss}D_i+N_sD_s)-(N_sD_s+N_iD_{ss})=N_{ss}D_i-N_iD_{ss},
\qquad
\frac{\partial g}{\partial s_i}=2D_iD_s.
\]
Hence
\[
\frac{\partial^2\ell_i}{\partial s_i^2}
=\frac{D_{ss}D_i-D_s^2}{D_i^2}
-\frac{(N_{ss}D_i-N_iD_{ss})D_i^2-2D_iD_s(N_sD_i-N_iD_s)}{D_i^4}.
\]
Since $D_{ss}=N_{ss}=0$, this reduces to
\[
\frac{\partial^2\ell_i}{\partial s_i^2}
=-\frac{D_s^2}{D_i^2}+\frac{2D_s(N_sD_i-N_iD_s)}{D_i^3}.
\]
For $\partial^2\ell_i/\partial b^2$, the same quotient-rule calculation with $f_b:=N_bD_i-N_iD_b$ and $g:=D_i^2$ gives
\[
\frac{\partial^2\ell_i}{\partial b^2}
=\frac{D_{bb}D_i-D_b^2}{D_i^2}
-\frac{(N_{bb}D_i-N_iD_{bb})D_i^2-2D_iD_b(N_bD_i-N_iD_b)}{D_i^4}.
\]
For the mixed partial, differentiating $\partial\ell_i/\partial s_i$ with respect to $b$ yields
\[
\frac{\partial}{\partial b}(N_sD_i-N_iD_s)=N_{sb}D_i+N_sD_b-N_bD_s-N_iD_{sb},
\]
so
\[
\frac{\partial^2\ell_i}{\partial s_i\partial b}
=\frac{D_{sb}D_i-D_sD_b}{D_i^2}
-\frac{(N_{sb}D_i+N_sD_b-N_bD_s-N_iD_{sb})D_i^2-2D_iD_b(N_sD_i-N_iD_s)}{D_i^4}.
\]
Substituting the closed forms above into Eq.~\eqref{eq:fisher-block} yields
\[
J_i=\frac{1}{2}
\begin{pmatrix}
\partial^2\ell_i/\partial s_i^2 & \partial^2\ell_i/\partial s_i\partial b \\
\partial^2\ell_i/\partial s_i\partial b & \partial^2\ell_i/\partial b^2
\end{pmatrix}_{\theta=\theta_0},
\]
which is evaluated at $\hat\theta$ in practice to obtain analytic SEs.

For the remainder term in Proposition~\ref{prop:fisher-block}, define the off-diagonal coupling blocks
\[
R_{ij}:=I(\theta_0)_{\eta_i,\eta_j},\qquad i\neq j,
\]
where $\eta_i=(\theta_{t,i}^2,b_i)$ and $R=\sum_{i<j}(E_{ij}\otimes R_{ij}+E_{ji}\otimes R_{ij}^\top)$ is the symmetrized off-diagonal sum. Here $E_{ij}\in\mathbb{R}^{r\times r}$ denotes the canonical matrix with a single $1$ in position $(i,j)$ and zeros elsewhere; consequently $R\in\mathbb{R}^{2r\times 2r}$.

We apply the matrix Bernstein inequality \citep[Theorem~1.6.2]{Tropp2015} in three checks.

\textbf{(i) Zero mean.} Each $R_{ij}$ is a bilinear form in the projected score vectors for components $i$ and $j$:
\[
\begin{aligned}
R_{ij} = \frac{1}{2N}\sum_{n=1}^N
&\begin{pmatrix}
\partial^2\ell_i/\partial s_i^2 & \partial^2\ell_i/\partial s_i\partial b_i \\
\partial^2\ell_i/\partial s_i\partial b_i & \partial^2\ell_i/\partial b_i^2
\end{pmatrix}^{\!-1/2}
\begin{pmatrix}
w_i^\top x_n\, w_j^\top x_n \\
c_i^\top y_n\, c_j^\top y_n
\end{pmatrix} \\
&\quad\times
\begin{pmatrix}
w_j^\top x_n\, w_i^\top x_n & c_j^\top y_n\, c_i^\top y_n
\end{pmatrix}
\begin{pmatrix}
\partial^2\ell_j/\partial s_j^2 & \partial^2\ell_j/\partial s_j\partial b_j \\
\partial^2\ell_j/\partial s_j\partial b_j & \partial^2\ell_j/\partial b_j^2
\end{pmatrix}^{\!-1/2}\!.
\end{aligned}
\]
Since $W^\top W=I_r$, the columns $w_i$ and $w_j$ are orthogonal ($w_i^\top w_j=0$ for $i\ne j$). By Isserlis' theorem (Gaussian fourth-moment factorization), $\mathbb{E}[w_i^\top x_n\cdot w_j^\top x_n]=w_i^\top\Sigma_x w_j=0$ because $\Sigma_x w_j = \theta_{t,j}^2 w_j w_j^\top w_j + \sigma_e^2 w_j = (\theta_{t,j}^2+\sigma_e^2) w_j$, and $w_i^\top w_j=0$. The same holds for the $y$-view: $c_i^\top c_j=0$ implies $\mathbb{E}[c_i^\top y_n\cdot c_j^\top y_n]=0$. Therefore
\[
\mathbb E[R_{ij}]=0,\qquad i\neq j.
\]

\textbf{(ii) Uniform bound.} Each $R_{ij}$ is a bilinear form in projected statistics weighted by scalar coefficients from $\Phi_x,\Phi_y,\Phi_{xy}$. Under bounded $(\Sigma_t,b)$ on the local compact set and uniform coefficient bounds, there exists $B<\infty$ such that
\[
\|R_{ij}\|_{\mathrm{op}}\le B\quad\text{a.s. for all }i\neq j.
\]

\textbf{(iii) Variance proxy.} Let
\[
v(R):=\left\|\sum_{i<j}\mathbb E[R_{ij}R_{ij}^\top]\right\|_{\mathrm{op}}\vee
\left\|\sum_{i<j}\mathbb E[R_{ij}^\top R_{ij}]\right\|_{\mathrm{op}}.
\]
Each $R_{ij}$ involves orthogonal projections $P_\perp^{(i)} = I_r - e_i e_i^\top$ (in component space), and the trace normalization yields
\[
\|R_{ij}\|_{\mathrm{op}}^2 \le C\,\frac{\mathrm{tr}(P_\perp)}{p} = O(p^{-1}),
\]
where the $p^{-1}$ factor comes from the trace normalization: the Fisher information blocks scale as $O(1)$ while the cross-component coupling involves orthogonal projections whose energy is distributed across $p$ coordinates, each contributing $O(1/p)$. Summing over $O(r^2)$ off-diagonal pairs with fixed $r$:
\[
v(R)=O(r^2\cdot p^{-1})=O(r/p).
\]

Therefore Tropp's matrix Bernstein inequality \citep[Theorem~1.6.2]{Tropp2015} yields, for any $t>0$,
\[
\Pr\!\left(\|R\|_{\mathrm{op}}\ge t\right)
\le 2d\exp\!\left(-\frac{t^2/2}{v(R)+Bt/3}\right),
\]
where $d=2r$ is the ambient dimension of $R\in\mathbb{R}^{2r\times 2r}$. This implies
\[
\|R\|_{\mathrm{op}}=O_P\!\left(\sqrt{\frac{r}{p}}\right)
\]
as $N\to\infty$, $p\to\infty$, $p/N\to0$, with fixed $r$.

\subsection{PCCA benign landscape: full proof}\label{app:pcca_benign_sketch}
This subsection gives the full proof of Proposition~\ref{prop:pcca-benign}. Under the PCCA specialization ($B=I_r$, $\sigma_h^2=0$), the joint covariance~\eqref{eq:joint-cov} simplifies to
\[
\Sigma_{xx}=W\Sigma_tW^\top+\sigma_e^2I_p,\qquad
\Sigma_{xy}=W\Sigma_tC^\top,\qquad
\Sigma_{yy}=C\Sigma_tC^\top+\sigma_f^2I_q.
\]

\paragraph{Step 1: Profile likelihood and the reduced objective.}
Profiling out $\Sigma_t$ at its conditional maximizer $\hat\Sigma_t=\mathrm{diag}(\hat\theta_{t,i}^2)$ with $\hat\theta_{t,i}^2=w_i^\top S_{xx}w_i - \sigma_e^2$ (respectively $\hat\theta_{t,i}^2=c_i^\top S_{yy}c_i - \sigma_f^2$), the reduced objective depends on $(W,C)$ only through the projected cross-covariance $M(W,C):=W^\top\Sigma_{xy}C\in\mathbb{R}^{r\times r}$. Following the PCCA profile-likelihood construction of \citet{BachJordan2005}, the reduced objective is
\[
\widetilde{L}(W,C) = \mathrm{const} + \sum_{i=1}^r f(\sigma_i(M(W,C))),
\]
where $\sigma_1(M)\ge\cdots\ge\sigma_r(M)$ are the singular values of $M$ and the function $f:\mathbb{R}_+\to\mathbb{R}$ is defined by
\begin{equation}\label{eq:pcca_profile_f}
f(d) = -\ln\bigl(d^2 + \sigma_e^2\sigma_f^2 + d(\sigma_e^2+\sigma_f^2)\bigr) + \frac{d^2 Q_{xy}(i) + d\bigl(\sigma_e^2 Q_y(i) + \sigma_f^2 Q_x(i)\bigr)}{d^2 + \sigma_e^2\sigma_f^2 + d(\sigma_e^2+\sigma_f^2)},
\end{equation}
which is obtained by substituting the conditional maximizer for $\theta_{t,i}^2$ into the scalar summand $\ell_i$ from Theorem~\ref{thm:theoremA} and noting that $D_i=\theta_{t,i}^2(\sigma_e^2+\sigma_f^2)+\sigma_e^2\sigma_f^2$ in the PCCA case. At the population level ($S$ replaced by $\Sigma$), $Q_x(i)\to 1$, $Q_y(i)\to 1$, $Q_{xy}(i)\to d_i/\sqrt{(\theta_{t,i}^2+\sigma_e^2)(\theta_{t,i}^2+\sigma_f^2)}$ and $f$ simplifies to a strictly decreasing function of the gap between $\sigma_i(M)$ and the population singular value $d_i$.

\paragraph{Step 2: Critical points and singular subspace alignment.}
Let $\Sigma_{xy}=U DV^\top$ be the population SVD with $D=\mathrm{diag}(d_1\ge\cdots\ge d_{\min(p,q)})$. The Euclidean gradients of $\widetilde{L}$ with respect to $W$ and $C$ are
\[
\nabla_W\widetilde{L} = -2\Sigma_{xy}C\,\mathrm{diag}(f'(\sigma_i))\Sigma^{-1}_{\mathrm{res}},\qquad
\nabla_C\widetilde{L} = -2\Sigma_{yx}W\,\mathrm{diag}(f'(\sigma_i))\Sigma^{-1}_{\mathrm{res}},
\]
where $f'(d)=\partial f/\partial d$ and $\Sigma_{\mathrm{res}}$ collects the residual diagonal factors. A Riemannian critical point on $\mathrm{St}(p,r)\times\mathrm{St}(q,r)$ requires the Riemannian gradient (projection of the Euclidean gradient onto the tangent space) to vanish:
\[
\mathrm{grad}_W\widetilde{L} = G_W - W\,\mathrm{sym}(W^\top G_W) = 0,\qquad
\mathrm{grad}_C\widetilde{L} = G_C - C\,\mathrm{sym}(C^\top G_C) = 0,
\]
where $G_W:=\nabla_W\widetilde{L}$ and $G_C:=\nabla_C\widetilde{L}$. This is equivalent to: there exist symmetric matrices $\Lambda_W\in\mathbb{R}^{r\times r}$ and $\Lambda_C\in\mathbb{R}^{r\times r}$ such that
\[
\Sigma_{xy}C\,\mathrm{diag}(f'(\sigma_i)) = W\Lambda_W,\qquad
\Sigma_{yx}W\,\mathrm{diag}(f'(\sigma_i)) = C\Lambda_C.
\]
Since $\Sigma_{xy}=UDV^\top$, these conditions imply that $W=U_S Q_W$ and $C=V_S Q_C$ for some index set $S\subseteq\{1,\dots,\min(p,q)\}$ with $|S|=r$, where $U_S$ (resp.\ $V_S$) denotes the columns of $U$ (resp.\ $V$) indexed by $S$, and $Q_W,Q_C\in O(r)$ are orthogonal matrices. At such a critical point, $\sigma_i(M(W,C))=d_{S(i)}$ for a permutation $S$ of the selected indices.

\paragraph{Step 3: Riemannian Hessian and the $2\times 2$ block structure.}
Fix a critical point $(W,C)$ indexed by $S$. Choose a pair $(i,j)$ with $i\in S$ and $j\notin S$, and construct tangent vectors
\[
\xi_W = u_j e_i^\top - u_i e_j^\top \in T_W\mathrm{St}(p,r),\qquad
\xi_C = v_j e_i^\top - v_i e_j^\top \in T_C\mathrm{St}(q,r),
\]
where $u_k$ (resp.\ $v_k$) is the $k$-th left (resp.\ right) singular vector of $\Sigma_{xy}$, and $e_k$ denotes the $k$-th standard basis vector in $\mathbb{R}^r$. These vectors swap the $i$-th selected direction with the $j$-th unselected direction.

The Riemannian Hessian $\mathrm{Hess}\,\widetilde{L}$ restricted to the $2$-plane spanned by $(\xi_W,\xi_C)$ is computed via the second-order Taylor expansion of $f$ around $d_{S(i)}$:
\[
\mathrm{Hess}\,\widetilde{L}\big|_{(\xi_W,\xi_C)} = H_{ij},
\]
where
\begin{equation}\label{eq:hessian_2x2_block}
H_{ij} = \begin{pmatrix} 0 & d_i - d_j \\ d_i - d_j & 0 \end{pmatrix} + E_{ij}.
\end{equation}
The derivation proceeds as follows. The second derivative of $\widetilde{L}$ along $(\xi_W,\xi_C)$ decomposes into: (i) the second derivative of $f$ with respect to the singular value, which contributes $\pm(d_i-d_j)$ because swapping index $i$ for $j$ changes $f$ by $f(d_j)-f(d_i)$, and $f''(d)\approx 0$ at the population level while $f'(d_i)-f'(d_j)\propto(d_i-d_j)$ to first order; and (ii) residual terms from the noise variances $(\sigma_e^2,\sigma_f^2)$, which appear because the profile-likelihood formula~\eqref{eq:pcca_profile_f} contains $\sigma_e^2$ and $\sigma_f^2$ explicitly. The remainder $E_{ij}$ is the higher-order perturbation block generated by these noise terms:
\[
E_{ij} = \begin{pmatrix} O(\sigma_e^2+\sigma_f^2) & O(\sigma_e^2+\sigma_f^2) \\ O(\sigma_e^2+\sigma_f^2) & O(\sigma_e^2+\sigma_f^2) \end{pmatrix},
\]
and from the explicit profile formula~\eqref{eq:pcca_profile_f}, each entry of $E_{ij}$ involves ratios of the form $\sigma_e^2 D_i^{-1}$, $\sigma_f^2 D_i^{-1}$, and their products. Since $D_i\ge(\sigma_e^2+\sigma_f^2)\theta_{t,i}^2 + \sigma_e^2\sigma_f^2\ge\sigma_e^2\sigma_f^2$, these ratios are bounded by $\max(\sigma_e^2,\sigma_f^2)/(\sigma_e^2\sigma_f^2)=O(1/\sigma_f^2+1/\sigma_e^2)$, and after collecting all second-order terms:
\[
\|E_{ij}\|_{\mathrm{op}} \le c(\sigma_e^2+\sigma_f^2),
\]
for an absolute constant $c>0$ independent of the index pair $(i,j)$.

The eigenvalues of $H_{ij}$ are therefore
\[
\lambda_{\pm}(H_{ij}) = \pm(d_i-d_j) + O(\sigma_e^2+\sigma_f^2).
\]

\paragraph{Step 4: Strict-saddle characterization.}
If $S\ne\{1,\dots,r\}$ (i.e., the critical point is not the global minimizer, which selects the top-$r$ singular values), then there exist $i\in S$ and $j\notin S$ with $d_j > d_i$. For this pair,
\[
\lambda_{-}(H_{ij}) = -(d_i - d_j) + O(\sigma_e^2+\sigma_f^2) < 0
\]
whenever the noise is sufficiently small in the sense that
\[
\sigma_e^2+\sigma_f^2 < \frac{d_j - d_i}{c},
\]
where $c$ is the constant from Step~3. Under this condition, the critical point has a strictly negative Riemannian Hessian eigenvalue and is therefore a strict saddle. When $S=\{1,\dots,r\}$ (the global minimizer), all swaps have $d_i>d_j$ for $i\in S$, $j\notin S$, so $\lambda_{-}(H_{ij})>0$ for all pairs, confirming positive definiteness of the Hessian at the global minimizer.

\paragraph{Step 5: Global convergence from strict-saddle structure.}
The strict-saddle property established in Step~4, namely that every second-order critical point is either a global minimizer or a strict saddle, matches the benign-landscape structure studied in nonconvex low-rank optimization. By the convergence theory for Riemannian trust-region methods \citep{Absil2008,Boumal2023}, any trust-region method on a smooth manifold with compact sublevel sets converges to a second-order critical point. Moreover, strict-saddle convergence results \citep{SunQuWright2018,GeLeeMa2017} guarantee that perturbed gradient methods and trust-region methods avoid strict saddles almost surely and converge to global minimizers from arbitrary feasible starts. In our setting, the Stiefel factors $\mathrm{St}(p,r)\times\mathrm{St}(q,r)$ are compact, and the profile objective $\widetilde{L}$ is smooth, so the required conditions are satisfied. Therefore, Riemannian trust-region methods initialized from any feasible point converge to a global minimizer of $\widetilde{L}$ almost surely.


\section{Additional Experimental Results}\label{app:exp_supp}
This appendix collects the experimental results that were deferred from Section~\ref{sec:experiments} for space. It is organized as a sequence of supplementary tables, model verifications, and ablation studies that support the findings reported in the main text.

\subsection{Supplementary optimization and synthetic prediction tables}\label{app:main_table_supp}
Table~\ref{tab:algorithm_convergence} records convergence-count diagnostics that complement the wall-clock comparison in Section~\ref{sec:exp_synthetic}. Table~\ref{tab:pred_synth_metrics} records the synthetic point-prediction summary referenced in Section~\ref{sec:exp_pred_synth}; because the main linear baselines are nearly tied on point accuracy, we keep that table in the appendix and reserve the main-text table slot for calibration.

\input{generated/tables/tab_algorithm_convergence}
\input{generated/tables/tab_prediction_synth_metrics}

\subsection{CV-based rank-selection metrics}\label{app:rank_selection_metrics}
Let $\{\mathcal V_j\}_{j=1}^{K_{\mathrm{out}}}$ be $K_{\mathrm{out}}$ validation folds, and let $\hat\theta_r^{(-j)}$ be the rank-$r$ estimator fitted on the corresponding training split. Define
\[
\hat S^{(j)}_{\mathrm{val}}=\frac{1}{|\mathcal V_j|}\sum_{i\in\mathcal V_j}\bigl(z_i-\bar z_{\mathrm{tr},j}\bigr)\bigl(z_i-\bar z_{\mathrm{tr},j}\bigr)^\top,\qquad z_i=(x_i^\top,y_i^\top)^\top,
\]
where centering uses training-fold means. With $\hat\Sigma_r^{(-j)}:=\Sigma(\hat\theta_r^{(-j)})$, the validation negative log-likelihood is
\[
\widehat{\mathcal L}_{\mathrm{val}}^{(j)}(r)=\log\det\!\bigl(\hat\Sigma_r^{(-j)}\bigr)+\operatorname{tr}\!\Bigl(\hat S^{(j)}_{\mathrm{val}}\bigl(\hat\Sigma_r^{(-j)}\bigr)^{-1}\Bigr),
\]
and the CV-NLL selector is
\[
\operatorname{CV\text{-}NLL}(r)=\frac{1}{K_{\mathrm{out}}}\sum_{j=1}^{K_{\mathrm{out}}} \widehat{\mathcal L}_{\mathrm{val}}^{(j)}(r),
\qquad
\hat r_{\mathrm{CV\text{-}NLL}}=\arg\min_{r\in\mathcal R}\operatorname{CV\text{-}NLL}(r).
\]
For CV-MSE, with conditional mean predictor $\hat\mu_r^{(-j)}(\cdot)$ from Eq.~\eqref{eq:pred_mean} under $\hat\theta_r^{(-j)}$, we use
\[
\operatorname{CV\text{-}MSE}(r)=\frac{1}{K_{\mathrm{out}}}\sum_{j=1}^{K_{\mathrm{out}}}\frac{1}{|\mathcal V_j|\,q}\sum_{i\in\mathcal V_j}\left\|y_i-\hat\mu_r^{(-j)}(x_i)\right\|_2^2,
\qquad
\hat r_{\mathrm{CV\text{-}MSE}}=\arg\min_{r\in\mathcal R}\operatorname{CV\text{-}MSE}(r).
\]

\subsection{Method catalog and technical differences}\label{app:method_catalog_appendix}
Table~\ref{tab:method_catalog_merged} consolidates all compared methods along three axes: the modeling category, whether the method produces native uncertainty quantification, and the dominant computational cost together with how noise and orthogonality are handled. Three contrasts are worth drawing out. First, on uncertainty: the PPLS-family methods (EM/ECM, SLM-Interior, and our SLM-Manifold and BCD-SLM) produce native predictive uncertainty from a probabilistic model, whereas the deep nonlinear baselines obtain usable calibration only through a post-hoc recalibration stage, and the purely predictive baselines (PLSR, Ridge, DCCA/KCCA point predictors) provide none by default. Second, on noise and orthogonality: the classical and interior-point PPLS solvers estimate noise jointly and enforce orthogonality indirectly, while our solvers fix the noise spectrally and keep exact manifold feasibility at every iterate. Third, on cost: the joint and kernelized methods scale as $O((p+q)^3)$ or worse per iteration, whereas the fixed-noise solvers run at $O(rp^2+rq^2)$ through covariance projection, which is the source of their scalability on the larger benchmarks. PO2PLS is the closest joint-probabilistic point of comparison; it differs primarily in adding view-specific orthogonal components and in not exposing a native predictive posterior, which is why it appears only in point-prediction comparisons.
\begin{table*}[t]\scriptsize
\centering
\caption{Method catalog and technical differences.}
\label{tab:method_catalog_merged}
\renewcommand{\arraystretch}{1.35}
\begin{tabularx}{\textwidth}{@{}>{\raggedright\arraybackslash}p{2.2cm}>{\raggedright\arraybackslash}p{2.1cm}>{\raggedright\arraybackslash}p{2.9cm}>{\raggedright\arraybackslash}p{2.2cm}>{\raggedright\arraybackslash}X@{}}
\toprule
\textbf{Method} & \textbf{Category} & \textbf{UQ / brief description} & \textbf{Noise \& orthogonality} & \textbf{Dominant cost / comment}\\
\midrule
PLSR & Traditional & No native UQ; linear latent-regression baseline. & N/A; SVD-based. & $O(rp^2)$ baseline for point prediction.\\
Ridge & Traditional & No native UQ; strong linear baseline. & N/A. & $O(p^3)$ penalized regression benchmark.\\
EM / ECM & PPLS & Native UQ from probabilistic model. & Joint noise estimation; indirect orthogonality handling. & Classical PPLS fitting; expensive in high dimensions.\\
PO2PLS & PPLS & Joint + view-specific orthogonal latent variables; no native posterior over $Y$ given $X$. & EM-style joint noise estimation; orthogonality enforced indirectly. & $O((p+q)^3)$ per iteration; designed for decomposition rather than calibrated prediction.\\
SLM-Interior & PPLS & Native UQ from scalar likelihood. & Joint noise estimation; interior-point penalty. & Scalar objective with penalty iterations.\\
SLM-Manifold & Ours & Native UQ; full Riemannian optimization. & Spectral fixed-noise; exact manifold feasibility. & $O(rp^2+rq^2)$ exact-retraction solver.\\
BCD-SLM & Ours & Native UQ; accelerated solver on same objective. & Spectral fixed-noise; exact manifold feasibility. & $O(rp^2+rq^2)$ with componentwise closed-form updates.\\
SLM-Oracle & Diagnostic & Native UQ; synthetic-only oracle-noise diagnostic. & Oracle noise; exact manifold feasibility. & Used only to benchmark fixed-noise gap to oracle.\\
DCCA / KCCA + Ridge & Deep/nonlinear & No native calibrated UQ in default form \citep{Andrew2013,Hardoon2004,HoerlKennard1970}. & N/A. & Nonlinear representation focused on point prediction.\\
MC-Dropout / Deep Ensemble & Deep/nonlinear & Yes (post-hoc in this paper) \citep{GalGhahramani2016,Lakshminarayanan2017}. & N/A. & Require extra recalibration stage for deployment-level calibration.\\
\bottomrule
\end{tabularx}
\smallskip
\noindent\textit{Note:} PO2PLS is the closest joint-probabilistic competitor to our fixed-noise pipeline; it differs primarily in adding view-specific orthogonal components and in not providing a native predictive posterior.
\end{table*}

\subsection{PCCA specialization simulation}\label{app:pcca_sim}
\input{generated/tables/tab_pcca_simulation}

We evaluate the PCCA specialization ($B=I_r$, $\sigma_h^2=0$) at $p=q=20$, $r=3$, $N=500$ to empirically ground the benign-landscape theory of Proposition~\ref{prop:pcca-benign}. In this boundary regime, classical EM achieves slightly lower loading MSEs, while both fixed-noise solvers strictly match EM on the latent signal variance $\Sigma_t$. This behavior aligns with our theoretical expectations: absent the internal signal-noise coupling of the full PPLS model, EM's joint imputation is well matched to the PCCA sub-model. The experiment therefore serves as a sanity check, confirming that the decoupled fixed-noise solvers remain competitive even in a regime that explicitly favors classical joint updates.

\subsection{PPCA noise variance verification}
The spectral estimator coincides with the PPCA noise MLE in the single-view setting.
\input{generated/tables/tab_ppca_verification}

\subsection{Numerical confirmation of the full-spectrum bias floor}\label{app:hu-bias-mc}
Table~\ref{tab:hu_bias_mc} confirms Eq.~\eqref{eq:hu-bias} by Monte Carlo under
$p=50$, $r=5$, $\sigma_e^2=0.5$, and
$\theta_t^2=(2,1.8,1.6,1.4,1.2)$, where the theoretical bias is $8/50=0.160$.
The full-spectrum estimator bias remains near $0.160$ as $N$ grows, whereas the
noise-subspace estimator is centered near zero with standard deviation matching
the predicted $1/\sqrt{N(p-r)}$ scale.

\begin{table}[t]
\centering
\caption{Bias-floor verification for full-spectrum vs. noise-subspace estimator.}
\label{tab:hu_bias_mc}
\renewcommand{\arraystretch}{1.2}
\small
\begin{tabular}{lccc}
\toprule
Estimator / metric & $N=500$ & $N=2000$ & $N=10000$ \\
\midrule
$\mathrm{Bias}(\widehat\sigma^2_{\mathrm{full}})$ & 0.159 & 0.160 & 0.160 \\
$\mathrm{Bias}(\widehat\sigma^2_{e})$ & $-0.002$ & $-0.002$ & $-0.001$ \\
$\mathrm{SD}(\widehat\sigma^2_{e})$ & 0.0048 & 0.0024 & 0.0011 \\
\bottomrule
\end{tabular}
\end{table}

Figure~\ref{fig:noise_ablation_vary_p} extends the bias-floor analysis to a wider range of dimensions and sample sizes. Panel~(a) fixes $N\!=\!2000$ and varies $p\in\{50,100,200,500,1000\}$; panel~(b) fixes $p\!=\!200$ and varies $N\in\{200,500,1000,2000,5000,10000\}$ ($M\!=\!50$ Monte Carlo trials, $\sigma_e^2\!=\!0.5$). Error bars show Monte Carlo standard errors. In both sweeps the noise-subspace estimator (blue) achieves consistently lower absolute error than the full-spectrum estimator (orange), and the gap widens as $p$ increases because the full-spectrum bias floor scales as $O(r/p)$ (Proposition~\ref{prop:hu-bias}), whereas the noise-subspace estimator remains essentially unbiased.

\begin{figure}[t]
\centering
\includegraphics[width=0.92\textwidth]{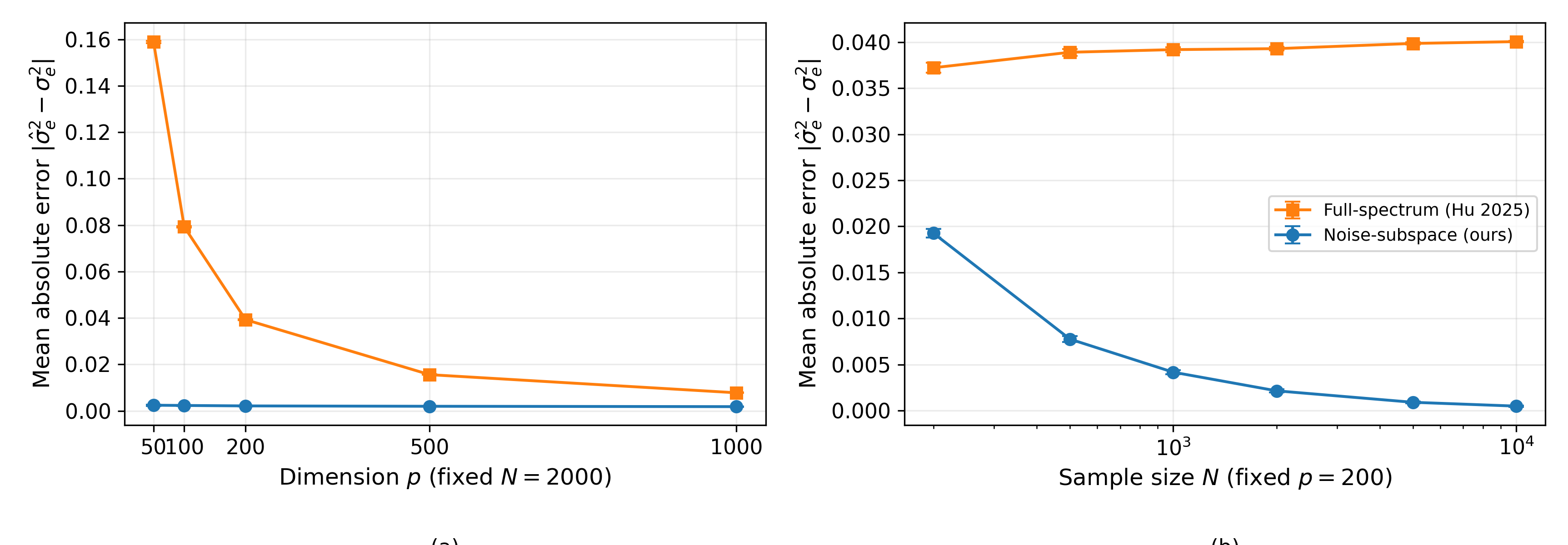}
\caption{Noise-estimation ablation under fixed $\sigma_e^2=0.5$.}
\label{fig:noise_ablation_vary_p}
\end{figure}

\subsection{Weak-factor attribution in $p=q=500$ high noise}\label{app:weak-factor-attribution}
We run the planned ablation under matched seeds on the completed high-dimensional
suite ($p=q=500$, $r=5$, $N=2000$, $M=10$). For each trial and each method,
we identify the weakest recovered factor by
$k^*=\arg\min_k \widehat{\theta}_{t,k}^2$, set
$\widehat{\theta}_{t,k^*}^2\leftarrow 0$ and $\widehat b_{k^*}\leftarrow 0$,
and then re-optimize only $\sigma_h^2$ by one-dimensional profile likelihood
with all other coordinates fixed. We then recompute
$\mathrm{MSE}_{\Sigma_t}$ and $\mathrm{MSE}_{\sigma_h^2}$ against the same ground truth.

\begin{table}[t]
\centering
\caption{Weak-factor ablation under high noise ($p=q=500$, $r=5$, $N=2000$, $M=10$): before/after and gap.}
\label{tab:weak_factor_ablation}
\renewcommand{\arraystretch}{1.2}
\small
\begin{tabular}{lccc}
\toprule
Method & $\mathrm{MSE}_{\Sigma_t}$ (before $\to$ after) & $\Delta\mathrm{MSE}_{\Sigma_t}$ & $\Delta\mathrm{MSE}_{\sigma_h^2}$ \\
\midrule
SLM-Manifold & $0.1110 \to 0.0915$ & $-0.0195\pm 0.0036$ & $+22.3899\pm 0.0138$ \\
BCD-SLM & $0.1110 \to 0.0915$ & $-0.0195\pm 0.0036$ & $+22.3902\pm 0.0139$ \\
\bottomrule
\end{tabular}
\end{table}

The same directional pattern appears for both methods: removing the weakest
factor slightly improves diagonal $\Sigma_t$ fit but causes a dramatic increase
in $\sigma_h^2$ error, indicating that the near-threshold component still carries
shared cross-view structure that would otherwise be absorbed into residual
latent noise.

\subsection{Non-Gaussian noise experiments}\label{app:non_gaussian_noise}
To empirically validate the Gaussianization framework in Section~\ref{sec:gaussianization_extension}, we evaluate the PPLS pipeline under four non-Gaussian observation-noise distributions: $t(5)$, $t(10)$, mixture Gaussian, and centered Poisson. We use the synthetic setting $p=q=200$, $r=5$, $N=2000$, high-noise $(\sigma_e^2,\sigma_f^2,\sigma_h^2)=(0.5,0.5,0.25)$, and $M=20$ trials. For each noise type, we compare three preprocessing strategies before BCD-SLM fitting: Raw (None), log-transform, and rank-based inverse normal transform (Rank-INT). Table~\ref{tab:non_gaussian_prediction} reports prediction MSE and calibration coverage at nominal levels 95\%, 90\%, and 80\%. The 'Gaussian (ref)' row uses the Gaussian benchmark from Section~\ref{sec:exp_pred_synth} under the same nominal high-noise tuple; it serves as an anchor and is not a paired control for the non-Gaussian runs.

\input{generated/tables/tab_non_gaussian_prediction}

Main observations are: (i) Raw (None) is already reasonably calibrated in most settings (typically around 93--95\% at nominal 95\%); (ii) Rank-INT tends to over-cover (e.g., up to 98.72\%), which is consistent with rank-based tail compression inflating residual-variance estimates; (iii) this error direction is conservative (wider intervals) and is often preferable to under-coverage in biomedical screening; and (iv) log-transform under-covers substantially in this benchmark. Overall, Gaussianization is optional.

\paragraph{Bias of the Gaussianization layer.} A coordinate-wise monotone map chosen to normalise marginals does not commute with the additive-noise decomposition: when the raw observation is $x_{\mathrm{raw}}=\mathrm{signal}+e$ with non-Gaussian additive noise (for instance uniform or Laplace $e$), the transformed variable $\psi(x_{\mathrm{raw}})$ is in general \emph{not} of the form $\mathrm{signal}'+\mathrm{Gaussian}$. The transform mixes signal and noise nonlinearly, so the exact isotropic-Gaussian PPLS likelihood is misspecified on the transformed scale, and the resulting estimator is, strictly speaking, \emph{biased} for the raw-scale parameters. We therefore interpret the fitted object as a Gaussian quasi-likelihood estimator \citep{White1982}: under standard regularity it remains consistent for the covariance-identified latent structure of the \emph{transformed} data and retains valid predictive uncertainty in transformed space, but it should not be read as an unbiased estimator of the original-scale loadings and noise variances. This is consistent with the empirical behaviour above: the rank-based inverse normal transform systematically over-covers (up to $98.72\%$ at nominal $95\%$), reflecting transform-induced tail compression that inflates the residual-variance estimate --- a conservative, but genuinely biased, behaviour. We accordingly recommend the raw (untransformed) fit when the marginals are approximately symmetric, and reserve Gaussianization for clearly skewed or heavy-tailed data, treating its outputs as quasi-likelihood quantities.

\subsection{CITE-seq selective prediction curves}\label{app:citeseq_selective_prediction}
Figure~\ref{fig:app_citeseq_selective_prediction} reports selective prediction curves on CITE-seq. SLM-Manifold-Adaptive shows nearly flat MSE across retention ratios, reflecting the near-homoscedastic structure of the PPLS conditional covariance: under the model, predictive variance is dominated by the global residual terms ($\sigma_f^2 I_q$ and $\sigma_h^2 I_r$) rather than input-dependent signal variance. DCCA-Deep-Ensemble exhibits a more pronounced decline, suggesting that ensemble disagreement captures input-dependent prediction difficulty that the linear model does not. Extending PPLS to heteroscedastic noise (e.g., input-dependent $\sigma_f^2(x)$) is a direction for future work.

\IfFileExists{artifacts/citeseq_protein_imputation/citeseq_selective_prediction_curve.png}{%
\begin{figure}[t]
\centering
\includegraphics[width=0.72\textwidth]{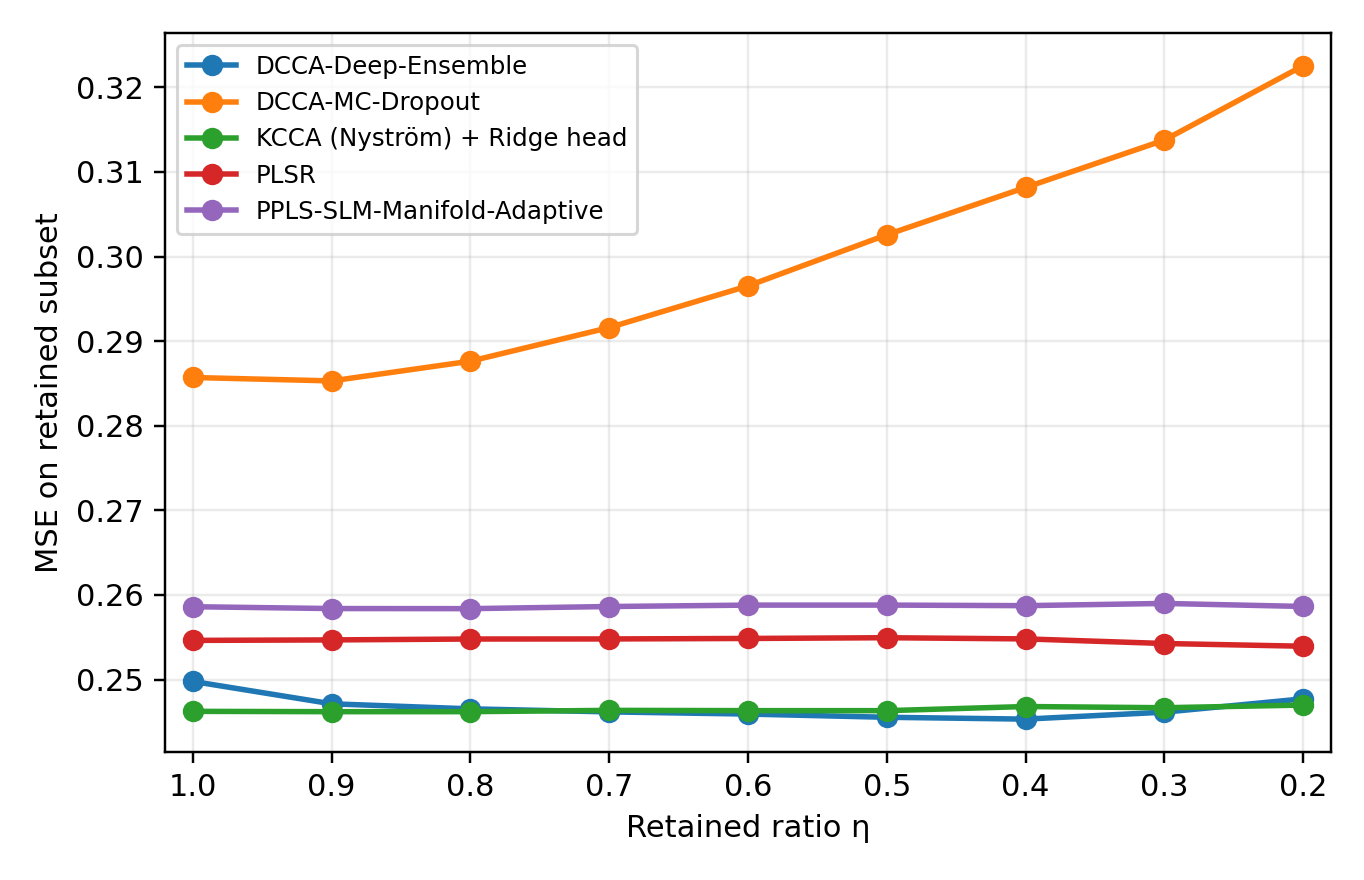}
\caption{Selective prediction on CITE-seq: retained-subset MSE versus retained ratio $\eta$ (supplementary).}
\label{fig:app_citeseq_selective_prediction}
\end{figure}
}{}

\subsection{Additional loading recovery figures}
Figures~\ref{fig:loading_C_all} and \ref{fig:loading_W} provide supplementary loading visualizations for Section~\ref{sec:exp_synthetic}, complementing the numerical MSE comparisons.

\begin{figure}[t]
\centering
\includegraphics[width=0.32\textwidth]{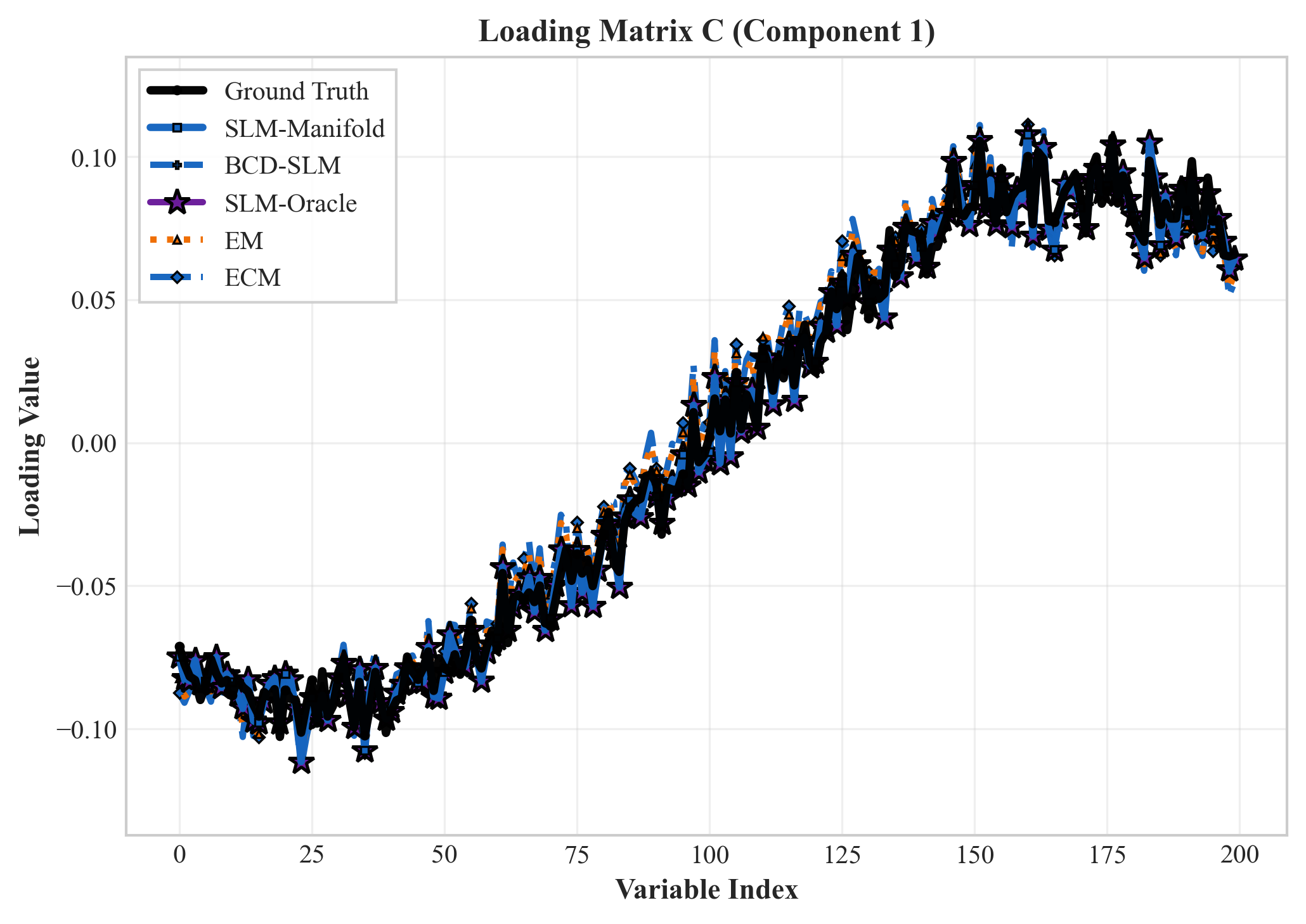}
\hfill
\includegraphics[width=0.32\textwidth]{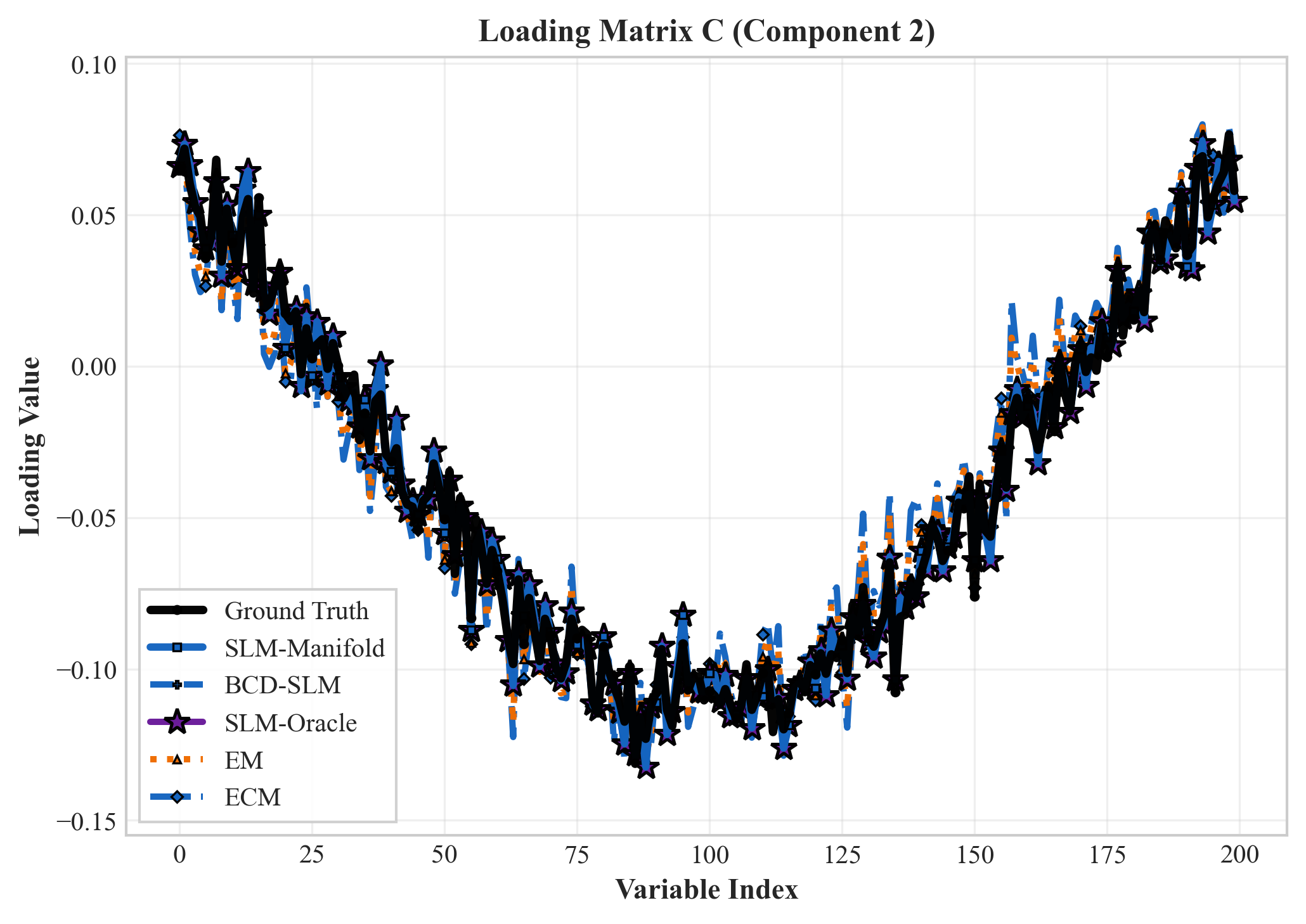}
\hfill
\includegraphics[width=0.32\textwidth]{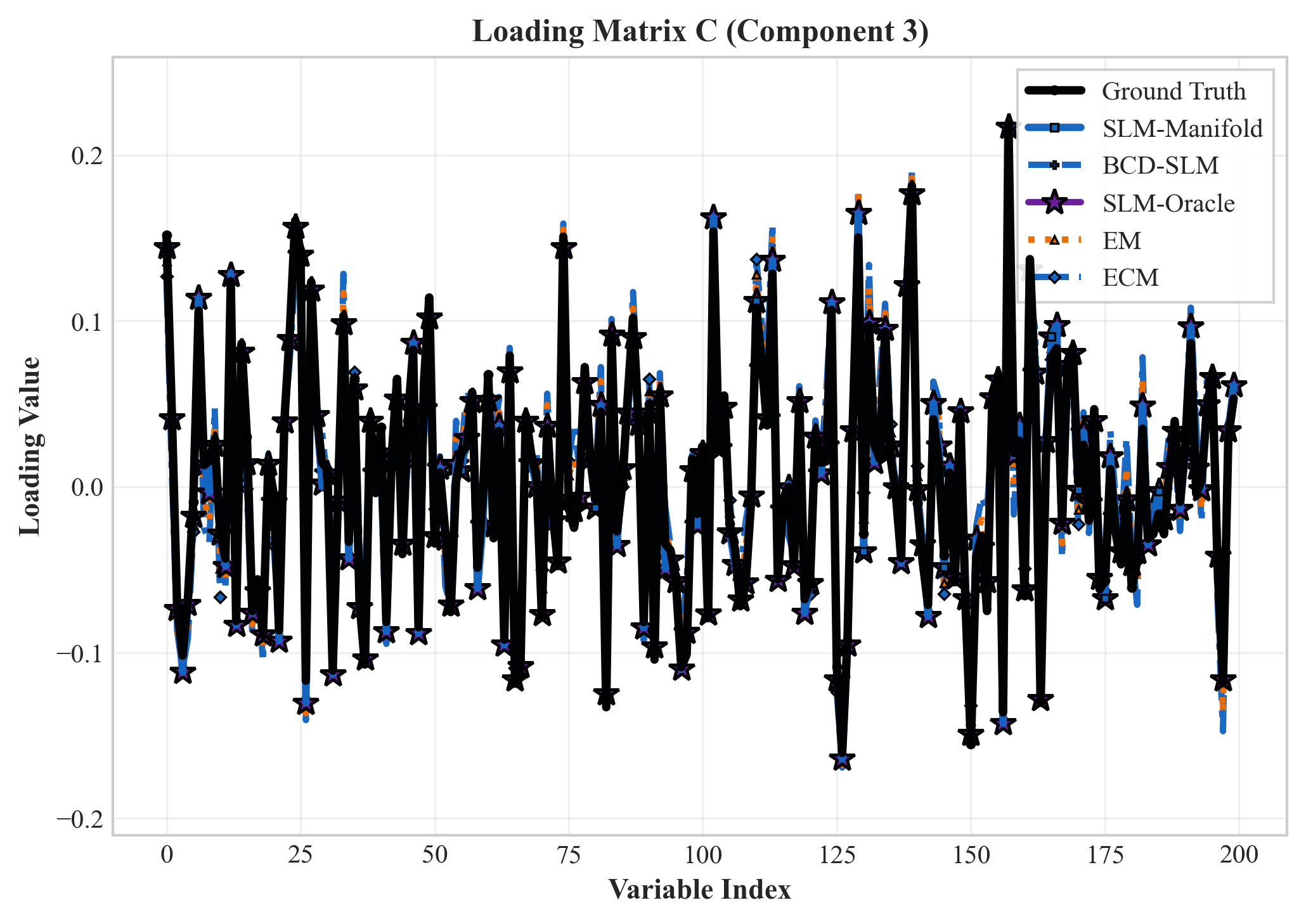}
\caption{Recovery plots for $C$ (components 1--3) in the simulation setting of Section~\ref{sec:exp_synthetic}.}
\label{fig:loading_C_all}
\end{figure}

\begin{figure}[t]
\centering
\includegraphics[width=0.32\textwidth]{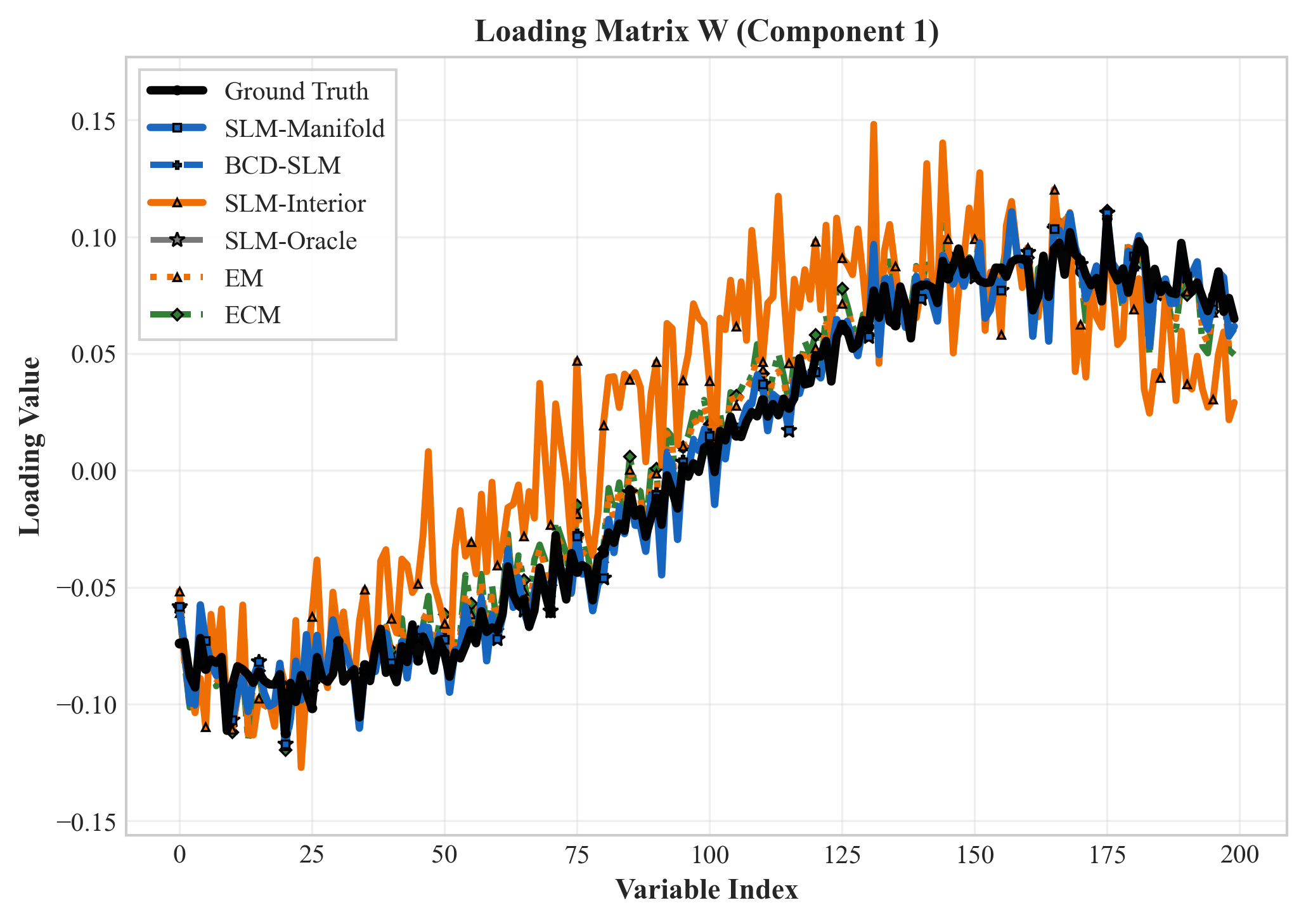}
\hfill
\includegraphics[width=0.32\textwidth]{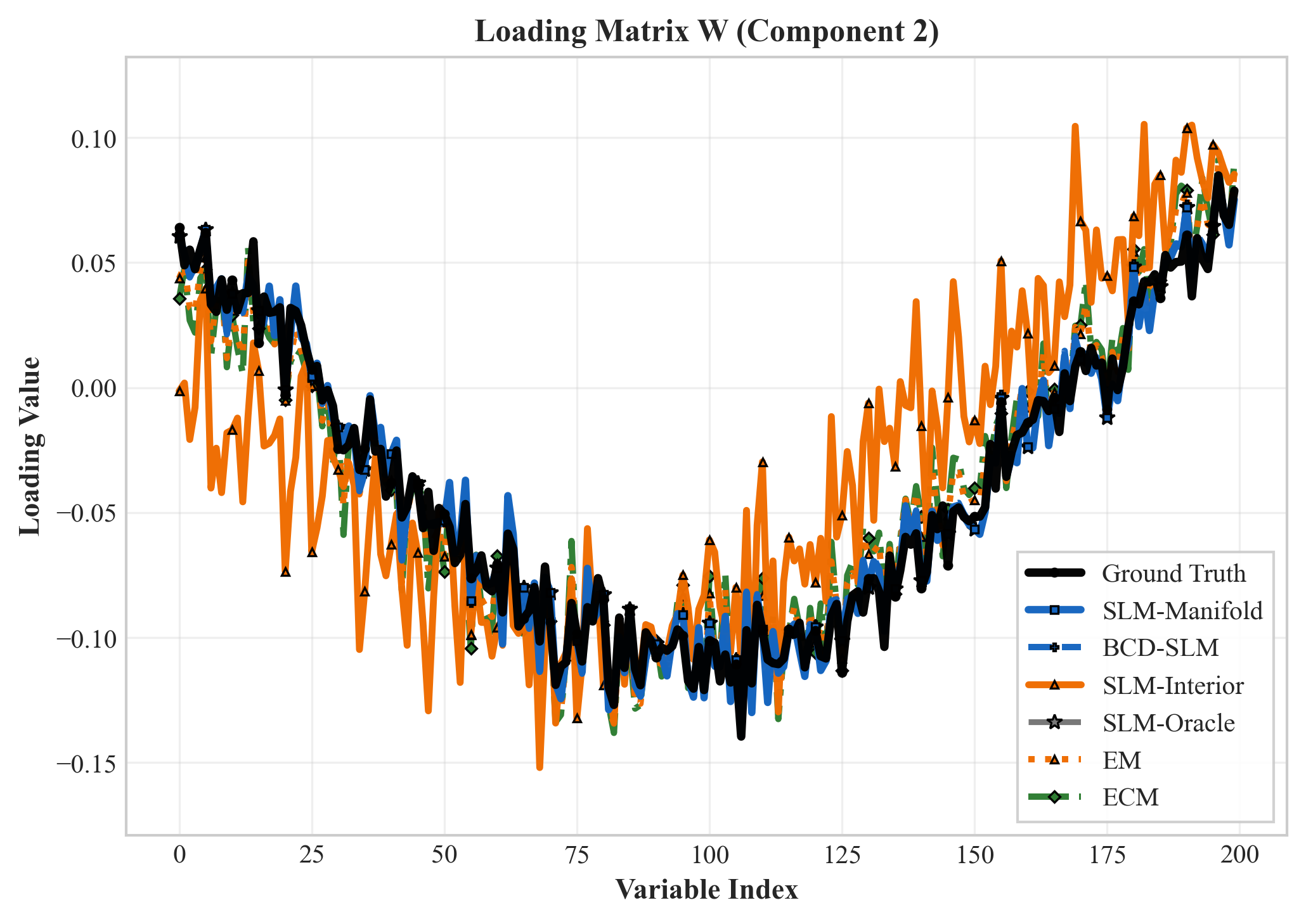}
\hfill
\includegraphics[width=0.32\textwidth]{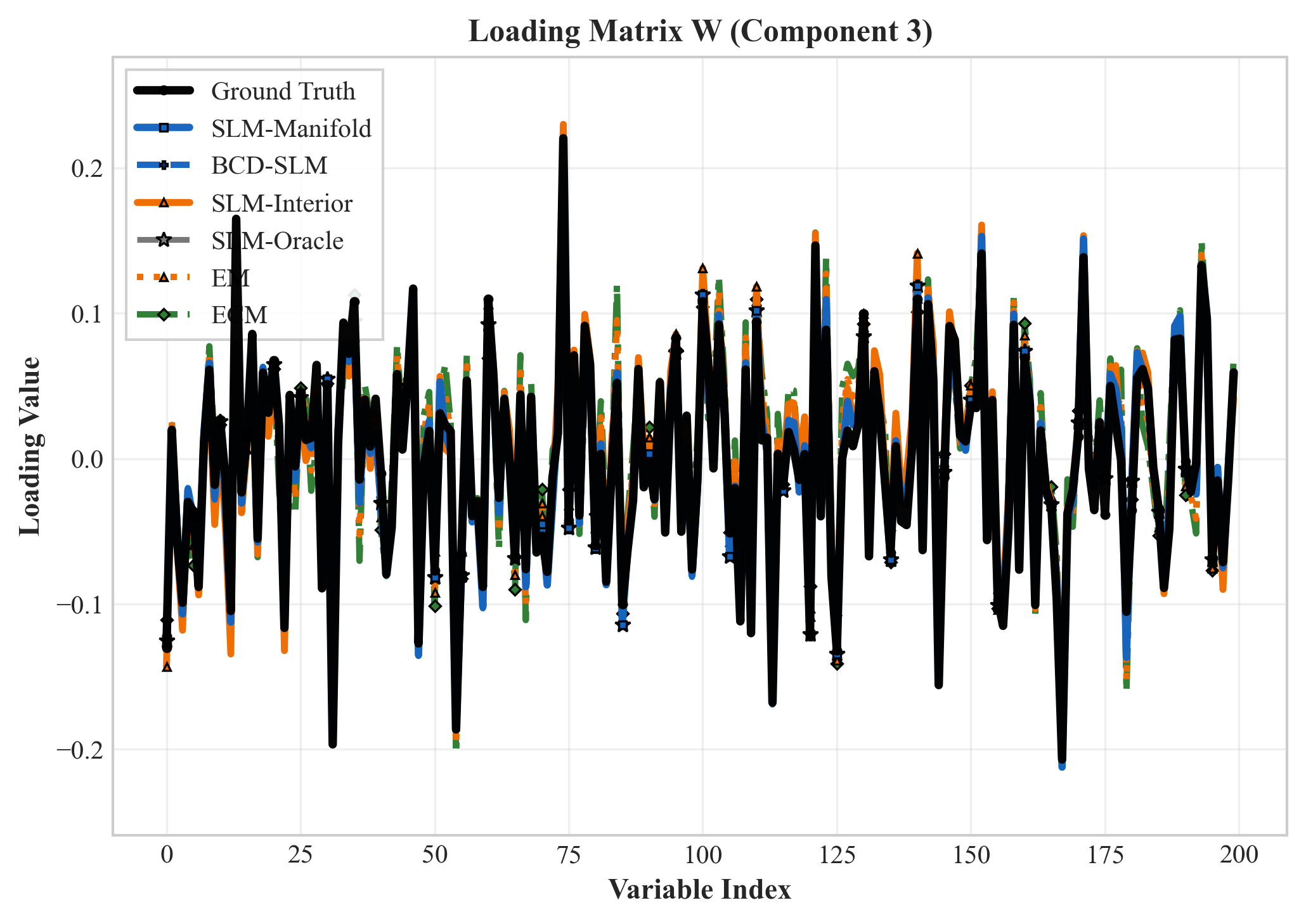}
\caption{Recovery plots for $W$ (components 1--3) in the simulation setting of Section~\ref{sec:exp_synthetic}.}
\label{fig:loading_W}
\end{figure}

\subsection{Association-screening notation}\label{app:assoc_notation}
Given centered data matrices $X\in\mathbb{R}^{N\times p}$ and $Y\in\mathbb{R}^{N\times q}$ and fitted loadings $(\hat W,\hat C)$, define
\(\hat T:=X\hat W\), \(\hat U:=Y\hat C\),
\(\hat\rho^{(x)}_{j,i}:=\operatorname{corr}(X_{:,j},\hat T_{:,i})\), and
\(\hat\rho^{(y)}_{\ell,i}:=\operatorname{corr}(Y_{:,\ell},\hat U_{:,i})\).
Each $p$-value $p(\hat\rho)$ is the two-sided $p$-value of the standard Pearson correlation test based on the corresponding correlation estimate under a normal approximation.
Detected pairs at threshold $\alpha$ are
\[
\mathcal{P}_i(\alpha):=\{(j,\ell): p(\hat\rho^{(x)}_{j,i})<\alpha\ \wedge\ p(\hat\rho^{(y)}_{\ell,i})<\alpha\},
\quad
\mathcal{P}(\alpha):=\cup_{i=1}^r\mathcal{P}_i(\alpha).
\]

\subsection{Broader impacts and responsible use}\label{app:broader_impacts}
The method improves calibrated uncertainty and latent-factor recovery for multi-view analysis, which can support more reliable downstream screening in exploratory biomedical studies. However, calibrated intervals from observational data do not imply causal validity, and model misspecification can still produce overconfident conclusions in subgroup analyses. We therefore recommend external cohort validation and domain-expert review before any high-stakes clinical use.


\section{Implementation and Reproducibility Details}\label{app:scalar_details}
This appendix records the implementation and reproducibility details behind the reported experiments, covering the Riemannian optimization settings, the initialization and warm-start procedure, and the licenses of the datasets and software used.

\subsection{Implementation details of Riemannian optimization}\label{app:implementation_details}
\paragraph{QR retraction with sign consistency.}
SLM-Manifold uses Riemannian conjugate gradient with Armijo line search and sign-consistent thin-QR retraction.

\paragraph{Identifiability-preserving ordering.}
After convergence of each start, components are reordered so that $\theta_{t,\pi(1)}^2 b_{\pi(1)}\ge\cdots\ge\theta_{t,\pi(r)}^2 b_{\pi(r)}$.

\paragraph{Multi-start and warm start.}
Each start samples $W,C$ via random QR and initializes $\Sigma_t=B=I_r$, $\sigma_h^2=0.01$. One optional warm start uses top singular vectors of training cross-covariance.

\paragraph{Adaptive shrinkage grid.}
The nested-CV candidate sets are fixed before evaluation:
\[
\Gamma_{\mathrm{synth}}=\{0.5,0.7,0.85,0.95,1.0,1.05,1.15,1.3,1.5,2.0\},
\]
\[
\Gamma_{\mathrm{BRCA}}=\{0.3,0.5,0.7,0.85,1.0,1.15,1.3,1.5,2.0,3.0\},
\]
\[
\Gamma_{\mathrm{CITE}}=\{0.1,0.2,0.3,0.5,0.7,1.0,1.5,2.0,3.0,5.0,10.0\}.
\]

\subsection{Initialization details}\label{app:init_algo}

Default initialization uses feasible random-QR starts for $(W,C)$, diagonal defaults for $(\Sigma_t,B,\sigma_h^2)$, and spectral pre-estimates for $(\sigma_e^2,\sigma_f^2)$. For one optional warm start, let $S_{xy}^{\mathrm{tr}} = n_{\mathrm{tr}}^{-1}X_{\mathrm{tr}}^\top Y_{\mathrm{tr}} = U\Lambda V^\top$, set $W^{(0)}=U_{[:,1:r]}$, $C^{(0)}=V_{[:,1:r]}$, align component signs via cross-covariance of projected scores, initialize diagonal parameters from projected moments, and enforce the identifiability ordering.

\subsection{Licenses for existing assets}\label{app:asset_licenses}
\paragraph{Datasets.}
\begin{itemize}
    \item TCGA-BRCA Multi-Omics \citep{brca}: Kaggle distribution under CC BY-NC-SA 4.0.
    \item PBMC CITE-seq reference \citep{Hao2021}: public SeuratData/10x resources for academic research.
\end{itemize}

\paragraph{Software.}
\begin{itemize}
    \item Pymanopt \citep{Pymanopt}: BSD 3-Clause.
    \item Manopt \citep{BoumalManopt}: GPL v3 (reference only).
    \item NumPy/SciPy/scikit-learn: BSD-style licenses.
\end{itemize}

%% file: generated/tables/tab_algorithm_convergence.tex
\begin{table*}[t]
\centering
\setlength{\tabcolsep}{6pt}
\renewcommand{\arraystretch}{1.25}
\footnotesize
\caption{Convergence statistics across $M=20$ Monte Carlo trials in the canonical low-noise setting ($p=q=200$, $r=5$, $N=2000$).}
\label{tab:algorithm_convergence}
\begin{tabular}{lcccccc}
\toprule
\textbf{Method} & $\mathbb{E}[I]$ & $\sqrt{\text{Var}[I]}$ & $\min I$ & $\max I$ & $\text{Median}[I]$ & Success \\
\midrule
SLM-Manifold  & 243.9  & 17.4  & 209  & 277  & 246  & 100\%  \\
BCD-SLM  & 45.7  & 10.5  & 23  & 67  & 45  & 100\%  \\
SLM-Interior  & 77.6  & 15.9  & 63  & 135  & 73  & 100\%  \\
SLM-Oracle  & 252.0  & 15.0  & 216  & 277  & 254  & 100\%  \\
EM  & 99.7  & 15.6  & 80  & 140  & 96  & 100\%  \\
ECM  & 93.8  & 22.8  & 60  & 148  & 91  & 100\%  \\
\bottomrule
\end{tabular}
\end{table*}

%% file: generated/tables/tab_prediction_synth_metrics.tex
\begin{table*}[t]
\centering
\setlength{\tabcolsep}{6pt}
\renewcommand{\arraystretch}{1.25}
\small
\caption{Synthetic point-prediction accuracy (5-fold CV); each metric is reported as separate mean and standard-deviation (SD) columns over folds.}
\label{tab:pred_synth_metrics}
\begin{tabular}{lcccccc}
\toprule
& \multicolumn{2}{c}{\textbf{MSE}} & \multicolumn{2}{c}{\textbf{MAE}} & \multicolumn{2}{c}{\textbf{$R^2$}} \\
\cmidrule(lr){2-3}\cmidrule(lr){4-5}\cmidrule(lr){6-7}
\textbf{Method} & Mean & SD & Mean & SD & Mean & SD \\
\midrule
SLM-Manifold-Adaptive & 0.1058 & 0.0008 & 0.2593 & 0.0011 & 0.2749 & 0.0107 \\ 
EM & 0.1058 & 0.0008 & 0.2593 & 0.0011 & 0.2749 & 0.0106 \\ 
PLSR & 0.1058 & 0.0008 & 0.2593 & 0.0011 & 0.2748 & 0.0105 \\ 
Ridge & 0.1079 & 0.0008 & 0.2620 & 0.0011 & 0.2617 & 0.0106 \\ 
\bottomrule
\end{tabular}
\end{table*}

%% file: generated/tables/tab_pcca_simulation.tex
\begin{table*}[t]
\centering
\setlength{\tabcolsep}{6pt}
\renewcommand{\arraystretch}{1.25}
\small
\caption{PCCA specialization parameter estimation MSE ($\times 10^{2}$), with mean and standard deviation (SD) reported in separate columns over $M=20$ trials ($p=q=20$, $r=3$, $N=500$; $B=I_r$, $\sigma_h^2=0$).}
\label{tab:pcca_parameter_mse}
\begin{tabular}{lcccccc}
\toprule
& \multicolumn{2}{c}{$\mathrm{MSE}_W$} & \multicolumn{2}{c}{$\mathrm{MSE}_C$} & \multicolumn{2}{c}{$\mathrm{MSE}_{\Sigma_t}$} \\
\cmidrule(lr){2-3}\cmidrule(lr){4-5}\cmidrule(lr){6-7}
\textbf{Method} & Mean & SD & Mean & SD & Mean & SD \\
\midrule
BCD-SLM & 0.08 & 0.09 & 0.08 & 0.08 & 0.70 & 0.52 \\
EM & 0.04 & 0.04 & 0.04 & 0.04 & 0.70 & 0.55 \\
\bottomrule
\end{tabular}
\end{table*}

%% file: generated/tables/tab_ppca_verification.tex
\begin{table*}[t]
\centering
\setlength{\tabcolsep}{6pt}
\renewcommand{\arraystretch}{1.25}
\small
\caption{PPCA noise variance estimation verification ($M=20$ trials, $p=20$, $r=3$, $N=500$, $\sigma_e^2=0.100000$).}
\label{tab:ppca_noise_verification}
\begin{tabular}{lcc}
\toprule
Estimator & Mean $\hat{\sigma}_e^2$ & Mean $\lvert\hat{\sigma}_e^2-\sigma_e^2\rvert$ \\
\midrule
Spectral (Theorem 4) & 0.098801 & 0.001450 \\
Tipping \& Bishop MLE & 0.098801 & 0.001450 \\
\bottomrule
\end{tabular}
\end{table*}

%% file: generated/tables/tab_non_gaussian_prediction.tex
\begin{table*}[t]
\centering
\setlength{\tabcolsep}{4pt}
\renewcommand{\arraystretch}{1.25}
\scriptsize
\caption{Prediction MSE and calibration coverage under non-Gaussian noise ($p=q=200$, $r=5$, $N=2000$, high-noise setting, $M=20$ trials). The final Gaussian (ref) row is taken from the separate Gaussian synthetic prediction benchmark and is reported as a reference anchor.}
\label{tab:non_gaussian_prediction}
\begin{tabular*}{\textwidth}{@{\extracolsep{\fill}}llcccccccc@{}}
\toprule
& & \multicolumn{2}{c}{\textbf{Pred MSE}} & \multicolumn{2}{c}{\textbf{95\% Cov}} & \multicolumn{2}{c}{\textbf{90\% Cov}} & \multicolumn{2}{c}{\textbf{80\% Cov}} \\
\cmidrule(lr){3-4}\cmidrule(lr){5-6}\cmidrule(lr){7-8}\cmidrule(lr){9-10}
\textbf{Noise Type} & \textbf{Preprocess} & Mean & SD & Mean & SD & Mean & SD & Mean & SD \\
\midrule
$t(5)$ & None & 0.5239 & 0.00215 & 93.74\% & 0.03\% & 89.76\% & 0.04\% & 81.79\% & 0.07\% \\
$t(5)$ & Log & 0.5262 & 0.002106 & 81.43\% & 0.04\% & 74.44\% & 0.05\% & 63.51\% & 0.06\% \\
$t(5)$ & Rank-INT & 0.5275 & 0.002273 & 97.71\% & 0.02\% & 95.77\% & 0.03\% & 91.00\% & 0.05\% \\
$t(10)$ & None & 0.5238 & 0.001614 & 93.44\% & 0.03\% & 88.62\% & 0.05\% & 79.26\% & 0.06\% \\
$t(10)$ & Log & 0.5275 & 0.001691 & 79.98\% & 0.04\% & 72.30\% & 0.05\% & 60.79\% & 0.07\% \\
$t(10)$ & Rank-INT & 0.5273 & 0.001592 & 98.15\% & 0.03\% & 95.94\% & 0.04\% & 90.20\% & 0.06\% \\
Mixture & None & 0.5244 & 0.002577 & 95.01\% & 0.02\% & 93.79\% & 0.02\% & 90.38\% & 0.04\% \\
Mixture & Log & 0.5222 & 0.002601 & 87.90\% & 0.03\% & 82.59\% & 0.05\% & 72.70\% & 0.05\% \\
Mixture & Rank-INT & 0.5368 & 0.002412 & 96.38\% & 0.02\% & 95.51\% & 0.03\% & 93.83\% & 0.04\% \\
Poisson & None & 0.5241 & 0.001076 & 93.74\% & 0.02\% & 88.13\% & 0.03\% & 77.45\% & 0.04\% \\
Poisson & Log & 0.5285 & 0.001104 & 78.97\% & 0.03\% & 70.74\% & 0.05\% & 58.59\% & 0.07\% \\
Poisson & Rank-INT & 0.5275 & 0.001146 & 98.72\% & 0.02\% & 96.51\% & 0.03\% & 90.00\% & 0.06\% \\
\midrule
Gaussian (ref) & None & 0.1058 & 0.0007973 & 95.10\% & 0.12\% & 90.46\% & 0.19\% & 81.05\% & 0.24\% \\
\bottomrule
\end{tabular*}
\end{table*}